\title{Bounds on $L_p$ Errors in Density Ratio Estimation via $f$-Divergence Loss Functions} 
\author{ 
Yoshiaki Kitazawa \\ 
NTT DATA Mathematical Systems Inc.\\ 
Data Mining Division\\ 
1F Shinanomachi Rengakan, 35, Shinanomachi\\ 
Shinjuku-ku, Tokyo, 160-0016, Japan\\ 
\texttt{kitazawa@msi.co.jp} 
} 
\begin{document}

\maketitle 

\begin{abstract} 
Density ratio estimation (DRE) is a core technique in machine learning used to capture relationships between two probability distributions. 
$f$-divergence loss functions, which are derived from variational representations of $f$-divergence, have become a standard choice in DRE for achieving cutting-edge performance. 
This study provides novel theoretical insights into DRE by deriving upper and lower bounds on the $L_p$ errors through $f$-divergence loss functions. 
These bounds apply to any estimator belonging to a class of Lipschitz continuous estimators, irrespective of the specific $f$-divergence loss function employed.
The derived bounds are expressed as a product involving the data dimensionality and the expected value of the density ratio raised to the $p$-th power.
Notably, the lower bound includes an exponential term that depends on the Kullback--Leibler (KL) divergence, revealing that the $L_p$ error increases significantly as the KL divergence grows when $p > 1$. 
This increase becomes even more pronounced as the value of $p$ grows. 
The theoretical insights are validated through numerical experiments. 
\end{abstract}

\section{Introduction}\label{section_introduction} 
Density ratio estimation (DRE) is a key machine learning technique for computing the density ratio $r^*(\mathbf{x}) = q(\mathbf{x})/p(\mathbf{x})$ between two probability distributions, based on samples independently drawn from $p$ and $q$. 
DRE plays a central role in various machine learning methods, including generative modeling \citep{goodfellow2014generative,nowozin2016f,uehara2016generative}, mutual information estimation and representation learning \citep{belghazi2018mutual,hjelm2018learning}, energy-based modeling \citep{gutmann2010noise}, and covariate shift and domain adaptation \citep{shimodaira2000improving,huang2006correcting}. 

Recent advancements in DRE have been fueled by approaches employing neural networks as density ratio estimators. 
These methods use loss functions derived from variational representations of $f$-divergence \citep{nguyen2010estimating,sugiyama2012density}, where the optimal function corresponds to the density ratio via the Legendre transform, leading to state-of-the-art performance.

Despite their empirical success, recent research has started to unravel the theoretical connections between optimizing $f$-divergence loss functions and the accuracy of DRE. 
For integral probability metric (IPM) loss functions, the upper and lower bounds of the $L_p$ error in DRE have been established as the minimax bounds of their optimization \citep{liang2017well,niles2022minimax}. 
More recent studies have concentrated on $f$-divergence loss functions, deriving upper bounds \citep{belomestny2021rates} and the minimax upper and lower bounds for optimizing Shannon divergence loss \citep{belomestny2021rates,puchkin2024rates}. 

Although substantial progress has been made, several aspects of the relationship between the choice of $f$-divergence loss functions and the accuracy of DRE remain unresolved. 
First, minimax lower bounds do not reflect the true lower bound on the estimation error for the actual density ratio. Second, the relationship between the true magnitudes of the $f$-divergences and the sample size requirements for DRE using divergence loss functions is not fully understood. 
Specifically, the role of the true Kullback--Leibler (KL) divergence in determining the sample size needed for DRE with the KL-loss function remains unclear, even though it is known that sample size requirements for KL-divergence estimation increase exponentially as the true value of the divergence grows \citep{poole2019variational,song2019understanding,mcallester2020formal}.
Finally, it remains an open question whether different $f$-divergence loss functions, such as total variation loss and KL-divergence loss, yield statistically equivalent $L_p$ errors (e.g., root mean square errors).

This study aims to address uncertainties in DRE using $f$-divergence loss functions by deriving the upper and lower bounds that are independent of the choice of $f$-divergence. 
However, the theoretical optimization of $f$-divergence loss functions presents challenges due to their dependence on sample sets drawn from two distributions. 
The lack of overlap between these sample sets often leads to unstable optimization points, resulting in losses that do not reach their theoretical optima.
In practice, this issue is commonly mitigated by applying early stopping while monitoring validation losses. 

We incorporate this practical approach into our theoretical framework by reformulating the loss functions conceptually, thereby bridging the gap between the practical and theoretical behaviors of these functions. 
Subsequently, we derive upper and lower bounds for the $L_p$ error in DRE by optimizing $f$-divergence loss functions. These bounds are obtained based on the expectation of the distance between nearest neighbors in observations under the assumptions of $L$-Lipschitz continuity of the energy function of the distributions and the compactness of the support. 

The derived bounds are expressed as a product of terms involving the data dimensionality and the expected value of the density ratio raised to the $p$-th power.
Notably, the lower bound includes an exponential term dependent on the KL-divergence, showing that the $L_p$ error increases significantly as the KL-divergence grows for $p > 1$, with the rate of increase accelerating for higher values of $p$. These bounds apply to a class of Lipschitz continuous estimators, irrespective of the specific $f$-divergence loss functions used. 
The theoretical findings are validated through numerical experiments. 

In summary, the key contributions of this study are as follows: 
(1) We derive universal upper and lower bounds for the $L_p$ error in DRE through optimizations of variational representations of $f$-divergences, providing a novel perspective on DRE with $f$-divergence loss functions. 
(2) We empirically explore the relationship between KL-divergence, data dimensionality, and estimation accuracy in DRE through optimizations of variational representations of $f$-divergences. Specifically, we find that the $L_p$ error increases significantly with larger KL-divergence values for $p > 1$, and this effect is amplified by the magnitude of $p$. 

%

\paragraph{Related Work.}\label{section_RelatedWork} 
This study establishes upper and lower bounds on convergence rates for nonparametric density ratio estimation using $f$-divergence optimization. Relevant prior research includes studies on the minimax convergence rates for density estimation in the context of GAN optimization, particularly for Wasserstein GANs \citep{arjovsky2017towards} and vanilla GANs \citep{goodfellow2014generative}. 
In Wasserstein GAN optimization, \citet{liang2017well} and \citet{singh2018minimax} derived the minimax convergence rates for the IPM loss, which encompasses total variation among $f$-divergences. Furthermore, \citet{niles2022minimax} extended these findings to the Wasserstein-$p$ distance for $p > 1$. In the context of vanilla GAN optimization, \citet{belomestny2021rates} and \citet{puchkin2024rates} provided minimax upper and lower convergence rates for the Shannon divergence loss, specifically deriving an upper bound for the $L_2$ error. 
Beyond GAN-based research, \citet{nguyen2010estimating} proposed an upper bound for the Hellinger distance in DRE using the KL-divergence loss, thereby establishing a minimax upper bound for the $L_1$ error in DRE. Additionally, foundational work by \citet{stone1980optimal} introduced a minimax convergence rate for nonparametric regression, which is also applicable as an upper bound for the $L_1$ error in nonparametric density estimation. For detailed comparisons between our derived bounds and existing DRE bounds, see Section \ref{SubsectionComparisonwithexistingDREbounds} in Appendix.


\section{Preliminaries: Notation, Setup, and \texorpdfstring{$f$}{f}-Divergence Loss Functions}\label{section_problem_setup} 
In this section, we define the notation, outline the problem setup, and present the variational representation of $f$-divergence along with the associated loss functions that form the foundation of the analysis in the subsequent sections. 

\subsection{Notation, Preliminary Concepts, and Setup} 
\textbf{Notation:} 
Random variables are represented by uppercase letters, such as $X$.
Lowercase letters denote specific values of these random variables; for example, $x$ represents a value of the random variable $X$.
Boldface letters, $\mathbf{X}$ and $\mathbf{x}$, are used to denote sets of random variables and their corresponding values, respectively. 
$\|\mathbf{y} - \mathbf{x}\|_{\infty}$ denotes the maximum norm in $\mathbf{R}^d$. 
i.e., $\|\mathbf{y} - \mathbf{x}\|_{\infty} =  \max_{1\le i \le d} |y_i -x_i|$ 
for $\mathbf{y}=(y_1, y_2, \ldots, y_d)$ and $\mathbf{x}=(x_1, x_2, \ldots, x_d)$. 
$\mathrm{diam}(\Omega)$ denotes the diameter of $\Omega$. 
Specifically, let $\mathrm{diam}(\Omega) = \inf \{ r > 0 \ | \ \exists \mathbf{a} \in \Omega \ \text{s.t.} \  \Omega \subseteq  \Delta(\mathbf{a}, r)  \}$, 
where $\Delta(\mathbf{a}, r)$ denotes the $d$-dimensional interval centered at $\mathbf{a}$  with each side of length $r$: $\Delta(\mathbf{a}, r) = \{\mathbf{x} \in \mathbb{R}^d | \  \| \mathbf{x} - \mathbf{a}   \|_{\infty} < r/2 \}$. 
$O_p\left(a_N\right)$ denotes stochastic boundedness with rate $a_N$ in $\mu$. i.e., 
$\mathbf{X} = O_p(a_N)$ (as $N \rightarrow \infty$) $\Leftrightarrow$ for all $\varepsilon > 0$, there exist $\delta(\varepsilon) > 0$ and $N(\varepsilon) > 0$ such that $\mu\left(\left|\mathbf{X}\right| / a_N  \ge \delta(\varepsilon )\right) < \varepsilon$ for all $N \ge N(\varepsilon )$. 

\textbf{Preliminary Concepts:} 
Let $P$ and $Q$ represent probability measures on $(\Omega, \mathscr{F})$, 
where $\mathscr{F}$ denotes the $\sigma$-algebra on $\Omega$. 
$P$ is called \emph{absolutely continuous} with respect to $Q$, 
$P(A)=0$ whenever $Q(A)=0$ for any $A \in \mathscr{F}$. This relationship is denoted as $P \ll Q$. 
$\frac{dP}{dQ}$ refers to the Radon--Nikod\'{y}m derivative of $P$ with respect to $Q$ for $P$ and $Q$ with $P \ll Q$. 
$\mu$ denotes a probability measure on $\Omega$ with $P \ll \mu$ and $Q \ll \mu$. 
An example of $\mu$ is $(P+Q)/2$. 
$E_P[\cdot]$ denotes the expectation under the distribution $P$, i.e., 
$E_P[\phi(\mathbf{x})]=\int_{\Omega_p} \phi(\mathbf{x})dP(\mathbf{x})$,  where $\phi(\mathbf{x})$ is a measurable function over $\Omega$. 

\textbf{Setup and Assumptions:} 
$P$ and $Q$ represent probability distributions on $\Omega \subset \mathbb{R}^{d}$ with unknown probability densities $p$ and $q$, respectively. 
We assume $p(\mathbf{x}) > 0 \Leftrightarrow q(\mathbf{x}) > 0$ almost everywhere $\mathbf{x} \in \Omega$. 
\footnote{ 
In this study, $q(\mathbf{x})/p(\mathbf{x})$ is used as a notation for $\frac{dQ}{dP}(\mathbf{x})$ 
based on the Radon--Nikod\'{y}m density representation for improved readability.} 


\subsection{DRE with \texorpdfstring{$f$}{f}-divergence variational representation}\label{subsection_f_divergence_variational} 
Here, we introduce the $f$-divergence variational representation and the corresponding loss functions used for DRE. 

\begin{definition}[$f$-divergence]\label{main_Definition_f_divergence} 
    The $f$-divergence $D_{f}$ between two probability measures $P$ and $Q$ is defined using a convex function $f$ satisfying $f(1) = 0$. It is expressed as: 
    $D_{f}(Q||P)=E_{P}[f(dQ/dP(\mathbf{x}))]$. 
\end{definition} 
Various divergences can be obtained as special cases by selecting an appropriate generator function $f$.
For instance, choosing the function $f(u) = u \cdot \log u$ yields the Kullback--Leibler divergence.

We derive the variational representations of $f$-divergences using the Legendre transform of the convex conjugate of a twice differentiable convex function $f$, 
$f^*(\psi) = \sup_{u\in \mathbb{R}}\{\psi \cdot u - f(u)\}$ \citep{nguyen2007estimating}: 
\begin{equation} 
    D_{f} (Q||P) = \sup_{\phi \ge 0}\Big\{ E_Q\big[f'(\phi)\big] - E_P\big[f^*(f'(\phi)) \big] \Big\}, \label{Eq_Variational_representation_f_div_phi} 
\end{equation} 
where the supremum is required over all measurable functions $\phi:\Omega \rightarrow \mathbb{R}$ 
with $ E_Q[\,|f'(\phi)|\,] < \infty$ and $E_P[\,|f^* (f'(\phi))|\,] < \infty$. 
The maximum value is achieved when $\phi(\mathbf{x})=dQ/dP(\mathbf{x})$. 
Pairs of the terms $f'(\phi)$ and $f^*(f'(\phi))$ in Equation (\ref{Eq_Variational_representation_f_div_phi}) for major $f$-divergences, 
along with their corresponding convex functions $f$, are provided in Table \ref{Table_for_loss_functions_for_DRE} in the Appendix. 

By substituting $\phi$ with a neural network model $\phi_{\theta}$ and replacing the expectation $E$ with sample means $\hat{E}$, the optimal function for Equation (\ref{Eq_Variational_representation_f_div_phi}) is trained through back-propagation using an $f$-divergence loss function. 
\begin{equation} 
    \mathcal{L}_f(\phi_{\theta}) =  - \left\{ \hat{E}_Q \big[f'(\phi_{\theta})\big]- 
    \hat{E}_P\big[f^*(f'(\phi_{\theta}))\big] \right\}. 
    \label{main_brief_Eq_loss_func_f_div} 
\end{equation} 

Formally, we define the $f$-divergence loss function within a probabilistic theoretical framework as follows: 
\begin{definition}[$f$-Divergence Loss]\label{main_DefinitionfDivergencelossa} 
    Let $\hat{\mathbf{X}}_{P[R]} = \{\mathbf{X}^{1}_{P}, \mathbf{X}^{2}_{P}, \ldots, \mathbf{X}^{R}_{P}\}$, $\mathbf{X}^{i}_{P} \overset{\mathrm{iid}}{\sim} P$ denote $R$ i.i.d. random variables from $P$, and let 
    $\hat{\mathbf{X}}_{Q[S]} = \{\mathbf{X}^{1}_{Q}, \mathbf{X}^{2}_{Q}, \ldots, \mathbf{X}^{S}_{Q}\}$, $\mathbf{X}^{i}_{Q} \overset{\mathrm{iid}}{\sim} Q$ denote $S$ i.i.d. random variables from $Q$. 
    Thereafter, for a twice differentiable convex function $f$, $f$-divergence loss $\mathcal{L}_{f}^{(R,S)}(\cdot)$ is defined as follows: 
    \begin{equation} 
        \mathcal{L}_{f}^{(R,S)}(\phi) 
        =  \frac{1}{S}  \cdot \sum_{i=1}^{S} - f'\left( \phi(\mathbf{X}^{i}_{Q}) \right)  +  \frac{1}{R} \sum_{i=1}^{R} f^*\left(f'\left(\phi(\mathbf{X}^{i}_{P})\right)\right),  \label{main_Eq_loss_func_f_div} 
    \end{equation} 
    where $\phi$ denotes a measurable function over $\Omega$ such that $\phi:\Omega \rightarrow \mathbb{R}_{>0}$. 
\end{definition}

\section{Main Results}\label{Section_MainResults} 
This study presents two key contributions. 
First, we derive common upper and lower bounds for the $L_p$ error in DRE by employing variational $f$-divergence optimization. Second, we empirically explore the relationship between KL-divergence, data dimensionality, and estimation accuracy in DRE through variational $f$-divergence optimization. Specifically, we find that the $L_p$ error increases significantly as the KL-divergence rises for $p > 1$, and this increase becomes more pronounced with larger values of $p$.

\subsection{Theoretical Results.} 
In this study, we outline the assumptions required to derive the upper and lower bounds for DRE. These assumptions are straightforward and primarily pertain to Lipschitz continuity of estimators. Specifically, we assume the $L$-Lipschitz continuity of the energy function of the distributions, 
$T^*(\mathbf{x}) = - \log dQ/dP(\mathbf{x})$.


\begin{assumption}[Assumption for the Upper Bound]\label{main_assumption_upper} 
    The following assumption is imposed on the probability distributions $P$ and $Q$. 
    \begin{enumerate} 
        \item[U1.] 
        $T^*(\mathbf{x}) = - \log dQ/dP(\mathbf{x})$ is $L$-Lipschitz continuous with $L > 0$ on $\Omega$. 
        \label{Lipschitz} 
    \end{enumerate} 
\end{assumption} 

\begin{assumption}[Assumptions for the Lower Bound]\label{main_assumption_lower} 
    The following assumptions are imposed on the probability distributions $P$ and $Q$. 
    \begin{enumerate} 
        \item[L1.] 
        $T^*(\mathbf{x}) = - \log dQ/dP(\mathbf{x})$ is $L$-bi-Lipschitz continuous with $L > 1$ on $\Omega$. 
        \label{biLipschitz} 
        \item[L2.] $E_P\left[ \big(dQ/dP \big)^{ p } \right] < \infty$ where $p \le d$. 
    \end{enumerate} 
\end{assumption} 
For the probability distributions $P$ and $Q$, Assumption L1 plays a crucial role in deriving the lower bound of the $L_p$ error in DRE. Further details regarding this assumption are provided in Remark \ref{remark_on_T_bilipsiz} in Section \ref{Subsection_DerivationofUpperandLowerboundsforMuLoss}.

Additionally, Assumptions \ref{main_assumption_for_f} and \ref{assumption_for_Support} are essential for deriving both the upper and lower bounds of the DRE. 
A discussion comparing Assumption \ref{main_assumption_for_f} with related assumptions in prior work is provided in Section \ref{SubsectionRemarksonAssumptionmain_assumption_for_f} in Appendix.


\begin{assumption}[Assumptions for the Convex Function $f$]\label{main_assumption_for_f} 
    The convex function $f$ is assumed to satisfy the following conditions: (F1) $f$ is three-times differentiable; (F2) $f''(u) > 0$ for all $u > 0$; and (F3) $E_P\big[f''(dQ/dP)\big] < \infty$. 
\end{assumption}

\begin{assumption}[Assumption for the Support]\label{assumption_for_Support} 
    The support $\Omega$ is assumed to satisfy the following conditions: (O1) $\mathrm{diam}(\Omega) < \infty$. 
\end{assumption}


Under these conditions, we derive the upper and lower bounds for the $L_p$ error in DRE using variational $f$-divergence optimization. 
\begin{theorem}[Informal. See Theorem \ref{main_theorem_sample_requirement} and \ref{main_theorem_sample_requirement_2}]\label{main_section_3_theorem_sample_requirement} 
    Assume $\Omega$ is a compact set in $\mathbb{R}^d$, where $d \ge 3$, and $f$ satisfies Assumption \ref{main_assumption_for_f}.
    Let $P$ and $Q$ denote the probability measures on $\Omega$, and let $\phi$ represent a $K$-Lipschitz function that minimizes the $f$-divergence loss functions defined in Equation (\ref{main_Eq_loss_func_f_div}) using early stopping. Additionally, let $N=\min\{R, S\}$. 

    \textbf{(Upper Bound)} Assume Assumption \ref{main_assumption_upper}: 
    Thereafter, Equation (\ref{Eq_section_3_theorem_sample_requirement_t1_1}) holds for $1 \le p \le d/2$ such that 
    \begin{align} 
        \left\| \frac{q(\mathbf{x})}{p(\mathbf{x})} -  \phi(\mathbf{x}) \right\|_{L_p (\Omega, P)} 
        \lesssim  \ \ \frac{\mathrm{diam}(\Omega)}{N^{1/d}}\cdot  \left\{ L \cdot E\left[\left(\frac{dQ}{dP}\right)^{2\cdot p }\right]^{1/(2\cdot p )} 
        + K      \right\}   . 
        \label{Eq_section_3_theorem_sample_requirement_t1_1} 
    \end{align} 

    \textbf{(Lower Bound)} Assume Assumption \ref{main_assumption_lower}: 
    Equations (\ref{Eq_main_theorem_sample_requirement_t1_2xx}) and (\ref{Eq_section_3_theorem_sample_requirement_t1_2}) hold for $1 \le p \le d$ 
    such that 
    \begin{align} 
        E_{\mathbf{X}_P^1\cdots\mathbf{X}_P^N}\left[ \ \left\| \frac{q(\mathbf{x})}{p(\mathbf{x})} -  \phi(\mathbf{x}) \right\|_{L_p (\Omega, P)} \ \right] 
        & \gtrsim \   \frac{1}{N^{1/d}} \cdot   \left\{ \frac{1}{L} \cdot \left\{  E_P \left[ \left\{\frac{dQ}{dP}(\mathbf{x})\right\}^{p} \right] \right\}^{1/p}  - K \cdot  \mathrm{diam}(\Omega)\right\} 
        \label{Eq_main_theorem_sample_requirement_t1_2xx}\\ 
        &\gtrsim  \   \frac{1}{N^{1/d}} \cdot   \left\{ \frac{1}{L} \cdot  e^{\frac{(p - 1)}{ p }\cdot KL(P||Q)-1} -  K  \cdot \mathrm{diam}(\Omega)   \right\}, 
        \label{Eq_section_3_theorem_sample_requirement_t1_2} 
    \end{align} 
    where $|\cdot |_{L_p (\Omega, P)}$ denotes the $L_p$ norm on $\Omega$ with respect to measure $P$, and $KL(P||Q)$ denotes the KL-divergence between $P$ and $Q$. 
\end{theorem}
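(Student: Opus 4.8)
The plan is to reduce both bounds to moment estimates for the distance from a point to its nearest observed sample, exploiting two ingredients: (i) by the reformulation of the loss with early stopping described in Section \ref{subsection_f_divergence_variational}, the $K$-Lipschitz minimizer $\phi$ agrees with the true density ratio $r^*(\mathbf{x}) = q(\mathbf{x})/p(\mathbf{x})$ at the $N$ observed points; and (ii) both $\phi$ and $r^* = e^{-T^*}$ obey Lipschitz-type control: $\phi$ is $K$-Lipschitz, and since $T^*$ is $L$-Lipschitz, $|r^*(\mathbf{x}) - r^*(\mathbf{y})| \le L\max\{r^*(\mathbf{x}),r^*(\mathbf{y})\}\,\|\mathbf{x}-\mathbf{y}\|_\infty$, with a matching lower bound $|r^*(\mathbf{x})-r^*(\mathbf{y})| \ge \tfrac{1}{L}\min\{r^*(\mathbf{x}),r^*(\mathbf{y})\}\,\|\mathbf{x}-\mathbf{y}\|_\infty$ under the bi-Lipschitz hypothesis L1. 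Write $\mathbf{x}_0(\mathbf{x})$ for a nearest observed sample to $\mathbf{x}$ and $\delta_N(\mathbf{x}) = \|\mathbf{x}-\mathbf{x}_0(\mathbf{x})\|_\infty$. Inserting the common value $\phi(\mathbf{x}_0) = r^*(\mathbf{x}_0)$ into the triangle inequality and using $\min\{r^*(\mathbf{x}),r^*(\mathbf{x}_0)\} \ge r^*(\mathbf{x})\,e^{-L\,\mathrm{diag}(\Omega)}$ sandwiches $|r^*(\mathbf{x})-\phi(\mathbf{x})|$ between $\big(c_L\,r^*(\mathbf{x}) - K\big)\delta_N(\mathbf{x})$ and $\big(C_L\,r^*(\mathbf{x}) + K\big)\delta_N(\mathbf{x})$, with $c_L = \tfrac{1}{L}e^{-L\,\mathrm{diag}(\Omega)}$ and $C_L = L\,e^{L\,\mathrm{diag}(\Omega)}$; the diameter-dependent exponentials are constants (and tend to $1$ along the relevant scale $\delta_N \to 0$). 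Everything then reduces to bounding moments of $\delta_N$ against the density.

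For the \textbf{upper bound}, I would raise the upper sandwich estimate to the $p$-th power, take $E_P$, and split the cross term by the Cauchy--Schwarz inequality in $L_p(\Omega,P)$, $\|(C_L r^* + K)\delta_N\|_{L_p} \le C_L\,\|r^*\|_{L_{2p}}\|\delta_N\|_{L_{2p}} + K\,\|\delta_N\|_{L_{2p}}$. The exponent $2p$ is exactly why the statement is restricted to $1\le p\le d/2$: one invokes the expected nearest-neighbor spacing bound $E_P[\delta_N(\mathbf{x})^{2p}]^{1/(2p)} \lesssim \mathrm{diag}(\Omega)\,N^{-1/d}$ (with high probability over the samples), which is the standard rate in $\mathbb{R}^d$ and requires $2p\le d$. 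Since $\|r^*\|_{L_{2p}(\Omega,P)} = E[(dQ/dP)^{2p}]^{1/(2p)}$, this yields (\ref{Eq_section_3_theorem_sample_requirement_t1_1}).

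For the \textbf{lower bound} I would use the lower sandwich estimate, so that on $\{c_L r^*(\mathbf{x}) \ge K\}$ one has $|r^*(\mathbf{x}) - \phi(\mathbf{x})| \ge (c_L r^*(\mathbf{x}) - K)\delta_N(\mathbf{x})$. Raising to the $p$-th power, integrating against $P$, taking $E_{\mathbf{X}_P^1\cdots\mathbf{X}_P^N}$, and invoking a reverse nearest-neighbor estimate $E_{\mathbf{X}}[\delta_N(\mathbf{x})^{p}] \gtrsim N^{-p/d}$ holding on a subset of positive $P$-measure (this is where compactness of $\Omega$, Assumption \ref{assumption_for_Support}, and $p(\mathbf{x})>0\Leftrightarrow q(\mathbf{x})>0$ enter), and then peeling off the Lipschitz term by Minkowski's inequality after the truncation — legitimate because $r^*$ is bounded on the compact $\Omega$, with $\sup_\Omega r^* \le e^{L\,\mathrm{diag}(\Omega)}$ since $E_P[r^*]=1$, so the truncation and the $K$-terms cost only a term of order $K\,\mathrm{diag}(\Omega)$ — produces $E_{\mathbf{X}}[\|r^*-\phi\|_{L_p(\Omega,P)}] \gtrsim N^{-1/d}\big(\tfrac{1}{L}\,\|r^*\|_{L_p(\Omega,P)} - K\,\mathrm{diag}(\Omega)\big)$, which is (\ref{Eq_main_theorem_sample_requirement_t1_2xx}) as $\|r^*\|_{L_p(\Omega,P)} = E_P[(dQ/dP)^p]^{1/p}$. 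Finally, (\ref{Eq_section_3_theorem_sample_requirement_t1_2}) follows from a convexity bound: $\lambda \mapsto \log E_P[(dQ/dP)^\lambda]$ is convex and vanishes at $\lambda=0$ and $\lambda=1$ (since $E_P[dQ/dP]=1$), so comparing $\lambda=p$ to these anchor points — equivalently, monotonicity of the Rényi divergence in its order — gives the exponential lower bound $e^{\frac{p-1}{p}KL-1}$ in terms of the relevant KL-divergence, up to the explicit additive constant.

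The step I expect to be the main obstacle is the lower bound, for two reasons. First, hypothesis L1 controls \emph{increments} of $T^*$ from below, not its directional derivative along $\mathbf{x}-\mathbf{x}_0$, so turning it into a clean pointwise lower bound $|r^*(\mathbf{x})-r^*(\mathbf{x}_0)| \gtrsim r^*(\mathbf{x})\,\delta_N(\mathbf{x})$ with constants that do not degrade as $N\to\infty$ requires the careful argument flagged in Remark \ref{remark_on_T_bilipsiz}. Second, and more seriously, the claimed bound is on the \emph{first} moment $E_{\mathbf{X}}[\|r^*-\phi\|_{L_p(\Omega,P)}]$ while the natural computation controls $E_{\mathbf{X}}[\|r^*-\phi\|_{L_p(\Omega,P)}^p]$; passing between them in the right direction — together with the reverse nearest-neighbor estimate, which fails without a non-degeneracy control on the density over the whole support — seems to need a concentration argument showing $\delta_N(\mathbf{x}) \gtrsim N^{-1/d}$ on a set of $P$-measure bounded away from zero with probability bounded away from zero, rather than merely in expectation. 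The remaining manipulations (triangle / reverse-triangle inequalities, Cauchy--Schwarz, Minkowski, and the convexity bound in the last display) are routine.
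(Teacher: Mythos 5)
Your overall architecture matches the paper's: pin $\phi$ to $r^*=dQ/dP$ at the observed points (Proposition~\ref{main_proposition_loss_optimal_fdiv}), decompose by the $L_p$ triangle inequality across the nearest observed sample, control each piece by the Lipschitz hypotheses, and reduce to moment bounds on the nearest-neighbor spacing $\delta_N(\mathbf{x})$. For the upper bound your Hölder split with exponent $2p$ (hence the restriction $2p\le d$) is exactly the paper's route via Theorem~\ref{main_thorem_evaluating_nearestneibordistsq_upper} and Corollary~\ref{Apdx_corollary_evaluating_nearestneibordistsq_upper}, and your closing convexity/Rényi step for converting $\{E_P[(dQ/dP)^p]\}^{1/p}$ into $e^{\frac{p-1}{p}KL(Q||P)}$ is equivalent to the paper's Jensen computation.

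Two substantive gaps remain, and they are concentrated exactly where you anticipated. First, your sandwich constants $c_L=\frac1L e^{-L\,\mathrm{diag}(\Omega)}$ and $C_L=L\,e^{L\,\mathrm{diag}(\Omega)}$ are not what the theorem asserts: the statement has bare $L$ and $1/L$. The paper removes the exponential factors not by letting them ``tend to $1$'' (they don't --- they are fixed constants) but by a second-order Taylor expansion of $e^{-t}$ around $t=T^*(\mathbf{x})$: the leading term is exactly $r^*\cdot L\,\delta_N$, and the quadratic remainder, being weighted by $\delta_N^{2p}$, vanishes at rate $N^{-2/d}\ll N^{-1/d}$ after Hölder (Equation~\ref{Eq_proofofTheoremtheorem_sample_requirement_commn_2XX_zero}); your argument as written yields a weaker constant. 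Second, and more serious, your ``reverse nearest-neighbor estimate'' is the crux of the lower bound and you leave it as a black box. The paper's Theorem~\ref{main_thorem_evaluating_nearestneibordistsq_lower} is not a crude positive-measure argument but a Poisson-type computation: fixing a ball of radius $N^{-1/d}$ around $\mathbf{x}$, the number $Z_N(\mathbf{x})$ of samples in it is approximately $\mathrm{Binomial}(N,1/N)$, so the event $\{Z_N=1\}$ --- on which the nearest-neighbor distance is honestly of order $N^{-1/d}$ --- carries probability $\to e^{-1}$; restricting to that event and expanding the density locally (Lemma~\ref{Apdx_lemma_density_formula}, Corollary~\ref{Apdx_corollary_density_formula}) produces the weighted bound with the explicit constant $e^{-1}$ that feeds the exponent $-1$ in $e^{\frac{p-1}{p}KL(Q||P)-1}$. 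Without this mechanism the proposal cannot reproduce the stated bound. You also correctly flag the $E_{\hat{\mathbf{X}}}[\{E_P[\cdot]\}^{1/p}]$ versus $\{E_{\hat{\mathbf{X}}}[E_P[\cdot]]\}^{1/p}$ issue --- Jensen runs the wrong way for $p>1$ --- and this interchange, which the paper handles only implicitly in Equation~\ref{Eq_proofofTheoremtheorem_sample_requirement_l2dd}, would need an explicit justification in a complete proof. Finally, your proposal does not address the passage from the conceptual $\mu$-representation minimizer to a genuine early-stopped minimizer of $\mathcal{L}_f^{(R,S)}$, which costs $O_p(1/\sqrt N)$ and is precisely why $d\ge3$ is required (Theorems~\ref{main_theorem_delta_ratio} and \ref{main_theorem_sample_requirement_2}).
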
 
These bounds apply to all $K$-Lipschitz continuous estimators optimized  with early stopping using the $f$-divergence loss functions, as discussed in Section \ref{Subsection_in_validating_phase} and supported by Theorem \ref{main_theorem_sample_requirement_2}. 

Theorem \ref{main_section_3_theorem_sample_requirement} indicates that the curse of dimensionality arises when $p = 1$. For $p > 1$, both the curse of dimensionality and the large sample requirements for high KL-divergence data occur simultaneously. 
In particular, Equation (\ref{Eq_section_3_theorem_sample_requirement_t1_2}) shows that the $L_p$ error increases exponentially with growing KL-divergence for $p > 1$, and this growth accelerates as $p$ increases. These theoretical insights are validated by numerical experiments, which are presented in the following section. 

\subsection{Experimental Results.}\label{subsection_ExperimentalResults} 
We empirically verified the relationship between KL-divergence and data dimensionality, and their effects on the estimation accuracy of DRE through variational $f$-divergence optimization.
The results, which corroborate the implications of Theorem \ref{main_section_3_theorem_sample_requirement}, are presented in detail in Section \ref{Apdx_section_TheDetailsOfNumericalExperiments} of the Appendix. 

\begin{figure*}[t] 
    \begin{center} \centerline{\includegraphics[width=1.0\columnwidth]{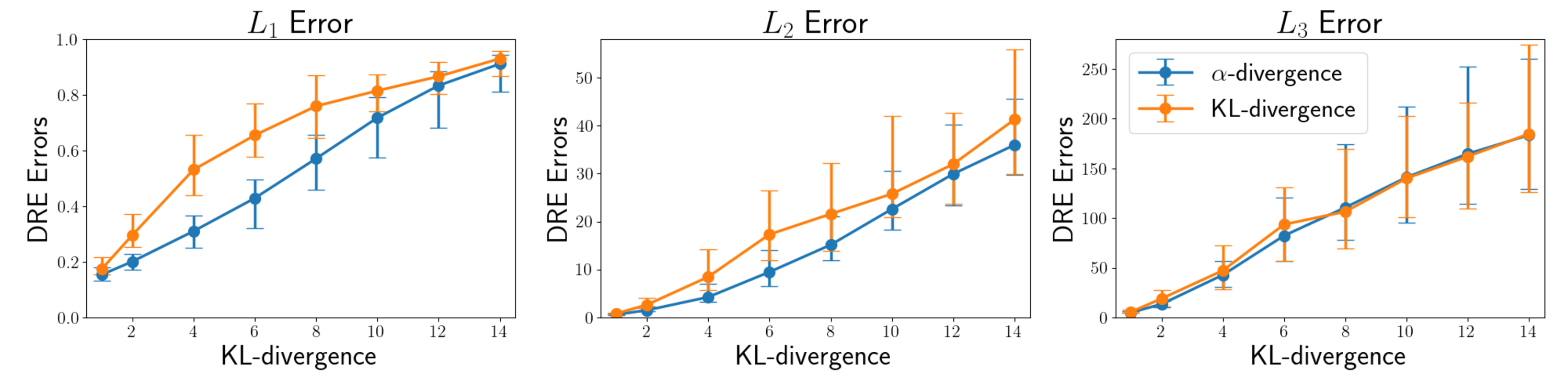}} 
        \caption{ 
            The experimental results of $L_p$ errors versus the magnitude of KL-divergence in the data are presented in Section \ref{subsection_ExperimentalResults}. The $x$-axis represents the magnitude of KL-divergence in synthetic datasets of fixed dimensionality. The $y$-axes of the left, center, and right graphs correspond to the $L_1$, $L_2$, and $L_3$ errors in DRE, respectively. 
            The plots depict the median values of the $y$-axis, while the error bars indicate the interquartile range (25th to 75th percentiles). 
            The blue line represents errors computed using the $\alpha$-divergence loss function, whereas the orange line corresponds to errors computed using the KL-divergence loss function. 
        } \label{Setion_3_Figure_ExperimentLp_KLdivergence} 
     \vskip -0.1in 
    \end{center} 
    \begin{center} \centerline{\includegraphics[width=1.0\columnwidth]{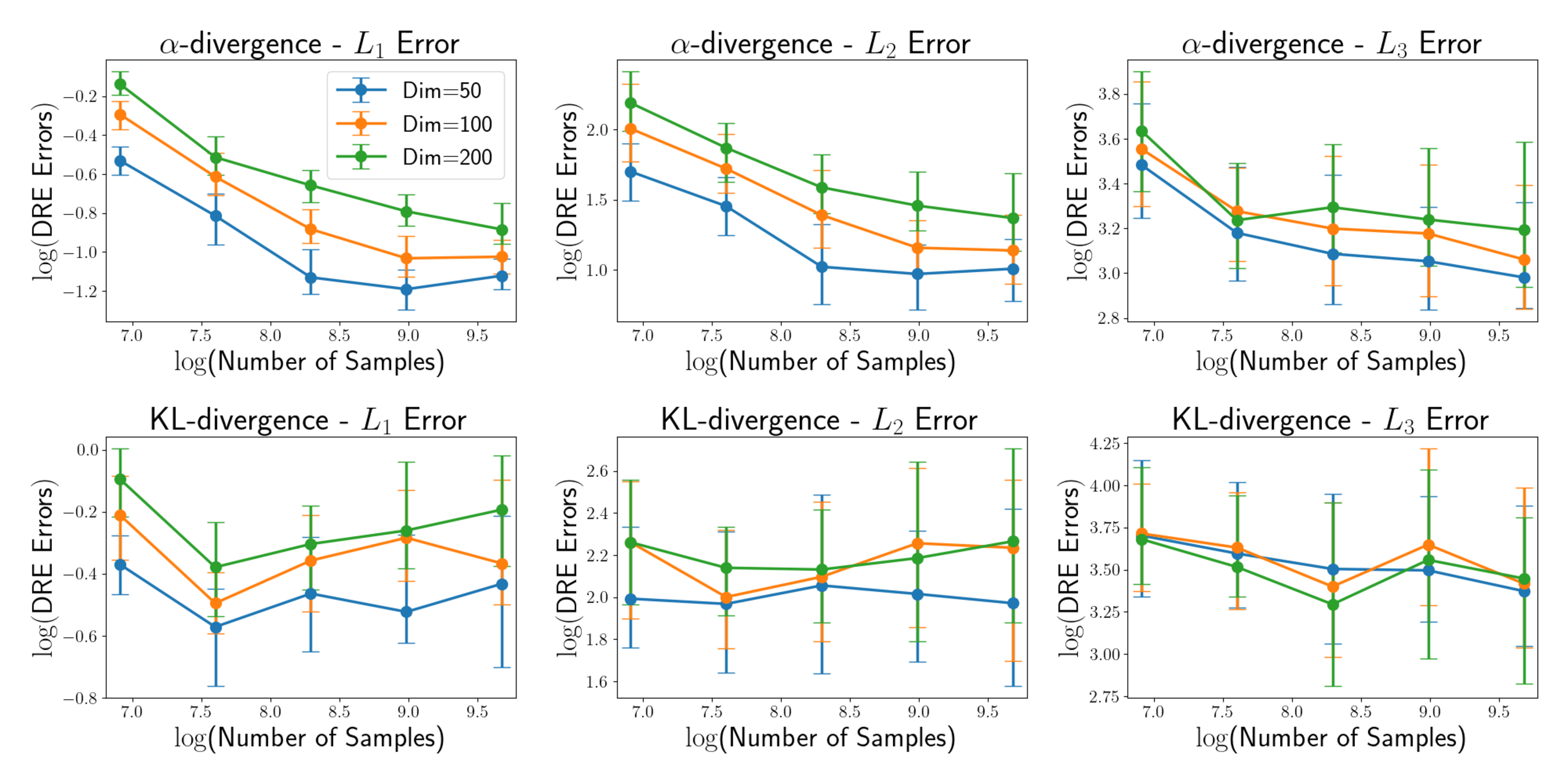}} \caption{ 
            The experimental results on $L_p$ errors versus the dimensionality of the data are presented in Section \ref{subsection_ExperimentalResults}. 
            The top row displays results using the $\alpha$-divergence loss function, whereas the bottom row presents results using the KL-divergence loss function. 
            The $x$-axis represents the logarithm of the number of samples utilized in the optimizations of DRE. 
            The $y$-axes of the left, center, and right graphs correspond to the $L_1$, $L_2$, and $L_3$ errors in DRE, respectively. 
            The plots show the median $y$-axis values, while the error bars represent the interquartile range (25th to 75th percentiles). 
            The blue, orange, and green lines correspond to data dimensions of 50, 100, and 200, respectively.} 
        \label{Setion_3_Figure_ExperimentLp_dimensions} 
    \end{center} 
    \vskip -0.3in 
\end{figure*} 

\paragraph{\texorpdfstring{$L_p$}{Lp} Errors vs. the KL-Divergence in Data} 

We conducted experiments to investigate the relationship between $L_1$, $L_2$, and $L_3$ errors in DRE and the KL-divergence of the data. 
In these experiments, we generated 100 sets of 5-dimensional datasets with KL-divergence values of 1, 2, 4, 6, 8, 10, 12, and 14. 
For each dataset, DRE was performed using the $\alpha$-divergence and KL-divergence loss functions, and the resulting $L_1$, $L_2$, and $L_3$ errors were recorded. 
The results are presented in Figure \ref{Setion_3_Figure_ExperimentLp_KLdivergence}. 
Details of the experimental settings and neural network training procedures are provided in Section \ref{Apdx_section_TheDetailsOfNumericalExperiments}. 

As shown in Figure \ref{Setion_3_Figure_ExperimentLp_KLdivergence}, 
the estimation errors for $p > 1$ increased significantly, 
with this growth accelerating as $p$ becomes larger. 
In contrast, when $p = 1$, the increase in estimation error was relatively mild. Consistent with Theorem \ref{main_section_3_theorem_sample_requirement}, these results demonstrate the significant impact of KL-divergence on $L_p$ errors for $p > 1$ in DRE with $f$-divergence loss functions.

\paragraph{\texorpdfstring{$L_p$}{Lp} Errors vs. the Dimensions of Data} 
We conducted experiments to investigate the relationship between $L_1$, $L_2$, and $L_3$ errors in DRE and the dimensionality of the data. 
In the experiments, we generated 100 datasets with dimensions of 50, 100, and 200, each having a KL-divergence value of 3. 
For each dataset, DRE was performed using $\alpha$-divergence and KL-divergence loss functions, and the resulting $L_1$, $L_2$, and $L_3$ errors were observed. 
The results are presented in Figure \ref{Setion_3_Figure_ExperimentLp_dimensions}. 
Details of the experimental settings and neural network training procedures are provided in Section \ref{Apdx_section_TheDetailsOfNumericalExperiments}. 

As depicted in Figure \ref{Setion_3_Figure_ExperimentLp_dimensions}, 
the estimation errors $L_1$, $L_2$, and $L_3$ increased as the dimensionality of the data increased for both the $\alpha$-divergence and KL-divergence loss functions. These results indicate that the curse of dimensionality affects all $L_p$ errors equally, as suggested by Theorem \ref{main_section_3_theorem_sample_requirement}.

\section{Overview of Upper and Lower Bound Derivations}\label{section_ProofOverview} 
In this section, we outline the derivation of the upper and lower bounds. 
We begin by introducing a conceptual reformulation of the $f$-divergence loss function, which serves as the foundation of our theoretical framework. 
Next, we derive the upper and lower bounds for DRE in terms of the $L_p$ error, based on this reformulation. 
Finally, we extend these results to the practical optimization scenario using the $f$-divergence loss function, incorporating early stopping by monitoring validation losses. 
This extension represents the core theoretical contribution of this study. 
Detailed statements and proofs for the theorems discussed in this section are provided in Section \ref{Apdx_section_proofs} of the Appendix. 

\subsection{Conceptual reformulation of the \texorpdfstring{$f$}{f}-divergence loss functions} 
\label{Section_Conceptualreformulationofthefdivergencelossfunctions} 
The optimization of $f$-divergence loss functions, denoted as $\mathcal{L}_{f}^{(R,S)}(\phi)$ in Equation (\ref{main_Eq_loss_func_f_div}), poses both practical and theoretical challenges due to its propensity to overfit the training data. 

To better understand this issue, consider a deterministic setting as described in Definition \ref{main_DefinitionfDivergencelossa}, where $(\mathbf{x}^{1}_{P}, \mathbf{x}^{2}_{P}, \ldots, \mathbf{x}^{R}_{P}) = (1, 2, \ldots, R)$ and $(\mathbf{x}^{1}_{Q}, \mathbf{x}^{2}_{Q}, \ldots, \mathbf{x}^{S}_{Q}) = (R+1, R+2, \ldots, R + S)$. Notably, $\{\mathbf{x}^{i}_{P}\}_{i=1}^R \cap \{\mathbf{x}^{i}_{Q}\}_{i=1}^S = \emptyset$. In this setup, 
we observe that $\hat{\mathcal{L}}_{f}^{(R,S)}(\phi) \rightarrow -\infty$ as $f^*\big(f'\big(\phi(\mathbf{x}^{i}_{P})\big)\big) \rightarrow -\infty$ and $- f'\big(\phi(\mathbf{x}^{j}_{Q}) \big) \rightarrow -\infty$ for all $1 \le i \le R$ and $1 \le j \le S$. 
In practice, this issue is mitigated by implementing early stopping, where validation losses are monitored during optimization. The present theoretical framework incorporates this practical strategy, allowing for a deeper analysis of both the optimization process and its implications for downstream tasks such as DRE. 

To bridge the gap between the practical and theoretical behaviors of $f$-divergence loss functions within our framework, we propose a conceptual reformulation of the loss function. 
\begin{definition}[$\mu$-Representation $f$-Divergence Loss]\label{main_Definition_murepresentation_f_divergenceloss} 
    Let $\mu$ be a probability measure with $P \ll \mu$ and $Q \ll \mu$.
    Let $N=\min\{R, S\}$ and let  $\hat{\mathbf{X}}_{\mu[N]} = \{\mathbf{X}^{1}_{\mu}, 
    \ldots, \mathbf{X}^{N}_{\mu} \}$ denote $N$ i.i.d. random variables from $\mu$. 
    For a twice differentiable convex function $f$, let 
    \begin{equation} 
        \widetilde{l}_{f}(u; \mathbf{x}) 
        = - f'\left(u \right) \cdot \frac{dQ}{d\mu}(\mathbf{x}) 
        + f^*\left(f'\left(u \right) \right) \cdot \frac{dP}{d\mu}(\mathbf{x}), 
        \label{Eq_approximate_loss_func_f_div_point} 
    \end{equation} 
    where $f^*$ denotes the Legendre transform of $f$: $f^*(\psi) = \sup_{u\in \mathbb{R}}\{\psi \cdot u - f(u)\}$. 
    The $\mu$-representation of the $f$-divergence loss $\mathcal{L}_{f}^{(R,S)}(\cdot)$ 
    in Equation (\ref{main_Eq_loss_func_f_div}) at the points 
    $\hat{\mathbf{X}}_{\mu[N]}$ is defined as 
    \begin{align} 
        \widetilde{\mathcal{L}}^{(N)}_{f}(\phi) 
        &\quad = \  \frac{1}{N} \cdot \sum_{i=1}^{N} \widetilde{l}_{f}(\phi; \mathbf{X}^{i}_{\mu}),  \label{Eq_approximate_loss_func_f_div} 
    \end{align} 
    where $\phi$ is a measurable function over $\Omega$ such 
    that $\phi:\Omega \rightarrow \mathbb{R}_{>0}$. 
\end{definition} 
This representation introduces an error of $1/\sqrt{N}$ between the practical $f$-divergence loss function $\mathcal{L}_{f}^{(R,S)}(\phi)$ and the $\mu$-representation $f$-divergence loss $\widetilde{\mathcal{L}}^{(N)}_{f}(\phi)$. 
However, this error is negligible when $d \ge 3$, which will be discussed in Section \ref{Subsection_in_validating_phase}. 

The optimization properties of this conceptual loss function are encapsulated in Proposition \ref{main_proposition_loss_optimal_fdiv}. 
\begin{proposition}\label{main_proposition_loss_optimal_fdiv} 
    Assume that $f$ satisfies Assumption \ref{main_assumption_for_f}. 
    Let $\phi_* =\allowbreak \arg \min_{\phi:\Omega \rightarrow \mathbb{R}_{>0}}\allowbreak \widetilde{\mathcal{L}}_{f}^{(N)}(\phi)$. 
    Then, $ \phi_*(\mathbf{X}_{\mu}^{i}) = \frac{dQ}{dP} (\mathbf{X}_{\mu}^{i})$, 
    for $i=1, 2, \ldots, N$. 
\end{proposition}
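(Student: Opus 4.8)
The plan is to exploit that $\widetilde{\mathcal{L}}_f^{(N)}(\phi)$ depends on $\phi$ only through its values at the finitely many sample points $\mathbf{X}_\mu^1,\dots,\mathbf{X}_\mu^N$, so that minimizing over all measurable $\phi\colon\Omega\to\mathbb{R}_{>0}$ decouples into $N$ independent one-dimensional problems. Writing $a_i=\frac{dQ}{d\mu}(\mathbf{X}_\mu^i)$ and $b_i=\frac{dP}{d\mu}(\mathbf{X}_\mu^i)$, we have $\widetilde{\mathcal{L}}_f^{(N)}(\phi)=\frac{1}{N}\sum_{i=1}^N g_i\!\big(\phi(\mathbf{X}_\mu^i)\big)$ with $g_i(u)=-a_i f'(u)+b_i f^*(f'(u))$, hence $\widetilde{\mathcal{L}}_f^{(N)}(\phi)\ge\frac{1}{N}\sum_{i=1}^N\min_{u>0}g_i(u)$ with equality precisely when $\phi(\mathbf{X}_\mu^i)$ minimizes $g_i$ for every $i$. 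Since, under the standing assumption $p(\mathbf{x})>0\iff q(\mathbf{x})>0$ (taking e.g.\ $\mu=(P+Q)/2$), at $\mu$-almost every sample point both $a_i>0$ and $b_i>0$, the function $\frac{dQ}{dP}$ is measurable, positive there, and equals $a_i/b_i$ at $\mathbf{X}_\mu^i$; it therefore suffices to show that each $g_i$ attains its minimum on $(0,\infty)$ at the unique point $u=a_i/b_i$. This both establishes that $\arg\min\widetilde{\mathcal{L}}_f^{(N)}$ is nonempty (the choice $\phi=\frac{dQ}{dP}$ works) and forces any minimizer $\phi_*$ to satisfy $\phi_*(\mathbf{X}_\mu^i)=a_i/b_i=\frac{dQ}{dP}(\mathbf{X}_\mu^i)$.

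The computational heart is to differentiate $g_i$ using Legendre duality. By Assumptions (F1)--(F2), $f$ is differentiable and strictly convex on $(0,\infty)$, so $f'$ is a strictly increasing bijection onto its range and the conjugate $f^*$ is differentiable on the interior of that range with $(f^*)'(\psi)=(f')^{-1}(\psi)$ --- namely the unique maximizer in $f^*(\psi)=\sup_{u}\{\psi u-f(u)\}$ --- so that $(f^*)'(f'(u))=u$ for all $u>0$. Applying the chain rule,
\[
  g_i'(u)=f''(u)\big(b_i\,(f^*)'(f'(u))-a_i\big)=f''(u)\,(b_i u-a_i).
\]
Because $f''(u)>0$ for all $u>0$, $g_i'$ vanishes only at $u=a_i/b_i$, is negative for $u<a_i/b_i$ and positive for $u>a_i/b_i$; thus $g_i$ is strictly decreasing then strictly increasing, and $u=a_i/b_i$ is its unique global minimizer on $(0,\infty)$. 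Combining this with the decoupling argument yields $\phi_*(\mathbf{X}_\mu^i)=\frac{dQ}{dP}(\mathbf{X}_\mu^i)$ for all $i$.

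The main obstacle is technical rather than conceptual: one must justify the Legendre identity $(f^*)'(f'(u))=u$ and, in particular, that $f'(u)$ lies in the interior of the effective domain of $f^*$ so that $f^*$ is differentiable there --- equivalently, that the supremum defining $f^*(f'(u))$ is attained at the single point $u$. This follows from strict convexity of $f$ together with (F3), which keeps the relevant expectations (and hence the pointwise quantities $f^*(f'(\cdot))$ entering the loss) finite. A secondary point is the measure-theoretic bookkeeping guaranteeing $a_i,b_i>0$ at $\mu$-almost all sample points, so that $a_i/b_i$ is a well-defined element of $\mathbb{R}_{>0}$ coinciding with $\frac{dQ}{dP}(\mathbf{X}_\mu^i)$; this is exactly where the assumption $p(\mathbf{x})>0\iff q(\mathbf{x})>0$ is used.
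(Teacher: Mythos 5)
Your proof is correct and follows essentially the same strategy as the paper: reduce the minimization of $\widetilde{\mathcal{L}}_f^{(N)}$ over measurable $\phi$ to $N$ independent one-dimensional problems at the sample points, and show each pointwise loss $\widetilde{l}_f(u;\mathbf{x})$ is uniquely minimized at $u=\frac{dQ}{dP}(\mathbf{x})$ by computing $\frac{d}{du}\widetilde{l}_f(u;\mathbf{x})=f''(u)\big(u-\frac{dQ}{dP}(\mathbf{x})\big)\frac{dP}{d\mu}(\mathbf{x})$ and using $f''>0$. The one small variation is in how the derivative of $f^*(f'(u))$ is obtained: you invoke the chain rule together with the Legendre-duality identity $(f^*)'(f'(u))=u$, whereas the paper (Lemma~\ref{Apdx_lemma_loss_is_mu_storongly_convex_0}) first substitutes the Fenchel--Young equality $f^*(f'(u))=f'(u)\,u-f(u)$ and then differentiates the resulting explicit expression, which sidesteps the differentiability-of-$f^*$ issue you flag as your main technical obstacle.
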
 
This reformulation ensures that the conceptual loss function remains bounded. Furthermore, all optimal points of the conceptual loss function are aligned with the ideal density ratios.

\subsection{Derivation of Upper and Lower Bounds for Optimal Functions of the \texorpdfstring{$\mu$}{μ}-Representation \texorpdfstring{$f$}{f}-Divergence Loss Functions}\label{Subsection_DerivationofUpperandLowerboundsforMuLoss} 
In this section, we derive the upper and lower bounds for the $L_p$ error in DRE for the optimal function of $\mathcal{L}_{f}^{(N)}(\cdot)$ as defined in the previous section. These bounds are  based on the expected distance between the nearest neighbors of each $\mathbf{X}_{\mu}^i$, for $1 \le i \le N$.

Hereafter, $\mathbf{X}_{\mu[N]}^{(1)}(\mathbf{x})$ denotes the nearest neighbor of $\mathbf{x}$ in $\hat{\mathbf{X}}_{\mu[N]} = \{ \mathbf{X}^{1}_{\mu}, \ldots, \mathbf{X}^{N}_{\mu} \}$. Specifically, define $\mathbf{X}_{\mu[N]}^{(1)}(\mathbf{x})$ as $\mathbf{X}_{\mu}^i$ in $\hat{\mathbf{X}}_{\mu[N]}$ such that 
$\|\mathbf{X}_{\mu}^l -  \mathbf{x}\|_{\infty} >  \|  \mathbf{X}_{\mu}^i - \mathbf{x} \|_{\infty}$, for all $l < i$, 
and $\|\mathbf{X}_{\mu}^u -  \mathbf{x}\|_{\infty} \geq  \|  \mathbf{X}_{\mu}^i - \mathbf{x} \|_{\infty}$ for all $u > i$. 
As in the previous section, let $\phi_* = \arg \min_{\phi:\Omega \rightarrow \mathbb{R}_{>0}} \widetilde{\mathcal{L}}_{f}^{(N)}(\phi)$.

As presented in Proposition \ref{main_proposition_loss_optimal_fdiv}, the optimal points of the $\mu$-representation $f$-divergence loss functions $\widetilde{\mathcal{L}}_{f}^{(N)}(\phi)$ coincide with the ideal density ratios. 
This fact provides the following equation, serving 
as the key bridge between the density ratio and its estimation. 
\begin{align} 
    \phi_*\big(\mathbf{X}_{\mu}^i\big) = \frac{dQ}{dP}\big(\mathbf{X}_{\mu}^i\big) 
    = \frac{dQ}{dP}\big(\mathbf{X}_{\mu[N]}^{(1)}\big(\mathbf{X}_{\mu}^i\big)\big). 
    \label{Eq_DerivationofUpperandLowerBounds_4} 
\end{align} 
Based on this equation, we can obtain 
\begin{align} 
    \bigg| \phi_* \big(\mathbf{X}_{\mu[N]}^{(1)}(\mathbf{x})\big) - \phi_*(\mathbf{x}) \bigg|^{ p } =  \left| \frac{dQ}{dP} \big(\mathbf{X}_{\mu[N]}^{(1)}(\mathbf{x})\big) - \phi_*(\mathbf{x}) \right|^{ p }. 
\end{align} 
Using the triangle inequality in the $L_p$ norm for the density ratios at $\mathbf{x}$ and its nearest neighbor, we obtain 
\begin{align} 
    &\left\{ E_P \left| \frac{dQ}{dP}(\mathbf{x}) - \frac{dQ}{dP}\big( \mathbf{X}_{\mu[N]}^{(1)}(\mathbf{x}) \big) \right|^{p} \right\}^{1/p} 
    - 
    \left\{ E_P \left| \frac{dQ}{dP}\big( \mathbf{X}_{\mu[N]}^{(1)}(\mathbf{x}) \big) - \phi_*(\mathbf{x}) \right|^{p} \right\}^{1/p} \allowdisplaybreaks\nonumber\\ 
    &\quad \leq \  \left\{ E_P \left| \frac{dQ}{dP}(\mathbf{x}) - \phi_*(\mathbf{x}) \right|^{p} \right\}^{1/p} \allowdisplaybreaks\nonumber\\ 
    &\quad \leq \ \left\{ E_P \left| \frac{dQ}{dP}(\mathbf{x}) - \frac{dQ}{dP}\big( \mathbf{X}_{\mu[N]}^{(1)}(\mathbf{x}) \big) \right|^{p} \right\}^{1/p} 
    + 
    \left\{ E_P \left| \frac{dQ}{dP}\big( \mathbf{X}_{\mu[N]}^{(1)}(\mathbf{x}) \big) - \phi_*(\mathbf{x}) \right|^{p} \right\}^{1/p}. 
    \label{Eq_DerivationofUpperandLowerBounds_1} 
\end{align} 
Assuming the $L$-bi-Lipschitz continuity of the energy function of the density ratio, $T^*(\mathbf{x}) = - \log q(\mathbf{x})/p(\mathbf{x})$, we yield 
\begin{align} 
    & \frac{1}{L^p} \cdot \left( \frac{dQ}{dP}\big( \mathbf{x} \big) \right)^{p} \left\| \mathbf{X}_{\mu[N]}^{(1)}(\mathbf{x}) - \mathbf{x} \right\|_{\infty}^{p} 
    + O_p\left( \frac{1}{N^{1/(2d)}} \right) \allowdisplaybreaks\nonumber\\ 
    &\quad \leq  \ \left| \frac{dQ}{dP}(\mathbf{x}) - \frac{dQ}{dP}\big( \mathbf{X}_{\mu[N]}^{(1)}(\mathbf{x}) \big) \right|^{p} \allowdisplaybreaks\nonumber\\ 
    &\quad \leq \ L^p\cdot \left( \frac{dQ}{dP}\big( \mathbf{x} \big) \right)^{p} \left\| \mathbf{X}_{\mu[N]}^{(1)}(\mathbf{x}) - \mathbf{x} \right\|_{\infty}^{p} 
    + O_p\left( \frac{1}{N^{1/(2d)}} \right). 
    \allowdisplaybreaks 
    \label{Eq_DerivationofUpperandLowerBounds_2} 
\end{align} 
Additionally, from the $K$-Lipschitz continuity of $\phi_*(\cdot)$ and Equation 
(\ref{Eq_DerivationofUpperandLowerBounds_4}), 
\begin{align} 
    \left| \frac{dQ}{dP} \big(\mathbf{X}_{\mu[N]}^{(1)}(\mathbf{x})\big) - \phi_*(\mathbf{x}) \right|^{ p } = \left| \phi_*\big(\mathbf{X}_{\mu[N]}^{(1)}(\mathbf{x})\big)- \phi_*(\mathbf{x}) \right|^{ p }  \le K^p \cdot  \left\| \mathbf{X}_{\mu[N]}^{(1)}(\mathbf{x}) - \mathbf{x} \right\|_{\infty}^{p}. 
    \label{Eq_DerivationofUpperandLowerBounds_5} 
\end{align} 
Equations (\ref{Eq_DerivationofUpperandLowerBounds_2}) and (\ref{Eq_DerivationofUpperandLowerBounds_5}) provide the upper and lower bounds of the 
difference in density ratios between $\mathbf{x}$ and its nearest neighbor $\mathbf{X}_{\mu[N]}^{(1)}(\mathbf{x})$ using their distance. 

To evaluate the expectation of the distance between $\mathbf{x}$ and its nearest neighbor $\mathbf{X}_{\mu[N]}^{(1)}(\mathbf{x})$, we present the following theorems: 
Theorem \ref{main_thorem_evaluating_nearestneibordistsq_upper} provides an upper bound for the expectation on the right-hand side of Equation (\ref{Eq_DerivationofUpperandLowerBounds_2}); and Theorem \ref{main_thorem_evaluating_nearestneibordistsq_lower} establishes a lower bound for the expectation on the left-hand side. 
\begin{theorem}\label{main_thorem_evaluating_nearestneibordistsq_upper} 
    Under the assumption that $\Omega$ is compact, for $1 \le p \le d/2$,
    \begin{align} 
        &\varlimsup_{N \rightarrow \infty} N^{1/d} \cdot \left\{ 
        E_{P} \left[ 
        \left\{\frac{dQ}{dP}(\mathbf{x})\right\}^p 
        \cdot 
        \Big\|  \mathbf{X}_{P[N]}^{(1)}(\mathbf{x}) - \mathbf{x}\Big\|^p_{\infty} 
        \right] \right\}^{1/p} \allowdisplaybreaks\nonumber\\ 
        &\quad  \leq \ 
        \mathrm{diam}(\Omega) \cdot 
        \left( E_{P} \left[ 
        \left\{\frac{dQ}{dP}(\mathbf{x})\right\}^{2\cdot p} \,\,\,   \right] \right)^{1/(2\cdot p)}. 
    \end{align} 
\end{theorem}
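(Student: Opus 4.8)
The plan is to peel off the density‑ratio weight with Cauchy--Schwarz, reduce the remaining $2p$‑th moment of the nearest‑neighbour distance to a $d$‑th moment via Jensen's inequality (this is precisely where $p\le d/2$ is used), and then control that $d$‑th moment by a disjoint‑cube packing argument. First, since the integrand is nonnegative, Cauchy--Schwarz gives
\begin{align*}
&E_{P}\!\left[\Big\{\tfrac{dQ}{dP}(\mathbf{x})\Big\}^{p}\,\big\|\mathbf{X}_{P[N]}^{(1)}(\mathbf{x})-\mathbf{x}\big\|_{\infty}^{p}\right]\\
&\qquad\le\Big(E_{P}\big[(dQ/dP)^{2p}\big]\Big)^{1/2}\Big(E_{P}\big[\big\|\mathbf{X}_{P[N]}^{(1)}(\mathbf{x})-\mathbf{x}\big\|_{\infty}^{2p}\big]\Big)^{1/2},
\end{align*}
the expectations being taken jointly over $\mathbf{x}$ and the $N$ samples. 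If $E_{P}[(dQ/dP)^{2p}]=\infty$ the asserted bound is vacuous, so assume it is finite; then, raising to the power $1/p$, multiplying by $N^{1/d}$ and taking $\varlimsup_{N}$, it suffices to show $\varlimsup_{N} N^{2p/d}E_{P}[\|\mathbf{X}_{P[N]}^{(1)}(\mathbf{x})-\mathbf{x}\|_{\infty}^{2p}]\le\mathrm{diag}(\Omega)^{2p}$. Because $2p\le d$, the map $t\mapsto t^{2p/d}$ is concave, so Jensen's inequality yields $E_{P}[\|\cdot\|_{\infty}^{2p}]\le\big(E_{P}[\|\cdot\|_{\infty}^{d}]\big)^{2p/d}$ and hence $N^{2p/d}E_{P}[\|\cdot\|_{\infty}^{2p}]\le\big(N\,E_{P}[\|\cdot\|_{\infty}^{d}]\big)^{2p/d}$. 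Everything is thus reduced to the single estimate $\varlimsup_{N}N\,E_{P}[\|\mathbf{X}_{P[N]}^{(1)}(\mathbf{x})-\mathbf{x}\|_{\infty}^{d}]\le\mathrm{diag}(\Omega)^{d}$.

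For that, I would enlarge the picture to $N+1$ i.i.d.\ points $Y_{0},\dots,Y_{N}\sim P$, with $\mathbf{x}=Y_{0}$ and the samples $Y_{1},\dots,Y_{N}$, and set $d_{k}$ to be the $\ell_{\infty}$‑distance from $Y_{k}$ to its nearest neighbour among $\{Y_{j}\}_{j\ne k}$; by exchangeability $(N+1)E_{P}[\|\mathbf{X}_{P[N]}^{(1)}(\mathbf{x})-\mathbf{x}\|_{\infty}^{d}]=E\big[\sum_{k=0}^{N}d_{k}^{d}\big]$. The open cubes $B_{k}=\{\mathbf{y}:\|\mathbf{y}-Y_{k}\|_{\infty}<d_{k}/2\}$ are pairwise disjoint---if $\mathbf{z}\in B_{k}\cap B_{l}$ with $k\ne l$, then $\|Y_{k}-Y_{l}\|_{\infty}<(d_{k}+d_{l})/2\le\max(d_{k},d_{l})\le\|Y_{k}-Y_{l}\|_{\infty}$, a contradiction---and each has Lebesgue measure $d_{k}^{d}$. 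Fix $\varepsilon>0$ and call $k$ good if $d_{k}\le\varepsilon$; every good $B_{k}$ lies within $\ell_{\infty}$‑distance $\varepsilon/2$ of $\Omega$, hence inside a cube of side $\mathrm{diag}(\Omega)+\varepsilon$, so by disjointness $\sum_{k\ \mathrm{good}}d_{k}^{d}\le(\mathrm{diag}(\Omega)+\varepsilon)^{d}$. For the remaining (bad) indices I use the crude bound $d_{k}^{d}\le\mathrm{diag}(\Omega)^{d}$ together with
\[
E\big[\#\{k:\,d_{k}>\varepsilon\}\big]=(N+1)\,\Pr\!\big(\|\mathbf{X}_{P[N]}^{(1)}(\mathbf{x})-\mathbf{x}\|_{\infty}>\varepsilon\big)\le(N+1)\,E_{\mathbf{x}\sim P}\big[e^{-N\,P(B_{\infty}(\mathbf{x},\varepsilon))}\big],
\]
where $B_{\infty}(\mathbf{x},\varepsilon)$ is the open $\ell_{\infty}$‑ball of radius $\varepsilon$. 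Since $\Omega$ is the compact support of $P$, the map $\mathbf{x}\mapsto P(B_{\infty}(\mathbf{x},\varepsilon))$ is lower semicontinuous and strictly positive on $\Omega$, hence attains a minimum $c_{\varepsilon}>0$, so the right‑hand side is at most $(N+1)e^{-Nc_{\varepsilon}}\to0$. Combining the two contributions, $\varlimsup_{N}(N+1)E_{P}[\|\cdot\|_{\infty}^{d}]\le(\mathrm{diag}(\Omega)+\varepsilon)^{d}$ for every $\varepsilon>0$, and letting $\varepsilon\downarrow0$ gives the required bound; retracing the Jensen and Cauchy--Schwarz steps then yields the theorem.

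The main obstacle is this last packing estimate. The naive version---placing all $B_{k}$ inside the $\tfrac12\mathrm{diag}(\Omega)$‑enlargement of $\Omega$---only produces the constant $2^{d}\mathrm{diag}(\Omega)^{d}$, i.e.\ it loses a factor $2^{d}$; removing that factor forces the argument that all but an expected $o(N)$ of the nearest‑neighbour distances are already below any prescribed $\varepsilon$, which is exactly where compactness of the support enters, through the uniform lower bound $\inf_{\mathbf{x}\in\Omega}P(B_{\infty}(\mathbf{x},\varepsilon))>0$. (Alternatively, one could invoke the classical first‑order asymptotics $N^{m/d}E_{P}[\|\mathbf{X}_{P[N]}^{(1)}(\mathbf{x})-\mathbf{x}\|_{\infty}^{m}]\to 2^{-m}\Gamma(1+m/d)\int_{\Omega}p^{\,1-m/d}$ for $m\le d$ and bound the limit by $\mathrm{diag}(\Omega)^{m}$ using Hölder's inequality and $\Gamma(1+t)\le1$ on $[0,1]$, at the cost of a separate uniform‑integrability step.) A minor point is the exchangeability bookkeeping relating the query point $\mathbf{x}$ to the $N$ samples, which merely turns $N$ into $N+1$ and so disappears in the $\varlimsup$.
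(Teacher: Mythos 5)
Your proposal is correct and follows the same overall skeleton as the paper's proof: separate the density-ratio weight by Cauchy--Schwarz, reduce the $2p$-th moment of the nearest-neighbour distance to the $d$-th moment by concavity (using $2p\le d$), and bound the $d$-th moment by the exchangeability-plus-disjoint-cube packing argument. The genuine difference is in the packing step. In the paper's Theorem~\ref{Apdx_theorem_evaluating_nearestneibordistsq_upper} the cubes are defined as $\Delta_i=\Omega\cap\Delta(\mathbf{X}_{\mu}^i,r_i)$, disjointness gives $\sum_i\lambda(\Delta_i)\le\lambda(\Omega)\le\mathrm{diag}(\Omega)^d$, and the proof then asserts $\lambda(\Delta_i)=r_i^d$. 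That equality is false in general: it should be $\lambda(\Delta_i)\le r_i^d$, an inequality that runs the wrong way and does not let one conclude $\sum_i r_i^d\le\mathrm{diag}(\Omega)^d$; dropping the intersection with $\Omega$ instead yields the crude $2^d\mathrm{diag}(\Omega)^d$ you note. Your good/bad index split --- packing the small cubes into a $(\mathrm{diag}(\Omega)+\varepsilon)$-cube, and killing the rare large ones through $(N+1)\Pr(d_0>\varepsilon)\le(N+1)e^{-Nc_\varepsilon}\to0$ with $c_\varepsilon=\inf_{\mathbf{x}}P(B_\infty(\mathbf{x},\varepsilon))>0$ --- is exactly the missing argument needed to recover the claimed constant in the limit, so your route actually repairs a gap in the paper's derivation rather than merely duplicating it. Two small points worth tightening: the lower semicontinuity/positivity claim for $c_\varepsilon$ requires taking the infimum over $\mathrm{supp}(P)$ (which is compact) rather than over an arbitrary compact $\Omega\supseteq\mathrm{supp}(P)$, and since $\mathbf{x}\sim P$ lies a.s.\ in $\mathrm{supp}(P)$ this costs nothing; and one should note that the statement is a $\varlimsup$, so the distinction between $N$ and $N+1$ indeed vanishes as you say. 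Your alternative route via the classical first-order asymptotics for nearest-neighbour moments is also sound and gives a strictly smaller constant, but it needs the uniform-integrability step you flag and is genuinely different from the paper's elementary argument.
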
 
\begin{theorem}\label{main_thorem_evaluating_nearestneibordistsq_lower} 
    Let $P$ and $Q$ be probability measures 
    on a compact set $\Omega$ in $\mathbb{R}^d$ with $d \ge 1$. Assume that $P \ll \lambda$ and $Q \ll \lambda$, where $\lambda$ denotes the Lebesgue measure on $\mathbb{R}^d$. 
    Let $p$ be a positive constant such that $p \ge 1$. 
    Assume $E[(dQ/dP)^p] < \infty$. 
    Then, 
    \begin{align} 
        &\varliminf_{N \rightarrow \infty} N^{1/d} \cdot 
        \left\{ 
        E_{\hat{\mathbf{X}}_{P[N]}}\left[  E_{P} \left[ \, 
        \left\{\frac{dQ}{dP}\left( \mathbf{X}_{P[N]}^{(1)}(\mathbf{x}) \right)\right\}^{p} \cdot 
        \Big\| \mathbf{X}_{P[N]}^{(1)}(\mathbf{x}) - \mathbf{x}\Big\|^p_{\infty} 
        \right]\right]\,   \right\}^{1/p}   \allowdisplaybreaks\nonumber\\ 
        &\quad  \geq \   e^{-1}\cdot 
        \left\{ 
        E_{P}  \left[  \left\{\frac{dQ}{dP}(\mathbf{x})\right\}^{p} \, \right] \right\}^{1/p}, 
         \label{Eq_DerivationofUpperandLowerBounds_3} 
    \end{align} 
    where $E_{\hat{\mathbf{X}}_{P[N]}}[\cdot]$ denotes the expectation over each variable in 
    $\hat{\mathbf{X}}_{P[N]}=\{\mathbf{X}_{P}^1, \mathbf{X}_{P}^2, \ldots, \mathbf{X}_{P}^N\}$. 
\end{theorem}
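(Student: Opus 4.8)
The plan is to replace the double expectation by an explicit double integral via the exact law of the nearest neighbour, and then to read off the constant $e^{-1}$ from the elementary limit $(1-1/N)^{N}\to e^{-1}$. Fix $\mathbf{x}\in\Omega$ and set $G_{\mathbf{x}}(t):=P(\{\mathbf{z}:\|\mathbf{z}-\mathbf{x}\|_{\infty}<t\})$. Since $P\ll\lambda$, boundaries of $\|\cdot\|_{\infty}$-cubes are $P$-null and the sample points are a.s.\ distinct, so conditionally on $\mathbf{x}$ the nearest neighbour $\mathbf{X}_{P[N]}^{(1)}(\mathbf{x})$ has density $\mathbf{y}\mapsto N\bigl(1-G_{\mathbf{x}}(\|\mathbf{y}-\mathbf{x}\|_{\infty})\bigr)^{N-1}$ with respect to $P$ (in particular the index-based tie rule is immaterial). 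Substituting this law and applying Tonelli's theorem (legitimate since all integrands are nonnegative and $\Omega$ is compact),
\begin{align*}
E_{\hat{\mathbf{X}}_{P[N]}}\!\bigl[E_{P}[\,\cdots\,]\bigr]=\int_{\Omega}\Bigl(\tfrac{dQ}{dP}(\mathbf{y})\Bigr)^{p}J_{N}(\mathbf{y})\,dP(\mathbf{y}),\quad J_{N}(\mathbf{y}):=N\!\int_{\Omega}\|\mathbf{y}-\mathbf{x}\|_{\infty}^{p}\bigl(1-G_{\mathbf{x}}(\|\mathbf{y}-\mathbf{x}\|_{\infty})\bigr)^{N-1}dP(\mathbf{x}).
\end{align*}
The decisive point of this rewriting is that $dQ/dP$ is now evaluated at the free variable $\mathbf{y}$, not at the random nearest neighbour; since the theorem assumes no continuity of $dQ/dP$, attacking $dQ/dP(\mathbf{X}^{(1)}_{P[N]}(\mathbf{x}))$ directly would be hopeless.

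Next I would derive, for $\lambda$-a.e.\ $\mathbf{y}$, a lower bound on $\varliminf_{N}N^{p/d}J_{N}(\mathbf{y})$. Let $r_{N}(\mathbf{y})$ be chosen so that $P(\{\mathbf{z}:\|\mathbf{z}-\mathbf{y}\|_{\infty}<2r_{N}(\mathbf{y})\})=1/N$. Restricting the $\mathbf{x}$-integral in $J_{N}(\mathbf{y})$ to $\{\mathbf{x}:\|\mathbf{x}-\mathbf{y}\|_{\infty}\le r_{N}(\mathbf{y})\}$ and using the inclusion $\{\mathbf{z}:\|\mathbf{z}-\mathbf{x}\|_{\infty}<\|\mathbf{y}-\mathbf{x}\|_{\infty}\}\subseteq\{\mathbf{z}:\|\mathbf{z}-\mathbf{y}\|_{\infty}<2\|\mathbf{y}-\mathbf{x}\|_{\infty}\}$ together with monotonicity of $G_{\mathbf{y}}$ shows $\bigl(1-G_{\mathbf{x}}(\|\mathbf{y}-\mathbf{x}\|_{\infty})\bigr)^{N-1}\ge(1-1/N)^{N-1}$ on that set, so $J_{N}(\mathbf{y})\ge(1-1/N)^{N-1}\,N\!\int_{\{\|\mathbf{x}-\mathbf{y}\|_{\infty}\le r_{N}(\mathbf{y})\}}\|\mathbf{y}-\mathbf{x}\|_{\infty}^{p}\,dP(\mathbf{x})$. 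Rescaling $\mathbf{x}=\mathbf{y}+N^{-1/d}\mathbf{u}$ and invoking the Lebesgue differentiation theorem at $\mathbf{y}$ — so that, after rescaling, $N$ times the $P$-mass of shrinking cubes tends to $(dP/d\lambda)(\mathbf{y})$ times Lebesgue measure and $N^{1/d}r_{N}(\mathbf{y})$ converges to an explicit positive limit — together with a dominated-convergence estimate for the resulting $\mathbf{u}$-integral (legitimate because $\mathrm{diag}(\Omega)<\infty$ forces $r_{N}(\mathbf{y})\to 0$ and confines everything to a fixed compact set), I obtain $\varliminf_{N}N^{p/d}J_{N}(\mathbf{y})\ge e^{-1}\kappa(\mathbf{y})$, where $\kappa(\mathbf{y})>0$ is built from the geometric integral $\int_{\mathbb{R}^{d}}\|\mathbf{u}\|_{\infty}^{p}e^{-c\|\mathbf{u}\|_{\infty}^{d}}d\mathbf{u}$ and $(dP/d\lambda)(\mathbf{y})$; the factor $e^{-1}$ is precisely $\lim_{N}(1-1/N)^{N-1}$, and after collecting the remaining geometric constants $\kappa(\mathbf{y})$ reduces to the weight needed below.

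Finally I would assemble the bound. All integrands being nonnegative, Fatou's lemma gives $\varliminf_{N}N^{p/d}E_{\hat{\mathbf{X}}_{P[N]}}[E_{P}[\cdots]]=\varliminf_{N}\int_{\Omega}(dQ/dP(\mathbf{y}))^{p}N^{p/d}J_{N}(\mathbf{y})\,dP(\mathbf{y})\ge\int_{\Omega}(dQ/dP(\mathbf{y}))^{p}\varliminf_{N}N^{p/d}J_{N}(\mathbf{y})\,dP(\mathbf{y})\ge e^{-1}\int_{\Omega}(dQ/dP(\mathbf{y}))^{p}\kappa(\mathbf{y})\,dP(\mathbf{y})$; substituting the reduced form of $\kappa$ and extracting $p$-th roots yields $\varliminf_{N}N^{1/d}\{E_{\hat{\mathbf{X}}_{P[N]}}[E_{P}[\cdots]]\}^{1/p}\ge e^{-1}\{E_{P}[(dQ/dP)^{p}]\}^{1/p}$, which is the assertion. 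The hypothesis $E[(dQ/dP)^{p}]<\infty$ is what guarantees finiteness of the right-hand side and legitimacy of the interchanges.

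The step I expect to be the main obstacle is the a.e.\ pointwise estimate on $J_{N}(\mathbf{y})$. The heuristic ``$N\cdot P(\text{small cube near }\mathbf{y})\approx(dP/d\lambda)(\mathbf{y})\cdot\mathrm{vol}$'' must be turned into a genuine lower bound on $\varliminf_{N}N^{p/d}J_{N}(\mathbf{y})$, and two complications arise: the averaging window shrinks at the $N$-dependent rate $N^{-1/d}$, and the relevant cubes $\{\mathbf{z}:\|\mathbf{z}-\mathbf{x}\|_{\infty}<\|\mathbf{y}-\mathbf{x}\|_{\infty}\}$ are off-centre — they carry $\mathbf{y}$ on their boundary rather than at the centre — so one needs a form of the Lebesgue differentiation theorem robust to off-centre shrinking cubes together with an explicit integrable dominating function for the $\mathbf{u}$-integral, and the finiteness of $\mathrm{diag}(\Omega)$ is what secures that domination. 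A secondary subtlety is that only a $\varliminf$, not a full limit, is claimed, so Fatou's lemma rather than dominated convergence is the correct device at the outermost step.
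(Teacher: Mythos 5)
Your route is genuinely different from the paper's: you pass through the exact conditional law of the nearest neighbour (density $N(1-G_{\mathbf{x}}(\cdot))^{N-1}$ against $P$) and swap orders by Tonelli so that $dQ/dP$ sits on the free variable $\mathbf{y}$; the paper instead restricts to the event $\{Z_N(\mathbf{x})=1\}$ (exactly one sample in the cube of side $2N^{-1/d}$ about $\mathbf{x}$), which identifies the nearest neighbour with a single $\mathbf{X}_P^i$ and lets the remaining $N-1$ coordinates factor by independence into $(1-1/(N-1)-o(1/(N-1)))^{N-2}$. Both devices produce $e^{-1}$ from $(1-1/N)^{N}$, and your Tonelli reformulation is correct as far as it goes.

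However there is a genuine gap at the step you yourself flagged as the main obstacle: $\kappa(\mathbf{y})$ does \emph{not} reduce to the weight you need. Carrying the computation through, the calibration $P\bigl(\{\|\mathbf{z}-\mathbf{y}\|_\infty<2r_N(\mathbf{y})\}\bigr)=1/N$ gives $r_N(\mathbf{y})\sim\tfrac14\bigl(N\,(dP/d\lambda)(\mathbf{y})\bigr)^{-1/d}$ at a Lebesgue point, while the truncated inner integral is asymptotic to $(dP/d\lambda)(\mathbf{y})\cdot\tfrac{d\,2^d}{p+d}\,r_N(\mathbf{y})^{p+d}$, so
\begin{align*}
\varliminf_{N\to\infty}N^{p/d}J_N(\mathbf{y})\;\ge\;e^{-1}\cdot\frac{d}{p+d}\cdot 2^{-(2p+d)}\cdot\bigl((dP/d\lambda)(\mathbf{y})\bigr)^{-p/d}.
\end{align*}
Thus $\kappa(\mathbf{y})$ carries both a factor $(dP/d\lambda)(\mathbf{y})^{-p/d}$, which is small exactly where $P$ is concentrated, and the dimensional constant $\tfrac{d}{p+d}2^{-(2p+d)}<1$. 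After Fatou your bound is only $e^{-1}\bigl(\tfrac{d}{p+d}2^{-(2p+d)}\bigr)^{1/p}\bigl\{E_P[(dQ/dP)^p(dP/d\lambda)^{-p/d}]\bigr\}^{1/p}$, which does not in general dominate $e^{-1}\{E_P[(dQ/dP)^p]\}^{1/p}$. The $2^{-(2p+d)}$ is precisely the cost of the containment $\{\|\mathbf{z}-\mathbf{x}\|_\infty<\|\mathbf{y}-\mathbf{x}\|_\infty\}\subseteq\{\|\mathbf{z}-\mathbf{y}\|_\infty<2\|\mathbf{y}-\mathbf{x}\|_\infty\}$, lossy by $2^d$ in volume --- the off-centre-cube issue you foresaw. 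Closing this would require either showing the constants cancel (they do not in this parametrisation) or re-calibrating $r_N$ so that $\kappa\equiv1$, which in turn destroys the uniform survival bound $(1-G)^{N-1}\ge(1-1/N)^{N-1}$. For completeness: the paper's own proof silently drops the analogous factor $\tfrac{dP}{d\mu}(\mathbf{X}_P^i)$ and the attendant geometric constants at Equation~(\ref{Eq_proof_Apdx_lower_bond_eq_p_10}), so the difficulty you have surfaced is real and shared, not an artefact of your alternative route.
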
 
Notably, using Jensen's inequality on the right-hand side of Equation (\ref{Eq_DerivationofUpperandLowerBounds_3}) in Theorem \ref{main_thorem_evaluating_nearestneibordistsq_lower}, 
the KL-divergence between $P$ and $Q$ appears in the lower bound such that 
\begin{align} 
    e^{-1}\cdot \left\{ E_{P} \left[ \left\{\frac{dQ}{dP}( \mathbf{x}) \right\}^{p}\right] \right\}^{1/p} 
    &  =  e^{-1}\cdot \left\{ E_{Q} \left[ \left\{\frac{dQ}{dP}( \mathbf{x}) \right\}^{p-1}\right] \right\}^{1/p} \nonumber\\ 
    &  =  e^{-1}\cdot \left\{ E_{Q} \bigg[e^{(p-1)\cdot \log \frac{dQ}{dP}( \mathbf{x})}\bigg] \right\}^{1/p}  \nonumber\\ 
    & \ge e^{-1}\cdot  \left\{ e^{E_{Q} \big[(p-1) \cdot \log \frac{dQ}{dP}( \mathbf{x}) \big]} \right\}^{1/p}
    =  e^{\frac{p-1}{p} \cdot KL(Q||P)-1}. 
\end{align} 

We derive the upper and lower bounds for the $L_p$ error in DRE for the optimally estimated functions $\widetilde{\mathcal{L}}_{f}^{(N)}(\cdot)$, as stated in Theorem \ref{main_theorem_sample_requirement}. 
\begin{theorem}\label{main_theorem_sample_requirement} 
    Assume $\Omega$ is a compact set in $\mathbb{R}^d$ with $d \ge 3$, and that $f$ satisfies Assumption \ref{main_assumption_for_f}. Let $P$ and $Q$ be probability measures on $\Omega$, assuming that $P \ll \lambda$ and $Q \ll \lambda$, where $\lambda$ denotes the Lebesgue measure on $\mathbb{R}^d$. 
    Let $T^*(\mathbf{x})$ be the energy function of $dQ/dP(\mathbf{x})$ defined as $T^*(\mathbf{x}) = -\log dQ/dP(\mathbf{x})$. 
    Let $\widetilde{\mathcal{F}}_{K\text{-}\mathit{Lip}}^{(N)}$ denote the set of all $K$-Lipschitz continuous functions on $\Omega$ that minimize $\widetilde{\mathcal{L}}_{f}^{(N)}(\cdot)$. Specifically, define 
    \begin{equation} 
        \widetilde{\mathcal{F}}^{(N)} = \left\{\phi_*:\Omega \rightarrow \mathbb{R}_{>0} \ \Big| \  \widetilde{\mathcal{L}}_{f}^{(N)}(\phi_*) =  \min_{\phi} \widetilde{\mathcal{L}}_{f}^{(N)}(\phi) \right\}, 
    \end{equation} 
    and 
    \begin{equation} 
        \mathcal{F}_{K\text{-}\mathit{Lip}} =  \left\{\phi:\Omega \rightarrow \mathbb{R}_{>0} \  \Big| \ \big| \phi(\mathbf{y}) - \phi(\mathbf{x}) \big| \leq K \cdot  \big\| \mathbf{y} - \mathbf{x} \big\|_{\infty} \ \text{for all} \ \mathbf{y}, \mathbf{x} \in  \Omega \right\}. 
    \end{equation} 
    Subsequently, let $\widetilde{\mathcal{F}}_{K\text{-}\mathit{Lip}}^{(N)} =  \widetilde{\mathcal{F}}^{(N)} \cap  \mathcal{F}_{K\text{-}\mathit{Lip}}$. 

    \textbf{(Upper Bound)} Assume that $T^*(\mathbf{x})$ satisfies Assumption \ref{main_assumption_upper}. 
    Thereafter, for $1 \le p \le d/2$, Equation (\ref{Eq_main_theorem_sample_requirement_t1_1}) holds for any $\phi \in \widetilde{\mathcal{F}}_{K\text{-}\mathit{Lip}}^{(N)}$ such that
    \begin{align} 
        & \varlimsup_{N \rightarrow \infty} N^{1/d} \cdot \left\{E_P \left| \frac{dQ}{dP}(\mathbf{x}) - \phi(\mathbf{x}) \right|^{ p } \right\}^{1/p}\allowdisplaybreaks\nonumber\\ 
        & \quad \le  L \cdot \mathrm{diam}(\Omega) \cdot \left\{  E_P \left[ \left\{\frac{dQ}{dP}(\mathbf{x})\right\}^{2\cdot p } \right] \right\}^{1/(2 \cdot p)}  + K \cdot  \mathrm{diam}(\Omega). 
        \label{Eq_main_theorem_sample_requirement_t1_1} 
    \end{align} 
    \textbf{(Lower Bound)} Assume that $T^*(\mathbf{x})$ satisfies Assumption \ref{main_assumption_lower}. 
    Then, Equations (\ref{Eq_main_theorem_sample_requirement_t1_2}) and (\ref{Eq_main_theorem_sample_requirement_t1_3}) hold for any $\phi \in \widetilde{\mathcal{F}}_{ K\text{-}\mathit{Lip}}^{(N)}$ such that
    \begin{align} 
        & \varliminf_{N \rightarrow \infty} N^{1/d} \cdot E_{\hat{\mathbf{X}}_{P[N]}}\left[ \left\{E_P \left| \frac{dQ}{dP}(\mathbf{x}) - \phi(\mathbf{x}) \right|^{ p } \right\}^{1/ p } \  \right] \allowdisplaybreaks\nonumber\\ 
        & \quad \ge \   \frac{1}{L} \cdot \left\{  E_P \left[ \left\{\frac{dQ}{dP}(\mathbf{x})\right\}^{ p } \right] \right\}^{1/p}  - K \cdot  \mathrm{diam}(\Omega) 
        \label{Eq_main_theorem_sample_requirement_t1_2}\\ 
        &\quad \ge \  \frac{1}{L} \cdot  e^{\frac{p-1}{p} \cdot KL(Q||P)-1} - K \cdot  \mathrm{diam}(\Omega) 
        \label{Eq_main_theorem_sample_requirement_t1_3} 
    \end{align} 
\end{theorem}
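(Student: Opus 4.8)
The plan is to carry out the reduction sketched in Section~\ref{Subsection_DerivationofUpperandLowerboundsforMuLoss} rigorously: express the $L_p(\Omega,P)$ error of an arbitrary $K$-Lipschitz minimizer $\phi$ of $\widetilde{\mathcal{L}}_{f}^{(N)}(\cdot)$ as a perturbation of the oscillation of $dQ/dP$ between a point and its nearest sample neighbor, and then control that oscillation with Theorems~\ref{main_thorem_evaluating_nearestneibordistsq_upper} and~\ref{main_thorem_evaluating_nearestneibordistsq_lower}. First I would fix such a $\phi$ and invoke Proposition~\ref{main_proposition_loss_optimal_fdiv} to pin it down at the sample points, so that $\phi(\mathbf{X}_{P[N]}^{(1)}(\mathbf{x})) = \frac{dQ}{dP}(\mathbf{X}_{P[N]}^{(1)}(\mathbf{x}))$ for every $\mathbf{x}$, because the nearest neighbor of $\mathbf{x}$ is itself one of the sample points; this is Equation~(\ref{Eq_DerivationofUpperandLowerBounds_4}), and it is the only place the minimality of $\phi$ is used. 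Next I would insert $\frac{dQ}{dP}(\mathbf{X}_{P[N]}^{(1)}(\mathbf{x}))$ into $\frac{dQ}{dP}(\mathbf{x}) - \phi(\mathbf{x})$ and apply the triangle inequality in $L_p(\Omega,P)$ to obtain the two-sided sandwich~(\ref{Eq_DerivationofUpperandLowerBounds_1}), which bounds the error by (oscillation term)~$\pm$~(estimation term).

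For the oscillation term I would write $\frac{dQ}{dP}=e^{-T^{*}}$ and Taylor-expand at $\mathbf{x}$: Assumption~\ref{main_assumption_upper} ($L$-Lipschitz $T^{*}$) yields the upper half of~(\ref{Eq_DerivationofUpperandLowerBounds_2}) and Assumption~\ref{main_assumption_lower} ($L$-bi-Lipschitz $T^{*}$) the lower half, both equal to $L^{\pm p}(\frac{dQ}{dP}(\mathbf{x}))^{p}\|\mathbf{X}_{P[N]}^{(1)}(\mathbf{x})-\mathbf{x}\|_{\infty}^{p}$ up to a remainder of strictly higher order in the (vanishing) nearest-neighbor distance. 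For the estimation term, Equation~(\ref{Eq_DerivationofUpperandLowerBounds_4}) together with the $K$-Lipschitz property of $\phi$ gives $|\frac{dQ}{dP}(\mathbf{X}_{P[N]}^{(1)}(\mathbf{x}))-\phi(\mathbf{x})| = |\phi(\mathbf{X}_{P[N]}^{(1)}(\mathbf{x}))-\phi(\mathbf{x})| \le K\|\mathbf{X}_{P[N]}^{(1)}(\mathbf{x})-\mathbf{x}\|_{\infty}$, i.e.\ Equation~(\ref{Eq_DerivationofUpperandLowerBounds_5}). Then I would take $E_P$, raise to the power $1/p$, multiply by $N^{1/d}$, and pass to the limit. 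For the upper bound, Theorem~\ref{main_thorem_evaluating_nearestneibordistsq_upper} applied to the $(\frac{dQ}{dP})^{p}$-weighted nearest-neighbor distance contributes $L\,\mathrm{diag}(\Omega)(E_P[(dQ/dP)^{2p}])^{1/(2p)}$, and applied with unit weight bounds the $K$-term by $K\,\mathrm{diag}(\Omega)$; summing yields~(\ref{Eq_main_theorem_sample_requirement_t1_1}) for $1\le p\le d/2$. For the lower bound I would use the reverse triangle inequality, bound the oscillation term below via the bi-Lipschitz estimate and Theorem~\ref{main_thorem_evaluating_nearestneibordistsq_lower} (after replacing $(\frac{dQ}{dP}(\mathbf{x}))^{p}$ by $(\frac{dQ}{dP}(\mathbf{X}_{P[N]}^{(1)}(\mathbf{x})))^{p}$, legitimate in the limit since the nearest neighbor tends to $\mathbf{x}$), and subtract the $K$-term bounded above by $K\,\mathrm{diag}(\Omega)$, which gives~(\ref{Eq_main_theorem_sample_requirement_t1_2}) (carrying the universal constant $e^{-1}$ from Theorem~\ref{main_thorem_evaluating_nearestneibordistsq_lower}). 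Finally I would convert~(\ref{Eq_main_theorem_sample_requirement_t1_2}) into~(\ref{Eq_main_theorem_sample_requirement_t1_3}) by writing $E_P[(dQ/dP)^{p}]=E_Q[(dQ/dP)^{p-1}]$ and applying Jensen's inequality against the concave map $t\mapsto t^{1/p}$ and then against $\exp$, which lower-bounds $\{E_P[(dQ/dP)^{p}]\}^{1/p}$ by $e^{\frac{p-1}{p}KL(Q||P)}$.

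The hard part will be making the ``up to higher-order remainder'' steps genuinely uniform: one must show that after multiplication by $N^{1/d}$ the Taylor remainders arising from $e^{-T^{*}(\mathbf{x})}-e^{-T^{*}(\mathbf{X}_{P[N]}^{(1)}(\mathbf{x}))}$ vanish in the relevant averaged $L_p$ sense --- this is precisely where the compactness of $\Omega$, the restriction $d\ge 3$, and the moment conditions ($E_P[(dQ/dP)^{2p}]<\infty$ with $p\le d/2$ for the upper bound, the moment hypothesis of Assumption~\ref{main_assumption_lower} for the lower bound) are consumed --- and, relatedly, one must check that the density-ratio weights appearing in the bounds match those in Theorems~\ref{main_thorem_evaluating_nearestneibordistsq_upper} and~\ref{main_thorem_evaluating_nearestneibordistsq_lower} verbatim and that the nested expectations over $\hat{\mathbf{X}}_{P[N]}$ and over $P$ can be recombined into the form those theorems require, for instance by using concentration of the nearest-neighbor functional to close the Jensen gap. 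Everything else --- the two triangle inequalities and the two applications of Jensen's inequality --- is routine bookkeeping.
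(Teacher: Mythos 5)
Your proposal follows the paper's own derivation faithfully: triangle inequality around the nearest-neighbor substitution (Eq.~(\ref{Eq_DerivationofUpperandLowerBounds_1})), Proposition~\ref{main_proposition_loss_optimal_fdiv} as the single use of optimality, the second-order Taylor expansion of $e^{-T^*}$ to convert the oscillation of $dQ/dP$ into a Lipschitz bound in terms of the nearest-neighbor distance, Theorems~\ref{main_thorem_evaluating_nearestneibordistsq_upper} and~\ref{main_thorem_evaluating_nearestneibordistsq_lower} to control the resulting expectations, the observation that the quadratic Taylor remainder decays like $N^{-2/d}$ and hence vanishes after multiplication by $N^{1/d}$, and Jensen's inequality for the final exponential KL bound. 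Two small remarks: the paper avoids your ``replace $(dQ/dP(\mathbf{x}))^p$ by $(dQ/dP(\mathbf{X}_{P[N]}^{(1)}(\mathbf{x})))^p$'' step in the lower bound by simply Taylor-expanding about $\mathbf{X}_{\mu[N]}^{(1)}(\mathbf{x})$ rather than about $\mathbf{x}$, which produces the weight $\big(dQ/dP(\mathbf{X}_{\mu[N]}^{(1)}(\mathbf{x}))\big)^p$ directly and so matches Theorem~\ref{main_thorem_evaluating_nearestneibordistsq_lower} without any asymptotic replacement argument; and your explicit identification of \emph{two} Jensen applications (first for the concave $t\mapsto t^{1/p}$ to pass from $\{E_Q[(dQ/dP)^{p-1}]\}^{1/p}$ to $E_Q[(dQ/dP)^{(p-1)/p}]$, then for the convex $\exp$) is actually more careful than the paper, whose displayed derivation (\ref{Eq_proofofTheoremtheorem_sample_requirement_lower_2_0})--(\ref{Eq_proofofTheoremtheorem_sample_requirement_lower_2}) writes the first of these as an equality.
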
 

\begin{remark}\label{remark_on_T_bilipsiz} 
  In the case $L=1$, Equation (\ref{Eq_DerivationofUpperandLowerBounds_2}) with $p=1$ implies that
  $\big| dQ/dP(\mathbf{y}) - dQ/dP(\mathbf{x}) \big| = dQ/dP(\mathbf{x}) \cdot \big\| \mathbf{y} - \mathbf{x} \big\|_{\infty}$, for all $\mathbf{x}, \mathbf{y} \in \Omega$. 
  A typical case is when $dQ/dP(x_1, x_2, \ldots, x_d) \equiv dQ/dP(x_1, x_2, \ldots, x_{d'})$ with $d' < d$.
  Thus, this typically occurs when $dQ/dP(\mathbf{x})$ is a replication of its lower-dimensional distribution. 
  In such cases, the upper and lower bounds for the $L_p$ error in DRE are determined by the lower-dimensional distribution.
\end{remark}

\subsection{Derivation of Upper and Lower Bounds for Optimal Functions of the \texorpdfstring{$f$}{f}-Divergence Loss Functions}\label{Subsection_in_validating_phase} 
To establish upper and lower bounds for practical DRE using $f$-divergence loss function optimization, we first statistically evaluate the discrepancy between the outputs of the practically optimized functions $\mathcal{L}_{f}^{(R,S)}(\cdot)$, which employ early stopping based on validation losses, and the theoretically optimized functions $\widetilde{\mathcal{L}}_{f}^{(N)}(\cdot)$. 
Next, we demonstrate that this discrepancy becomes negligible when $d \geq 3$. 
Finally, the upper and lower bounds for DRE are expressed in terms of the $L_p$ error for $f$-divergence loss function optimization with early stopping. This result constitutes the final theoretical contribution of this study. 

First, according to the central limit theorem, an error of order $1/\sqrt{N}$ in probability arises when measuring validation losses. 
\begin{align} 
    \mathcal{L}_{f}^{(R,S)}(\phi)  -  E_{\mu}\left[ \mathcal{L}_{f}^{(R,S)}(\phi) \right] 
    = O_p\left(\frac{1}{\sqrt{N}}\right). 
    \label{Eq_subsection_in_validating_phase_1} 
\end{align} 
Equation (\ref{Eq_subsection_in_validating_phase_1}) implies that there is an error margin of $O_p\left(\frac{1}{\sqrt{N}}\right)$ when monitoring validation losses for early stopping in the optimization of $\mathcal{L}_{f}^{(R,S)}(\phi)$. 

Subsequently, we use the following theorem to demonstrate that the optimization of Equation (\ref{Eq_subsection_in_validating_phase_1}), employing early stopping based on validation losses, is governed by the optimization of the $\mu$-representation $f$-divergence loss functions $\widetilde{\mathcal{L}}_{f}^{(N)}(\cdot)$. 
\begin{theorem}\label{main_theorem_delta_ratio} 
    Assume the same assumptions as in Proposition \ref{main_proposition_loss_optimal_fdiv}. 
    Let $\phi_* = \arg \min_{\phi:\Omega \rightarrow \mathbb{R}_{>0}} \widetilde{\mathcal{L}}_{f}^{(N)}(\phi)$. 
    Therefore, for any measurable function $\phi: \Omega \rightarrow \mathbb{R}_{>0}$, 
    \begin{align} 
        &\phi(\mathbf{X}_{\mu}^{i}) - \phi_*(\mathbf{X}_{\mu}^{i}) = 
        O_p\left(\frac{1}{\sqrt{N}}\right), \quad \text{for } 1 \le i \le N, \allowdisplaybreaks \nonumber\\ 
        &\Longleftrightarrow 
        \mathcal{L}_{f}^{(R,S)}(\phi)  - \min_{\phi:\Omega \rightarrow \mathbb{R}_{>0}}  E_{\mu}\left[ \mathcal{L}_{f}^{(R,S)}(\phi) \right] 
        = O_p\left(\frac{1}{\sqrt{N}}\right), 
        \label{Eq_subsection_in_validating_phase_4} 
    \end{align} 
    where $\{\mathbf{X}_{\mu}^{1}, \mathbf{X}_{\mu}^{2}, \ldots, \mathbf{X}_{\mu}^{N}\}$ is defined in Definition \ref{main_Definition_murepresentation_f_divergenceloss}. 
\end{theorem}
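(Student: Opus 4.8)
The plan is to interpose the $\mu$-representation loss $\widetilde{\mathcal{L}}_{f}^{(N)}$ of Definition \ref{main_Definition_murepresentation_f_divergenceloss} between the two sides of the equivalence, exploiting that $\widetilde{\mathcal{L}}_{f}^{(N)}$ splits into a sum of \emph{pointwise} excess losses whose minimizers are exactly $\phi_*(\mathbf{X}_\mu^i)=dQ/dP(\mathbf{X}_\mu^i)$ by Proposition \ref{main_proposition_loss_optimal_fdiv}. The first ingredient is that $\mathcal{L}_{f}^{(R,S)}(\phi)$ and $\widetilde{\mathcal{L}}_{f}^{(N)}(\phi)$ are two empirical means of the \emph{same} population loss $\mathcal{L}_f(\phi)=-E_Q[f'(\phi)]+E_P[f^*(f'(\phi))]$; for $\widetilde{\mathcal{L}}_{f}^{(N)}$ this is because the Radon--Nikodym factorization in (\ref{Eq_approximate_loss_func_f_div_point}) gives $E_\mu[\widetilde{l}_f(\phi;\mathbf{X}_\mu)]=-E_Q[f'(\phi)]+E_P[f^*(f'(\phi))]$, and by the variational representation (\ref{Eq_Variational_representation_f_div_phi}) one has $\min_\phi\mathcal{L}_f(\phi)=-D_f(Q\|P)=\min_{\phi}E_\mu[\mathcal{L}_{f}^{(R,S)}(\phi)]$, attained at $\phi=dQ/dP>0$. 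Using Equation (\ref{Eq_subsection_in_validating_phase_1}) for the first mean and the analogous central-limit estimate for the i.i.d.\ mean $\widetilde{\mathcal{L}}_{f}^{(N)}(\phi)$ (finite variance coming from Assumption \ref{main_assumption_for_f}, in particular F3), both concentrate at rate $1/\sqrt N$ around $\mathcal{L}_f(\phi)$, so $\mathcal{L}_{f}^{(R,S)}(\phi)-\widetilde{\mathcal{L}}_{f}^{(N)}(\phi)=O_p(1/\sqrt N)$ --- the gap announced after Definition \ref{main_Definition_murepresentation_f_divergenceloss}.

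The second ingredient is a pointwise second-order expansion of $\widetilde{l}_f(\cdot;\mathbf{x})$. Differentiating (\ref{Eq_approximate_loss_func_f_div_point}) and using $(f^*)'(f'(u))=u$ gives $\partial_u\widetilde{l}_f(u;\mathbf{x})=f''(u)\tfrac{dP}{d\mu}(\mathbf{x})\big(u-\tfrac{dQ}{dP}(\mathbf{x})\big)$, so $u\mapsto\widetilde{l}_f(u;\mathbf{x})$ is strictly convex with unique minimizer $\tfrac{dQ}{dP}(\mathbf{x})=\phi_*(\mathbf{x})$ (consistent with Proposition \ref{main_proposition_loss_optimal_fdiv}), vanishing first derivative there, and second derivative $f''\!\big(\tfrac{dQ}{dP}(\mathbf{x})\big)\tfrac{dP}{d\mu}(\mathbf{x})>0$ (this is where F1 and F2 of Assumption \ref{main_assumption_for_f} enter). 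Taylor's theorem then yields, for each $i$,
\[
\widetilde{l}_{f}\!\big(\phi(\mathbf{X}_{\mu}^{i});\mathbf{X}_{\mu}^{i}\big) - \widetilde{l}_{f}\!\big(\phi_{*}(\mathbf{X}_{\mu}^{i});\mathbf{X}_{\mu}^{i}\big) = \tfrac{1}{2}\, f''(\xi_{i})\, \tfrac{dP}{d\mu}(\mathbf{X}_{\mu}^{i})\, \big(\phi(\mathbf{X}_{\mu}^{i})-\phi_{*}(\mathbf{X}_{\mu}^{i})\big)^{2} \ge 0,
\]
with $\xi_i$ between $\phi(\mathbf{X}_\mu^i)$ and $\phi_*(\mathbf{X}_\mu^i)$; averaging over $i$, the loss gap $\widetilde{\mathcal{L}}_{f}^{(N)}(\phi)-\widetilde{\mathcal{L}}_{f}^{(N)}(\phi_*)$ equals the empirical mean squared pointwise discrepancy, up to multiplicative factors bounded above and, on the support, away from zero. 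Finally, Proposition \ref{main_proposition_loss_optimal_fdiv} gives $\widetilde{l}_f(\phi_*(\mathbf{x});\mathbf{x})=-f\!\big(\tfrac{dQ}{dP}(\mathbf{x})\big)\tfrac{dP}{d\mu}(\mathbf{x})$, whose $\mu$-mean is $-D_f(Q\|P)=\min_{\phi}E_\mu[\mathcal{L}_{f}^{(R,S)}(\phi)]$, so the central limit theorem gives $\widetilde{\mathcal{L}}_{f}^{(N)}(\phi_*)-\min_{\phi}E_\mu[\mathcal{L}_{f}^{(R,S)}(\phi)]=O_p(1/\sqrt N)$.

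Assembling these, I would write $\mathcal{L}_{f}^{(R,S)}(\phi)-\min_{\phi}E_\mu[\mathcal{L}_{f}^{(R,S)}(\phi)]$ as the sum of the three brackets $\big(\mathcal{L}_{f}^{(R,S)}(\phi)-\widetilde{\mathcal{L}}_{f}^{(N)}(\phi)\big)$, $\big(\widetilde{\mathcal{L}}_{f}^{(N)}(\phi)-\widetilde{\mathcal{L}}_{f}^{(N)}(\phi_*)\big)$, and $\big(\widetilde{\mathcal{L}}_{f}^{(N)}(\phi_*)-\min_{\phi}E_\mu[\mathcal{L}_{f}^{(R,S)}(\phi)]\big)$, where the first and third are $O_p(1/\sqrt N)$ by the previous two paragraphs and the middle one is nonnegative. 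Hence the left side is $O_p(1/\sqrt N)$ iff the middle bracket is, and by the Taylor identity the middle bracket is $O_p(1/\sqrt N)$ iff the pointwise discrepancies $\phi(\mathbf{X}_\mu^i)-\phi_*(\mathbf{X}_\mu^i)$ are $O_p(1/\sqrt N)$ --- which is exactly the asserted equivalence.

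I expect the delicate point to be this last translation between the scalar loss gap and the pointwise discrepancies. ``Pointwise smallness $\Rightarrow$ loss-gap smallness'' is immediate from the quadratic upper bound; the converse requires that $f''(\xi_i)\tfrac{dP}{d\mu}(\mathbf{X}_\mu^i)$ be bounded away from zero along the sampled points and that the second-order expansion hold uniformly over $i\le N$, for which one invokes compactness of $\Omega$, $f''>0$ (Assumption \ref{main_assumption_for_f}, F2), and a dominating measure --- e.g.\ $\mu=(P+Q)/2$ --- under which $\tfrac{dP}{d\mu}$ stays positive on the relevant set. A secondary technical matter is checking the moment/variance hypotheses that make the several central-limit steps legitimate for a merely measurable positive $\phi$.
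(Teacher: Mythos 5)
Your overall strategy is essentially the paper's: interpose the $\mu$-representation loss, use a second-order expansion of $\widetilde{l}_f(\cdot;\mathbf{x})$ around $dQ/dP(\mathbf{x})$ (your first-derivative computation is Lemma~\ref{Apdx_lemma_loss_is_mu_storongly_convex_0} and the expansion is Theorem~\ref{Apdx_theorem_loss_is_mu_storongly_convex_2}), and control the two outer brackets by CLT-rate concentration. Your explicit three-term decomposition passing through $\widetilde{\mathcal{L}}_f^{(N)}$ is in fact tidier than the paper's bridge from Theorem~\ref{Apdx_theorem_delat_rate} to the stated claim (Corollary~\ref{Apdx_corollary_delat_ratio} collapses this step by writing an equality between $\mathcal{L}_f^{(R,S)}(\phi)$ and $\widebar{\mathcal{L}}_f(\phi)$ that holds only in expectation). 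Two small slips aside --- the Lagrange coefficient in your expansion should be $\bigl[(\xi_i - \tfrac{dQ}{dP})f'''(\xi_i) + f''(\xi_i)\bigr]\tfrac{dP}{d\mu}$, not $f''(\xi_i)\tfrac{dP}{d\mu}$ (the paper uses the Peano little-$o$ form at the minimizer to avoid the $f'''$ term), and compactness of $\Omega$ is not among this theorem's hypotheses --- your forward direction is on the right track.

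The genuine gap is the converse, which you correctly flag as delicate; but the remedy you sketch does not produce the claimed rate. With the quadratic coefficient bounded away from zero and above (the situation your compactness/positivity remarks would create), the expansion sends pointwise error $O_p(1/\sqrt{N})$ to excess loss $O_p(1/N)$, so undoing a loss gap of $O_p(1/\sqrt{N})$ recovers only $\tfrac{1}{N}\sum_{i}|\phi(\mathbf{X}_\mu^i)-\phi_*(\mathbf{X}_\mu^i)|^2 = O_p(1/\sqrt{N})$, hence $|\phi(\mathbf{X}_\mu^1)-\phi_*(\mathbf{X}_\mu^1)| = O_p(N^{-1/4})$, a factor $N^{1/4}$ short. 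The paper's proof of Theorem~\ref{Apdx_theorem_delat_rate} handles the reverse direction by a different mechanism: it asserts $f''\!\bigl(\tfrac{dQ}{dP}(\mathbf{X}_\mu^i)\bigr)\tfrac{dP}{d\mu}(\mathbf{X}_\mu^i)=O_p(\sqrt{N})$ (a coarse stochastic-boundedness statement for a variable that is a.s.\ finite but not uniformly bounded), then runs a Markov-inequality argument via the envelope sequence $A_N=\sup_{k\ge N} E_P\bigl[|\phi(\mathbf{X}_\mu^1)-\phi_*^{(k)}(\mathbf{X}_\mu^1)|\bigr]$, deducing $\phi-\phi_*=O_p(A_N)$ and then solving the resulting inequality for $A_N=O(1/\sqrt{N})$. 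Your proposal does not anticipate this device --- neither the $O_p(\sqrt{N})$ role of the coefficient nor the passage through the expected-discrepancy envelope $A_N$ --- so as written the ``$\Longleftarrow$'' implication is not established.
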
 
In Equation (\ref{Eq_subsection_in_validating_phase_4}), the first term on the right-hand side represents the empirical risk of $\mathcal{L}_{f}^{(R,S)}(\phi)$ using validation data, while the second term denotes the minimum value of its true error. This equation illustrates that when $\mathcal{L}_{f}^{(R,S)}(\phi)$ is within the actual early stopping margin, specifically $O_p\left(\frac{1}{\sqrt{N}}\right)$, the function $\phi$ deviates from the optimal function of $\widetilde{\mathcal{L}}_{f}^{(N)}(\phi)$ by no more than $O_p\left(\frac{1}{\sqrt{N}}\right)$. 

Based on Equation (\ref{Eq_subsection_in_validating_phase_4}), we define the optimal function of $\mathcal{L}_{f}^{(R,S)}(\phi)$ for use with early stopping while monitoring validation losses as follows: 
\begin{align} 
    &\text{$\phi_{\text{val}}$ is optimal in the optimization of $\mathcal{L}_{f}^{(R,S)}(\phi)$ using early stopping} \nonumber\\ 
    &\quad \triangleq 
    \phi_* + O_p\left(\frac{1}{\sqrt{N}}\right), 
    \quad \text{where} \quad \phi_* = \arg \min_{\phi:\Omega \rightarrow \mathbb{R}_{>0}} E_{\mu}\left[ \widetilde{\mathcal{L}}_{f}^{(N)}(\phi) \right]. 
    \label{Eq_subsection_in_validating_phase_3} 
\end{align} 
The difference $O_p\left(\frac{1}{\sqrt{N}}\right)$, appearing in Equation (\ref{Eq_subsection_in_validating_phase_3}), is negligible for DRE when $d \geq 3$. 
Indeed, using the triangle inequality in the $L_p$ norm for 
$\phi_* = \arg \min_{\phi:\Omega \rightarrow \mathbb{R}_{>0}} \widetilde{\mathcal{L}}_{f}^{(N)}(\phi)$ and 
Equation (\ref{Eq_main_theorem_sample_requirement_t1_2}), we observe 
\begin{align} 
    \left\{E_P \bigg| 
    \frac{dQ}{dP}(\mathbf{x}) - 
    \phi_{\text{val}}(\mathbf{x}) \bigg|^{ p } \right\}^{1/p} 
    \!\! \ge \ 
    \underbrace{\left\{E_P \bigg| 
        \frac{dQ}{dP}(\mathbf{x}) -  \phi^*(\mathbf{x})\bigg|^{ p } \right\}^{1/p}}_{= O\left(\frac{1}{N^{1/d}} \right)} 
    - \ \ 
    \underbrace{\left\{E_P \bigg| 
        \phi_{\text{val}}(\mathbf{x}) -  \phi^*(\mathbf{x})\bigg|^{ p } \right\}^{1/p}}_{ 
        \ \quad\qquad = O\left( \frac{1}{\sqrt{N}} \right) \ \ll \frac{1}{N^{1/d}}}. 
\end{align} 

Therefore, we finally obtain the following Theorem \ref{main_theorem_sample_requirement_2}. 
\begin{theorem}\label{main_theorem_sample_requirement_2} 
    Assume the same assumptions and notations as in Theorem \ref{main_theorem_sample_requirement}. 
    Additionally, define 
    \begin{equation} 
        \mathcal{F}_{ K\text{-}\mathit{Lip}}^{(N)} =  \left\{ \phi \in \mathcal{F}_{ K\text{-}\mathit{Lip}} \ \Big| \ \exists \phi_* \in \widetilde{\mathcal{F}}_{ K\text{-}\mathit{Lip}}^{(N)} \ \text{such that} \ \phi = \phi_* + O_p\left(\frac{1}{\sqrt{N}}\right) \right\}. 
    \end{equation} 
    That is, $\mathcal{F}_{ K\text{-}\mathit{Lip}}^{(N)}$ denotes the set of all functions that differ by at most $O_p\left(\frac{1}{\sqrt{N}}\right)$ from some functions that minimize $\widetilde{\mathcal{L}}_{f}^{(N)}(\cdot)$. 
    Therefore, the same results as in Theorem \ref{main_theorem_sample_requirement} hold for all 
    $\phi \in \mathcal{F}_{ K\text{-}\mathit{Lip}}^{(N)}$. 
\end{theorem}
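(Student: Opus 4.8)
The plan is to derive Theorem \ref{main_theorem_sample_requirement_2} from Theorem \ref{main_theorem_sample_requirement} by a perturbation argument. By construction of $\mathcal{F}_{\tiny K\text{-}\mathit{Lip}}^{(N)}$ — which is exactly the set reached from the minimizers of $\widetilde{\mathcal{L}}_{f}^{(N)}(\cdot)$ by the $O_p(1/\sqrt{N})$ early-stopping margin of Equation (\ref{Eq_subsection_in_validating_phase_3}) and Theorem \ref{main_theorem_delta_ratio} — every $\phi \in \mathcal{F}_{\tiny K\text{-}\mathit{Lip}}^{(N)}$ equals $\phi_* + O_p(1/\sqrt{N})$ for some $\phi_* \in \widetilde{\mathcal{F}}_{\tiny K\text{-}\mathit{Lip}}^{(N)}$. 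Since $d \ge 3$, the perturbation rate $N^{-1/2}$ is strictly faster than the rate $N^{-1/d}$ that drives both bounds, so it is asymptotically invisible; the only analytic input is the triangle inequality in $\|\cdot\|_{L_p(\Omega,P)}$.

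First I would quantify the perturbation. Because $\Omega$ is compact and $T^*$ is Lipschitz (Assumption \ref{main_assumption_upper}) or bi-Lipschitz (Assumption \ref{main_assumption_lower}), $dQ/dP = e^{-T^*}$ is bounded on $\Omega$; Proposition \ref{main_proposition_loss_optimal_fdiv} pins $\phi_*$ to $dQ/dP$ at the sample points, so $\phi_*$ — and hence $\phi$, being $K$-Lipschitz on a compact set — is uniformly bounded, and $\{E_P|\phi-\phi_*|^p\}^{1/p} = O_p(1/\sqrt{N})$. Multiplying by $N^{1/d}$ gives $N^{1/d}\{E_P|\phi-\phi_*|^p\}^{1/p} = O_p(N^{1/d-1/2}) \rightarrow 0$, using $1/d \le 1/3 < 1/2$. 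Inserting this into the triangle inequality $\{E_P|dQ/dP-\phi|^p\}^{1/p} \le \{E_P|dQ/dP-\phi_*|^p\}^{1/p} + \{E_P|\phi-\phi_*|^p\}^{1/p}$, multiplying by $N^{1/d}$ and taking $\varlimsup_{N\rightarrow\infty}$, reproduces the upper bound of Equation (\ref{Eq_main_theorem_sample_requirement_t1_1}) for $\phi$; using instead the reverse triangle inequality $\{E_P|dQ/dP-\phi|^p\}^{1/p} \ge \{E_P|dQ/dP-\phi_*|^p\}^{1/p} - \{E_P|\phi-\phi_*|^p\}^{1/p}$, applying $E_{\hat{\mathbf{X}}_{P[N]}}[\cdot]$, multiplying by $N^{1/d}$, and taking $\varliminf_{N\rightarrow\infty}$, reproduces Equations (\ref{Eq_main_theorem_sample_requirement_t1_2}) and (\ref{Eq_main_theorem_sample_requirement_t1_3}).

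The one step requiring care — the main obstacle — is the lower-bound case, where I must pass the in-probability bound $\{E_P|\phi-\phi_*|^p\}^{1/p} = O_p(1/\sqrt{N})$ through the outer expectation $E_{\hat{\mathbf{X}}_{P[N]}}[\cdot]$, i.e., show $N^{1/d}\,E_{\hat{\mathbf{X}}_{P[N]}}[\{E_P|\phi-\phi_*|^p\}^{1/p}] \rightarrow 0$. I would do this by a uniform-integrability argument: since $\phi$ and $\phi_*$ are $K$-Lipschitz on the compact set $\Omega$ and agree up to $O_p(1/\sqrt{N})$ at a sample point, $\|\phi-\phi_*\|_\infty \le O_p(1/\sqrt{N}) + 2K\,\mathrm{diag}(\Omega)$, so $\{E_P|\phi-\phi_*|^p\}^{1/p}$ has a fixed integrable envelope while vanishing in probability, and dominated convergence yields the claim. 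The remainder — verifying that the constants of Theorem \ref{main_theorem_sample_requirement} are inherited verbatim, that the ranges $1\le p\le d/2$ (upper) and $1\le p\le d$ (lower) carry over, and that all limits are interpreted in the same stochastic sense as there — is routine bookkeeping.
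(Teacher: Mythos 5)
Your proposal follows the same overall route as the paper's proof: triangle (and reverse-triangle) inequality in $L_p(\Omega,P)$ to separate off a $\|\phi-\phi_*\|$ remainder, then multiply by $N^{1/d}$ and use $d\ge 3 \Rightarrow N^{1/d-1/2}\to 0$ so that the remainder is asymptotically invisible. The paper (Theorem \ref{Apdx_theorem_sample_requirement_2}) does exactly this, treating the remainder term as $\left\{E_P\left|O_p(1/\sqrt{N})\right|^p\right\}^{1/p}=O(1/\sqrt{N})$ and then passing it through the outer $E_{\hat{\mathbf{X}}_{P[N]}}$ expectation trivially (as if it were deterministic).

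Where you diverge is in flagging the outer-expectation step in the lower bound as the ``main obstacle'' and proposing a dominated-convergence fix. That instinct is sound — an in-probability $O_p(1/\sqrt{N})$ bound does not automatically transfer to an $L_1$ bound under $E_{\hat{\mathbf{X}}_{P[N]}}$ — but the fix as you state it has a gap. A uniform integrable envelope plus convergence in probability gives, via dominated convergence, only that $E_{\hat{\mathbf{X}}_{P[N]}}\big[\{E_P|\phi-\phi_*|^p\}^{1/p}\big]\to 0$, \emph{without a rate}. You then multiply by $N^{1/d}\to\infty$, and ``$N^{1/d}\cdot o(1)$'' is not controlled. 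What you actually need is $E_{\hat{\mathbf{X}}_{P[N]}}\big[\{E_P|\phi-\phi_*|^p\}^{1/p}\big]=O(N^{-1/2})$ (or at least $o(N^{-1/d})$); dominated convergence alone cannot deliver a polynomial rate. Concretely, splitting the expectation at the $O_p$ threshold gives $E[\cdot]\le \delta N^{-1/2} + C\varepsilon$ for fixed $\varepsilon,\delta$, and the $C\varepsilon$ term does not vanish relative to $N^{-1/d}$. So to make this step rigorous one needs an $L_1$-strengthening of the $O_p$ hypothesis (e.g.\ that the perturbation's expected sup-norm is $O(1/\sqrt{N})$), which is in effect what the paper implicitly assumes when it replaces $O_p$ by $O$ inside the $E_P$ norm. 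Your critique of the step is more careful than the paper's treatment, but the proposed patch does not yet close the hole.
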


\section{Conclusions}\label{section_conclusion} 
We have established upper and lower bounds on the $L_p$ errors in DRE through the optimization of $f$-divergence loss functions. These bounds are applicable to any member of a group of Lipschitz continuous estimators, irrespective of the specific $f$-divergence loss function employed. 
These bounds provide new insights into how data dimensionality and the KL divergence between distributions influence the accuracy of DRE. Furthermore, numerical experiments validate these theoretical findings, demonstrating that the relationship between $L_p$ errors, KL divergence, and data dimensionality aligns with the theoretical implications derived from the bounds. 
However, challenges remain, especially in high-dimensional settings, due to the curse of dimensionality and large sample requirements. 
Future studies could refine the theoretical framework to 
explore loss functions that enhance DRE performance in complex, high-dimensional tasks. 





\bibliography{references_paper_ul-bounds-dre} 
\bibliographystyle{iclr2025_conference} 

\newpage 

\appendix 

\section{Organization of the Supplementary Document} 
This supplementary document is organized as follows: 
Section \ref{Apdx_section_notations} provides a list of notations used in this study. 
Section \ref{Apdx_section_proofs} presents the proofs referenced in Sections \ref{Section_MainResults} and \ref{section_ProofOverview}. 
Section \ref{Apdx_section_TheDetailsOfNumericalExperiments} provides details of the experiments conducted. 
Section \ref{SectionFurtherDiscussions} explores further discussions related to this study. 

Additionally, the code used for the numerical experiment is included as supplementary material.

\section{Notations}\label{Apdx_section_notations} 
We outline all the notations used in the Appendix of this study in Table \ref{Table_Notations_are_definitions_supplementary}.

\begin{table}[p] 
    \caption{Notations and definitions used in the proofs} 
    \label{Table_Notations_are_definitions_supplementary} 
    \centering 
    \begin{tabular}{p{0.3\linewidth}||p{0.65\linewidth}} 
        \toprule 
        Notations & Definitions, Meanings\\ 
        \midrule 
        \midrule 
        (Capital, small, and bold letters) 
        & Random variables are denoted by capital letters, such as $A$. Corresponding values of the random variables are represented by small letters. Bold letters $\mathbf{A}$ and $\mathbf{a}$ denote sets of random variables and their corresponding values, respectively. \\ 
        \midrule 
        $\mathbb{R}$, $\mathbb{R}^d$ 
        & The set of all real numbers and the $d$-dimensional vector space over the real numbers, respectively. \\ 
        $\mathbb{R}_{>0}$ 
        & The set of all positive real numbers: $\mathbb{R}_{>0} = \{x \in \mathbb{R}\ |\ x > 0 \}$. \\ 
        $\Omega$ 
        & A subset of $\mathbb{R}^d$: $\Omega \subset \mathbb{R}^d$. \\ 
        $f(x) = O(g(x))$, as $x \rightarrow a$ 
        & Asymptotic boundedness with rate $g(x)$ as $x \rightarrow a$: 
        $f(x) = O(g(x)) \Leftrightarrow \limsup_{x \rightarrow a}  |f(x)/g(x)|  \le C$, where $C > 0$. \\ 
        $f(x) = o(g(x))$, as $x \rightarrow a$ 
        & Asymptotic domination with rate $g(x)$ as $x \rightarrow a$: 
        $f(x) = o(g(x)) \Leftrightarrow \lim_{x \rightarrow a}  f(x)/g(x) = 0$. \\ 
        $\mathbf{X} = O_p(a_N)$, as $N \rightarrow \infty$ 
        & Stochastic boundedness with rate $a_N$ in $\mu$: 
        $\mathbf{X} = O_p(a_N) \Leftrightarrow$ for all $\varepsilon > 0$, there exist $\delta(\varepsilon) > 0$ and $N(\varepsilon) > 0$ such that $\mu\left(\left|\mathbf{X}\right| / a_N  \ge \delta(\varepsilon)\right) < \varepsilon$ for all $N \ge N(\varepsilon)$. \\ 
        $\mathbf{X} = o_p(a_N)$, as $N \rightarrow \infty$ 
        & Convergence in probability with rate $a_N$ in $\mu$: 
        $\mathbf{X} = o_p(a_N) \Leftrightarrow$ for all $\varepsilon > 0$, for all $\delta > 0$, there exists $N(\varepsilon, \delta) > 0$ such that $\mu(|\mathbf{X}|/a_N \ge \delta) < \varepsilon$ for all $N \ge N(\varepsilon)$. \\ 
        $P \ll Q$ 
        & $P$ is absolutely continuous with respect to $Q$. \\ 
        $P$, $Q$ 
        & A pair of probability measures with $P \ll Q$ and $Q \ll P$. \\ 
        $\mu$ 
        & A probability measure with $P \ll \mu$ and $Q \ll \mu$. \\ 
        $\frac{dP}{dQ}$ 
        & The Radon--Nikod\'{y}m derivative of $P$ with respect to $Q$. \\ 
        $\hat{\mathbf{X}}_{P[R]}$ 
        & $R$ i.i.d. random variables from $P$: 
        $\hat{\mathbf{X}}_{P[R]} = \allowbreak\{\mathbf{X}^{1}_{P}, \allowbreak  \mathbf{X}^{2}_{P}, \ldots, \allowbreak \mathbf{X}^{R}_{P}\}$, where $\mathbf{X}^{i}_{P} \overset{\mathrm{iid}}{\sim} P$. \\ 
        $\hat{\mathbf{X}}_{Q[S]}$ 
        & $S$ i.i.d. random variables from $Q$: 
        $\hat{\mathbf{X}}_{Q[S]} = \allowbreak \{\mathbf{X}^{1}_{Q},\allowbreak \mathbf{X}^{2}_{Q}, \ldots, \allowbreak \mathbf{X}^{S}_{Q}\}$, where $\mathbf{X}^{i}_{Q} \overset{\mathrm{iid}}{\sim} Q$. \\ 
        $N$ 
        & $N = \min\{R, S\}$. \\ 
        $\hat{\mathbf{X}}_{\mu[N]}$ 
        & $N$ i.i.d. random variables from $\mu$: 
        $\hat{\mathbf{X}}_{\mu[N]} = \{\mathbf{X}^{1}_{\mu}, \mathbf{X}^{2}_{\mu}, \ldots, \mathbf{X}^{N}_{\mu}\}$, where $\mathbf{X}^{i}_{\mu} \overset{\mathrm{iid}}{\sim} \mu$. \\ 
        $\mathbf{X}_{\mu[N]}^{(1)}(\mathbf{x})$ 
        & The nearest neighbor variable of $\mathbf{x}$ in $\hat{\mathbf{X}}_{\mu[N]}$: 
        $\mathbf{X}_{\mu[N]}^{(1)}(\mathbf{x})$ is the $\mathbf{X}_{\mu}^i$ such that 
        $\|\mathbf{X}^i_{\mu} - \mathbf{x}\| < \|\mathbf{X}^j_{\mu} - \mathbf{x}\|$ for all $j \neq i$. \\ 
        $D_{f}(Q||P)$ 
        & $f$-divergence: 
        $D_{f}(Q||P) = E_{P}[f(q(\mathbf{x})/p(\mathbf{x}))]$. 
        See Definition \ref{Apdx_Definition_f_divergence}. \\ 
        $\mathcal{L}_{f}^{(R,S)}(\cdot)$ 
        & $f$-divergence loss function. 
        See Definition \ref{Apdx_DefinitionfDivergencelossa}. \\ 
        $\widetilde{l}_{f}(u; \mathbf{x})$ 
        & $\mu$-representation of the $f$-divergence loss function at $\mathbf{x}$: 
        $\widetilde{l}_{f}(u; \mathbf{x}) = - f'\left(u \right) \cdot \frac{dQ}{d\mu}(\mathbf{x}) + f^*\left(f'\left(u \right) \right) \cdot \frac{dP}{d\mu}(\mathbf{x})$. \\ 
        $\widetilde{\mathcal{L}}_{f}^{(N)}(\cdot)$ 
        & $\mu$-representation of the $f$-divergence loss function $\mathcal{L}_{f}^{(R,S)}(\cdot)$. 
        See Definition \ref{main_Definition_murepresentation_f_divergenceloss}. \\ 
        $\widebar{\mathcal{L}}_{f}(\phi)$ 
        & The expectation of the $\mu$-representation of the $f$-divergence loss on $\mu$. 
        See Lemma \ref{Apdx_lemma_loss_not_biased_lemma_fdiv}. \\ 
        $\|\cdot\|$ 
        & The Euclidean norm. \\ 
        $\|\cdot\|_{\infty}$ 
        & The maximum norm in $\mathbb{R}^d$: 
        $\|\mathbf{y} - \mathbf{x}\|_{\infty} =  \max_{1\le i \le d} |y_i - x_i|$. \\ 
        $\Delta(\mathbf{a}, r)$ 
        & The $d$-dimensional interval centered at $\mathbf{a}$ with each side of length $r$: 
        $\Delta(\mathbf{a}, r) = \{\mathbf{x} \in \mathbb{R}^d | \|\mathbf{x} - \mathbf{a}\|_{\infty} < r/2 \}$. \\ 
        $\mathrm{diam}(\mathcal{B})$ 
        & The diameter of $\Omega$: 
        $\mathrm{diam}(\Omega) = \inf \{ r > 0 \ | \ \exists \mathbf{a} \in \Omega \ \text{s.t.} \  \Omega \subseteq  \Delta(\mathbf{a}, r)  \}$.
        \\ 
        \bottomrule 
    \end{tabular} 
\end{table}

\section{Proofs}\label{Apdx_section_proofs} 
In this section, we present the theorems and proofs referenced in this study. We begin by summarizing the definitions and assumptions introduced in previous sections, followed by the detailed theorems and proofs utilized throughout this study.

\subsection{Definitions and Assumptions in Sections \ref{section_problem_setup}, \ref{Section_MainResults}, and \ref{section_ProofOverview}}\label{Apdx_Definition_f_divergence} 

\subsubsection{Definitions} 
\begin{definition}[$f$-Divergence (Definition \ref{main_Definition_f_divergence} restated)] 
    The $f$-divergence $D_{f}$ between two probability measures $P$ and $Q$, induced by a convex function $f$ satisfying $f(1) = 0$, is defined as 
    $D_{f}(Q||P)=E_{P}[f(q(\mathbf{x})/p(\mathbf{x}))]$. 
\end{definition}

\begin{definition}[$f$-Divergence Loss (Definition \ref{main_DefinitionfDivergencelossa} restated)]\label{Apdx_DefinitionfDivergencelossa} 
    Let $\hat{\mathbf{X}}_{P[R]} = \{\mathbf{X}^{1}_{P}, \mathbf{X}^{2}_{P}, \ldots, \mathbf{X}^{R}_{P}\}$, $\mathbf{X}^{i}_{P} \overset{\mathrm{iid}}{\sim} P$ denote $R$ i.i.d. random variables from $P$, and let 
    $\hat{\mathbf{X}}_{Q[S]} = \{\mathbf{X}^{1}_{Q}, \mathbf{X}^{2}_{Q}, \ldots, \mathbf{X}^{S}_{Q}\}$, $\mathbf{X}^{i}_{Q} \overset{\mathrm{iid}}{\sim} Q$ denote $S$ i.i.d. random variables from $Q$. 
    Then, for a twice differentiable convex function $f$, $f$-divergence loss $\mathcal{L}_{f}^{(R,S)}(\cdot)$ is defined as follows: 
    \begin{equation} 
        \mathcal{L}_{f}^{(R,S)}(\phi) 
        =  \frac{1}{S}  \cdot \sum_{i=1}^{S} - f'\left( \phi(\mathbf{X}^{i}_{Q}) \right)  +  \frac{1}{R} \sum_{i=1}^{R} f^*\left(f'\left(\phi(\mathbf{X}^{i}_{P})\right)\right),  \label{Apdx_Eq_loss_func_f_div} 
    \end{equation} 
    where $\phi$ is a measurable function over $\Omega$ such that $\phi:\Omega \rightarrow \mathbb{R}_{>0}$. 
\end{definition}

\begin{definition}[$\mu$-Representation  $f$-Divergence Loss (Definition \ref{main_Definition_murepresentation_f_divergenceloss} restated)]\label{Apdx_Definition_murepresentation_f_divergenceloss} 
    Let $f$ be a twice differentiable convex function $f$. 
    Then, $\mu$-representation function of $f$ for $u > 0$ at a point $\mathbf{x} \in \Omega$, 
    which is written for $\widetilde{l}_{f}(u)$ in an abbreviated form, is defined as 
    \begin{equation} 
        \widetilde{l}_{f}(u; \mathbf{x}) 
        = - f'\left(u \right) \cdot \frac{dQ}{d\mu}(\mathbf{x}) 
        + f^*\left(f'\left(u \right) \right) \cdot \frac{dP}{d\mu}(\mathbf{x}), 
        \label{Apdx_Eq_approximate_loss_func_f_div_point} 
    \end{equation} 
    where $f^*$ denotes the Legendre transform of $f$: $f^*(\psi) = \sup_{u\in \mathbb{R}}\{\psi \cdot u - f(u)\}$. 
    Let $N=\min\{R, S\}$, and let $\hat{\mathbf{X}}_{\mu[N]} = \{\mathbf{X}^{1}_{\mu}, \ldots, \mathbf{X}^{N}_{\mu} \}$ denote $N$ i.i.d. random variables from $\mu$. 
    Then, $\mu$-representation of the $f$-divergence loss $\mathcal{L}_{f}^{(R,S)}(\cdot)$ in Equation (\ref{Apdx_Eq_loss_func_f_div}) at the points 
    $\hat{\mathbf{X}}_{\mu[N]}$ is defined as 
    \begin{align} 
        \widetilde{\mathcal{L}}^{(N)}_{f}(\phi) 
        &= \frac{1}{N} \cdot \sum_{i=1}^{N} \widetilde{l}_{f}(u; \mathbf{\mathbf{X}^{i}_{\mu}})  \label{Apdx_Eq_approximate_loss_func_f_div} 
    \end{align} 
    where $\phi$ is a measurable function over $\Omega$ such that $\phi:\Omega \rightarrow \mathbb{R}_{>0}$. 
\end{definition} 
\subsubsection{Assumptions} 
\begin{assumption}[Assumption for the Upper Bound (Assumption \ref{main_assumption_upper} restated)]\label{Apdx_assumption_upper} 
    The following assumption is imposed on the probability distributions $P$ and $Q$. 
    \begin{enumerate} 
        \item[U1.] 
        $T^*(\mathbf{x}) = - \log dQ/dP(\mathbf{x})$ is $L$-Lipschitz continuous with $L > 0$ on $\Omega$. i.e., $\exists L > 0$ s.t.  $\big|T^*(\mathbf{y}) - T^*(\mathbf{x})\big| \le L \cdot \big\| \mathbf{y} - \mathbf{x} \big\|_{\infty}$ 
        for any $\mathbf{y}, \mathbf{x} \in \Omega$. 
        \label{Apdx_assumption_Lipschitz} 
    \end{enumerate} 
\end{assumption}

\begin{assumption}[Assumptions for the Lower Bound (Assumption \ref{main_assumption_lower} restated)]\label{Apdx_assumption_lower} 
    The following assumptions are imposed on the probability distributions $P$ and $Q$. 
    \begin{enumerate} 
        \item[L1.] 
        $T^*(\mathbf{x}) = - \log dQ/dP(\mathbf{x})$ is $L$-bi-Lipschitz continuous on $\Omega$. 
        i.e.,  $\exists L > 1$ s.t.  $ (1/L) \le \big|T^*(\mathbf{y}) - T^*(\mathbf{x})\big| \le L \cdot \big\| \mathbf{y} - \mathbf{x} \big\|_{\infty}$
        for any $\mathbf{y}, \mathbf{x} \in \Omega$. 
        \label{Apdx_assumption_biLipschitz} 
        \item[L2.] $E_P\left[ \big(dQ/dP \big)^{ p } \right] < \infty$ where $p \le d$. 
    \end{enumerate} 
\end{assumption}

\begin{assumption}[Assumptions for the Convex Function $f$ (Assmption \ref{main_assumption_for_f} restated)]\label{Apdx_assumption_for_f} 
    The following assumptions are assumed for the convex function $f$. 
    \begin{enumerate} 
        \item[F1.] $f$ is three-time differentiable. 
        \item[F2.] $f''(u) > 0$ for all $u > 0$. 
        \item[F3.] $E_P\big[f''(dQ/dP)\big] < \infty$. 
    \end{enumerate} 
\end{assumption}

\begin{assumption}[Assumption for the Support (Assmption \ref{assumption_for_Support} restated)]\label{Apdx_assumption_for_Support} 
    The following assumption is assumed for $\Omega$. 
    \begin{enumerate} 
        \item[O1.]  $\mathrm{diam}(\Omega) < \infty$. 
    \end{enumerate} 
\end{assumption}

\subsection{Theorems and Proofs in Sections \ref{section_problem_setup}, \ref{Section_MainResults}, and \ref{section_ProofOverview}}\label{SubsubsectionProofs_fdivergence_samplerequirement} 
\begin{lemma}\label{Apdx_lemma_loss_is_mu_strongly_convex_0} 
    Let $f$ be a twice differentiable function. Consider $\widetilde{l}_{f}(u; \mathbf{x})$ defined as in Equation (\ref{Apdx_Eq_approximate_loss_func_f_div_point}). Then, the first derivative of $\widetilde{l}_{f}(u; \mathbf{x})$ with respect to $u$ is given by: 
    \begin{align} 
        \frac{d}{d u} \widetilde{l}_{f}(u; \mathbf{x}) &= \left\{u - \frac{dQ}{dP}(\mathbf{x}) \right\} \cdot  f''(u)  \cdot \frac{dP}{d\mu}(\mathbf{x}). 
        \label{lemma_loss_is_mu_strongly_convex_f_div_eq_0} 
    \end{align} 
    Additionally, if $\widetilde{l}_{f}(u; \mathbf{x})$ is thrice differentiable, its second derivative with respect to $u$ is given by: 
    \begin{align} 
        \frac{d^2}{d u^2} \widetilde{l}_{f}(u; \mathbf{x}) &= \left\{ \left( u - \frac{dQ}{dP}(\mathbf{x}) \right)  \cdot  f'''(u) +   f''(u)  \right\} \cdot \frac{dP}{d\mu}(\mathbf{x}). 
        \label{lemma_loss_is_mu_strongly_convex_f_div_eq_1} 
    \end{align} 
\end{lemma} 
\begin{proof}[Proof of Lemma \ref{Apdx_lemma_loss_is_mu_strongly_convex_0}] 
    First, note that 
    \begin{align} 
        \widetilde{l}_{f}(u; \mathbf{x}) 
        &= - f'\left(u \right) \cdot \frac{dQ}{d\mu}(\mathbf{x}) 
        + f^*\left(f'\left(u \right) \right) \cdot \frac{dP}{d\mu}(\mathbf{x}) \nonumber \\ 
        &= - f'(u) \cdot \frac{dQ}{d\mu}(\mathbf{x}) 
        + \left\{f'(u)\cdot u - f(u) \right\}\cdot \frac{dP}{d\mu}(\mathbf{x}). 
        \label{proof_theorem_loss_is_mu_strongly_convex_f_dim1} 
    \end{align} 
    Differentiating Equation (\ref{proof_theorem_loss_is_mu_strongly_convex_f_dim1}) with respect to $u$, we obtain the first and second derivatives of $\widetilde{l}_{f}(u; \mathbf{x})$ as follows: 
    \begin{align} 
        \frac{d}{d u} \widetilde{l}_{f}(u; \mathbf{x}) &= - f''(u) \cdot \frac{dQ}{d\mu}(\mathbf{x}) + u \cdot f''(u) \cdot \frac{dP}{d\mu}(\mathbf{x}) \allowdisplaybreaks\nonumber\\ 
        &= \left\{u - \frac{dQ}{dP}(\mathbf{x}) \right\} \cdot  f''(u)  \cdot \frac{dP}{d\mu}(\mathbf{x}), 
        \label{proof_lemma_loss_is_mu_strongly_convex_f_di0} 
    \end{align} 
    and 
    \begin{align} 
        \frac{d^2}{d u^2} \widetilde{l}_{f}(u; \mathbf{x}) &= - f'''(u) \cdot \frac{dQ}{d\mu}(\mathbf{x}) +  f''(u) \cdot \frac{dP}{d\mu}(\mathbf{x}) +  u \cdot f'''(u) \cdot \frac{dP}{d\mu}(\mathbf{x})\allowdisplaybreaks\nonumber\\ 
        &=  \left\{ \left( u - \frac{dQ}{dP}(\mathbf{x}) \right)  \cdot  f'''(u) +   f''(u)  \right\} \cdot \frac{dP}{d\mu}(\mathbf{x}). 
        \label{proof_lemma_loss_is_mu_strongly_convex_f_di1} 
    \end{align} 

    This completes the proof. 
\end{proof}

\begin{theorem}\label{Apdx_theorem_loss_is_mu_strongly_convex_2} 
    Assume that $f$ satisfies Assumption \ref{Apdx_assumption_for_f}. Then, $\widetilde{l}_{f}(u; \mathbf{x})$, as defined in Equation (\ref{Apdx_Eq_approximate_loss_func_f_div_point}), is minimized if and only if $u^*(\mathbf{x}) = \frac{dQ}{dP}(\mathbf{x})$. Additionally, for $u > 0$, the following holds: 
    \begin{align} 
        \lefteqn{\widetilde{l}_{f} \left(u;\mathbf{x}\right) -  \widetilde{l}_{f} \left(\frac{dQ}{dP}(\mathbf{x}) ;\mathbf{x}\right)} \allowdisplaybreaks\nonumber\\ 
        &= \frac{1}{2} \cdot f''\left(\frac{dQ}{dP}(\mathbf{x})\right) \cdot \frac{dP}{d\mu}(\mathbf{x}) 
        \cdot \left| u - \frac{dQ}{dP}(\mathbf{x})  \right|^2 + o\left(\left| u - \frac{dQ}{dP}(\mathbf{x})  \right|^2 \right), 
        \label{theorem_loss_is_mu_strongly_convex_f_di2} 
    \end{align} 
    where $f(a) = o(a)$ (as $a \rightarrow 0$) denotes asymptotic domination such that $\lim_{a \rightarrow 0} \frac{f(a)}{a} \rightarrow 0$. 
\end{theorem} 
\begin{proof}[Proof of Theorem \ref{Apdx_theorem_loss_is_mu_strongly_convex_2}] 
    Let $\text{sign}(x)$ denote the sign of the value $x$: specifically, $\text{sign}(x) = 1$ if $x > 0$, $\text{sign}(x) = -1$ if $x < 0$, and $\text{sign}(x) = 0$ if $x = 0$. 

    From Equation (\ref{lemma_loss_is_mu_strongly_convex_f_div_eq_0}) in Lemma \ref{Apdx_lemma_loss_is_mu_strongly_convex_0}, we have 
    \begin{align} 
        \text{sign}\left(\frac{d}{d u} \widetilde{l}_{f}(u; \mathbf{x})\right) &= \text{sign}\left(\left\{u - \frac{dQ}{dP}(\mathbf{x}) \right\} \cdot  f''(u)  \cdot \frac{dP}{d\mu}(\mathbf{x})\right) \allowdisplaybreaks\nonumber\\ 
        &= \text{sign}\left(\left\{u - \frac{dQ}{dP}(\mathbf{x}) \right\}\right) \cdot \text{sign}\left( f''(u) \right)  \cdot  \text{sign}\left(\frac{dP}{d\mu}(\mathbf{x})\right) \allowdisplaybreaks\nonumber\\ 
        &= \text{sign}\left(u - \frac{dQ}{dP}(\mathbf{x})\right). 
        \label{proof_theorem_loss_is_mu_strongly_convex_f_di0} 
    \end{align} 
    Thus, $\widetilde{l}_{f}(u; \mathbf{x})$ is minimized only when $u^* = \frac{dQ}{dP}(\mathbf{x})$. 

    Next, from Equation (\ref{lemma_loss_is_mu_strongly_convex_f_div_eq_0}), 
    \begin{align} 
        \frac{d}{d u} \widetilde{l}_{f} \left(\frac{dQ}{dP}(\mathbf{x}) ;\mathbf{x}\right) &= 0, 
        \label{proof_theorem_loss_is_mu_strongly_convex_p1} 
    \end{align} 
    and from Equation (\ref{lemma_loss_is_mu_strongly_convex_f_div_eq_1}), 
    \begin{align} 
        \frac{d^2}{d u^2} \widetilde{l}_{f} \left(\frac{dQ}{dP}(\mathbf{x}) ;\mathbf{x}\right) &= f''\left(\frac{dQ}{dP}(\mathbf{x})\right) \cdot \frac{dP}{d\mu}(\mathbf{x}). 
        \label{proof_theorem_loss_is_mu_strongly_convex_f_di1} 
    \end{align} 
    Thus, using the second-order Taylor expansion of $\widetilde{l}_{f}(u; \mathbf{x})$ around $u = \frac{dQ}{dP}(\mathbf{x})$, we have 
    \begin{align} 
        & \widetilde{l}_{f} \left(u;\mathbf{x}\right) -  \widetilde{l}_{f} \left(\frac{dQ}{dP}(\mathbf{x}) ;\mathbf{x}\right) \allowdisplaybreaks\nonumber\\ 
        &= \frac{1}{2} \cdot f''\left(\frac{dQ}{dP}(\mathbf{x})\right) \cdot \frac{dP}{d\mu}(\mathbf{x}) 
        \cdot \left| u - \frac{dQ}{dP}(\mathbf{x})  \right|^2 + o\left(\left| u - \frac{dQ}{dP}(\mathbf{x})  \right|^2 \right). 
        \label{proof_theorem_loss_is_mu_strongly_convex_f_di2} 
    \end{align} 

    This completes the proof. 
\end{proof}

\begin{proposition}[Proposition \ref{main_proposition_loss_optimal_fdiv} restated] \label{Apdx_proposition_loss_optimal_fdiv} 
    Assume that $f$ satisfies Assumption \ref{Apdx_assumption_for_f}. 
    Let $\widetilde{\mathcal{L}}_{f}^{(N)}\left(\phi \right)$ denote the $\mu$-representation $f$-divergence loss as defined in Definition \ref{Apdx_Definition_murepresentation_f_divergenceloss}. Then, the minimum value of $\widetilde{\mathcal{L}}_{f}^{(N)}\left(\phi \right)$ over all measurable functions $\phi:\Omega \rightarrow \mathbb{R}_{>0}$ is achieved if and only if $\phi$ satisfies 
    \begin{align} 
        \phi(\mathbf{X}_{\mu}^{i}) = \frac{dQ}{dP} (\mathbf{X}_{\mu}^{i}), \quad \text{for } i = 1, 2, \ldots, N. 
        \label{Eq_proposition_loss_optimal_fdiv_1or_f} 
    \end{align} 
\end{proposition}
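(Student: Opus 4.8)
The plan is to reduce the minimization of $\widetilde{\mathcal{L}}_{f}^{(N)}$ over the infinite-dimensional space of measurable functions to $N$ decoupled one-dimensional problems, and then invoke the strict-convexity statement already established in Theorem~\ref{Apdx_theorem_loss_is_mu_storongly_convex_2}. Fix a realization of $\hat{\mathbf{X}}_{\mu[N]}=\{\mathbf{X}_{\mu}^{1},\dots,\mathbf{X}_{\mu}^{N}\}$. By Definition~\ref{Apdx_Definition_murepresentation_f_divergenceloss}, $\widetilde{\mathcal{L}}_{f}^{(N)}(\phi)=\frac{1}{N}\sum_{i=1}^{N}\widetilde{l}_{f}(\phi(\mathbf{X}_{\mu}^{i});\mathbf{X}_{\mu}^{i})$ depends on $\phi$ only through the finite tuple $(\phi(\mathbf{X}_{\mu}^{1}),\dots,\phi(\mathbf{X}_{\mu}^{N}))$. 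First I would observe that any tuple of positive values, consistent on coincident sample points, is realized by some measurable $\phi:\Omega\to\mathbb{R}_{>0}$ (e.g.\ a simple function equal to the prescribed value on each singleton $\{\mathbf{X}_{\mu}^{i}\}$ and to a fixed positive constant elsewhere); hence $\min_{\phi}\widetilde{\mathcal{L}}_{f}^{(N)}(\phi)=\frac{1}{N}\sum_{i=1}^{N}\min_{u\in\mathbb{R}_{>0}}\widetilde{l}_{f}(u;\mathbf{X}_{\mu}^{i})$, and the minimizing $\phi$ are exactly those whose value at each $\mathbf{X}_{\mu}^{i}$ minimizes the corresponding summand.

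Next I would apply Theorem~\ref{Apdx_theorem_loss_is_mu_storongly_convex_2}: under Assumption~\ref{Apdx_assumption_for_f}, for each fixed $\mathbf{x}$ with $\tfrac{dP}{d\mu}(\mathbf{x})>0$ the map $u\mapsto\widetilde{l}_{f}(u;\mathbf{x})$ attains its minimum over $\mathbb{R}_{>0}$ and attains it \emph{only} at $u=\tfrac{dQ}{dP}(\mathbf{x})$ --- the uniqueness coming from the sign identity $\mathrm{sign}\big(\tfrac{d}{du}\widetilde{l}_{f}(u;\mathbf{x})\big)=\mathrm{sign}\big(u-\tfrac{dQ}{dP}(\mathbf{x})\big)$, which uses $f''>0$. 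Applying this at $\mathbf{x}=\mathbf{X}_{\mu}^{i}$ for each $i$, the unique minimizing tuple is $\big(\tfrac{dQ}{dP}(\mathbf{X}_{\mu}^{1}),\dots,\tfrac{dQ}{dP}(\mathbf{X}_{\mu}^{N})\big)$; this is automatically consistent on coincident points and, by the problem setup ($p>0\iff q>0$ a.e.), lies in $\mathbb{R}_{>0}^{N}$, so it is realized by some measurable $\phi$. For the ``only if'' direction, if some $\phi$ minimizes $\widetilde{\mathcal{L}}_{f}^{(N)}$ yet $\phi(\mathbf{X}_{\mu}^{i_{0}})\neq\tfrac{dQ}{dP}(\mathbf{X}_{\mu}^{i_{0}})$ for some $i_{0}$, then the $i_{0}$-th summand strictly exceeds its minimum while the others are bounded below by theirs, contradicting minimality; this yields the claimed equivalence.

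The only genuine wrinkle is measure-theoretic bookkeeping on null events, which I expect to be the main thing to get right (the rest being the pointwise decoupling plus the already-proved strict convexity of $\widetilde{l}_{f}(\cdot;\mathbf{x})$). Since $Q\ll P$ and $P\ll\mu$, the set $\{\mathbf{x}:\tfrac{dP}{d\mu}(\mathbf{x})=0\}$ has $P$-measure and hence $Q$-measure zero, so $\mu$-almost surely every $\mathbf{X}_{\mu}^{i}$ falls where $\tfrac{dP}{d\mu}>0$ and where the density $\tfrac{dQ}{dP}$ is meaningful; on this full-measure event Theorem~\ref{Apdx_theorem_loss_is_mu_storongly_convex_2} applies verbatim to every summand. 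I would therefore state --- and it suffices for all downstream uses --- that the conclusion holds for $\mu$-almost every realization of $\hat{\mathbf{X}}_{\mu[N]}$.
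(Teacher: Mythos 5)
Your proposal is correct and matches the paper's argument essentially step for step: decouple $\widetilde{\mathcal{L}}_{f}^{(N)}$ into $N$ independent one-dimensional subproblems over the values $\phi(\mathbf{X}_{\mu}^{i})$, invoke the pointwise strict-minimum result of Theorem~\ref{Apdx_theorem_loss_is_mu_storongly_convex_2} (which rests on the sign identity $\mathrm{sign}(\tfrac{d}{du}\widetilde l_f(u;\mathbf{x}))=\mathrm{sign}(u-\tfrac{dQ}{dP}(\mathbf{x}))$ under $f''>0$), and conclude both directions by comparing summands. Your added care about coincident sample points and the $\mu$-null set where $\tfrac{dP}{d\mu}=0$ is a harmless refinement the paper leaves implicit.
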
 
\begin{proof}[proof of Proposition \ref{Apdx_proposition_loss_optimal_fdiv}] 
    From Theorem \ref{Apdx_theorem_loss_is_mu_strongly_convex_2}, we observe that, for $i=1, 2, \ldots, N$, 
    \begin{equation} 
        \min_{u >0} \widetilde{l}_{f} \left(u; \mathbf{X}_{\mu}^{i}\right) = 
        \widetilde{l}_{f} \left( \frac{dQ}{dP}(\mathbf{X}_{\mu}^{i}); \mathbf{X}_{\mu}^{i}\right), 
        \label{Eq_proposition_loss_optimal_fdiv_p3} 
    \end{equation} 
    where the minimum value is achieved only at $u=\frac{dQ}{dP}(\mathbf{X}_{\mu}^{i})$. 

    Thus, 
    \begin{align} 
        \min_{\phi:\Omega \rightarrow \mathbb{R}_{>0}} \widetilde{\mathcal{L}}_{f}^{(N)}\left(\phi\right) 
        &=  \min_{\phi:\Omega \rightarrow \mathbb{R}_{>0}} \frac{1}{N} \cdot \sum_{i=1}^{N} \widetilde{l}_{f}(\phi(\mathbf{X}_{\mu}^{i});  \mathbf{X}_{\mu}^{i}) \nonumber \\ 
        &=  \min_{\substack{\phi(\mathbf{X}_{\mu}^{i}) > 0,\\i=1,2,\ldots,N}} \frac{1}{N} \cdot\sum_{i=1}^{N} \widetilde{l}_{f}(\phi(\mathbf{X}_{\mu}^{i});  \mathbf{X}_{\mu}^{i}) \nonumber \\ 
        &=  \min_{\substack{u_i > 0,\\i=1,2,\ldots,N}} \frac{1}{N} \cdot \sum_{i=1}^{N} \widetilde{l}_{f}(u_i;  \mathbf{X}_{\mu}^{i}) \nonumber \\ 
        &=  \frac{1}{N} \cdot \sum_{i=1}^{N} \widetilde{l}_{f} \left( \frac{dQ}{dP}(\mathbf{X}_{\mu}^{i}); \mathbf{X}_{\mu}^{i}\right). \label{Eq_proposition_loss_optimal_fdiv_p1} 
    \end{align} 
    Suppose that $\widetilde{\phi}(\mathbf{x})$ is a function on $\Omega$ that satisfies Equation (\ref{Eq_proposition_loss_optimal_fdiv_1or_f}). From Equation (\ref{Eq_proposition_loss_optimal_fdiv_p1}), we have
    \begin{align} 
        \lefteqn{ \widetilde{\mathcal{L}}_{f}^{(N)}\left(\widetilde{\phi}\right) 
            -   \min_{\phi:\Omega \rightarrow \mathbb{R}_{>0}} \widetilde{\mathcal{L}}_{f}^{(N)}\left(\phi\right)}\allowdisplaybreaks\nonumber\\ 
        &=  \frac{1}{N} \cdot \sum_{i=1}^{N} \widetilde{l}_{f} \bigg(\widetilde{\phi}(\mathbf{X}_{\mu}^{i});\mathbf{X}_{\mu}^{i}\bigg) 
        - \frac{1}{N} \cdot \sum_{i=1}^{N} \widetilde{l}_{f} \left( \frac{dQ}{dP}(\mathbf{X}_{\mu}^{i}); \mathbf{X}_{\mu}^{i}\right)  \allowdisplaybreaks\nonumber\\ 
        &=  \frac{1}{N} \cdot \sum_{i=1}^{N} \widetilde{l}_{f} \bigg(\frac{dQ}{dP}(\mathbf{X}_{\mu}^{i});\mathbf{X}_{\mu}^{i}\bigg) 
        - \frac{1}{N} \cdot \sum_{i=1}^{N} \widetilde{l}_{f} \left( \frac{dQ}{dP}(\mathbf{X}_{\mu}^{i}); \mathbf{X}_{\mu}^{i}\right)  \allowdisplaybreaks\nonumber\\ 
        &= 0. 
    \end{align} 
    Here, we show that the minimum value of $\widetilde{\mathcal{L}}_{f}^{(N)}\left(\phi \right)$ over all measurable functions $\phi:\Omega \rightarrow \mathbb{R}_{>0}$ is achieved if 
    $\phi:\Omega \rightarrow \mathbb{R}_{>0}$ satisfies Equation (\ref{Eq_proposition_loss_optimal_fdiv_1or_f}). 

    Next, we show that the minimum value of $\widetilde{\mathcal{L}}_{f}^{(N)}\left(\phi \right)$ over all measurable functions $\phi:\Omega \rightarrow \mathbb{R}_{>0}$ is achieved only if 
    $\phi:\Omega \rightarrow \mathbb{R}_{>0}$ satisfies Equation (\ref{Eq_proposition_loss_optimal_fdiv_1or_f}). 

    We have, for any function $\phi: \Omega \rightarrow (0, \infty)$, 
    \begin{align} 
        \lefteqn{ \widetilde{\mathcal{L}}_{f}^{(N)}\left(\phi\right)  -   \min_{\phi:\Omega \rightarrow \mathbb{R}_{>0}} \widetilde{\mathcal{L}}_{f}^{(N)}\left(\phi\right)}\allowdisplaybreaks\nonumber\\ 
        &=  \frac{1}{N} \cdot \sum_{i=1}^{N} \widetilde{l}_{f} \bigg(\phi(\mathbf{X}_{\mu}^{i});\mathbf{X}_{\mu}^{i}\bigg) 
        - \frac{1}{N} \cdot \sum_{i=1}^{N} \min_{\substack{u_i > 0,\\i=1,2,\ldots,N}} \widetilde{l}_{f}(u_i;  \mathbf{X}_{\mu}^{i})  \allowdisplaybreaks\nonumber\\ 
        &=  \frac{1}{N} \cdot \sum_{i=1}^{N} \left\{ \widetilde{l}_{f} \bigg(\phi(\mathbf{X}_{\mu}^{i});\mathbf{X}_{\mu}^{i}\bigg) -  \min_{u > 0 } \widetilde{l}_{f}(u;  \mathbf{X}_{\mu}^{i}) \right\}. 
        \label{Eq_proposition_loss_optimal_fdiv_p2} 
    \end{align} 

    Suppose that $\phi(\mathbf{X}_{\mu}^{i}) \neq \frac{dQ}{dP} (\mathbf{X}_{\mu}^{i})$. 
    Then, from Equation (\ref{Eq_proposition_loss_optimal_fdiv_p3}), we have 
    \begin{equation} 
        \widetilde{l}_{f} \left(\phi(\mathbf{X}_{\mu}^{i}); \mathbf{X}_{\mu}^{i}\right) 
        >  \min_{u > 0 } \widetilde{l}_{f}(u;  \mathbf{X}_{\mu}^{i}). 
        \label{Eq_proposition_loss_optimal_fdiv_p4} 
    \end{equation} 
    From Equations (\ref{Eq_proposition_loss_optimal_fdiv_p2}) and (\ref{Eq_proposition_loss_optimal_fdiv_p4}), we observe that 
    \begin{align} 
        \lefteqn{ \widetilde{\mathcal{L}}_{f}^{(N)}\left(\phi\right)  -   \min_{\phi:\Omega \rightarrow \mathbb{R}_{>0}} \widetilde{\mathcal{L}}_{f}^{(N)}\left(\phi\right)}\allowdisplaybreaks\nonumber\\ 
        &=  \frac{1}{N} \cdot \sum_{i=1}^{N} \left\{ \widetilde{l}_{f} \bigg(\phi(\mathbf{X}_{\mu}^{i});\mathbf{X}_{\mu}^{i}\bigg) -  \min_{u > 0 } \widetilde{l}_{f}(u;  \mathbf{X}_{\mu}^{i}) \right\} \allowdisplaybreaks\nonumber\\ 
        &\ge \frac{1}{N} \cdot \left\{  \widetilde{l}_{f} \bigg(\phi(\mathbf{X}_{\mu}^{i});\mathbf{X}_{\mu}^{i}\bigg) -  \min_{u > 0 } \widetilde{l}_{f}(u;  \mathbf{X}_{\mu}^{i})  \right\} \allowdisplaybreaks\nonumber\\ 
        &>  0 
    \end{align} 
    Thus, we see that the minimum value of $\widetilde{\mathcal{L}}_{f}^{(N)}\left(\phi \right)$ over all measurable functions $\phi:\Omega \rightarrow \mathbb{R}_{>0}$ is achieved only if 
    $\phi:\Omega \rightarrow \mathbb{R}_{>0}$ satisfies Equation (\ref{Eq_proposition_loss_optimal_fdiv_1or_f}).

    This completes the proof. 
\end{proof}

\begin{lemma} \label{Apdx_lemma_loss_not_biased_lemma_fdiv} 
    Assume that $f$ satisfies Assumption \ref{Apdx_assumption_for_f}. 
    Let $\widetilde{\mathcal{L}}_{f}^{(N)}\left(\phi \right)$ denote the $\mu$-representation $f$-divergence loss as defined in Definition \ref{Apdx_Definition_murepresentation_f_divergenceloss}. Define 
    \begin{align} 
        \widebar{\mathcal{L}}_{f}(\phi) &= E_{\mu} \left[ \widetilde{\mathcal{L}}_{f}^{(N)}\left(\phi\right) \right] \allowdisplaybreaks\nonumber\\ 
        &= \frac{1}{N} \cdot \sum_{i=1}^{N} E_{\mu} \left[ - f'\left(\phi(\mathbf{x}_{i})\right) \cdot \frac{dQ}{d\mu}(\mathbf{x}_{i})\right] \allowdisplaybreaks\nonumber\\ 
        &\ + \ \frac{1}{N} \cdot \sum_{i=1}^{N} E_{\mu} \left[f^*\left(f'\left(\phi(\mathbf{x}_{i})\right) \right) \cdot \frac{dP}{d\mu}(\mathbf{x}_{i})\right]. 
        \label{Appensix_Eq_approximate_loss_func_f_div} 
    \end{align} 
    Then, 
    \begin{align} 
        E_{\mu}\left[ \min_{\phi:\Omega \rightarrow \mathbb{R}_{>0}} \widetilde{\mathcal{L}}_{f}^{(N)}\left(\phi\right)  \right] 
        = \min_{\phi:\Omega \rightarrow \mathbb{R}_{>0}} \widebar{\mathcal{L}}_{f}(\phi) 
        = \min_{\phi:\Omega \rightarrow \mathbb{R}_{>0}} E_{\mu}\left[ \mathcal{L}_{f}^{(R,S)}(\phi) \right], 
        \label{Lemma_Eq_state_inf_t_l_5a} 
    \end{align} 
    where the infimum is taken over all measurable functions $\phi:\Omega \rightarrow \mathbb{R}_{>0}$ such that $E_P[f(\phi(\mathbf{X}))] < \infty$. Additionally, the equality in Equation (\ref{Lemma_Eq_state_inf_t_l_5a}) hold when $\phi(\mathbf{x}) = \frac{dQ}{dP}(\mathbf{x})$. 
\end{lemma}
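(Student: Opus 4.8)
The plan is to show that all three quantities appearing in Equation (\ref{Lemma_Eq_state_inf_t_l_5a}) equal the common value $E_{\mu}\left[\widetilde{l}_{f}\!\left(\tfrac{dQ}{dP}(\mathbf{x}); \mathbf{x}\right)\right]$, and that this value is attained by the choice $\phi(\mathbf{x}) = \tfrac{dQ}{dP}(\mathbf{x})$. Each of the three identifications uses a different elementary ingredient already available in the excerpt.

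For the leftmost quantity, I would invoke Proposition \ref{Apdx_proposition_loss_optimal_fdiv}: for every realization of $\hat{\mathbf{X}}_{\mu[N]}$ the minimum $\min_{\phi}\widetilde{\mathcal{L}}_{f}^{(N)}(\phi)$ equals $\frac{1}{N}\sum_{i=1}^{N}\widetilde{l}_{f}\!\left(\tfrac{dQ}{dP}(\mathbf{X}_{\mu}^{i}); \mathbf{X}_{\mu}^{i}\right)$, since the pointwise-optimal values are precisely the density ratios. Taking the expectation over $\hat{\mathbf{X}}_{\mu[N]}$ and using that $\mathbf{X}_{\mu}^{1},\ldots,\mathbf{X}_{\mu}^{N}$ are i.i.d.\ from $\mu$ collapses the average to a single term, giving $E_{\mu}\!\left[\min_{\phi}\widetilde{\mathcal{L}}_{f}^{(N)}(\phi)\right] = E_{\mu}\!\left[\widetilde{l}_{f}\!\left(\tfrac{dQ}{dP}(\mathbf{x}); \mathbf{x}\right)\right]$. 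For the middle quantity, linearity of expectation and the i.i.d.\ property give $\widebar{\mathcal{L}}_{f}(\phi) = E_{\mu}\!\left[\widetilde{l}_{f}(\phi(\mathbf{x}); \mathbf{x})\right]$; since the integrand depends on $\phi$ only through the pointwise value $\phi(\mathbf{x})$, we have $\widetilde{l}_{f}(\phi(\mathbf{x}); \mathbf{x}) \ge \min_{u>0}\widetilde{l}_{f}(u; \mathbf{x})$ for $\mu$-a.e.\ $\mathbf{x}$, hence $\widebar{\mathcal{L}}_{f}(\phi)\ge E_{\mu}\!\left[\min_{u>0}\widetilde{l}_{f}(u;\mathbf{x})\right]$. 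By Theorem \ref{Apdx_theorem_loss_is_mu_storongly_convex_2} the pointwise minimizer is $u = \tfrac{dQ}{dP}(\mathbf{x})$, which is a positive measurable function of $\mathbf{x}$, so taking $\phi = \tfrac{dQ}{dP}$ attains this lower bound; therefore $\min_{\phi}\widebar{\mathcal{L}}_{f}(\phi) = E_{\mu}\!\left[\widetilde{l}_{f}\!\left(\tfrac{dQ}{dP}(\mathbf{x}); \mathbf{x}\right)\right]$.

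For the rightmost quantity, I would write out $E\!\left[\mathcal{L}_{f}^{(R,S)}(\phi)\right]$ directly from Definition \ref{Apdx_DefinitionfDivergencelossa}, using $\mathbf{X}_{Q}^{i}\sim Q$ and $\mathbf{X}_{P}^{i}\sim P$ and the i.i.d.\ structure, to obtain $E\!\left[\mathcal{L}_{f}^{(R,S)}(\phi)\right] = -E_{Q}[f'(\phi)] + E_{P}[f^{*}(f'(\phi))]$. Changing base measures through $E_{Q}[g] = E_{\mu}\!\left[g\cdot\tfrac{dQ}{d\mu}\right]$ and $E_{P}[g] = E_{\mu}\!\left[g\cdot\tfrac{dP}{d\mu}\right]$ reveals that this equals $E_{\mu}\!\left[\widetilde{l}_{f}(\phi(\mathbf{x});\mathbf{x})\right] = \widebar{\mathcal{L}}_{f}(\phi)$ for every admissible $\phi$; the minima therefore coincide and are again attained at $\phi = \tfrac{dQ}{dP}$, which establishes the final equality in Equation (\ref{Lemma_Eq_state_inf_t_l_5a}) together with the stated attainment claim.

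The step requiring the most care is the interchange $\min_{\phi}E_{\mu}\!\left[\widetilde{l}_{f}(\phi(\mathbf{x});\mathbf{x})\right] = E_{\mu}\!\left[\min_{u>0}\widetilde{l}_{f}(u;\mathbf{x})\right]$: it rests on a measurable-selection argument, namely that the pointwise minimizer $\mathbf{x}\mapsto\tfrac{dQ}{dP}(\mathbf{x})$ is itself an admissible competitor (positive and measurable, which holds since it is a Radon--Nikod\'{y}m derivative), and it must be paired with a check of the integrability restriction $E_{P}[f(\phi)] < \infty$ under which the infima are taken. Concretely, using $f^{*}(f'(u)) = u f'(u) - f(u)$ one sees the common value equals $-D_{f}(Q\|P)$, so one should note that the admissibility condition applied to $\phi = \tfrac{dQ}{dP}$ amounts to $D_{f}(Q\|P) < \infty$, which is what guarantees all expectations above are finite and the manipulations legitimate.
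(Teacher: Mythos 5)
Your proposal is correct and follows essentially the paper's plan: identify the leftmost quantity via Proposition \ref{Apdx_proposition_loss_optimal_fdiv} and the i.i.d.\ collapse, identify the middle and right quantities as $-E_Q[f'(\phi)]+E_P[f^*(f'(\phi))]$, and show all three coincide at $\phi=dQ/dP$. The one genuine difference is how you handle $\min_\phi\widebar{\mathcal{L}}_f(\phi)$: the paper treats this minimum as known, invoking the variational representation of $f$-divergence in Equation~(\ref{Eq_Variational_representation_f_div_phi}) (citing \citet{nguyen2007estimating}) to conclude the infimum is $-D_f(Q\|P)$, whereas you re-derive it from scratch by interchanging $\min$ and $E_\mu$ pointwise via Theorem \ref{Apdx_theorem_loss_is_mu_storongly_convex_2} and a measurable-selection remark. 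Your route is more self-contained and also makes explicit the integrability restriction $E_P[f(\phi)]<\infty$ that justifies the interchange (it amounts to $D_f(Q\|P)<\infty$), which the paper's proof leaves implicit. The paper, for its part, also records the explicit pointwise minimal value $\widetilde{l}_f^*(\mathbf{x}) = -f\!\left(\frac{dQ}{dP}(\mathbf{x})\right)\frac{dP}{d\mu}(\mathbf{x})$ and uses it to compute $E_\mu[\min_\phi\widetilde{\mathcal{L}}_f^{(N)}(\phi)]=-D_f(Q\|P)$ directly, a small algebraic step you relegate to the final remark. Either exposition is sound; the content is the same.
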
 
\begin{proof}[proof of Lemma \ref{Apdx_lemma_loss_not_biased_lemma_fdiv}] 
    Let,  $\widetilde{l}_{f}^*(\mathbf{x}) = \min_{u \in \mathbb{R}_{>0}} \widetilde{l}_{f}(u; \mathbf{x})$. 
    From Theorem \ref{Apdx_theorem_loss_is_mu_strongly_convex_2}, we see $\widetilde{l}_{f}^*(\mathbf{x}) = \widetilde{l}_{f}(dQ/dP(\mathbf{x}); \mathbf{x})$. 
    Then, we have 
    \begin{align} 
        \widetilde{l}_{f}^*(\mathbf{x}) 
        &=\widetilde{l}_{f}\left(\frac{dQ}{dP}(\mathbf{x}); \mathbf{x}\right)\allowdisplaybreaks\nonumber\\ 
        &= 
        - f'\left(\frac{dQ}{dP}(\mathbf{x}) \right) \cdot \frac{dQ}{d\mu}(\mathbf{x}) 
        + \bigg\{f'\left(\frac{dQ}{dP}(\mathbf{x}) \right)\cdot \frac{dQ}{dP}(\mathbf{x}) 
        - f\left(\frac{dQ}{dP}(\mathbf{x}) \right) \bigg\}\cdot \frac{dP}{d\mu}(\mathbf{x}) \allowdisplaybreaks\nonumber\\ 
        &= - f\left(\frac{dQ}{dP}(\mathbf{x}) \right) \cdot \frac{dP}{d\mu}(\mathbf{x}). \label{Apdx_theorem_loss_not_biased_lemma_fdiv_p1} 
    \end{align} 

    Now, we have 
    \begin{align} 
        \min_{\phi:\Omega \rightarrow \mathbb{R}_{>0}} \widetilde{\mathcal{L}}_{f}^{(N)}\left(\phi\right) 
        &=  \min_{\phi:\Omega \rightarrow \mathbb{R}_{>0}} \frac{1}{N} \cdot \sum_{i=1}^{N} \widetilde{l}_{f}(\phi(\mathbf{X}_{\mu}^{i});  \mathbf{X}_{\mu}^{i}) \nonumber \\ 
        &=  \min_{\substack{\phi(\mathbf{X}_{\mu}^{i}) > 0,\\i=1,2,\ldots,N}} \frac{1}{N} \cdot\sum_{i=1}^{N} \widetilde{l}_{f}(\phi(\mathbf{X}_{\mu}^{i});  \mathbf{X}_{\mu}^{i}) \nonumber \\ 
        &=  \min_{\substack{u_i > 0,\\i=1,2,\ldots,N}} \frac{1}{N} \cdot \sum_{i=1}^{N} \widetilde{l}_{f}(u_i;  \mathbf{X}_{\mu}^{i}) \nonumber \\ 
        &=  \frac{1}{N} \cdot \sum_{i=1}^{N}  \widetilde{l}_{f}^*(\mathbf{X}_{\mu}^{i}). \label{Apdx_theorem_loss_not_biased_lemma_fdiv_p2} 
    \end{align} 

    Additionally, we have 
    \begin{align} 
        E_{\mu}\left[ \widetilde{\mathcal{L}}_{f}^{(N)}\left(\phi\right) \right] 
        &=  E_{\mu}\left[  \frac{1}{N} \cdot \sum_{i=1}^{N} - f'\left(\phi(\mathbf{x}_{i})\right) \cdot \frac{dQ}{d\mu}(\mathbf{x}_{i}) \right. \allowdisplaybreaks\nonumber\\ 
        &   \quad \qquad + \   \frac{1}{N} \cdot \sum_{i=1}^{N} \left. f^*\left(f'\left(\phi(\mathbf{x}_{i})\right) \right) \cdot \frac{dP}{d\mu}(\mathbf{x}_{i})  \right] \allowdisplaybreaks\nonumber\\ 
        &=   - \frac{1}{N} \cdot \sum_{i=1}^{N} E_{\mu}\left[ f'\left(\phi(\mathbf{x}_{i})\right) \cdot \frac{dQ}{d\mu}(\mathbf{x}_{i})  \right]  \allowdisplaybreaks\nonumber\\ 
        &   \quad \qquad + \  \frac{1}{N} \cdot \sum_{i=1}^{N} E_{\mu}\left[ f^*\left(f'\left(\phi(\mathbf{x}_{i})\right) \right) \cdot \frac{dP}{d\mu}(\mathbf{x}_{i})  \right] \allowdisplaybreaks\nonumber\\ 
        &=   - \frac{1}{N} \cdot \sum_{i=1}^{N} E_{Q}\left[ f'\left(\phi\right) \right]  +  \frac{1}{N} \cdot \sum_{i=1}^{N} E_{P}\left[ f^*\left(f'\left(\phi\right) \right) \right] \allowdisplaybreaks\nonumber\\ 
        &=   - E_{Q}\left[ f'\left(\phi \right)  \right]  + E_{P}\left[ f^*\left(f'\left(\phi\right)\right) \right], 
        \label{Eq_proof_Apdx_theorem_loss_not_biased_lemma_fdiv_b0} 
    \end{align} 
    and 
    \begin{align} 
        E\left[ \mathcal{L}_{f}^{(R,S)}(\phi) \right] 
        &=  E\left[  \frac{1}{R} \cdot \sum_{i=1}^{S} - f'\left(\phi(\mathbf{x}_{i}^q)\right) \right. \allowdisplaybreaks\nonumber\\ 
        &   \quad \qquad + \   \frac{1}{S} \cdot \sum_{i=1}^{R} \left. f^*\left(f'\left(\phi(\mathbf{x}_{i}^p)\right) \right) \right] \allowdisplaybreaks\nonumber\\ 
        &=   - \frac{1}{S} \cdot \sum_{i=1}^{S} E_{Q}\left[ f'\left(\phi(\mathbf{x}_{i})\right)  \right]  \allowdisplaybreaks\nonumber\\ 
        &   \quad \qquad + \   \frac{1}{R} \cdot \sum_{i=1}^{R} E_{P}\left[ f^*\left(f'\left(\phi(\mathbf{x}_{i})\right) \right) \right] \allowdisplaybreaks\nonumber\\ 
        &=   - \frac{1}{S} \cdot \sum_{i=1}^{S} E_{Q}\left[ f'\left(\phi\right) \right] + \frac{1}{R} \cdot \sum_{i=1}^{R} E_{P}\left[ f^*\left(f'\left(\phi\right) \right) \right] \allowdisplaybreaks\nonumber\\ 
        &=   - E_{Q}\left[ f'\left(\phi \right)  \right]  + E_{P}\left[ f^*\left(f'\left(\phi\right)\right) \right]. 
        \label{Eq_proof_Apdx_theorem_loss_not_biased_lemma_fdiv_b1} 
    \end{align} 

    Now, note that, from Equation (\ref{Eq_Variational_representation_f_div_phi}) (\cite{nguyen2007estimating}), we see 
    \begin{align} 
        \min_{\phi:\Omega \rightarrow \mathbb{R}_{>0}}  - E_{Q}\left[ f'\left(\phi \right)  \right]  + E_{P}\left[ f^*\left(f'\left(\phi\right)\right) \right] 
        &=  - D_{f} (Q||P), 
        \label{Eq_proof_Apdx_theorem_loss_not_biased_lemma_fdiv_b2} 
    \end{align} 
    where $D_{f} (Q||P)$ denotes $f$-divergence defined in Definition \ref{Apdx_Definition_f_divergence} 
    and the equality in Equation (\ref{Eq_proof_Apdx_theorem_loss_not_biased_lemma_fdiv_b2}) holds for $\phi(\mathbf{x})= dQ/dP(\mathbf{x})$. 

    From Equations (\ref{Eq_proof_Apdx_theorem_loss_not_biased_lemma_fdiv_b0}),  (\ref{Eq_proof_Apdx_theorem_loss_not_biased_lemma_fdiv_b1}) and  (\ref{Eq_proof_Apdx_theorem_loss_not_biased_lemma_fdiv_b2}), we have 
    \begin{align} 
        \min_{\phi:\Omega \rightarrow \mathbb{R}_{>0}}  E_{\mu}\left[ \widetilde{\mathcal{L}}_{f}^{(N)}\left(\phi\right) \right] 
        =  \min_{\phi:\Omega \rightarrow \mathbb{R}_{>0}}  E\left[ \mathcal{L}_{f}^{(R,S)}(\phi) \right] =  - D_{f} (Q||P), 
        \label{Apdx_theorem_loss_not_biased_lemma_fdiv_p4} 
    \end{align} 
    and the equality in Equation (\ref{Apdx_theorem_loss_not_biased_lemma_fdiv_p4}) holds for $\phi(\mathbf{x})= dQ/dP(\mathbf{x})$. 

    Substituting Equation (\ref{Apdx_theorem_loss_not_biased_lemma_fdiv_p1}) into Equation (\ref{Apdx_theorem_loss_not_biased_lemma_fdiv_p2}), we have 
    \begin{align} 
        \min_{\phi:\Omega \rightarrow \mathbb{R}_{>0}} \widetilde{\mathcal{L}}_{f}^{(N)}\left(\phi\right) 
        &= \frac{1}{N} \cdot \sum_{i=1}^{N}  \widetilde{l}_{f}^*(\mathbf{X}_{\mu}^{i}) \allowdisplaybreaks\nonumber\\ 
        &= \frac{1}{N} \cdot \sum_{i=1}^{N}  - f\left(\frac{dQ}{dP}(\mathbf{X}_{\mu}^{i}) \right) \cdot \frac{dP}{d\mu}(\mathbf{X}_{\mu}^{i}). 
    \end{align} 
    Thus, 
    \begin{align} 
        E_{\mu}\left[  \min_{\phi:\Omega \rightarrow \mathbb{R}_{>0}} \widetilde{\mathcal{L}}_{f}^{(N)}\left(\phi\right) \right] 
        &=  E_{\mu}\left[ \frac{1}{N} \cdot \sum_{i=1}^{N}  - f\left(\frac{dQ}{dP}(\mathbf{x}_{i}) \right) \cdot \frac{dP}{d\mu}(\mathbf{x}_{i}) \right] \allowdisplaybreaks\nonumber\\ 
        &=  -  \frac{1}{N} \cdot \sum_{i=1}^{N}  E_{\mu}\left[ f\left(\frac{dQ}{dP}(\mathbf{x}_{i}) \right) \cdot\frac{dP}{d\mu}(\mathbf{x}_{i}) \right] \allowdisplaybreaks\nonumber\\ 
        &=  -  \frac{1}{N} \cdot \sum_{i=1}^{N}  D_{f} (Q||P) \allowdisplaybreaks\nonumber\\ 
        &=  - D_{f} (Q||P), 
        \label{Apdx_theorem_loss_not_biased_lemma_fdiv_p5} 
    \end{align}

    From Equations (\ref{Apdx_theorem_loss_not_biased_lemma_fdiv_p4}) and (\ref{Apdx_theorem_loss_not_biased_lemma_fdiv_p5}), we have 
    \begin{align} 
        E_{\mu}\left[ \min_{\phi:\Omega \rightarrow \mathbb{R}_{>0}} \widetilde{\mathcal{L}}_{f}^{(N)}\left(\phi\right)  \right] 
        = \min_{\phi:\Omega \rightarrow \mathbb{R}_{>0}} \widebar{\mathcal{L}}_{f}(\phi) 
        = \min_{\phi:\Omega \rightarrow \mathbb{R}_{>0}} E_{\mu}\left[ \mathcal{L}_{f}^{(R,S)}(\phi) \right], 
        \label{Lemma_Eq_state_inf_t_l_5} 
    \end{align} 
    and the equality in each Equation (\ref{Lemma_Eq_state_inf_t_l_5}) holds for $\phi(\mathbf{x})= dQ/dP(\mathbf{x})$. 

    This completes the proof. 
\end{proof}

The following theorem presents the convergence rate of the expected value of the distance between two neighboring samples. Similar theorems have been discussed in studies on the order statistics of multidimensional continuous random variables (e.g.,   \citet{biau2015lectures}, p. 17, Theorem 2.1). 
\begin{theorem}[Theorem \ref{main_thorem_evaluating_nearestneibordistsq_upper} restated]\label{Apdx_theorem_evaluating_nearestneibordistsq_upper} 
    Assume that $\Omega$ is a compact set, as stated in Assumption \ref{Apdx_assumption_for_Support}. 
    Let $\mathbf{X}_{\mu[N]}^{(1)}(\mathbf{x})$ denote the nearest neighbor of $\mathbf{x}$ in $\hat{\mathbf{X}}_{\mu[N]}$. Specifically, let $\mathbf{X}_{\mu[N]}^{(1)}(\mathbf{x})$ be $\mathbf{X}_{\mu}^i$ in $\hat{\mathbf{X}}_{\mu[N]}$ such that 
    \begin{equation} 
        \|\mathbf{X}_{\mu}^i -  \mathbf{x}\|_{\infty} <  \|  \mathbf{X}_{\mu}^j - \mathbf{x} \|_{\infty} \ \ (\forall \ j < i), \quad 
        \text{and} \quad 
        \|\mathbf{X}_{\mu}^i -  \mathbf{x}\|_{\infty} \leq  \|  \mathbf{X}_{\mu}^j - \mathbf{x} \|_{\infty} \ \ (\forall \ j > i). 
    \end{equation} 
    Additionally, let $\mathrm{diam}(\Omega)$ denote the diameter of $\Omega$. i.e., 
    $\mathrm{diam}(\mathcal{B}) = \inf_{r \in \mathbb{R}} \{ \mathcal{B} \subseteq  \Delta(\mathbf{a}, r) \ | \ \exists \mathbf{a} \in \mathcal{B} \}$, 
    where $\Delta(\mathbf{a}, r)$ denotes the $d$-dimensional interval centered at $\mathbf{a}$ with each side of length $r$: 
    $\Delta(\mathbf{a}, r) = \{\mathbf{x} \in \mathbb{R}^d | \  \| \mathbf{x} - \mathbf{a} \|_{\infty} < r/2 \}$. 

    Then, for $1 \le \kappa \le d$, 
    \begin{equation} 
        E_{\mu} \Big\|\mathbf{X}_{\mu[N]}^{(1)}(\mathbf{x}) - \mathbf{x}\Big\|^\kappa_{\infty} \leq \mathrm{diam}(\Omega)^\kappa \cdot \left(\frac{1}{N+1} \right)^{\kappa /d}, \quad \text{for all } N \ge 1.  \label{Eq_lemmaevaluatingnearestneibordistsq} 
    \end{equation} 
\end{theorem}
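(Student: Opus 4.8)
The plan is to establish the bound first for the exponent $\kappa = d$ by a volume-packing argument, and then obtain all $1 \le \kappa \le d$ from it by Jensen's inequality. Write $\ell := \mathrm{diag}(\Omega)$ and fix $\mathbf{a}_0 \in \Omega$ with $\Omega \subseteq \Delta(\mathbf{a}_0, \ell)$ (such a point exists since $\Omega$ is compact); then any two points of $\Omega$ differ by less than $\ell$ in each coordinate, so the nearest-neighbour distance $D_N := \|\mathbf{X}_{\mu[N]}^{(1)}(\mathbf{x}) - \mathbf{x}\|_\infty$ is always at most $\ell$. It is convenient to view the query point $\mathbf{x}$, drawn from $\mu$ independently of $\hat{\mathbf{X}}_{\mu[N]}$, together with the sample as $N+1$ i.i.d. points $Z_1,\dots,Z_{N+1} \sim \mu$ (with $\mathbf{x}=Z_{N+1}$); by exchangeability $E_\mu\, D_N^\kappa = E[R_i^\kappa]$ for each $i$, where $R_i := \min_{j \neq i}\|Z_j - Z_i\|_\infty$.

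For $\kappa = d$: the open $\ell_\infty$-balls $B_\infty(Z_i, R_i/2)$, $i = 1,\dots,N+1$, are pairwise disjoint, since if some $\mathbf{y}$ lay in two of them then $\|Z_i - Z_j\|_\infty \le \|Z_i-\mathbf{y}\|_\infty + \|\mathbf{y}-Z_j\|_\infty < \tfrac12(R_i+R_j) \le \max(R_i,R_j)$, contradicting $\|Z_i-Z_j\|_\infty \ge \max(R_i,R_j)$. Each such ball is a cube of side $R_i$, hence of Lebesgue volume $R_i^d$; and since its centre lies in $\Delta(\mathbf{a}_0,\ell)$ and its half-side is $R_i/2 \le \ell/2$, it is contained in $\Delta(\mathbf{a}_0, 2\ell)$. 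Therefore $\sum_{i=1}^{N+1} R_i^d \le \lambda\big(\Delta(\mathbf{a}_0,2\ell)\big) = (2\ell)^d$ holds pointwise, and taking expectations together with exchangeability gives $(N+1)\,E[R_1^d] \le (2\ell)^d$, i.e. $E_\mu\, D_N^d \le (2\,\mathrm{diag}(\Omega))^d/(N+1)$. For general $1 \le \kappa \le d$ the map $u \mapsto u^{\kappa/d}$ is concave, so by Jensen $E_\mu\, D_N^\kappa = E\big[(D_N^d)^{\kappa/d}\big] \le (E_\mu\, D_N^d)^{\kappa/d} \le \big((2\,\mathrm{diag}(\Omega))^d/(N+1)\big)^{\kappa/d}$, which already has the asserted product form $\mathrm{diag}(\Omega)^\kappa \cdot (N+1)^{-\kappa/d}$, up to the dimensional factor $2^\kappa$.

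The main obstacle is getting the constant down to exactly $\mathrm{diag}(\Omega)^\kappa$. The loss of $2^d$ above comes solely from the fact that one can only guarantee the \emph{half}-radius balls to be disjoint; it is harmless when the $R_i$ are small (the typical case) but is genuinely lost in near-boundary / widely-separated configurations, and there it has to be recovered from the averaging rather than deterministically. To that end I would instead use the layer-cake identity $E[R_1^d] = d\cdot\int_0^{\ell} t^{d-1}\,\Pr(R_1 > t)\,dt$ together with $\Pr(R_1 > t \mid Z_1) = (1 - \mu(B_\infty(Z_1,t)))^N$ (independence of the $N$ competitors), then change the variable of integration from the radius $t$ to the enclosed $\mu$-mass $s = \mu(B_\infty(Z_1,t))$ — permissible because $\mu \ll \lambda$ makes $t \mapsto \mu(B_\infty(Z_1,t))$ continuous — integrate by parts, and interchange expectations, reducing the problem to controlling $s \mapsto E_{Z_1}\!\big[(\text{radius enclosing }\mu\text{-mass }s\text{ about }Z_1)^d\big]$. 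One then exploits that the weight $N(1-s)^{N-1}$ concentrates near $s=0$, where this $\mu$-averaged enclosing radius is small, and evaluates the resulting Beta integral, the clean constant $1$ and the exponent $(N+1)^{-\kappa/d}$ emerging from $\Gamma(\kappa/d+1)\le 1$ together with the log-convexity of $\Gamma$. Bounding the $\mu$-averaged enclosing radius — in effect a weak-type covering estimate for $\ell_\infty$-balls against $\mu$ — is the technical heart, and this is where compactness of $\Omega$, i.e. finiteness of $\mathrm{diag}(\Omega)$, is used essentially.
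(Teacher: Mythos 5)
Your packing-plus-Jensen argument is essentially the one the paper uses: symmetrize to $N+1$ exchangeable points, pack disjoint $\ell_\infty$-cubes of half the nearest-neighbor radius, bound the total volume, and pass from exponent $d$ to exponent $\kappa$ via the power-mean inequality. You honestly flag the obstruction: the cube of half-side $R_i/2$ around a point near $\partial\Omega$ can extend beyond $\Omega$, so one can only guarantee containment in $\Delta\big(\mathbf{a}_0, 2\,\mathrm{diag}(\Omega)\big)$, which produces the bound $\big(2\,\mathrm{diag}(\Omega)\big)^\kappa (N+1)^{-\kappa/d}$ rather than the stated $\mathrm{diag}(\Omega)^\kappa (N+1)^{-\kappa/d}$. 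Your suggested layer-cake/Beta-integral route to remove the factor of $2$ is only an outline, not a proof, so as written the proposal establishes a strictly weaker inequality than the one claimed.

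That said, your concern is well-founded and in fact exposes a gap in the paper's own argument. The paper sets $\Delta_i = \Omega \cap \Delta(\mathbf{X}_\mu^i, R_i)$ and then asserts $\lambda(\Delta_i) = R_i^d$ before summing; this equality holds only if the full cube $\Delta(\mathbf{X}_\mu^i, R_i)$ is contained in $\Omega$, which is not guaranteed. Indeed the pointwise inequality $\sum_{i=1}^{N+1} R_i^d \le \mathrm{diag}(\Omega)^d$ that the paper derives is false in general: with $\Omega = [0,1]$, $d=1$, $N=1$, sending $Z_1 \to 0$ and $Z_2 \to 1$ gives $R_1 + R_2 \to 2 > \mathrm{diag}(\Omega)$. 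So the paper, too, is silently absorbing a boundary factor at exactly the point you identify. The practical consequence is small — the multiplicative constant $2^\kappa$ does not affect the $N^{-\kappa/d}$ rate or the qualitative conclusions of Theorem \ref{main_theorem_sample_requirement}, and your rigorous bound $(2\,\mathrm{diag}(\Omega))^\kappa/(N+1)^{\kappa/d}$ would serve all downstream purposes — but neither your proof as written nor the paper's actually establishes the clean constant $\mathrm{diag}(\Omega)^\kappa$ stated in the theorem.
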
 
\begin{proof}[proof of Theorem \ref{Apdx_theorem_evaluating_nearestneibordistsq_upper}] 
    Let we rewrite $\mathbf{x}$ in Equation (\ref{Eq_lemmaevaluatingnearestneibordistsq}) as $\mathbf{X}_{\mu}^{N+1}$. 
    Subsequently, let $\hat{\mathbf{X}}_{\mu[N+1]} = \hat{\mathbf{X}}_{\mu[N]} \cup \{ \mathbf{X}_{\mu}^{N+1} \}$. 
    Let $\Delta_i = \Omega \cap \Delta(\mathbf{X}_{\mu}^i, \|\mathbf{X}_{\mu[N]}^{(1)}(\mathbf{X}_{\mu}^i) - \mathbf{X}_{\mu}^i \|_{\infty})$, where $\Delta(\mathbf{a}, r) = \{\mathbf{x} \in \mathbb{R}^d \ | \  \| \mathbf{x} - \mathbf{a}  \|_{\infty} < r/2 \}$. 
    Note that, $\Delta_i \cap \Delta_j = \phi$ if $i \neq j$. 
    Thus, $\sqcup_{i=1}^{N+1} \Delta_i \subseteq \Omega$. 

    Now, let $\lambda$ denote the Lebesgue measure on $\mathbb{R}^d$. 
    Then, we have 
    \begin{equation} 
        \sum_{i=1}^{N+1} \lambda \left(\Delta_i \right) = \lambda \left( \sqcup_{i=1}^{N+1} \Delta_i \right) \leq  \lambda \left(\Omega \right)   \leq \mathrm{diam}(\Omega)^d, 
        \label{Eq_proof_Apdx_theorem_evaluating_nearestneibordistsq_upper_1} 
    \end{equation} 

    Subsequently, since $\lambda \left(\Delta_i \right) = \Big\|\mathbf{X}_{\mu[N]}^{(1)}(\mathbf{X}_{\mu}^i) - \mathbf{X}_{\mu}^i \Big\|_{\infty}^d$, we have 
    \begin{equation} 
        \sum_{i=1}^{N+1} \lambda \left(\Delta_i \right) = \sum_{i=1}^{N+1} \Big\|\mathbf{X}_{\mu[N]}^{(1)}(\mathbf{X}_{\mu}^i) - \mathbf{X}_{\mu}^i \Big\|_{\infty}^d. \label{Eq_proof_Apdx_theorem_evaluating_nearestneibordistsq_upper_2} 
    \end{equation} 
    Thus, from Equations (\ref{Eq_proof_Apdx_theorem_evaluating_nearestneibordistsq_upper_1}) and (\ref{Eq_proof_Apdx_theorem_evaluating_nearestneibordistsq_upper_2}), we have 
    \begin{equation} 
        \sum_{i=1}^{N+1} \Big\|\mathbf{X}_{\mu[N]}^{(1)}(\mathbf{X}_{\mu}^i) - \mathbf{X}_{\mu}^i  \Big\|^d_{\infty} \leq  \mathrm{diam}(\Omega)^d. \label{Eq_proof_Apdx_theorem_evaluating_nearestneibordistsq_upper_3} 
    \end{equation} 
    Note that it follows from Jensen's inequality that 
    \begin{equation} 
        \frac{1}{N+1}  \sum_{i=1}^{N+1} \Big\|\mathbf{X}_{\mu[N]}^{(1)}(\mathbf{X}_{\mu}^i) - \mathbf{X}_{\mu}^i \Big\|^{\kappa}_{\infty} \leq  \left\{ \frac{1}{N+1}   \sum_{i=1}^{N+1}  \Big\|\mathbf{X}_{\mu[N]}^{(1)}(\mathbf{X}_{\mu}^i) - \mathbf{X}_{\mu}^i \Big\|_{\infty}^{d}   \right\}^{\kappa/d}. \label{Eq_proof_Apdx_theorem_evaluating_nearestneibordistsq_upper_4} 
    \end{equation} 
    From Equations (\ref{Eq_proof_Apdx_theorem_evaluating_nearestneibordistsq_upper_3}) and (\ref{Eq_proof_Apdx_theorem_evaluating_nearestneibordistsq_upper_4}), we have 
    \begin{align} 
        \frac{1}{N+1} \sum_{i=1}^{N+1} \Big\|\mathbf{X}_{\mu[N]}^{(1)}(\mathbf{X}_{\mu}^i) - \mathbf{X}_{\mu}^i \Big\|^{\kappa}_{\infty}  &\leq  \left\{\frac{1}{N+1}   \sum_{i=1}^{N+1}  \Big\|\mathbf{X}_{\mu[N]}^{(1)}(\mathbf{X}_{\mu}^i) - \mathbf{X}_{\mu}^i \Big\|_{\infty}^d  \right\}^{\kappa/d} \allowdisplaybreaks\nonumber\\ 
        &\leq  \left\{ \frac{1}{N+1} \cdot \mathrm{diam}(\Omega)^d \right\}^{\kappa/d} \allowdisplaybreaks\nonumber\\ 
        &=  \mathrm{diam}(\Omega)^{\kappa} \cdot \left(  \frac{1}{N+1} \right)^{\kappa/d}.  \nonumber \\ 
    \end{align} 
    Thus, 
    \begin{equation} 
        \frac{1}{N+1} \sum_{i=1}^{N+1} E_{\mathbf{X}_{\mu}^i} \Big\|\mathbf{X}_{\mu[N]}^{(1)}(\mathbf{x}) - \mathbf{x}  \Big\|^{\kappa}_{\infty} \leq   \mathrm{diam}(\Omega)^{\kappa} \cdot \left(  \frac{1}{N+1} \right)^{\kappa/d}, 
        \label{Eq_proof_Apdx_theorem_evaluating_nearestneibordistsq_upper_5} 
    \end{equation} 
    where $E_{\mathbf{X}_{\mu}^i} \Big\|\mathbf{X}_{\mu[N]}^{(1)}(\mathbf{x}) - \mathbf{x}  \Big\|^{\kappa}_{\infty}$ denotes the expectation of $\Big\|\mathbf{X}_{\mu[N]}^{(1)}(\mathbf{X}_{\mu}^i) - \mathbf{X}_{\mu}^i \Big\|^{\kappa}_{\infty}$ with respect to $\mathbf{X}_{\mu}^i$. 

    Note that, 
    \begin{equation} 
        E_{\mu} \Big\|\mathbf{X}_{\mu[N]}^{(1)}(\mathbf{x}) - \mathbf{x}\Big\|^{\kappa}_{\infty} =  E_{\mathbf{X}_{\mu}^i} \Big\|\mathbf{X}_{\mu[N]}^{(1)}(\mathbf{x}) - \mathbf{x} \Big\|^{\kappa}_{\infty}. 
        \label{Eq_proof_Apdx_theorem_evaluating_nearestneibordistsq_upper_66} 
    \end{equation} 
    Therefore, 
    \begin{equation} 
        E_{\mu} \Big\|\mathbf{X}_{\mu[N]}^{(1)}(\mathbf{x}) - \mathbf{x} \Big\|^{\kappa}_{\infty} =  \frac{1}{N+1} \sum_{i=1}^{N+1} E_{\mathbf{X}_{\mu}^i} \Big\|\mathbf{X}_{\mu[N]}^{(1)}(\mathbf{x}) - \mathbf{x} \Big\|^{\kappa}_{\infty}. 
        \label{Eq_proof_Apdx_theorem_evaluating_nearestneibordistsq_upper_6} 
    \end{equation} 

    Finally, from Equations (\ref{Eq_proof_Apdx_theorem_evaluating_nearestneibordistsq_upper_5}) and (\ref{Eq_proof_Apdx_theorem_evaluating_nearestneibordistsq_upper_6}), we have 
    \begin{equation} 
        \ E_{\mu} \Big\|\mathbf{X}_{\mu[N]}^{(1)}(\mathbf{x}) - \mathbf{x} \Big\|_{\infty}^{\kappa} =  \frac{1}{N+1} \sum_{i=1}^{N+1} E_{\mathbf{X}_{\mu}^i} \Big\|\mathbf{X}_{\mu[N]}^{(1)}(\mathbf{x}) - \mathbf{x} \Big\|_{\infty}^{\kappa}  \leq  \mathrm{diam}(\Omega)^{\kappa} \cdot \left(  \frac{1}{N+1} \right)^{\kappa/d}. 
        \label{Eq_proof_Apdx_theorem_evaluating_nearestneibordistsq_upper_7} 
    \end{equation} 

    This completes the proof. 
\end{proof}

\begin{corollary}\label{Apdx_corollary_evaluating_nearestneibordistsq_upper_a} 
    Assume the same assumption as in Theorem \ref{Apdx_theorem_evaluating_nearestneibordistsq_upper}. 
    Then, for $1 \le p \le d$, 
    \begin{align} 
      \varlimsup_{N \rightarrow \infty} N^{1/d} \cdot \left\{ 
            E_{\mu} \left[ \, 
            \Big\| \mathbf{X}_{\mu[N]}^{(1)}(\mathbf{x}) - \mathbf{x}\Big\|^p_{\infty} 
            \right]\,   \right\}^{1/p} 
        \leq 
        \mathrm{diam}(\Omega). 
    \end{align} 
\end{corollary}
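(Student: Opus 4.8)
The plan is to derive the claim directly from Theorem \ref{Apdx_theorem_evaluating_nearestneibordistsq_upper} by a one-line substitution followed by elementary manipulations. Since $1 \le p \le d$, the hypothesis $1 \le \kappa \le d$ of that theorem is met with the choice $\kappa = p$, so I would invoke it to obtain, for every $N \ge 1$,
\begin{equation*}
    E_{\mu} \Big\|\mathbf{X}_{\mu[N]}^{(1)}(\mathbf{x}) - \mathbf{x}\Big\|^p_{\infty} \;\le\; \mathrm{diag}(\Omega)^p \cdot \left(\frac{1}{N+1}\right)^{p/d}.
\end{equation*}

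Next I would raise both sides to the power $1/p$ (legitimate since both sides are nonnegative and $x \mapsto x^{1/p}$ is monotone increasing), giving
\begin{equation*}
    \left\{ E_{\mu} \Big\|\mathbf{X}_{\mu[N]}^{(1)}(\mathbf{x}) - \mathbf{x}\Big\|^p_{\infty} \right\}^{1/p} \;\le\; \mathrm{diag}(\Omega) \cdot \left(\frac{1}{N+1}\right)^{1/d},
\end{equation*}
and then multiply through by $N^{1/d}$ to obtain $N^{1/d}\{E_\mu\|\cdot\|_\infty^p\}^{1/p} \le \mathrm{diag}(\Omega)\cdot (N/(N+1))^{1/d}$. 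Finally, taking the limit superior as $N \to \infty$ and using $\lim_{N\to\infty}(N/(N+1))^{1/d} = 1$ yields the asserted bound $\varlimsup_{N\to\infty} N^{1/d}\{E_\mu\|\cdot\|_\infty^p\}^{1/p} \le \mathrm{diag}(\Omega)$.

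I do not anticipate any genuine obstacle here: the statement is a routine corollary, and the only point that warrants a moment's care is checking that the admissible range $1 \le p \le d$ of the corollary is exactly what is needed to apply Theorem \ref{Apdx_theorem_evaluating_nearestneibordistsq_upper} with exponent $\kappa = p$, together with the observation that $\mathrm{diag}(\Omega) < \infty$ under Assumption \ref{Apdx_assumption_for_Support}, so that the right-hand side is finite and the limit manipulations are valid.
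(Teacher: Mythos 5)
Your proposal is correct and follows exactly the same route as the paper's own proof: apply Theorem \ref{Apdx_theorem_evaluating_nearestneibordistsq_upper} with $\kappa = p$, take $p$-th roots, multiply by $N^{1/d}$, and pass to the limit superior using $(N/(N+1))^{1/d} \to 1$. There is no substantive difference.
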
 
\begin{proof}[proof of Corollary \ref{Apdx_corollary_evaluating_nearestneibordistsq_upper_a}] 
    First, from Theorem \ref{Apdx_theorem_evaluating_nearestneibordistsq_upper} when $\kappa=p$, 
    \begin{equation} 
        E_{\mu} \Big\|\mathbf{X}_{\mu[N]}^{(1)}(\mathbf{x}) - \mathbf{x}\Big\|^{p}_{\infty} \leq \mathrm{diam}(\Omega)^{p} \cdot \left(\frac{1}{N+1} \right)^{p /d}, \quad \text{for all } N \ge 1. 
    \end{equation} 
    Thus, for all $N \ge 1$, 
    \begin{align} 
        \left\{ 
        E_{\mu} 
        \Big\|\mathbf{X}_{\mu[N]}^{(1)}(\mathbf{x}) 
        - \mathbf{x}\Big\|^{p}_{\infty} \right\}^{1/p} 
        &\leq 
        \left\{ \mathrm{diam}(\Omega)^{p} \cdot \left(\frac{1}{N+1} \right)^{ p /d} \right\}^{1/p } \allowdisplaybreaks\nonumber\\ 
        &= \mathrm{diam}(\Omega) \cdot \left(\frac{1}{N+1} \right)^{1 /d} 
    \end{align} 

    Taking $\varlimsup_{N \rightarrow \infty}$ on both sides of the above inequality, we have 
    \begin{align} 
        \lefteqn{ 
            \varlimsup_{N \rightarrow \infty} N^{1/d} \cdot \left\{ 
            E_{\mu} \left[ 
            \Big\|  \mathbf{X}_{\mu[N]}^{(1)}(\mathbf{x}) - \mathbf{x}\Big\|^p_{\infty} 
            \right] \right\}^{1/p} }  \nonumber \\ 
        &\leq 
        \varlimsup_{N \rightarrow \infty}\left\{ N^{1/d} 
        \cdot \mathrm{diam}(\Omega) \cdot \left(\frac{1}{N+1} 
        \right)^{1 /d}\right\}  \allowdisplaybreaks\nonumber\\ 
        &=  \mathrm{diam}(\Omega). 
    \end{align} 

    This completes the proof. 
\end{proof}

\begin{corollary}\label{Apdx_corollary_evaluating_nearestneibordistsq_upper} 
    Assume the same assumption as in Theorem \ref{Apdx_theorem_evaluating_nearestneibordistsq_upper}. 
    Then, for $1 \le p \le d/2$, 
    \begin{align} 
        \lefteqn{ 
            \varlimsup_{N \rightarrow \infty} N^{1/d} \cdot \left\{ 
            E_{P} \left[ 
            \left\{\frac{dQ}{dP}(\mathbf{x})\right\}^p 
            \cdot 
            \Big\|  \mathbf{X}_{P[N]}^{(1)}(\mathbf{x}) - \mathbf{x}\Big\|^p_{\infty} 
            \right] \right\}^{1/p}} \allowdisplaybreaks\nonumber\\ 
        &\leq 
        \mathrm{diam}(\Omega) \cdot 
        \left( E_{P} \left[ 
        \left\{\frac{dQ}{dP}(\mathbf{x})\right\}^{2\cdot p} \,\,\,   \right] \right)^{1/(2\cdot p)}. 
    \end{align} 
\end{corollary}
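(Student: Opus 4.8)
The plan is to decouple the density-ratio factor from the nearest-neighbor distance factor using the Cauchy--Schwarz inequality, and then invoke Corollary \ref{Apdx_corollary_evaluating_nearestneibordistsq_upper_a} with $\mu = P$ applied at the exponent $2p$.

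First I would dispose of the degenerate case: if $E_P\big[(dQ/dP)^{2p}\big] = \infty$ the right-hand side of the claimed bound is $+\infty$ and there is nothing to prove, so assume $E_P\big[(dQ/dP)^{2p}\big] < \infty$. Next, for each fixed $N$ I would apply Cauchy--Schwarz (the case $r=s=2$ of H\"older's inequality) under $P$ to the product of $\{dQ/dP(\mathbf{x})\}^p$ and $\big\|\mathbf{X}_{P[N]}^{(1)}(\mathbf{x}) - \mathbf{x}\big\|^p_\infty$:
\begin{align*}
    E_P\!\left[\left\{\frac{dQ}{dP}(\mathbf{x})\right\}^p \Big\|\mathbf{X}_{P[N]}^{(1)}(\mathbf{x}) - \mathbf{x}\Big\|^p_\infty\right]
    \le \left\{E_P\!\left[\left\{\frac{dQ}{dP}(\mathbf{x})\right\}^{2p}\right]\right\}^{1/2}\!\left\{E_P\!\left[\Big\|\mathbf{X}_{P[N]}^{(1)}(\mathbf{x}) - \mathbf{x}\Big\|^{2p}_\infty\right]\right\}^{1/2}.
\end{align*}
Raising both sides to the power $1/p$ turns the two factors on the right into $\big\{E_P[(dQ/dP)^{2p}]\big\}^{1/(2p)}$ and $\big\{E_P[\|\mathbf{X}_{P[N]}^{(1)}(\mathbf{x}) - \mathbf{x}\|^{2p}_\infty]\big\}^{1/(2p)}$ respectively.

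Finally I would multiply through by $N^{1/d}$ and take $\varlimsup_{N\to\infty}$. The first factor is independent of $N$ and pulls out of the limit, while Corollary \ref{Apdx_corollary_evaluating_nearestneibordistsq_upper_a} --- legitimately applicable with $\mu = P$ at exponent $2p$, since the hypothesis $1 \le p \le d/2$ yields $1 \le 2p \le d$ --- bounds the remaining $\varlimsup$ of $N^{1/d}\big\{E_P[\|\mathbf{X}_{P[N]}^{(1)}(\mathbf{x}) - \mathbf{x}\|^{2p}_\infty]\big\}^{1/(2p)}$ by $\mathrm{diag}(\Omega)$. Combining these two facts gives exactly the asserted inequality. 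There is no genuine obstacle in this argument; the only points needing care are verifying that $2p$ lies in the admissible exponent range of the cited corollary and observing that $Q \ll P$ (from the problem setup) makes $\mu = P$ an admissible base measure, so that $\mathbf{X}_{\mu[N]}^{(1)}$ becomes $\mathbf{X}_{P[N]}^{(1)}$.
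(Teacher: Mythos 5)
Your proposal is correct and follows essentially the same route as the paper: Cauchy--Schwarz (Hölder with exponent $2$) to split the density-ratio factor from the nearest-neighbor distance, then the bound on the expected nearest-neighbor distance at exponent $2p$ (which the paper takes directly from Theorem \ref{Apdx_theorem_evaluating_nearestneibordistsq_upper} with $\kappa = 2p$ and $\mu=P$, while you route it through Corollary \ref{Apdx_corollary_evaluating_nearestneibordistsq_upper_a}; these are interchangeable). Your preliminary dispatch of the degenerate case $E_P[(dQ/dP)^{2p}]=\infty$ and the explicit check that $2p \in [1,d]$ are both sound bookkeeping that the paper leaves implicit.
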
 
\begin{proof}[proof of Corollary \ref{Apdx_corollary_evaluating_nearestneibordistsq_upper}] 
    First from Theorem \ref{Apdx_theorem_evaluating_nearestneibordistsq_upper} when $\kappa=2\cdot p$ and $\mu = P$, 
     \begin{equation} 
         E_{P} \Big\|\mathbf{X}_{P[N]}^{(1)}(\mathbf{x}) - \mathbf{x}\Big\|^{2\cdot p}_{\infty} \leq \mathrm{diam}(\Omega)^{2\cdot p} \cdot \left(\frac{1}{N+1} \right)^{2\cdot p /d}, \quad \text{for all } N \ge 1. 
     \end{equation} 
    Thus, for all $N \ge 1$, 
     \begin{align} 
        \left\{ 
         E_{P} 
         \Big\|\mathbf{X}_{P[N]}^{(1)}(\mathbf{x}) 
          - \mathbf{x}\Big\|^{2\cdot p}_{\infty} \right\}^{1/(2\cdot p)} 
         &\leq 
         \left\{ \mathrm{diam}(\Omega)^{2\cdot p} \cdot \left(\frac{1}{N+1} \right)^{2\cdot p /d} \right\}^{1/(2\cdot p) } \allowdisplaybreaks\nonumber\\ 
         &= \mathrm{diam}(\Omega) \cdot \left(\frac{1}{N+1} \right)^{1 /d} 
     \end{align} 

    Now, using H\"older's inequality, we have 
    \begin{align} 
        \lefteqn{ 
            E_{P} \left[ 
            \left\{\frac{dQ}{dP}(\mathbf{x})\right\}^p 
            \cdot 
            \Big\|  \mathbf{X}_{P[N]}^{(1)}(\mathbf{x}) - \mathbf{x}\Big\|^p_{\infty} 
            \right]} \allowdisplaybreaks\nonumber\\ 
        &\leq 
        \left( E_{P} \left[ 
        \left\{\frac{dQ}{dP}(\mathbf{x})\right\}^{2 \cdot p} \right] \right)^{1/(2 \cdot p)} 
        \cdot 
        \left( E_{P} \left[ 
        \Big\|  \mathbf{X}_{P[N]}^{(1)}(\mathbf{x}) - \mathbf{x}\Big\|^{2 \cdot p} _{\infty} 
        \right] \right)^{1/(2 \cdot p)} \allowdisplaybreaks\nonumber\\ 
        &\leq 
        \left( E_{P} \left[ 
        \left\{\frac{dQ}{dP}(\mathbf{x})\right\}^{2 \cdot p} \right] \right)^{1/(2 \cdot p)} 
        \cdot \mathrm{diam}(\Omega) \cdot \left(\frac{1}{N+1} \right)^{1 /d} 
    \end{align} 

   Taking $\varlimsup_{N \rightarrow \infty}$ on both sides of the above inequality, we have 
   \begin{align} 
      \lefteqn{ 
      \varlimsup_{N \rightarrow \infty} N^{1/d} \cdot \left\{ 
      E_{P} \left[ 
      \left\{\frac{dQ}{dP}(\mathbf{x})\right\}^p 
      \cdot 
      \Big\|  \mathbf{X}_{P[N]}^{(1)}(\mathbf{x}) - \mathbf{x}\Big\|^p_{\infty} 
      \right] \right\}^{1/p} }  \nonumber \\ 
       &\leq 
       \varlimsup_{N \rightarrow \infty}\left\{ N^{1/d} \cdot 
       \left( E_{P} \left[ 
       \left\{\frac{dQ}{dP}(\mathbf{x})\right\}^{2 \cdot p} \right] \right)^{1/(2 \cdot p)} 
       \cdot \mathrm{diam}(\Omega) \cdot \left(\frac{1}{N+1} \right)^{1 /d} \right\} \allowdisplaybreaks\nonumber\\ 
       &=  \mathrm{diam}(\Omega) \cdot \left( E_{P} \left[ 
       \left\{\frac{dQ}{dP}(\mathbf{x})\right\}^{2 \cdot p} \right] \right)^{1/(2 \cdot p)} 
   \end{align} 

    This completes the proof. 
\end{proof}

\begin{lemma}\label{Apdx_lemma_density_formula} 
    Let $\mu$ be a probability measure on $\mathbb{R}^d$ with $d \ge 1$. Assume that $\mu \ll \lambda$, where $\lambda$ denotes the Lebesgue measure on $\mathbb{R}^d$. Let $\|\cdot\|_{\infty}$ denote the maximum norm in $\mathbb{R}^d$: $\|\mathbf{y} - \mathbf{x}\|_{\infty} =  \max_{1 \le i \le d} |y^i - x^i|$, where $\mathbf{y} = (y^1, y^2, \ldots, y^N)$ and $\mathbf{x} = (x^1, x^2, \ldots, x^N)$. Additionally, let $\Delta(\mathbf{x}, r)$ denote the $d$-dimensional interval centered at $\mathbf{x}$  with each side of length $r$: 
    $\Delta(\mathbf{x}, r) =  \{\mathbf{x}' \in \mathbb{R}^d \ | \ \| \mathbf{x}' - \mathbf{x} \|_{\infty} \le r/2 \}$. 

    Then, for any interior point $\mathbf{x}$ in $\Omega$, 
    \begin{equation} 
        \mu\big(\Delta(\mathbf{x}, r) \big) =  \frac{d\mu}{d\lambda}(\mathbf{x}) \cdot r^d + o\left(r^d\right), \quad  \text{as} \ \  r \rightarrow 0, 
    \end{equation} 
    where $f(r) = o(g(r))$, as $r \rightarrow 0$, denotes asymptotic domination such that $\lim_{r \rightarrow 0} f(r)/g(r) = 0$. 
\end{lemma}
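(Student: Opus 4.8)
The plan is to reduce the statement to the Lebesgue differentiation theorem applied to the Radon--Nikod\'{y}m density $g := d\mu/d\lambda \in L^1(\lambda)$. First I would use $\mu \ll \lambda$ to write
\[
    \mu\big(\Delta(\mathbf{x}, r)\big) = \int_{\Delta(\mathbf{x}, r)} g(\mathbf{y})\, d\lambda(\mathbf{y}),
\]
and note that $\lambda\big(\Delta(\mathbf{x}, r)\big) = r^d$, since $\Delta(\mathbf{x}, r)$ is a $d$-dimensional cube of side length $r$. Subtracting $g(\mathbf{x})\, r^d = g(\mathbf{x})\, \lambda\big(\Delta(\mathbf{x}, r)\big)$ from both sides gives
\[
    \mu\big(\Delta(\mathbf{x}, r)\big) - g(\mathbf{x})\, r^d = \int_{\Delta(\mathbf{x}, r)} \big( g(\mathbf{y}) - g(\mathbf{x}) \big)\, d\lambda(\mathbf{y}),
\]
so it suffices to show the right-hand side is $o(r^d)$ as $r \to 0$.

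Next I would bound the right-hand side in absolute value by $\int_{\Delta(\mathbf{x}, r)} \big| g(\mathbf{y}) - g(\mathbf{x}) \big|\, d\lambda(\mathbf{y})$ and invoke the Lebesgue differentiation theorem: the cubes $\Delta(\mathbf{x}, r)$ shrink nicely to $\mathbf{x}$ as $r \to 0$, so for $\lambda$-almost every $\mathbf{x}$,
\[
    \frac{1}{r^d} \int_{\Delta(\mathbf{x}, r)} \big| g(\mathbf{y}) - g(\mathbf{x}) \big|\, d\lambda(\mathbf{y}) \longrightarrow 0, \qquad r \to 0,
\]
which is exactly the claimed $o(r^d)$ estimate after multiplying through by $r^d$. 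In the case where $g$ is additionally continuous at $\mathbf{x}$ — the relevant situation in this paper, since the densities arise from the Lipschitz energy function $T^*$ — the same conclusion holds at every interior point directly: given $\varepsilon > 0$, choose $\delta > 0$ with $|g(\mathbf{y}) - g(\mathbf{x})| < \varepsilon$ whenever $\|\mathbf{y} - \mathbf{x}\|_{\infty} < \delta$, so that for $r < 2\delta$ the integral is at most $\varepsilon\, r^d$.

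The only subtle point is the quantifier ``for any interior point $\mathbf{x}$'': the unrestricted differentiation theorem yields the identity only for $\lambda$-a.e.\ $\mathbf{x}$, so to obtain it at \emph{every} interior point one must either read the lemma as an almost-everywhere statement or appeal to continuity of the density (which is available throughout the paper). I expect this reconciliation — rather than any analytic estimate — to be the main thing to pin down; the remaining steps are a routine application of the differentiation theorem together with the elementary volume identity $\lambda\big(\Delta(\mathbf{x}, r)\big) = r^d$.
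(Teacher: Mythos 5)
Your proof takes essentially the same route as the paper's. The paper's argument consists of asserting that for an interior point $\mathbf{x}$, $\lim_{r\to 0} \mu(\Delta(\mathbf{x},r))/\lambda(\Delta(\mathbf{x},r)) = (d\mu/d\lambda)(\mathbf{x})$ (the paper writes ``$r \to \infty$'', a typo for $r\to 0$), and then substituting $\lambda(\Delta(\mathbf{x},r)) = r^d$; you have simply unpacked that asserted limit as the $L^1$ Lebesgue differentiation theorem applied to $g = d\mu/d\lambda$ over shrinking cubes, which is the correct underlying reason. The one thing you have added --- and it is a genuine and worthwhile observation --- is that the differentiation theorem yields the conclusion only for $\lambda$-almost every $\mathbf{x}$, so the lemma's quantifier ``for any interior point'' requires an extra hypothesis such as continuity of the density at $\mathbf{x}$; you correctly note that this continuity is available in the paper's applications because the energy function $T^*$ is Lipschitz. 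The paper's own proof silently treats the a.e.\ differentiation statement as if it held at every interior point, so on this one point your write-up is actually more careful than the original; everything else is the same argument.
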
 
\begin{proof}[proof of Lemma \ref{Apdx_lemma_density_formula}] 
    Note that, if $\mathbf{x}$ is an interior point in $\Omega$,  it holds that 
    \begin{equation} 
        \lim_{r \rightarrow \infty}  \frac{\mu\big(\Delta(\mathbf{x}, r)\big)}{\lambda\big(\Delta(\mathbf{x}, r)\big)} = \frac{d\mu}{d\lambda}(\mathbf{x}). 
        \label{Eq_Apdx_lemma_density_formula_aa} 
    \end{equation} 

    From Equation (\ref{Eq_Apdx_lemma_density_formula_aa}), we have 
    \begin{align} 
        \lim_{r \rightarrow \infty}  \frac{\mu\big(\Delta(\mathbf{x}, r)\big)}{r^d} &=  \lim_{r \rightarrow \infty} \frac{ 
            \mu\big(\Delta(\mathbf{x}, r) \big)}{r^d  } \allowdisplaybreaks\nonumber\\ 
        &= \lim_{r \rightarrow \infty}  \frac{\mu\big(\Delta(\mathbf{x}, r) \big)}{\lambda\big(\Delta(\mathbf{x}, r)\big)}  \label{Eq_Apdx_lemma_density_formula_1} \\ 
        &=  \frac{d\mu}{d\lambda}\big(\mathbf{x}\big).  \label{Eq_Apdx_lemma_density_formula_2} 
    \end{align} 
    Here, we use an equation where $\lambda(\Delta(\mathbf{x}, r)) = r^d$ in Equation (\ref{Eq_Apdx_lemma_density_formula_1}). 

    From Equation (\ref{Eq_Apdx_lemma_density_formula_2}), we observe that 
    \begin{equation} 
        \mu\big(\Delta(\mathbf{x}, r)\big) = \frac{d\mu}{d\lambda}(\mathbf{x}) \cdot r^d + o\left(r^d\right), \quad  \text{as} \ \  r \rightarrow 0. 
    \end{equation} 

    This completes the proof. 
\end{proof}

\begin{corollary}\label{Apdx_corollary_density_formula} 
    Assume the same assumptions as in Lemma \ref{Apdx_lemma_density_formula}. 
    Let $\mathbf{X}$ be a random variable drawn from $\mu$, and let $E_{\mathbf{X}}$ denote the expectation with respect to $\mathbf{X}$. 

    Then, for any interior point $\mathbf{x}_0$ in $\Omega$, 
    \begin{equation} 
        E_{\mathbf{X}}\Big[ \big\| \mathbf{x}_0 - \mathbf{X} \big\|_{\infty}^p 
        \cdot I\big(\Delta(\mathbf{x}_0, r)\big)(\mathbf{X})\Big]  =  \frac{d\mu}{d\lambda}(\mathbf{x}_0) \cdot r^{p+d+1} + o\left(r^{p+d+1}\right), \quad  \text{as} \ \  r \rightarrow 0, 
    \end{equation} 
    where $I\big(A\big)(\cdot)$ is the indicator function for $A$: $I(A)(\mathbf{x}) = 1$ if $\mathbf{x} \in A$, and $0$ otherwise. 
\end{corollary}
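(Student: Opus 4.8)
The plan is to turn the weighted integral over the small cube $\Delta(\mathbf{x}_0,r)$ into an expression involving only the $\mu$-masses of concentric cubes, to which Lemma~\ref{Apdx_lemma_density_formula} applies directly. First I would rewrite the expectation as $E_{\mathbf{X}}\!\Big[\big\|\mathbf{x}_0-\mathbf{X}\big\|_\infty^{p}\cdot I\big(\Delta(\mathbf{x}_0,r)\big)(\mathbf{X})\Big]=\int_{\Delta(\mathbf{x}_0,r)}\big\|\mathbf{x}_0-\mathbf{x}\big\|_\infty^{p}\,d\mu(\mathbf{x})$ and apply the layer-cake identity $s^{p}=\int_0^\infty p\,t^{p-1}I(\{s>t\})\,dt$ with $s=\|\mathbf{x}_0-\mathbf{x}\|_\infty$. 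On $\Delta(\mathbf{x}_0,r)$ one has $s\le r/2$, and $\{\mathbf{x}:\|\mathbf{x}_0-\mathbf{x}\|_\infty>t\}$ is, up to a $\lambda$-null boundary, the complement of $\Delta(\mathbf{x}_0,2t)$; since $\mu\ll\lambda$, Tonelli's theorem then yields
\[
\int_{\Delta(\mathbf{x}_0,r)}\big\|\mathbf{x}_0-\mathbf{x}\big\|_\infty^{p}\,d\mu(\mathbf{x})
=\int_0^{r/2}p\,t^{p-1}\Big(\mu\big(\Delta(\mathbf{x}_0,r)\big)-\mu\big(\Delta(\mathbf{x}_0,2t)\big)\Big)\,dt .
\]

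Next I would substitute the expansion of Lemma~\ref{Apdx_lemma_density_formula}, $\mu(\Delta(\mathbf{x}_0,\rho))=\frac{d\mu}{d\lambda}(\mathbf{x}_0)\,\rho^{d}+o(\rho^{d})$ as $\rho\to0$, at $\rho=r$ and at $\rho=2t$. The principal part $\frac{d\mu}{d\lambda}(\mathbf{x}_0)\int_0^{r/2}p\,t^{p-1}\big(r^{d}-(2t)^{d}\big)\,dt$ is an elementary one-dimensional integral; evaluating it gives a fixed constant (depending only on $p$ and $d$) multiplying $\frac{d\mu}{d\lambda}(\mathbf{x}_0)\,r^{p+d}$, which is the leading-order term. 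The two error contributions are then shown to be negligible: the one coming from $o(r^{d})$ integrates in $t$ to $o(r^{d})\cdot(r/2)^{p}=o(r^{p+d})$, while the one coming from $o((2t)^{d})$ is bounded after replacing the pointwise remainder by its supremum over $\rho\in(0,r]$.

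The main obstacle — and essentially the only step requiring more than routine bookkeeping — is this last uniformization: Lemma~\ref{Apdx_lemma_density_formula} provides a remainder that is pointwise in the radius, whereas integrating it against $p\,t^{p-1}\,dt$ over $t\in(0,r/2)$ requires the remainder to be uniformly small for all $\rho\in(0,r]$. I would handle this by setting $\varepsilon(\rho)=\big|\mu(\Delta(\mathbf{x}_0,\rho))\,\rho^{-d}-\frac{d\mu}{d\lambda}(\mathbf{x}_0)\big|$, noting $\varepsilon(\rho)\to0$ as $\rho\to0$, and using that this forces $\sup_{0<\rho\le r}\varepsilon(\rho)\to0$ as $r\to0$; the offending term is then at most $\sup_{0<\rho\le r}\varepsilon(\rho)\cdot 2^{d}\int_0^{r/2}p\,t^{p+d-1}\,dt=o(r^{p+d})$. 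Combining the principal part with these two $o(r^{p+d})$ estimates finishes the argument; an alternative that sidesteps the layer-cake manipulation is to substitute $\mathbf{x}=\mathbf{x}_0+r\mathbf{y}$ in the main term and bound the error by $(r/2)^{p}\int_{\Delta(\mathbf{x}_0,r)}\big|\frac{d\mu}{d\lambda}-\frac{d\mu}{d\lambda}(\mathbf{x}_0)\big|\,d\lambda$, which is $o(r^{p+d})$ at every Lebesgue point $\mathbf{x}_0$ of $\frac{d\mu}{d\lambda}$ (the sense in which Lemma~\ref{Apdx_lemma_density_formula} is to be read).
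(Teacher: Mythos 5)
Your layer-cake computation is sound, but it establishes a \emph{different} (and, in fact, correct) asymptotic from the one stated in the corollary, and the mismatch exposes an error in the paper. Evaluating your principal part,
\begin{equation*}
\frac{d\mu}{d\lambda}(\mathbf{x}_0)\int_0^{r/2} p\,t^{p-1}\bigl(r^d-(2t)^d\bigr)\,dt
\;=\;\frac{d}{(p+d)\,2^{p}}\cdot\frac{d\mu}{d\lambda}(\mathbf{x}_0)\,r^{p+d},
\end{equation*}
so the leading order is a constant multiple of $r^{p+d}$, not $r^{p+d+1}$; dimensional analysis already forces this, since the weight $\|\cdot\|_\infty^p$ scales as $r^p$ and the $d$-dimensional cube as $r^d$. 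Your treatment of the two remainders, and in particular the uniformization $\sup_{0<\rho\le r}\varepsilon(\rho)\to 0$ before integrating the pointwise error against $p\,t^{p-1}\,dt$, is exactly the care this step requires, and the Lebesgue-point alternative you sketch is also valid.

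The paper's proof of Corollary~\ref{Apdx_corollary_density_formula} switches to a radial variable and then integrates $r^p\cdot\bigl(\frac{d\mu}{d\lambda}(\mathbf{x}_0)\,r^d+o(r^d)\bigr)\,dr$: it multiplies $r^p$ by the cube \emph{mass} $\mu(\Delta(\mathbf{x}_0,\rho))\propto\rho^d$ rather than by its radial density $\propto\rho^{d-1}$, which is what a correct polar or layer-cake substitution produces; this introduces a spurious extra power of $r$ (and even under that flawed integral the constant $\tfrac{1}{p+d+1}$ would appear, yet it is silently dropped). That is where the exponent $p+d+1$ comes from. In the downstream application (proof of Theorem~\ref{Apdx_lower_bound_eq_p}) the corollary is invoked with $r\sim N^{-1/d}$ and the resulting $(N^{-1/d})^{p+d+1}$ is then rewritten as $N^{-(1+p/d)}$; but $(N^{-1/d})^{p+d+1}=N^{-(1+p/d+1/d)}$, whereas $(N^{-1/d})^{p+d}=N^{-(1+p/d)}$. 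So the corollary's off-by-one and the application's off-by-one cancel, and the final rate matches what your corrected exponent gives directly. Your derivation is the right one; it does not reproduce the paper's $r^{p+d+1}$ because that exponent is itself an error.
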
 
\begin{proof}[proof of Corollary \ref{Apdx_corollary_density_formula}] 
    Consider the integration variable from $\mathbf{x}$ to $r$ such that 
    \begin{align} 
         \big\| \mathbf{x}_0 - \mathbf{x} \big\|_{\infty}^p   = r. 
    \end{align} 
    Then, from Lemma \ref{Apdx_lemma_density_formula}, we have,  as $r \rightarrow 0$, 
   \begin{align} 
     I\big(\Delta(\mathbf{x}_0, r)\big)(\mathbf{x}) \cdot \frac{d\mu}{d\lambda} (\mathbf{x}) \  d \mathbf{x} 
     &= \frac{d\mu}{d\lambda}(\mathbf{x}_0) \cdot r^d +  o\left(r^d\right). \label{Eq_proof_Apdx_leApdx_corollary_density_formulamma_density_formula} 
    \end{align} 

    From the definition of expectation with the  density $d\mu/d \lambda$ and Equation (\ref{Eq_proof_Apdx_leApdx_corollary_density_formulamma_density_formula}), we have, 
     as $r \rightarrow 0$, 
    \begin{align} 
        \lefteqn{E_{\mathbf{X}}\Big[ \big\| \mathbf{x}_0 - \mathbf{X} \big\|_{\infty}^p 
            \cdot I\big(\Delta(\mathbf{x}_0, r)\big)(\mathbf{X})\Big] }\allowdisplaybreaks\nonumber\\ 
        &= \int \big\| \mathbf{x}_0 - \mathbf{x} \big\|_{\infty}^p \cdot I\big(\Delta(\mathbf{x}_0, r)\big)(\mathbf{x}) \cdot \frac{d\mu}{d\lambda} (\mathbf{x}) \  d \mathbf{x} \allowdisplaybreaks\nonumber\\ 
        &= \int r^p \cdot \left(\frac{d\mu}{d\lambda}(\mathbf{x}_0) \cdot r^d +  o\left(r^d\right)\right) dr \allowdisplaybreaks\nonumber\\ 
        &= \frac{d\mu}{d\lambda}(\mathbf{x}_0) \cdot r^{p+d+1} + o\left(r^{p+d+1}\right). 
    \end{align} 

    This completes the proof. 
\end{proof}

\begin{theorem}[Theorem \ref{main_thorem_evaluating_nearestneibordistsq_lower} restated]\label{Apdx_lower_bound_eq_p} 
    Let $P$ and $Q$ be probability measures 
    on a compact set $\Omega$ in $\mathbb{R}^d$ with $d \ge 1$. Assume that $P \ll \lambda$ and $Q \ll \lambda$, where $\lambda$ denotes the Lebesgue measure on $\mathbb{R}^d$. 
    Let $p$ be a positive constant such that $p \ge 1$. 
    Assume $E[(dQ/dP)^p] < \infty$. 

    Then, 
    \begin{align} 
        \lefteqn{\varliminf_{N \rightarrow \infty} N^{1/d} \cdot 
            \left\{ 
            E_{\hat{\mathbf{X}}_{P[N]}}\left[  E_{P} \left[ \, 
            \left\{\frac{dQ}{dP}\left( \mathbf{X}_{P[N]}^{(1)}(\mathbf{x}) \right)\right\}^{p} \cdot 
            \Big\| \mathbf{X}_{P[N]}^{(1)}(\mathbf{x}) - \mathbf{x}\Big\|^p_{\infty} 
            \right]\right]\,   \right\}^{1/p}  } \allowdisplaybreaks\nonumber\\ 
        &\geq  e^{-1}\cdot 
        \left\{ 
        E_{P}  \left[  \left\{\frac{dQ}{dP}(\mathbf{x})\right\}^{p} \, \right] \right\}^{1/p},\qquad\qquad\qquad\qquad\qquad\qquad\qquad 
    \end{align} 
    where $E_{\hat{\mathbf{X}}_{P[N]}}[\cdot]$ denotes the expectation on each variable in 
    $\hat{\mathbf{X}}_{P[N]}=\{\mathbf{X}_{P}^1, \mathbf{X}_{P}^2, \ldots, \mathbf{X}_{P}^N\}$. 
\end{theorem}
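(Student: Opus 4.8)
The plan is to peel off the two outer expectations and reduce to a one-point asymptotic for the nearest-neighbour distance. Since the integrand is nonnegative, Tonelli's theorem gives $E_{\hat{\mathbf{X}}_{P[N]}}[E_P[\cdots]] = \int_\Omega G_N(\mathbf{x})\,dP(\mathbf{x})$ with $G_N(\mathbf{x}) = E_{\hat{\mathbf{X}}_{P[N]}}[\{dQ/dP(\mathbf{X}_{P[N]}^{(1)}(\mathbf{x}))\}^p\|\mathbf{X}_{P[N]}^{(1)}(\mathbf{x})-\mathbf{x}\|^p_\infty]$. Writing $g=\{dQ/dP\}^p$ and using that exactly one of the $N$ i.i.d.\ samples realises the nearest neighbour together with the identity $\{\mathbf{z}:\|\mathbf{z}-\mathbf{x}\|_\infty<r\}=\Delta(\mathbf{x},2r)$, I would first record the closed form $G_N(\mathbf{x})=N\int_\Omega g(\mathbf{y})\|\mathbf{y}-\mathbf{x}\|^p_\infty(1-P(\Delta(\mathbf{x},2\|\mathbf{y}-\mathbf{x}\|_\infty)))^{N-1}\,dP(\mathbf{y})$.

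I would then lower-bound $\varliminf_N N^{p/d}G_N(\mathbf{x})$ for $P$-a.e.\ $\mathbf{x}$. Fix an interior point $\mathbf{x}$ that is simultaneously a Lebesgue point of $dP/d\lambda$ and of the measure $g\,dP$ relative to $dP$; by compactness of $\Omega$ and $P\ll\lambda$ this holds for $P$-a.e.\ $\mathbf{x}$, where $a:=dP/d\lambda(\mathbf{x})\in(0,\infty)$. Let $r_N$ be defined by $P(\Delta(\mathbf{x},2r_N))=1/N$; Lemma~\ref{Apdx_lemma_density_formula} forces $2r_N=(aN)^{-1/d}(1+o(1))$, so $r_N\to0$. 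Restricting the integral for $G_N(\mathbf{x})$ to $\{\|\mathbf{y}-\mathbf{x}\|_\infty\le r_N\}$, where the survival factor is at least $(1-1/N)^{N-1}\ge e^{-1}$, and bounding $\|\mathbf{y}-\mathbf{x}\|^p_\infty$ below by its value on the annulus $\{r_N/2\le\|\mathbf{y}-\mathbf{x}\|_\infty\le r_N\}$, I obtain $G_N(\mathbf{x})\ge e^{-1}N(r_N/2)^p(\int_{\Delta(\mathbf{x},2r_N)}g\,dP-\int_{\Delta(\mathbf{x},r_N)}g\,dP)$. Lebesgue differentiation of $g\,dP$ — which is exactly where the weight living at the \emph{nearest neighbour} rather than at $\mathbf{x}$ gets handled — lets me replace the two inner integrals by $g(\mathbf{x})$ times the corresponding $P$-masses, which Lemma~\ref{Apdx_lemma_density_formula} and Corollary~\ref{Apdx_corollary_density_formula} evaluate; the upshot is $\varliminf_N N^{p/d}G_N(\mathbf{x})\ge c_{d,p}\,a^{-p/d}\,g(\mathbf{x})$ for an explicit positive $c_{d,p}$, with the factor $e^{-1}$ in the target being precisely $\lim_N(1-1/N)^{N-1}$.

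Finally I would integrate: Fatou's lemma applied to $N^{p/d}\int_\Omega G_N\,dP$ together with the pointwise bound gives $\varliminf_N N^{p/d}E_{\hat{\mathbf{X}}_{P[N]}}[E_P[\cdots]]\ge\int_\Omega c_{d,p}(dP/d\lambda)^{-p/d}\{dQ/dP\}^p\,dP$, and this lower bound is then seen to dominate $e^{-p}E_P[\{dQ/dP\}^p]$; taking $p$-th roots yields the claim, after which the Jensen step displayed after the theorem statement rewrites the right-hand side in the $e^{\frac{p-1}{p}KL(Q\|P)-1}$ form. The main obstacle is making the two limit interchanges rigorous without any continuity of $dP/d\lambda$ or $dQ/dP$: the Fatou step needs a carefully chosen nonnegative rescaling, and every "replace by the value at $\mathbf{x}$" in the one-point step must be routed through Lebesgue's differentiation theorem — for $dP/d\lambda$ to pin down the survival function $(1-P(\Delta(\mathbf{x},2r)))^N$ and for $g\,dP$ to pin down the nearest-neighbour weight. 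Fixing the competing ball to have $P$-mass exactly $1/N$ is what manufactures the clean constant $e^{-1}$.
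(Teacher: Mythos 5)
Your approach is a genuinely different route from the paper's. You pass to the exact closed form $G_N(\mathbf{x})=N\int_\Omega g(\mathbf{y})\|\mathbf{y}-\mathbf{x}\|_\infty^p\bigl(1-P(\Delta(\mathbf{x},2\|\mathbf{y}-\mathbf{x}\|_\infty))\bigr)^{N-1}\,dP(\mathbf{y})$, calibrate a radius $r_N(\mathbf{x})$ by $P(\Delta(\mathbf{x},2r_N))=1/N$, lower bound on a dyadic annulus, apply Lebesgue differentiation, and integrate by Fatou. The paper instead restricts to the event $Z_N(\mathbf{x})=1$ (exactly one sample within $\ell^\infty$-distance $(1/N)^{1/d}$ of $\mathbf{x}$), factors out the survival probability of the remaining $N-1$ samples, and evaluates the inner $\mathbf{x}$-expectation via Corollary~\ref{Apdx_corollary_density_formula}. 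Your closed form is cleaner, the mass-$1/N$ calibration makes the origin of $e^{-1}$ transparent, and Fatou supplies a limit-interchange step the paper leaves implicit.

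The final sentence of your argument, however, is a genuine gap. The pointwise bound you actually obtain has the form $\varliminf_N N^{p/d}G_N(\mathbf{x}) \ge c_{d,p}\,a(\mathbf{x})^{-p/d}\,g(\mathbf{x})$ with $a = dP/d\lambda$, so after Fatou the right-hand side is $c_{d,p}\int_\Omega (dP/d\lambda)^{-p/d}(dQ/dP)^p\,dP$, and you assert without proof that this dominates $e^{-p}\int_\Omega(dQ/dP)^p\,dP$. It does not. The weight $(dP/d\lambda)^{-p/d}$ is not bounded below: under the stated hypotheses ($\Omega$ compact, $P\ll\lambda$) the Lebesgue density of $P$ can be arbitrarily large precisely where $(dQ/dP)^p$ is large, so the weighted integral can fall below any fixed multiple of $\int(dQ/dP)^p\,dP$. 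Moreover, even when $dP/d\lambda\equiv 1$, the annulus restriction gives $c_{d,p}=e^{-1}4^{-p}(1-2^{-d})$ (or $e^{-1}\tfrac{d}{p+d}2^{-p}$ if you integrate over the whole ball), which is strictly below $e^{-p}$ for, e.g., $p=1$ and every $d\ge1$. So ``this lower bound is then seen to dominate $e^{-p}E_P[\{dQ/dP\}^p]$'' carries all of the remaining weight and is not true as written; to close the proof you would either need a uniform bound on $dP/d\lambda$ or a reorganization in which only ratios of $P$-masses of nested cubes appear, so the local densities cancel. It is worth observing that the paper's own derivation avoids exhibiting this term only because, after Equation~(\ref{Eq_proof_Apdx_lower_bond_eq_p_10}), the Radon--Nikodym factor produced by Lemma~\ref{Apdx_lemma_density_formula} is silently set to $1$; your more explicit route makes visible exactly what that step is suppressing.
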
 
\begin{proof}[proof of Theorem \ref{Apdx_lower_bound_eq_p}] 
    Let 
    \begin{align} 
        B_i = \left\{ \mathbf{x} \in \Omega \ \Big| \ 
        \Big\| \mathbf{X}_P^i - \mathbf{x}\Big\|_{\infty}  \le 
        \left( \frac{1}{N}    \right)^{1/d} \right\}.
    \end{align} 

    Since $\mathbf{X}_{P[N]}^{(1)}(\mathbf{x})$ is the nearest neighbor in $\left\{ 
    \mathbf{X}_P^1, \mathbf{X}_P^2, \ldots, \mathbf{X}_P^N\right\}$ for $\mathbf{x}$, 
    \begin{align} 
        \lefteqn{1 \le \exists i \le N \ \   \text{s.t.} \ \  \Big\| \mathbf{X}_P^i - \mathbf{x}\Big\|_{\infty}  \le 
            \left( \frac{1}{N}    \right)^{1/d}}\allowdisplaybreaks\nonumber\\ 
        &\iff& 
        \Big\| \mathbf{X}_{P[N]}^{(1)}(\mathbf{x}) - \mathbf{x}\Big\|_{\infty} \le \left( \frac{1}{N}    \right)^{1/d} 
    \end{align} 
    Thus, 
    \begin{align} 
        \lefteqn{ \left\{ \mathbf{x} \in \Omega \ \Big| \   \Big\| \mathbf{X}_{P[N]}^{(1)}(\mathbf{x}) - \mathbf{x}\Big\|_{\infty} 
            \le \left( \frac{1}{N}    \right)^{1/d} \right\} }\allowdisplaybreaks\nonumber\\ 
        &= \bigcup_{i=1}^N  \left\{ \mathbf{x} \in \Omega \ \Big| \ 
        \Big\| \mathbf{X}_P^i - \mathbf{x}\Big\|_{\infty}  \le 
        \left( \frac{1}{N}    \right)^{1/d} \right\} = \bigcup_{i=1}^N B_i 
        \label{Eq_proof_Apdx_lemma_density_formula_1} 
    \end{align} 

    Next, define 
    \begin{align} 
        Z_N(\mathbf{x}) = \sum_{i=1}^N I\left( B_i\right) (\mathbf{x}). 
    \end{align} 

    Let $\mathbf{X}_P$ be a random variable drawn from $P$ with 
    $\mathbf{X}_P \indep \mathbf{X}_P^i$, for $1 \le i \le N$. 

    From Lemma \ref{Apdx_lemma_density_formula}, 
    \begin{align} 
        P\big(I\left( B_i\right) (\mathbf{X}_P)=1\big) &= 
        P\big(B_i\big) \allowdisplaybreaks\nonumber\\ 
        &= \frac{dP}{d\lambda}(\mathbf{X}_P) \cdot 
        \left( \frac{1}{N^{1/d}}\right)^d + o\left( \frac{1}{N^{1/d}}\right)^d \allowdisplaybreaks\nonumber\\ 
        &=  \frac{dP}{d\lambda}(\mathbf{X}_P) \cdot \frac{1}{N} + o\left( \frac{1}{N}\right) \allowdisplaybreaks\nonumber\\ 
        &=  \frac{1}{N} + o\left( \frac{1}{N}\right), 
    \end{align} 
    and $I\left( B_i\right) (\mathbf{X}_P) \in \{0, 1\}$ and $I\left( B_i\right) (\mathbf{X}_P) \indep I\left( B_j\right) (\mathbf{X}_P)$ for $i \neq j$. 
    Namely, $Z_N\big(\mathbf{X}_P\big)$ follows a binomial distribution with $N$ trials and a success probability for each trial of $1/N$ + $o(1/N)$. 

    Then, we obtain 
    \begin{align} 
        E_{\hat{\mathbf{X}}_{P[N]}} \Big[ I\big(\left\{Z_N(\mathbf{X}_P) = 0  \right\} \big)  \Big] 
        &= \left(1- \frac{1}{N} -   o\left( \frac{1}{N}\right)  \right)^N.
         \label{Eq_proof_Apdx_lower_bond_eq_p_7_1} 
    \end{align} 
    Since $\lim_{N \rightarrow \infty} 1- \frac{1}{N} -   o \left( \frac{1}{N} \right)  = 1$, we have
    \begin{align}
      \left(1- \frac{1}{N} -   o\left( \frac{1}{N}\right)  \right)^N = \left(1- \frac{1}{N} -   o\left( \frac{1}{N}\right)  \right)^{N-1},
      \nonumber
    \end{align}
    as $N \longrightarrow \infty$.

    Thus, 
    \begin{align} 
    E_{\hat{\mathbf{X}}_{P[N]}} \Big[ I\big(\left\{Z_N(\mathbf{X}_P) = 0  \right\} \big)  \Big] 
      &= \left(1- \frac{1}{N} -   o\left( \frac{1}{N}\right)  \right)^{N-1}
      \ \  
      \text{(as $N \longrightarrow \infty$)}.
       \label{Eq_proof_Apdx_lower_bond_eq_p_7} 
    \end{align} 

    In addition, note that 
    \begin{align} 
        Z_N (\mathbf{x}) \ge I\left(\bigcup_{i=1}^N B_i  \right) (\mathbf{x}), \nonumber 
    \end{align} 
    and 
    \begin{align} 
        Z_N (\mathbf{x}) \ge 1 \implies I\left(\bigcup_{i=1}^N B_i \right) (\mathbf{x}) = 1. 
        \nonumber 
    \end{align} 
    In particular, 
    \begin{align} 
        Z_N (\mathbf{x}) = 1 \implies \sum_{i=1}^N I\left( B_i \right) (\mathbf{x}) = 1. 
        \nonumber 
    \end{align} 

    Therefore, 
    \begin{align} 
        Z_N  (\mathbf{x}) = 1  \iff \sum_{i=1}^N I\left( B_i \right) (\mathbf{x}) = 1. 
        \label{Eq_proof_Apdx_lower_bond_eq_p_6} 
    \end{align} 

    Now, we obtain 
    \begin{align} 
        \lefteqn{ N^{p/d} \cdot 
            E_{P} \left[ \, 
            \left\{\frac{dQ}{dP}\left( \mathbf{X}_{P[N]}^{(1)}(\mathbf{x})\right)\right\}^{p} 
            \cdot 
            \Big\| \mathbf{X}_{P[N]}^{(1)}(\mathbf{x}) - \mathbf{x}\Big\|^p_{\infty} 
            \right] } \allowdisplaybreaks\nonumber\\ 
        &\geq  N^{p/d} \cdot \ E_{P} \left[ \, 
        \left\{\frac{dQ}{dP}\left(\mathbf{X}_{P[N]}^{(1)}(\mathbf{x}) \right)\right\}^{p} \cdot 
        \Big\| \mathbf{X}_{P[N]}^{(1)}(\mathbf{x}) - \mathbf{x}\Big\|^p_{\infty} \right. 
        \allowdisplaybreaks\nonumber\\ 
        & \qquad 
        \times \  I\left( 
        \left\{ 
        \mathbf{x} \in \Omega \ \  \Big| \ 
        \Big\| \mathbf{X}_{P[N]}^{(1)}(\mathbf{x}) - \mathbf{x}\Big\|_{\infty} \leq 
        \left(\frac{1}{N}  \right)^{1/d}   \right\} \right) \allowdisplaybreaks\nonumber\\ 
        & \qquad 
        \times \  I\left( 
        \left\{ 
        \mathbf{x} \in \Omega \ \  \Big| \ 
         Z_N(\mathbf{x})  = 1\right\}  \right)  \bigg] \allowdisplaybreaks\nonumber\\ 
        &= 
        N^{p/d} \cdot \ E_{P} \left[ \, 
        \left\{\frac{dQ}{dP}\left( \mathbf{X}_{P[N]}^{(1)}(\mathbf{x})\right)\right\}^{p} 
        \cdot 
        \Big\| \mathbf{X}_{P[N]}^{(1)}(\mathbf{x}) - \mathbf{x}\Big\|^p_{\infty} \right. \allowdisplaybreaks\nonumber\\ 
        & \qquad \qquad \qquad 
        \left.  \times \ I\left(\bigcup_{i=1}^N B_i\right) \cdot 
        I\left( 
        \left\{ 
        \mathbf{x} \in \Omega \ \  \Big| \ 
        Z_N(\mathbf{x})  = 1\right\}  \right) 
        \  \right]
        \qquad \qquad \text{(by Equation (\ref{Eq_proof_Apdx_lemma_density_formula_1}))} \allowdisplaybreaks\nonumber\\ 
        &= 
        N^{p/d} \cdot \ E_{P} \left[ \, 
        \left\{\frac{dQ}{dP}\left( \mathbf{X}_{P[N]}^{(1)}(\mathbf{x})\right)\right\}^{p} 
        \cdot 
        \Big\| \mathbf{X}_{P[N]}^{(1)}(\mathbf{x}) - \mathbf{x}\Big\|^p_{\infty} \right. \allowdisplaybreaks\nonumber\\ 
        & \qquad \qquad \qquad 
        \left.  \times \ \sum_{i=1}^N  I\left(B_i\right) \cdot 
        I\left( 
        \left\{ 
        \mathbf{x} \in \Omega \ \  \Big| \ 
        Z_N(\mathbf{x})  = 1\right\}  \right) 
        \  \right]
        \qquad \qquad \quad \text{(by Equation  (\ref{Eq_proof_Apdx_lower_bond_eq_p_6}))} \allowdisplaybreaks\nonumber\\ 
        &= 
        N^{p/d} \cdot  \sum_{i=1}^N \ E_{P} \left[ \, 
        \left\{\frac{dQ}{dP}\left( \mathbf{X}_{P[N]}^{(1)}(\mathbf{x})\right)\right\}^{p} 
        \cdot 
        \Big\| \mathbf{X}_{P[N]}^{(1)}(\mathbf{x}) - \mathbf{x}\Big\|^p_{\infty} \right. \allowdisplaybreaks\nonumber\\ 
        & \qquad \qquad \qquad \qquad 
        \left.  \times \  I\left(B_i\right) \cdot 
        I\left( 
        \left\{ 
        \mathbf{x} \in \Omega \ \  \Big| \ 
        Z_N(\mathbf{x})  = 1\right\}  \right) 
        \  \right] \allowdisplaybreaks\nonumber\\ 
        &= 
        N^{p/d} \cdot  \sum_{i=1}^N \ E_{P} \left[ \, 
        \left\{\frac{dQ}{dP}\left( \mathbf{X}_P^i \right)\right\}^{p} 
        \cdot 
        \Big\| \mathbf{X}_P^i  - \mathbf{x}\Big\|^p_{\infty} \right. \allowdisplaybreaks\nonumber\\ 
        & \qquad \qquad \qquad \qquad 
        \left.  \times \  I\left(B_i\right) \cdot 
        I\left( 
        \left\{ 
        \mathbf{x} \in \Omega \ \  \Big| \ 
        Z_N(\mathbf{x})  = 1\right\}  \right) \ 
         \right]. 
    \label{Eq_proof_Apdx_lower_bond_eq_p_11} 
    \end{align} 

    Now, let 
    \begin{align} 
        Z_N^{-j} (\mathbf{x}) = \sum_{i \neq j}^N I\left( B_i\right) (\mathbf{x}). \nonumber 
    \end{align} 
    Then, 
    \begin{align} 
         I\left(B_i\right) \cdot I\left( 
       \left\{ 
       \mathbf{x} \in \Omega \ \  \Big| \ 
       Z_N(\mathbf{x})  = 1\right\}  \right) 
       &=   I\left(B_i\right) \cdot I\left( 
       \left\{ 
       \mathbf{x} \in \Omega \ \  \Big| \ 
       Z_N^{-i}(\mathbf{x})  = 0\right\}  \right). \allowdisplaybreaks\nonumber\\ 
        \label{Eq_proof_Apdx_lower_bond_eq_p_77} 
    \end{align} 

   Additionally, let $\hat{\mathbf{X}}_{P[N]}^{-i}$ denote the subset of $\hat{\mathbf{X}}_{P[N]}$ excluding $\mathbf{X}_{P}^i$. i.e., $\hat{\mathbf{X}}_{P[N]}^{-i} = \hat{\mathbf{X}}_{P[N]} \setminus \{ \mathbf{X}_{P}^i \}$. Let $E_{N}^{-i}[\cdot]$ denote the expectation over the variables in $\hat{\mathbf{X}}_{P[N]}^{-i}$, which is equivalent to $E_{\hat{\mathbf{X}}_{P[N]}^{-i}}$. 

   From Equation (\ref{Eq_proof_Apdx_lower_bond_eq_p_7}), 
   \begin{align} 
      E_{N}^{-i} \left[  I\left(B_i\right) \cdot I\left( 
       \left\{ 
       \mathbf{x} \in \Omega \ \  \Big| \ 
       Z_N^{-i}(\mathbf{x})  = 0\right\}  \right) \right] 
       &=  \left(1- \frac{1}{N-1} -   o\left( \frac{1}{N-1}\right)  \right)^{N-2}. 
       \allowdisplaybreaks\nonumber\\ 
       \label{Eq_proof_Apdx_lower_bond_eq_p_8} 
   \end{align} 

  From Equations (\ref{Eq_proof_Apdx_lower_bond_eq_p_77}) and (\ref{Eq_proof_Apdx_lower_bond_eq_p_8}), 
  we have 
  \begin{align} 
      \lefteqn{ 
       E_{N}^{-i} \left[ E_{P} \left[ \, 
       \left\{\frac{dQ}{dP}\left( \mathbf{X}_P^i \right)\right\}^{p} 
       \cdot 
       \Big\| \mathbf{X}_P^i  - \mathbf{x}\Big\|^p_{\infty} 
       \times \  I\left(B_i\right) \cdot 
       I\left( 
       \left\{ 
       \mathbf{x} \in \Omega \ \  \Big| \ 
       Z_N(\mathbf{x})  = 1\right\}  \right) \ 
       \right]\right] } \allowdisplaybreaks\nonumber\\ 
       &= 
       E_{N}^{-i} \left[ E_{P} \left[ \, 
       \left\{\frac{dQ}{dP}\left( \mathbf{X}_P^i \right)\right\}^{p} 
       \cdot 
       \Big\| \mathbf{X}_P^i  - \mathbf{x}\Big\|^p_{\infty} 
       \times \  I\left(B_i\right) \cdot 
       I\left( 
       \left\{ 
       \mathbf{x} \in \Omega \ \  \Big| \ 
       Z_N(\mathbf{x})^{-i}  = 0\right\}  \right) \ 
       \right]\right]  \allowdisplaybreaks\nonumber\\ 
        &= 
       E_{P} \left[ 
       \left\{\frac{dQ}{dP}\left( \mathbf{X}_P^i \right)\right\}^{p} 
       \cdot 
       \Big\| \mathbf{X}_P^i  - \mathbf{x}\Big\|^p_{\infty} 
       \times \  I\left(B_i\right) \cdot 
        E_{N}^{-i} \left[  \, 
       I\left( 
       \left\{ 
       \mathbf{x} \in \Omega \ \  \Big| \ 
       Z_N(\mathbf{x})^{-i}  = 0\right\}  \right) \ 
       \right]\right]  \allowdisplaybreaks\nonumber\\ 
       &= 
      E_{P} \left[ 
      \left\{\frac{dQ}{dP}\left( \mathbf{X}_P^i \right)\right\}^{p} 
      \cdot 
      \Big\| \mathbf{X}_P^i  - \mathbf{x}\Big\|^p_{\infty} 
      \times \  I\left(B_i\right) \times 
      \left(1- \frac{1}{N-1} -   o\left( \frac{1}{N-1}\right)  \right)^{N-2} \ 
      \right]  \allowdisplaybreaks\nonumber\\ 
       &= 
        \left(1- \frac{1}{N-1} -   o\left( \frac{1}{N-1}\right)  \right)^{N-2}\times 
      E_{P} \left[ 
      \left\{\frac{dQ}{dP}\left( \mathbf{X}_P^i \right)\right\}^{p} 
      \cdot 
      \Big\| \mathbf{X}_P^i  - \mathbf{x}\Big\|^p_{\infty} 
      \times \  I\left(B_i\right) \ \right]. 
      \allowdisplaybreaks\nonumber\\ 
     \label{Eq_proof_Apdx_lower_bond_eq_p_9} 
   \end{align} 

    From Corollary \ref{Apdx_corollary_density_formula}, we have 
    \begin{align} 
        \lefteqn{E_{P} \left[ 
            \left\{\frac{dQ}{dP}\left( \mathbf{X}_P^i \right)\right\}^{p} 
            \cdot 
            \Big\| \mathbf{X}_P^i  - \mathbf{x}\Big\|^p_{\infty} 
            \times \  I\left(B_i\right)\right] }\allowdisplaybreaks\nonumber\\ 
        &= 
        \left\{\frac{dQ}{dP}\left( \mathbf{X}_P^i \right)\right\}^{p} 
        \cdot\left\{ 
        \frac{dP}{d\mu}\big(\mathbf{X}_P^i\big) 
        \cdot \left( \frac{1}{N^{1/d}} \right)^{p+d+1} 
        + o\left( \left(  \frac{1}{N^{1/d}} \right)^{p+d+1} \right) 
         \right\} \allowdisplaybreaks\nonumber\\ 
        &= \frac{dP}{d\mu}\big(\mathbf{X}_P^i\big)  \cdot 
          \left\{\frac{dQ}{dP}\left( \mathbf{X}_P^i \right)\right\}^{p} 
            \cdot \left( \frac{1}{N} \right)^{1 + p/d} 
            + o\left( \left(  \frac{1}{N} \right)^{1 + p/d}\right)\allowdisplaybreaks\nonumber\\ 
        &= \left\{\frac{dQ}{dP}\left( \mathbf{X}_P^i \right)\right\}^{p} 
        \cdot \left( \frac{1}{N} \right)^{1 + p/d} 
        + o\left( \left(  \frac{1}{N} \right)^{1 + p/d}\right). 
        \label{Eq_proof_Apdx_lower_bond_eq_p_10} 
    \end{align} 

   From Equations (\ref{Eq_proof_Apdx_lower_bond_eq_p_9}) 
   and (\ref{Eq_proof_Apdx_lower_bond_eq_p_10}), we obtain 
  \begin{align} 
    \lefteqn{ 
        E_{N}^{-i} \left[ E_{P} \left[ \, 
        \left\{\frac{dQ}{dP}\left( \mathbf{X}_P^i \right)\right\}^{p} 
        \cdot 
        \Big\| \mathbf{X}_P^i  - \mathbf{x}\Big\|^p_{\infty} 
        \times \  I\left(B_i\right) \cdot 
        I\left( 
        \left\{ 
        \mathbf{x} \in \Omega \ \  \Big| \ 
        Z_N(\mathbf{x})  = 1\right\}  \right) \ 
        \right]\right] } \allowdisplaybreaks\nonumber\\ 
    &= \left(1- \frac{1}{N-1} -   o\left( \frac{1}{N-1}\right)  \right)^{N-2} 
    \allowdisplaybreaks\nonumber\\ 
    & 
    \qquad  \times \ 
    \left\{\frac{dQ}{dP}\left( \mathbf{X}_P^i \right)\right\}^{p} 
    \cdot \left( \frac{1}{N} \right)^{1 + p/d} 
    + o\left( \left(  \frac{1}{N} \right)^{1 + p/d}\right). \qquad \qquad \qquad \qquad 
    \label{Eq_proof_Apdx_lower_bond_eq_p_12} 
    \end{align} 

    From Equations (\ref{Eq_proof_Apdx_lower_bond_eq_p_11}) and  (\ref{Eq_proof_Apdx_lower_bond_eq_p_12}), 
    we obtain, as $N \longrightarrow \infty$, 
    \begin{align} 
    \lefteqn{ N^{p/d} \cdot 
     E_{\hat{\mathbf{X}}_{P[N]}} 
     \left[ 
      E_{P} \left[ \, 
           \left\{\frac{dQ}{dP}\left( \mathbf{X}_{P[N]}^{(1)}(\mathbf{x})\right)\right\}^{p} 
           \cdot 
           \Big\| \mathbf{X}_{P[N]}^{(1)}(\mathbf{x}) - \mathbf{x}\Big\|^p_{\infty} 
           \right]\right] } \allowdisplaybreaks\nonumber\\ 
       &\geq  N^{p/d} \cdot 
       E_{\hat{\mathbf{X}}_{P[N]}}\left[ 
       \sum_{i=1}^N \ E_{P} \left[ \, 
       \left\{\frac{dQ}{dP}\left( \mathbf{X}_P^i \right)\right\}^{p} 
       \cdot 
       \Big\| \mathbf{X}_P^i  - \mathbf{x}\Big\|^p_{\infty} \right. \right. \allowdisplaybreaks\nonumber\\ 
       & \qquad \qquad \qquad \qquad 
       \left.  \left.  \times \  I\left(B_i\right) \cdot 
       I\left( 
       \left\{ 
       \mathbf{x} \in \Omega \ \  \Big| \ 
       Z_N(\mathbf{x})  = 1\right\}  \right) \ 
       \right] 
       \right] \allowdisplaybreaks\nonumber\\ 
        &= \sum_{i=1}^N \  N^{p/d} \cdot 
        E_{\mathbf{X}_P^i}\left[ 
       E_{N}^{-i}\left[ 
       E_{P}  \left[ \, 
       \left\{\frac{dQ}{dP}\left( \mathbf{X}_P^i \right)\right\}^{p} 
       \cdot 
       \Big\| \mathbf{X}_P^i  - \mathbf{x}\Big\|^p_{\infty} \right. \right. \right.  \allowdisplaybreaks\nonumber\\ 
       & \qquad \qquad \qquad \qquad 
       \left.  \left. \left.  \times \  I\left(B_i\right) \cdot 
       I\left( 
       \left\{ 
       \mathbf{x} \in \Omega \ \  \Big| \ 
       Z_N(\mathbf{x})  = 1\right\}  \right) \ 
       \right] 
       \right] 
       \right]
       \qquad \qquad \text{(by Equation (\ref{Eq_proof_Apdx_lower_bond_eq_p_11}))} \allowdisplaybreaks\nonumber\\ 
        &= \sum_{i=1}^N \  N^{p/d} \cdot 
        E_{\mathbf{X}_P^i}\left[ 
        \left(1- \frac{1}{N-1} -   o\left( \frac{1}{N-1}\right)  \right)^{N-2} 
        \right. 
        \allowdisplaybreaks\nonumber\\ 
        & 
        \left. 
        \qquad  \qquad \qquad \times \ 
        \left\{\frac{dQ}{dP}\left( \mathbf{X}_P^i \right)\right\}^{p} 
        \cdot \left( \frac{1}{N} \right)^{1 + p/d} 
        + o\left( \left(  \frac{1}{N} \right)^{1 + p/d}\right) \right] \allowdisplaybreaks\nonumber\\ 
        &= N \cdot \left\{ 
        \left(1- \frac{1}{N-1} -   o\left( \frac{1}{N-1}\right)  \right)^{N-2} 
        \right. 
        \allowdisplaybreaks\nonumber\\ 
        & 
        \qquad  \qquad \times \ 
        \left. 
        E_{P}\left[\left\{\frac{dQ}{dP}\left( \mathbf{x} \right)\right\}^{p}   \right] 
        \cdot \left( \frac{1}{N} \right) 
        + o\left(   \frac{1}{N} \right) \right\} 
        \qquad \qquad \qquad \qquad  \text{(by Equation (\ref{Eq_proof_Apdx_lower_bond_eq_p_12}))} \allowdisplaybreaks\nonumber\\ 
        &= 
        \left(1- \frac{1}{N-1} -   o\left( \frac{1}{N-1}\right)  \right)^{N-2} 
        \cdot \left\{ 
        E_{P}\left[\left\{\frac{dQ}{dP}\left( \mathbf{x} \right)\right\}^{p}   \right] 
        + o\left(1 \right) \right\}. \allowdisplaybreaks\nonumber\\ 
        \label{Eq_Eq_proof_Apdx_lower_bond_eq_p_14} 
     \end{align} 

    As $N \rightarrow \infty$, we observe 
    \begin{align} 
        \left(1- \frac{1}{N-1} -   o\left( \frac{1}{N-1}\right)  \right)^{N-2} \longrightarrow e^{-1}. 
    \end{align} 
    Then, from Equation (\ref{Eq_Eq_proof_Apdx_lower_bond_eq_p_14}), we obtain
    \begin{align} 
       &\varliminf_{N\rightarrow\infty} N^{p/d} \cdot 
            E_{\hat{\mathbf{X}}_{P[N]}} 
            \left[ 
            E_{P} \left[ \, 
            \left\{\frac{dQ}{dP}\left( \mathbf{X}_{P[N]}^{(1)}(\mathbf{x})\right)\right\}^{p} 
            \cdot 
            \Big\| \mathbf{X}_{P[N]}^{(1)}(\mathbf{x}) - \mathbf{x}\Big\|^p_{\infty} 
            \right]\right] \nonumber\\ 
        &\ge e^{-1} \cdot 
         E_{P}\left[\left\{\frac{dQ}{dP}\left( \mathbf{x} \right)\right\}^{p}\right]. 
    \end{align} 

  This completes the proof. 
\end{proof}

\renewcommand{\theenumi}{\Roman{enumi}} 
\begin{theorem}\label{Apdx_theorem_delta_rate} 
    Assume that $f$ satisfies Assumption \ref{Apdx_assumption_for_f}. For $\widetilde{\mathcal{L}}_{f}^{(N)}(\phi)$ defined in Definition  \ref{Apdx_Definition_murepresentation_f_divergenceloss}, let $\phi_*^{(N)} = \arg\min_{\phi:\Omega \rightarrow \mathbb{R}_{>0}} \widetilde{\mathcal{L}}_{f}^{(N)}(\phi)$. 

    Then, for any measurable function $\phi: \Omega \rightarrow \mathbb{R}_{>0}$, the following equivalence holds: 
    \begin{align} 
        \lefteqn{\phi(\mathbf{X}_{\mu}^{i}) - \phi_*^{(N)}(\mathbf{X}_{\mu}^{i}) = O_p\left(\frac{1}{\sqrt{N}}\right), \quad \text{for} \ 1 \le i \le N} \allowdisplaybreaks\nonumber\\ 
        &\Longleftrightarrow \widetilde{\mathcal{L}}_{f}^{(N)}(\phi) - \min_{\phi:\Omega \rightarrow \mathbb{R}_{>0}} \widebar{\mathcal{L}}_{f}(\phi) = O_p\left(\frac{1}{\sqrt{N}}\right), 
        \label{Eq_proposition_delta_rate_0} 
    \end{align} 
    where $\{\mathbf{X}_{\mu}^{1}, \mathbf{X}_{\mu}^{2}, \ldots, \mathbf{X}_{\mu}^{N}\}$ is defined in Definition \ref{Apdx_Definition_murepresentation_f_divergenceloss}, and $\widebar{\mathcal{L}}_{f}(\phi)$ is defined in Lemma \ref{Apdx_lemma_loss_not_biased_lemma_fdiv}. 
\end{theorem} 
\begin{proof}[proof of Theorem \ref{Apdx_theorem_delta_rate}] 
    First, we enumerate several facts used in this proof. 
    \begin{enumerate} 
        \item From the Central Limit Theorem, we have: 
        \begin{equation} 
            \widetilde{\mathcal{L}}_{f}^{(N)}\left(\phi_*^{(N)}\right) - E_{\mu}\Big[\widetilde{\mathcal{L}}_{f}^{(N)}\left(\phi_*^{(N)}\right)\Big] = O_p\left(\frac{1}{\sqrt{N}}\right). 
            \label{Eq_proposition_delta_rate_2} 
        \end{equation} 

        \item From Proposition \ref{Apdx_proposition_loss_optimal_fdiv}, we have, for all $\mathbf{x} \in \hat{\mathbf{X}}_{\mu[N]}$: 
        \begin{equation} 
            \phi_*^{(N)} (\mathbf{x}) = \frac{dQ}{dP}(\mathbf{x}), 
            \label{proof_theorem_loss_is_mu_strongly_convex_f_di8} 
        \end{equation} 
        where $\hat{\mathbf{X}}_{\mu[N]}$ is defined in Definition \ref{Apdx_Definition_murepresentation_f_divergenceloss}. 

        \item From Equation (\ref{proof_theorem_loss_is_mu_strongly_convex_f_di8}), it follows that: 
        \begin{equation} 
            \widetilde{\mathcal{L}}_{f}^{(N)}\left(\phi_*^{(N)}\right) = \widetilde{\mathcal{L}}_{f}^{(N)}\left(\frac{dQ}{dP}\right), 
            \label{Eq_proposition_delta_rate_8} 
        \end{equation} 
        and 
        \begin{equation} 
            E_{\mu}\Big[\widetilde{\mathcal{L}}_{f}^{(N)}\left(\phi_*^{(N)}\right)\Big] = E_{\mu}\Big[\widetilde{\mathcal{L}}_{f}^{(N)}\left(\frac{dQ}{dP}\right)\Big]. 
            \label{Eq_proposition_delta_rate_3} 
        \end{equation} 

        \item From Lemma \ref{Apdx_lemma_loss_not_biased_lemma_fdiv}, we have: 
        \begin{equation} 
            \min_{\phi:\Omega \rightarrow \mathbb{R}_{>0}} \widebar{\mathcal{L}}_{f}(\phi) = \widebar{\mathcal{L}}_{f}\left(\frac{dQ}{dP}\right) = E_{\mu}\Big[\widetilde{\mathcal{L}}_{f}^{(N)}\left(\frac{dQ}{dP}\right)\Big]. 
            \label{Eq_proposition_delta_rate_4} 
        \end{equation} 

        \item From Lemma \ref{Apdx_lemma_loss_is_mu_strongly_convex_0}, for $\widetilde{l}_{f}(u; \mathbf{x})$ defined in Equation (\ref{Apdx_Eq_approximate_loss_func_f_div_point}), we obtain: 
        \begin{align} 
            \frac{d}{d u} \widetilde{l}_{f} \left(\frac{dQ}{dP}(\mathbf{x});\mathbf{x}\right) &= 0, 
            \label{proof_theorem_delta_rate_p1} 
        \end{align} 
        and 
        \begin{align} 
            \frac{d^2}{d u^2} \widetilde{l}_{f} \left(\frac{dQ}{dP}(\mathbf{x});\mathbf{x}\right) &= f''\left(\frac{dQ}{dP}(\mathbf{x})\right) \cdot \frac{dP}{d\mu}(\mathbf{x}). 
            \label{proof_theorem_delta_rate_convex_f_di1} 
        \end{align} 

        \item From Theorem \ref{Apdx_theorem_loss_is_mu_strongly_convex_2}, we have: 
        \begin{align} 
            \widetilde{l}_{f} \left(u;\mathbf{x}\right) - \widetilde{l}_{f} \left(\frac{dQ}{dP}(\mathbf{x});\mathbf{x}\right) 
            &= \frac{1}{2} \cdot f''\left(\frac{dQ}{dP}(\mathbf{x})\right) \cdot \frac{dP}{d\mu}(\mathbf{x}) \cdot \left| u - \frac{dQ}{dP}(\mathbf{x}) \right|^2 \nonumber\\ 
            &\quad + o\left(\left| u - \frac{dQ}{dP}(\mathbf{x}) \right|^2\right), 
            \label{proof_theorem_delta_rate_convex_f_di2} 
        \end{align} 
        where $f(a) = o(a)$ (as $a \rightarrow 0$) denotes asymptotic domination such that $\lim_{a \rightarrow 0} f(a)/a = 0$. 

        \item From the assumption that $E_P[f''(dQ/dP)] < \infty$, 
        \begin{align} 
            f''\left(\frac{dQ}{dP}(\mathbf{X}_{\mu}^i)\right) \cdot \frac{dP}{d\mu}(\mathbf{X}_{\mu}^i) = O_p\left(1\right), \ \ 
            \text{as $N \rightarrow \infty$.}
            \label{proof_theorem_loss_is_mu_strongly_convex_f_di3} 
        \end{align} 
    \end{enumerate} 

    Now, we show the direction ``$\Longrightarrow$'' in Equation (\ref{Eq_proposition_delta_rate_0}). 

    Assume that $\phi(\mathbf{X}_{\mu}^i) = \phi_*^{(N)}(\mathbf{X}_{\mu}^i) + O_p\left(1/\sqrt{N}\right)$ for $1 \le i \le N$. 

    From Equations (\ref{theorem_loss_is_mu_strongly_convex_f_di2}) in Theorem \ref{Apdx_theorem_loss_is_mu_strongly_convex_2} and (\ref{proof_theorem_loss_is_mu_strongly_convex_f_di3}), we have 
    \begin{align} 
        & \widetilde{l}_{f} \left(\phi(\mathbf{X}_{\mu}^i);  \mathbf{X}_{\mu}^i \right) -  \widetilde{l}_{f} \left(\phi_*^{(N)}(\mathbf{X}_{\mu}^i) ; \mathbf{X}_{\mu}^i \right) \allowdisplaybreaks\nonumber\\ 
        &= \widetilde{l}_{f} \left(\phi(\mathbf{X}_{\mu}^i);  \mathbf{X}_{\mu}^i \right) -  \widetilde{l}_{f} \left(\frac{dQ}{dP}( \mathbf{X}_{\mu}^i) ; \mathbf{X}_{\mu}^i \right) \allowdisplaybreaks\nonumber\\ 
        &= \frac{1}{2} \cdot f''\left(\frac{dQ}{dP}(\mathbf{X}_{\mu}^i)\right) \cdot \frac{dP}{d\mu}(\mathbf{X}_{\mu}^i) 
        \cdot \left| \phi(\mathbf{X}_{\mu}^i)  - \frac{dQ}{dP}(\mathbf{X}_{\mu}^i)  \right|^2 
        + o\left(\left| \phi(\mathbf{X}_{\mu}^i) - \frac{dQ}{dP}(\mathbf{X}_{\mu}^i)  \right|^2 \right) \allowdisplaybreaks\nonumber\\ 
        &= \frac{1}{2} \cdot f''\left(\frac{dQ}{dP}(\mathbf{X}_{\mu}^i)\right) \cdot \frac{dP}{d\mu}(\mathbf{X}_{\mu}^i) 
        \cdot \left| \phi(\mathbf{X}_{\mu}^i) - \phi_*^{(N)} (\mathbf{X}_{\mu}^i)  \right|^2 
        + o\left(\left| \phi(\mathbf{X}_{\mu}^i) - \phi_*^{(N)}(\mathbf{X}_{\mu}^i)  \right|^2 \right) \allowdisplaybreaks\nonumber\\ 
        &=  O_p\left(1\right) \cdot  O_p\left(  \left\{ \frac{1}{\sqrt{N}}\right\}^2 \right)  \allowdisplaybreaks\nonumber\\ 
        &=   O_p\left( \frac{1}{N} \right). 
        \label{proof_theorem_loss_is_mu_strongly_convex_f_di4} 
    \end{align} 

    Thus, we have: 
    \begin{align} 
        \widetilde{\mathcal{L}}_{f}^{(N)}(\phi) -  \widetilde{\mathcal{L}}_{f}^{(N)}\left(\phi_*^{(N)}\right) 
        &= \frac{1}{N} \cdot \sum_{i=1}^N \left\{ \widetilde{l}_{f} \left(\phi(\mathbf{X}_{\mu}^i);\mathbf{X}_{\mu}^i\right) 
        -  \widetilde{l}_{f} \left(\frac{dQ}{dP}(\mathbf{X}_{\mu}^i);\mathbf{X}_{\mu}^i\right)  \right\}  \allowdisplaybreaks\nonumber\\ 
        &= \frac{1}{N} \cdot \sum_{i=1}^N O_p\left( \frac{1}{N} \right) \allowdisplaybreaks\nonumber\\ 
        &=  O_p\left( \frac{1}{N} \right). 
        \label{Eq_proposition_delta_rate_1} 
    \end{align} 

    From Equations (\ref{Eq_proposition_delta_rate_2}), (\ref{Eq_proposition_delta_rate_8}), (\ref{Eq_proposition_delta_rate_4}), and (\ref{Eq_proposition_delta_rate_1}), we obtain: 
    \begin{align} 
        &\widetilde{\mathcal{L}}_{f}^{(N)}(\phi) -  \min_{\phi:\Omega \rightarrow \mathbb{R}_{>0}} \widebar{\mathcal{L}}_{f}(\phi) \allowdisplaybreaks\nonumber\\ 
        &= \left\{  \widetilde{\mathcal{L}}_{f}^{(N)}(\phi) -  \widetilde{\mathcal{L}}_{f}^{(N)}\left(\phi_*^{(N)}\right) \right\} 
        + \left\{ \widetilde{\mathcal{L}}_{f}^{(N)}\left(\phi_*^{(N)}\right) -  \min_{\phi:\Omega \rightarrow \mathbb{R}_{>0}} \widebar{\mathcal{L}}_{f}(\phi)  \right\} 
        \allowdisplaybreaks\nonumber\\ 
        &= \left\{  \widetilde{\mathcal{L}}_{f}^{(N)}(\phi) -  \widetilde{\mathcal{L}}_{f}^{(N)}\left(\phi_*^{(N)}\right) \right\} 
        + \left\{ \widetilde{\mathcal{L}}_{f}^{(N)}\left(\frac{dQ}{dP}\right) -  \min_{\phi:\Omega \rightarrow \mathbb{R}_{>0}} \widebar{\mathcal{L}}_{f}(\phi)  \right\} 
        \qquad \qquad \text{(by Equation (\ref{Eq_proposition_delta_rate_8}))} \allowdisplaybreaks\nonumber\\ 
        &= \left\{  \widetilde{\mathcal{L}}_{f}^{(N)}(\phi) -  \widetilde{\mathcal{L}}_{f}^{(N)}\left(\phi_*^{(N)}\right) \right\} 
        + \left\{ \widetilde{\mathcal{L}}_{f}^{(N)}\left(\frac{dQ}{dP}\right) -  E\Big[\widetilde{\mathcal{L}}_{f}^{(N)}\left(\frac{dQ}{dP}\right) \Big]  \right\} 
        \qquad \qquad \text{(by Equation  (\ref{Eq_proposition_delta_rate_4}))} \allowdisplaybreaks\nonumber\\ 
        &= \left\{  \widetilde{\mathcal{L}}_{f}^{(N)}(\phi) -  \widetilde{\mathcal{L}}_{f}^{(N)}\left(\phi_*^{(N)}\right) \right\} 
        + \left\{ \widetilde{\mathcal{L}}_{f}^{(N)}\left(\phi_*^{(N)}\right) -  E\Big[\widetilde{\mathcal{L}}_{f}^{(N)}\left(\phi_*^{(N)}\right) \Big]  \right\} 
        \qquad \qquad \text{(by Equation  (\ref{Eq_proposition_delta_rate_8}))} \allowdisplaybreaks\nonumber\\ 
        &=  O_p\left(\frac{1}{N}\right) +  O_p\left(\frac{1}{\sqrt{N}}\right) 
        \qquad \qquad \qquad \qquad \qquad \qquad \qquad \qquad 
        \text{(by Equations (\ref{Eq_proposition_delta_rate_2}) and (\ref{Eq_proposition_delta_rate_1}))} \allowdisplaybreaks\nonumber\\ 
        &=  O_p\left(\frac{1}{\sqrt{N}}\right). 
    \end{align} 

    Thus, we have proved ``$\Longrightarrow$''. 

    Next, we prove the direction ``$\Longleftarrow$'' in Equation (\ref{Eq_proposition_delta_rate_0}). 

    Suppose 
    \begin{align} 
        \widetilde{\mathcal{L}}_{f}^{(N)}(\phi) - \min_{\phi:\Omega \rightarrow \mathbb{R}_{>0}} \widebar{\mathcal{L}}_{f}(\phi) = O_p\left(\frac{1}{\sqrt{N}}\right). 
        \label{Eq_proposition_delta_rate_0321_3} 
    \end{align} 

    From Equations (\ref{Eq_proposition_delta_rate_2}), (\ref{Eq_proposition_delta_rate_4}), 
    (\ref{Eq_proposition_delta_rate_3}), and (\ref{Eq_proposition_delta_rate_0321_3}), we obtain 
    \begin{align} 
        & \widetilde{\mathcal{L}}_{f}^{(N)}(\phi) - \widetilde{\mathcal{L}}_{f}^{(N)}\left(\phi_*^{(N)}\right) \allowdisplaybreaks\nonumber\\ 
        &= \left\{ \widetilde{\mathcal{L}}_{f}^{(N)}(\phi) -  \min_{\phi:\Omega \rightarrow \mathbb{R}_{>0}} \widebar{\mathcal{L}}_{f}(\phi) \right\} 
        + \left\{\min_{\phi:\Omega \rightarrow \mathbb{R}_{>0}} \widebar{\mathcal{L}}_{f}(\phi) - \widetilde{\mathcal{L}}_{f}^{(N)}\left(\phi_*^{(N)}\right)  \right\} \allowdisplaybreaks\nonumber\\ 
        &=  \left\{ \widetilde{\mathcal{L}}_{f}^{(N)}(\phi) - \min_{\phi:\Omega \rightarrow \mathbb{R}_{>0}} \widebar{\mathcal{L}}_{f}(\phi)   \right\} 
        +  \left\{ E\Big[\widetilde{\mathcal{L}}_{f}^{(N)}\left(\frac{dQ}{dP}\right) \Big] -  \widetilde{\mathcal{L}}_{f}^{(N)}\left(\phi_*^{(N)}\right)  \right\} 
        \qquad \text{(by Equation (\ref{Eq_proposition_delta_rate_4}))} \allowdisplaybreaks\nonumber\\ 
        &=  \left\{ \widetilde{\mathcal{L}}_{f}^{(N)}(\phi) - \min_{\phi:\Omega \rightarrow \mathbb{R}_{>0}} \widebar{\mathcal{L}}_{f}(\phi)   \right\} 
        +  \left\{ E\Big[\widetilde{\mathcal{L}}_{f}^{(N)}\left(\phi_*^{(N)}\right) \Big] -  \widetilde{\mathcal{L}}_{f}^{(N)}\left(\phi_*^{(N)}\right)  \right\} 
        \qquad \text{(by Equation (\ref{Eq_proposition_delta_rate_3}))} \allowdisplaybreaks\nonumber\\ 
        &= O_p\left(\frac{1}{\sqrt{N}}\right) +  O_p\left(\frac{1}{\sqrt{N}}\right) 
        \qquad \text{(by Equations (\ref{Eq_proposition_delta_rate_2}) and (\ref{Eq_proposition_delta_rate_0321_3}))} \allowdisplaybreaks\nonumber\\ 
        &=  O_p\left(\frac{1}{\sqrt{N}}\right). 
        \label{Eq_proposition_delta_rate_0321_1} 
    \end{align} 

    From Equation (\ref{proof_theorem_loss_is_mu_strongly_convex_f_di8}), we have 
    \begin{align} 
        \widetilde{\mathcal{L}}_{f}^{(N)}(\phi) - \widetilde{\mathcal{L}}_{f}^{(N)}\left(\phi_*^{(N)}\right) 
        &= \frac{1}{N} \cdot \sum_{i=1}^{N} \widetilde{l}_{f}\Big(\phi(\mathbf{X}_{\mu}^i); \mathbf{X}_{\mu}^i\Big) 
        -  \frac{1}{N} \cdot \sum_{i=1}^{N} \widetilde{l}_{f}\left(\phi_*^{(N)}(\mathbf{X}_{\mu}^i); \mathbf{X}_{\mu}^i\right)  \allowdisplaybreaks\nonumber\\ 
        &= \frac{1}{N} \cdot \sum_{i=1}^{N} 
        \left\{\widetilde{l}_{f}\Big(\phi(\mathbf{X}_{\mu}^i); \mathbf{X}_{\mu}^i\Big)  - \widetilde{l}_{f}\left(\phi_*^{(N)}(\mathbf{X}_{\mu}^i); \mathbf{X}_{\mu}^i\right)  \right\}. \allowdisplaybreaks\nonumber\\ 
        \label{Eq_proposition_delta_rate_8_2} 
    \end{align} 

    From Equations (\ref{Eq_proposition_delta_rate_0321_1}) and  (\ref{Eq_proposition_delta_rate_8_2}), we have 
    \begin{align} 
        \frac{1}{N} \cdot \sum_{i=1}^{N} 
        \left\{\widetilde{l}_{f}\Big(\phi(\mathbf{X}_{\mu}^i); \mathbf{X}_{\mu}^i\Big)  -  \widetilde{l}_{f}\left(\frac{dQ}{dP}(\mathbf{X}_{\mu}^i); \mathbf{X}_{\mu}^i\right)  \right\}  &=  O_p\left(\frac{1}{\sqrt{N}}\right). 
        \label{proof_theorem_loss_is_mu_strongly_convex_f_di5} 
    \end{align} 

    Let  $a_N^i =  E_P \left[ \left|\phi(\mathbf{X}_{\mu}^i) - \phi_*^{(k)}(\mathbf{X}_{\mu}^i) \right| \right]$. 
    Since $\mathbf{X}_{\mu}^i$ is identically distributed for $1 \le i \le N$, we have $a_N^i = a_N^1$ for any $1 \le i \le N$. Thus, define $A_N=\sup_{k\ge N} a_k^i=\sup_{k\ge N} a_k^1$. 

    Using Chebyshev's inequality, we have for any $\varepsilon > 0$, 
    \begin{align} 
        \lefteqn{P\left(\left|\phi(\mathbf{X}_{\mu}^i) - \phi_*^{(k)}(\mathbf{X}_{\mu}^i) \right|/A_N > \frac{1}{\varepsilon} \right)} \allowdisplaybreaks\nonumber\\ 
        & \le  \frac{\varepsilon \cdot E_P \left[ \left|\phi(\mathbf{X}_{\mu}^i) - \phi_*^{(k)}(\mathbf{X}_{\mu}^i)\right| \right]}{A_N}  \allowdisplaybreaks\nonumber\\ 
        & \le  \frac{\varepsilon \cdot a_N^i}{A_N} \allowdisplaybreaks\nonumber\\ 
        & \le \varepsilon. 
    \end{align} 

    Thus, $\phi(\mathbf{X}_{\mu}^i) - \phi_*^{(k)}(\mathbf{X}_{\mu}^i) = O_p(A_N)$. 

    Now, we calculate 
    \begin{align} 
        & \frac{1}{N} \sum_{i=1}^{N}  \left\{\widetilde{l}_{f}\Big(\phi(\mathbf{X}_{\mu}^i); \mathbf{X}_{\mu}^i\Big)  - \widetilde{l}_{f}\left(\phi_*^{(N)}(\mathbf{X}_{\mu}^i); \mathbf{X}_{\mu}^i\right)  \right\} \allowdisplaybreaks\nonumber\\ 
        &= 
        \frac{1}{N} \sum_{i=1}^{N} \left\{\widetilde{l}_{f}\Big(\phi(\mathbf{X}_{\mu}^i); \mathbf{X}_{\mu}^i\Big)  -  \widetilde{l}_{f}\left(\frac{dQ}{dP}(\mathbf{X}_{\mu}^i); \mathbf{X}_{\mu}^i\right)  \right\} 
        \allowdisplaybreaks\nonumber\\ 
        &=  \frac{1}{N} \sum_{i=1}^{N} \left\{  \frac{1}{2} \cdot \lambda(\mathbf{X}_{\mu}^i) \cdot O_p\left(  \left| \phi(\mathbf{X}_{\mu}^i) -  \frac{dQ}{dP}(\mathbf{X}_{\mu}^i)\right|^2 \right) + o_p \left(  \left| \phi(\mathbf{X}_{\mu}^i) -  \frac{dQ}{dP}(\mathbf{X}_{\mu}^i)\right|^4 \right) \right\}  \allowdisplaybreaks\nonumber\\ 
        &=  \frac{1}{N} \sum_{i=1}^{N} \left\{ 
        \frac{1}{2} \cdot \lambda(\mathbf{X}_{\mu}^i) \cdot 
        O_p\left(  \left| \phi(\mathbf{X}_{\mu}^i) -  \phi_*^{(N)}(\mathbf{X}_{\mu}^i)\right|^2 \right)  + o_p \left(  \left| \phi(\mathbf{X}_{\mu}^i) -  \ \phi_*^{(N)}(\mathbf{X}_{\mu}^i) \right|^4 \right)   \right\}   \allowdisplaybreaks\nonumber\\ 
        &=  \frac{1}{N} \cdot N  \cdot \frac{1}{2} \cdot  O_p\left(\sqrt{N} \right)  \cdot O_p\left( A_N^2 \right)   + \frac{1}{N} \cdot N  \cdot \frac{1}{2} \cdot o_p\left( A_N^4 \right)  \allowdisplaybreaks\nonumber\\ 
        &=  O_p\left(\sqrt{N} \right)  \cdot O_p\left( A_N^2 \right) + o_p\left(A_N^4 \right). 
        \label{proof_theorem_loss_is_mu_strongly_convex_f_di9} 
    \end{align} 

    Here, $\mathbf{X} = o_p(a_N)$ denotes the convergence in probability with rate $a_N$ in $\mu$ as $N \rightarrow \infty$: $\mathbf{X} = o_p(a_N)$ (as $N \rightarrow \infty$) $\Leftrightarrow$ 
    $\forall \varepsilon$, $\forall \delta > 0$, $\exists N(\varepsilon, \delta) > 0$ such that $\mu(|\mathbf{X}|/ a_N \ge \delta) < \varepsilon$ for $\forall N \ge N(\varepsilon, \delta)$. 

    From Equations (\ref{proof_theorem_loss_is_mu_strongly_convex_f_di5}) and (\ref{proof_theorem_loss_is_mu_strongly_convex_f_di9}), we have 
    \begin{align} 
        O_p\left(  \frac{1}{\sqrt{N}} \right)  &\geq  O_p\left(\sqrt{N}\right) \cdot O_p\left(A_N^2\right) +  o_p\left(A_N^4 \right). 
        \label{proof_theorem_loss_is_mu_strongly_convex_f_di13} 
    \end{align} 

    From the definition of $A_N$, we observe that $A_N$ decreases as $N$ increases. Thus, $\lim_{N\rightarrow \infty}A_N$ exists and $0 \le \lim_{N\rightarrow \infty}A_N < \infty$. 

    Suppose that $\lim_{N\rightarrow \infty}A_N > 0$. Then, we have 
    \begin{align} 
        O_p\left(\sqrt{N}\right) \cdot O_p\left(A_N^2\right) +  o_p\left(A_N^4 \right) 
        &=  O_p\left(\sqrt{N}\right)  +  o_p\left(1 \right). 
    \end{align} 

    This contradicts Equation (\ref{proof_theorem_loss_is_mu_strongly_convex_f_di13}). Therefore,  $\lim_{N\rightarrow \infty}A_N = 0$. 

    From Equation (\ref{proof_theorem_loss_is_mu_strongly_convex_f_di13}), we have 
    \begin{align} 
        O_p\left(  \frac{1}{N} \right)  &\ge  O_p\left(A_N^2\right) +  o_p\left( \frac{A_N^4}{\sqrt{N}} \right) \allowdisplaybreaks\nonumber\\ 
        &= O_p\left(A_N^2\right). 
        \label{proof_theorem_loss_is_mu_strongly_convex_f_di14} 
    \end{align} 

    Thus, $A_N = O\left(1/\sqrt{N}\right)$. 

    Finally, we have 
    \begin{align} 
        \phi(\mathbf{X}_{\mu}^i) -  \phi_*^{(N)}(\mathbf{X}_{\mu}^i)  = O_p\left(A_N\right) = O_p\left(\frac{1}{\sqrt{N}}\right). 
    \end{align} 

    Here, we have proved the direction ``$\Longleftarrow$''. 

    This completes the proof. 
\end{proof}

\begin{corollary}[Theorem \ref{main_theorem_delta_ratio} restated]\label{Apdx_corollary_delat_ratio} 
  Assume the same assumption as in Theorem \ref{Apdx_theorem_delta_rate}. 
  let $\phi_*^{(N)} =\arg \min_{\phi:\Omega \rightarrow \mathbb{R}_{>0}} \widetilde{\mathcal{L}}_{f}^{(N)}(\phi)$. 

  Then, for any measurable function $\phi: \Omega \rightarrow \mathbb{R}_{>0}$, 
    \begin{align} 
         \lefteqn{\phi(\mathbf{X}_{\mu}^{i}) - \phi_*^{(N)}(\mathbf{X}_{\mu}^{i}) = O_p\left(\frac{1}{\sqrt{N}}\right), \quad \text{for} \ 1 \le i \le N.} \allowdisplaybreaks\nonumber\\ 
         &\Longleftrightarrow 
           \mathcal{L}_{f}^{(R,S)}(\phi)  - \inf_{\phi:\Omega \rightarrow 
              \mathbb{R}_{>0}}  E_{\mu}\left[ \mathcal{L}_{f}^{(R,S)}(\phi) \right] 
              = O_p\left(\frac{1}{\sqrt{N}}\right), 
         \label{Eq_corollary_delta_rate_0} 
     \end{align} 
     where $\{\mathbf{X}_{\mu}^{1}, \mathbf{X}_{\mu}^{2}, \ldots, \mathbf{X}_{\mu}^{N}\}$ is defined in Definition \ref{Apdx_Definition_murepresentation_f_divergenceloss}, and $\mathcal{L}_{f}^{(R,S)}(\phi)$ is defined in Definition \ref{Apdx_DefinitionfDivergencelossa}. 
\end{corollary} 
\begin{proof}[proof of Corollary \ref{Apdx_corollary_delat_ratio}] 
    From Lemma \ref{Apdx_lemma_loss_not_biased_lemma_fdiv}, we have $\mathcal{L}_{f}^{(R,S)}(\phi) = \widebar{\mathcal{L}}_{f}(\phi)$. 

    Thus, Equation (\ref{Eq_corollary_delta_rate_0}) follows directly from Equation (\ref{Eq_proposition_delta_rate_0}). 

    This completes the proof. 
\end{proof}

\begin{theorem}[Theorem \ref{main_theorem_sample_requirement} restated]\label{Apdx_theorem_sample_requirement} 
    Assume that $\Omega$ is a compact set in $\mathbb{R}^d$ with $d \ge 3$ and that $f$ satisfies Assumption \ref{Apdx_assumption_for_f}. Let $P$ and $Q$ be probability measures on $\Omega$. Assume that $P \ll \lambda$ and $Q \ll \lambda$, where $\lambda$ denotes the Lebesgue measure on $\mathbb{R}^d$. Let $T^*(\mathbf{x})$ be the energy function of $dQ/dP(\mathbf{x})$ defined as $T^*(\mathbf{x}) = -\log dQ/dP(\mathbf{x})$. 

    Let $\widetilde{\mathcal{F}}_{ K\text{-}\mathit{Lip}}^{(N)}$ denote the set of all $K$-Lipschitz continuous functions on $\Omega$ that minimize $\widetilde{\mathcal{L}}_{f}^{(N)}(\cdot)$. Specifically, define 
    \begin{equation} 
        \widetilde{\mathcal{F}}^{(N)} = \left\{\phi_*:\Omega \rightarrow \mathbb{R}_{>0} \ \Big| \  \widetilde{\mathcal{L}}_{f}^{(N)}(\phi_*) =  \min_{\phi} \widetilde{\mathcal{L}}_{f}^{(N)}(\phi) \right\}, 
    \end{equation} 
    and 
    \begin{equation} 
    \mathcal{F}_{ K\text{-}\mathit{Lip}} =  \left\{\phi:\Omega \rightarrow \mathbb{R}_{>0} \  \Big| \ \big| \phi(\mathbf{y}) - \phi(\mathbf{x}) \big| \leq K \cdot  \big\| \mathbf{y} - \mathbf{x} \big\|_{\infty} \ \text{for all} \ \mathbf{y}, \mathbf{x} \in  \Omega \right\}. 
    \end{equation} 

    Subsequently, let 
    \begin{equation} 
        \widetilde{\mathcal{F}}_{ K\text{-}\mathit{Lip}}^{(N)} =  \widetilde{\mathcal{F}}^{(N)} \cap  \mathcal{F}_{ K\text{-}\mathit{Lip}}. 
    \end{equation} 

    \textbf{(Upper Bound)} Assume Assumption \ref{Apdx_assumption_upper}: there exists $L > 0$ such that $\big|T^*(\mathbf{y}) - T^*(\mathbf{x})\big| \le L \cdot \|\mathbf{y} - \mathbf{x}\|_{\infty}$ for any $\mathbf{y}, \mathbf{x} \in \Omega$, i.e., $T^*(\mathbf{x})$ is $L$-Lipschitz continuous on $\Omega$. 

    Then, Equation (\ref{Eq_Appendix_theorem_sample_requirement_t1_1}) holds for $1 \le p \le d/2$, such that for any $\phi \in \widetilde{\mathcal{F}}_{ K\text{-}\mathit{Lip}}^{(N)}$, 
    \begin{align} 
        & \varlimsup_{N \rightarrow \infty} N^{1/d} \cdot \left\{E_P \left| \frac{dQ}{dP}(\mathbf{x}) - \phi(\mathbf{x}) \right|^{ p } \right\}^{1/p}\allowdisplaybreaks\nonumber\\ 
        & \le  L \cdot \mathrm{diam}(\Omega) \cdot \left\{  E_P \left[ \left\{\frac{dQ}{dP}(\mathbf{x})\right\}^{2\cdot p } \right] \right\}^{1/(2 \cdot p)}  + K \cdot  \mathrm{diam}(\Omega). 
        \label{Eq_Appendix_theorem_sample_requirement_t1_1} 
    \end{align} 

    \textbf{(Lower Bound)} Assume Assumption \ref{Apdx_assumption_lower}: there exists $L > 1$ such that $(1/L) \cdot \|\mathbf{y} - \mathbf{x} \|_{\infty} \le \big|T^*(\mathbf{y}) - T^*(\mathbf{x})\big| \le L \cdot \|\mathbf{y} - \mathbf{x} \|_{\infty}$ for any $\mathbf{y}, \mathbf{x} \in \Omega$, i.e., $T^*(\mathbf{x})$ is $L$-bi-Lipschitz continuous on $\Omega$; and $E_P\left[dQ/dP\right] < \infty$ with $1 \le p \le d$. 

    Then, Equation (\ref{Eq_Appendix_theorem_sample_requirement_t1_2}) holds for any $\phi \in \widetilde{\mathcal{F}}_{ K\text{-}\mathit{Lip}}^{(N)}$, such that 
    \begin{align} 
        & \varliminf_{N \rightarrow \infty} N^{1/d} \cdot E_{\hat{\mathbf{X}}_{P[N]}}\left[ \left\{E_P \left| \frac{dQ}{dP}(\mathbf{x}) - \phi(\mathbf{x}) \right|^{ p } \right\}^{1/ p } \  \right] \allowdisplaybreaks\nonumber\\ 
        &\ge  \frac{1}{L} \cdot \left\{  E_P \left[ \left\{\frac{dQ}{dP}(\mathbf{x})\right\}^{ p } \right] \right\}^{1/p}  - K \cdot  \mathrm{diam}(\Omega) 
        \label{Eq_Appendix_theorem_sample_requirement_t1_2}\\ 
        &\ge \frac{1}{L} \cdot  e^{\frac{p-1}{p} \cdot KL(Q||P)-1} - K \cdot  \mathrm{diam}(\Omega) 
        \label{Eq_Appendix_theorem_sample_requirement_t1_3} 
    \end{align} 
\end{theorem}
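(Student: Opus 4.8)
I work with $\mu = P$ in Definition~\ref{Apdx_Definition_murepresentation_f_divergenceloss} (admissible, since the standing setup forces $P\ll Q$ and $Q\ll P$, hence $Q\ll P$), and write $r:=dQ/dP$, $\mathbf{X}^{(1)}(\mathbf{x}):=\mathbf{X}_{P[N]}^{(1)}(\mathbf{x})$, $\rho(\mathbf{x}):=\|\mathbf{X}^{(1)}(\mathbf{x})-\mathbf{x}\|_{\infty}$, and $\|g\|_{L_p(P)}:=\{E_P|g(\mathbf{x})|^p\}^{1/p}$. Fix a $K$-Lipschitz minimizer $\phi$ of $\widetilde{\mathcal{L}}_{f}^{(N)}(\cdot)$. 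Since $\phi$ is a minimizer and $f$ satisfies Assumption~\ref{Apdx_assumption_for_f}, Proposition~\ref{Apdx_proposition_loss_optimal_fdiv} gives $\phi(\mathbf{X}_P^i)=r(\mathbf{X}_P^i)$ for $i\le N$; as the nearest neighbour of any $\mathbf{x}\in\Omega$ is one of the $\mathbf{X}_P^i$, this yields $\phi(\mathbf{X}^{(1)}(\mathbf{x}))=r(\mathbf{X}^{(1)}(\mathbf{x}))$ for every $\mathbf{x}$ (Equation~(\ref{Eq_DerivationofUpperandLowerBounds_4})). The triangle inequality in $L_p(\Omega,P)$ (Equation~(\ref{Eq_DerivationofUpperandLowerBounds_1})) then sandwiches $\|r-\phi\|_{L_p(P)}$ between $\|r-r\!\circ\!\mathbf{X}^{(1)}\|_{L_p(P)}\pm\|r\!\circ\!\mathbf{X}^{(1)}-\phi\|_{L_p(P)}$, and the ``estimator'' term is immediate from $K$-Lipschitz continuity: $|r(\mathbf{X}^{(1)}(\mathbf{x}))-\phi(\mathbf{x})|=|\phi(\mathbf{X}^{(1)}(\mathbf{x}))-\phi(\mathbf{x})|\le K\rho(\mathbf{x})$, whence $\|r\!\circ\!\mathbf{X}^{(1)}-\phi\|_{L_p(P)}\le K\{E_P[\rho^p]\}^{1/p}$.

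\textbf{Lipschitz control of the density-ratio increment.} Writing $r=e^{-T^*}$, one has, with $t:=|T^*(\mathbf{x})-T^*(\mathbf{y})|$, the identity $|r(\mathbf{x})-r(\mathbf{y})|=\min(r(\mathbf{x}),r(\mathbf{y}))\,(e^{t}-1)$; combining $t\le e^{t}-1\le t\,e^{t}$, the ($L$-)Lipschitz bounds on $T^*$, and $\min(r(\mathbf{x}),r(\mathbf{y}))\ge e^{-L\|\mathbf{x}-\mathbf{y}\|_\infty}r(\mathbf{y})$, I obtain, taking $\mathbf{y}=\mathbf{X}^{(1)}(\mathbf{x})$,
\[ \tfrac{1}{L}\,r(\mathbf{X}^{(1)}(\mathbf{x}))\,\rho(\mathbf{x})\,e^{-L\rho(\mathbf{x})}\ \le\ \big|r(\mathbf{x})-r(\mathbf{X}^{(1)}(\mathbf{x}))\big|\ \le\ L\,r(\mathbf{x})\,\rho(\mathbf{x})\,e^{L\rho(\mathbf{x})}, \]
the upper bound requiring only U1 and the lower bound L1. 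Since $\rho\le\mathrm{diag}(\Omega)$ always and $\rho\to 0$ as $N\to\infty$ with $E_P[\rho^p]=O(N^{-p/d})$ by Theorem~\ref{Apdx_theorem_evaluating_nearestneibordistsq_upper}, the factors $e^{\pm L\rho}$ are bounded and tend to $1$; after raising to the $p$-th power, integrating against $P$, and multiplying by $N^{1/d}$, their deviations from $1$ contribute only quantities of order $N^{1/d}\cdot o(N^{-p/d})\to 0$ — this is the $O_p(N^{-1/(2d)})$ remainder that appears in Equation~(\ref{Eq_DerivationofUpperandLowerBounds_2}).

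\textbf{Conclusion.} For the upper bound, the reduction together with the upper estimate above gives $\|r-\phi\|_{L_p(P)}\le L\{E_P[r^p\rho^p]\}^{1/p}(1+o(1))+K\{E_P[\rho^p]\}^{1/p}$; multiplying by $N^{1/d}$, taking $\varlimsup_{N\to\infty}$, applying Corollary~\ref{Apdx_corollary_evaluating_nearestneibordistsq_upper} to the first term (valid for $1\le p\le d/2$, its proof using H\"older's inequality and Theorem~\ref{Apdx_theorem_evaluating_nearestneibordistsq_upper} with exponent $2p$) and Corollary~\ref{Apdx_corollary_evaluating_nearestneibordistsq_upper_a} to the second, and using subadditivity of $\varlimsup$, yields Equation~(\ref{Eq_Appendix_theorem_sample_requirement_t1_1}). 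For the lower bound I take $E_{\hat{\mathbf{X}}_{P[N]}}$ of the reverse-triangle estimate: the subtracted term is at most $K\{E_{\hat{\mathbf{X}}_{P[N]}}E_P[\rho^p]\}^{1/p}\le K\,\mathrm{diag}(\Omega)(N+1)^{-1/d}$ by Jensen's inequality and Theorem~\ref{Apdx_theorem_evaluating_nearestneibordistsq_upper}, so its $N^{1/d}$-multiple tends to $K\,\mathrm{diag}(\Omega)$; the main term, after discarding the $e^{-pL\rho}$-deficit (of relative order $\rho$, hence negligible as above), is bounded below by Theorem~\ref{Apdx_lower_bound_eq_p} (whose hypothesis $E_P[r^p]<\infty$ is L2), giving $\varliminf_{N\to\infty}N^{1/d}E_{\hat{\mathbf{X}}_{P[N]}}[\|r-\phi\|_{L_p(P)}]\ge\tfrac{1}{L}e^{-1}\{E_P[r^p]\}^{1/p}-K\,\mathrm{diag}(\Omega)$; combined with $\{E_P[r^p]\}^{1/p}=\{E_Q[r^{p-1}]\}^{1/p}\ge E_Q[r^{(p-1)/p}]\ge e^{\frac{p-1}{p}E_Q[\log r]}=e^{\frac{p-1}{p}KL(Q||P)}$ (change of measure, then Jensen twice), this establishes Equations~(\ref{Eq_Appendix_theorem_sample_requirement_t1_2}) and~(\ref{Eq_Appendix_theorem_sample_requirement_t1_3}), the $e^{-1}$ factor becoming the ``$-1$'' of the exponent.

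\textbf{Where the difficulty lies.} The triangle inequalities and the calls to the nearest-neighbour estimates are routine given the cited results. The two delicate points are: (i) checking that the multiplicative corrections $e^{\pm L\rho}$ coming from the $e^{-T^*}$ parametrization, and the gap between $e^{-pL\rho}$ and $1$, are of strictly lower order than $N^{-p/d}$ once normalized by $N^{1/d}$ — which holds because $\rho=O_p(N^{-1/d})=o(1)$, so each such correction carries a spare factor $\rho$; and (ii) transferring Theorem~\ref{Apdx_lower_bound_eq_p}, phrased with the exponent $1/p$ outside the joint expectation over $(\mathbf{x},\hat{\mathbf{X}}_{P[N]})$, to the stated form in which $E_{\hat{\mathbf{X}}_{P[N]}}$ sits outside the $L_p(P)$-norm; since Jensen moves the $p$-th root the unfavourable way for a lower bound, this step needs the conditional error $E_P[\,\cdot\mid\hat{\mathbf{X}}_{P[N]}]$ to concentrate around its mean to leading order, so that the two orderings of expectation and $p$-th root agree asymptotically. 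I expect (ii) to be the main obstacle.
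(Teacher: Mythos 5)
Your proposal tracks the paper's proof closely: reduce to nearest-neighbour quantities via Proposition~\ref{Apdx_proposition_loss_optimal_fdiv} and the identity $\phi(\mathbf{X}^{(1)}_{P[N]}(\mathbf{x}))=dQ/dP(\mathbf{X}^{(1)}_{P[N]}(\mathbf{x}))$, split by the triangle inequality in $L_p(\Omega,P)$, dispose of the estimator term with $K$-Lipschitz continuity, and control the density-ratio increment through the Lipschitz bound on $T^*=-\log(dQ/dP)$. Your only stylistic departure is replacing the paper's Lagrange-remainder Taylor expansion of $e^{-t}$ with the elementary sandwich $t\le e^{t}-1\le te^{t}$ together with the factor $e^{-L\rho}$ to pass between $r(\mathbf{x})$ and $r(\mathbf{X}^{(1)}(\mathbf{x}))$; this yields the same leading term and the same $O(r\,\rho^{2})$ correction, which the paper disposes of through Corollary~\ref{Apdx_corollary_evaluating_nearestneibordistsq_upper} and the H\"older-plus-nearest-neighbour estimate. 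So for the upper bound, and for the reduction step of the lower bound, your argument is essentially the paper's.

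The difficulty you flag in your concern (ii) is real, and it is also unresolved in the paper's own proof. Theorem~\ref{Apdx_lower_bound_eq_p} is established in the form
\[
\varliminf_{N\to\infty} N^{1/d}\left\{E_{\hat{\mathbf{X}}_{P[N]}}\Big[E_P\big[r(\mathbf{X}^{(1)})^{p}\rho^{p}\big]\Big]\right\}^{1/p}\;\ge\; e^{-1}\left\{E_P[r^{p}]\right\}^{1/p},
\]
whereas Equation~(\ref{Eq_Appendix_theorem_sample_requirement_t1_2}) requires a lower bound on
$\varliminf_{N\to\infty} N^{1/d}\,E_{\hat{\mathbf{X}}_{P[N]}}\big[\{E_P[r(\mathbf{X}^{(1)})^{p}\rho^{p}]\}^{1/p}\big]$.
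For $p>1$, Jensen gives $E_{\hat{\mathbf{X}}}\big[\{E_P[\cdot]\}^{1/p}\big]\le \{E_{\hat{\mathbf{X}}}E_P[\cdot]\}^{1/p}$, i.e., the inequality points the wrong way for deducing the latter from the former. In the paper's display corresponding to Equation~(\ref{Eq_proofofTheoremtheorem_sample_requirement_l2dd}) this passage is asserted without a supporting argument, so the step is missing there too. You correctly identify that closing it would require showing the conditional $L_p(P)$-error concentrates around its mean (or restating the bound with the $1/p$-th power outside the joint expectation), but you do not carry this out. Consequently your proposal is incomplete on precisely the point where the paper's own proof is silent; the rest of your argument would go through once that step is supplied.
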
 
\begin{proof}[proof of Theorem \ref{Apdx_theorem_sample_requirement}] 
    First, we list the equations used in this proof. 
    \begin{enumerate} 
      \item Using the second-order Taylor expansion of $e^{-t}$, we have 
      \begin{align} e^{-t} &= 1 - t + \frac{1}{2} \cdot e^{-c(t)} \cdot t^2, \quad \text{where } 0 \le |c(t)| \le |t|. \label{Eq_proofofTheoremtheorem_taylor} 
      \end{align} 

    \item From Equation (\ref{Eq_proofofTheoremtheorem_taylor}), it follows that 
    \begin{align} 
        &\left| \frac{dQ}{dP}(\mathbf{y}) - \frac{dQ}{dP}(\mathbf{x}) \right| \nonumber\\ 
        &= e^{-T^*(\mathbf{y})} \cdot \left| 1 - e^{T^*(\mathbf{y}) - T^*(\mathbf{x})} \right| 
        \allowdisplaybreaks\nonumber\\ 
        &= e^{-T^*(\mathbf{y})} \left\{ \left( T^*(\mathbf{y}) - T^*(\mathbf{x}) \right) + \frac{1}{2} \cdot e^{C(\mathbf{y}, \mathbf{x},T^*)} \cdot \left( T^*(\mathbf{y}) - T^*(\mathbf{x}) \right)^2 \right\} 
        \allowdisplaybreaks\nonumber\\ 
        &= \frac{dQ}{dP}(\mathbf{y}) \left\{ \left( T^*(\mathbf{y}) - T^*(\mathbf{x}) \right) + \frac{1}{2} \cdot e^{C(\mathbf{y}, \mathbf{x},T^*)} \cdot \left( T^*(\mathbf{y}) - T^*(\mathbf{x}) \right)^2 \right\}, 
        \allowdisplaybreaks\nonumber\\ 
        & \qquad \qquad \text{where } 0 \le |C(\mathbf{y}, \mathbf{x},T^*)| \le |T^*(\mathbf{y}) - T^*(\mathbf{x})|. 
        \label{Eq_proofofTheoremtheorem_sample_requirement_1} 
    \end{align} 
     \item 
        From Corollary \ref{Apdx_corollary_evaluating_nearestneibordistsq_upper_a}, for $0 \le p \le d/2$, 
        \begin{align} 
            \varlimsup_{N \rightarrow \infty}  N^{1/d}  \cdot \left\{ E_P \bigg\|  \mathbf{X}_{\mu[N]}^{(1)}(\mathbf{x}) - \mathbf{x}  \bigg\|_{\infty}^{ p } \right\}^{1/ p } & \le \mathrm{diam}(\Omega). \allowdisplaybreaks\nonumber\\ 
            \label{Eq_proofofTheoremtheorem_sample_requirement_commn_2} 
        \end{align} 

        \item From Corollary \ref{Apdx_corollary_evaluating_nearestneibordistsq_upper}, for $0 \le p \le d/2$, 
        \begin{align} 
           \lefteqn{ \varlimsup_{N \rightarrow \infty}  N^{1/d}  \cdot \left\{ E_P\left[ \, 
            \left\{\frac{dQ}{dP}\left( \mathbf{x} \right)\right\}^{p} \cdot 
            \Big\| \mathbf{X}_{P[N]}^{(1)}(\mathbf{x}) - \mathbf{x}\Big\|^p_{\infty} 
            \right] \right\}^{1/p}} \allowdisplaybreaks\nonumber\\ 
        & \le 
          \mathrm{diam}(\Omega) \cdot \left\{ E_{P} \left[ \, 
          \left\{\frac{dQ}{dP}\left(\mathbf{x} \right)\right\}^{2\cdot p} 
          \right] 
          \right\}^{1/(2\cdot p)} 
            \label{Eq_proofofTheoremtheorem_sample_requirement_commn_2X} 
        \end{align} 

         \item From Equation (\ref{Eq_proofofTheoremtheorem_sample_requirement_commn_2X}), 
         for $0 \le p \le d/2$, 
        \begin{align} 
           & \varlimsup_{N \rightarrow \infty}  N^{1/d}  \cdot \left\{ E_P\left[ \, 
                \left\{\frac{dQ}{dP}\left( \mathbf{x} \right)\right\}^{p} \cdot 
                \Big\| \mathbf{X}_{P[N]}^{(1)}(\mathbf{x}) - \mathbf{x}\Big\|^{2\cdot p}_{\infty} 
                \right] \right\}^{1/p}\allowdisplaybreaks\nonumber\\ 
            & \le 
            \varlimsup_{N \rightarrow \infty}  N^{1/d}  \cdot \left\{ E_P\left[ \, 
            \left\{\frac{dQ}{dP}\left( \mathbf{x} \right)\right\}^{2\cdot p} 
            \right] 
            \right\}^{1/(2\cdot p)} 
             \left\{ E_P\left[ \, 
            \Big\| \mathbf{X}_{P[N]}^{(1)}(\mathbf{x}) - \mathbf{x}\Big\|^{4\cdot p}_{\infty} 
            \right] \right\}^{1/(2\cdot p)} \allowdisplaybreaks\nonumber\\ 
            & \le 
             \left\{ E_{P} \left[ \, 
            \left\{\frac{dQ}{dP}\left(\mathbf{x} \right)\right\}^{2\cdot p} 
            \right] 
            \right\}^{1/(2\cdot p)} 
            \cdot \mathrm{diam}(\Omega) \cdot  \varlimsup_{N \rightarrow \infty} 
             \frac{N^{1/d} }{N^{2/d}} \allowdisplaybreaks\nonumber\\ 
            &= 0. 
            \label{Eq_proofofTheoremtheorem_sample_requirement_commn_2XX_zero} 
        \end{align} 

        \item 
         From Theorem \ref{Apdx_lower_bound_eq_p}, for $0 \le p \le d$, 
        \begin{align} 
            &\varliminf_{N \rightarrow \infty} N^{1/d} \cdot 
                \left\{ 
                E_{\hat{\mathbf{X}}_{P[N]}}\left[  E_{P} \left[ \, 
                \left\{\frac{dQ}{dP}\left( \mathbf{X}_{P[N]}^{(1)}(\mathbf{x}) \right)\right\}^{p} \cdot 
                \Big\| \mathbf{X}_{P[N]}^{(1)}(\mathbf{x}) - \mathbf{x}\Big\|^p_{\infty} 
                \right]\right]\,   \right\}^{1/p}   \allowdisplaybreaks\nonumber\\ 
            &\geq  e^{-1}\cdot 
            \left\{ 
            E_{P}  \left[  \left\{\frac{dQ}{dP}(\mathbf{x})\right\}^{p} \, \right]  \right\}^{1/p},\ 
             \label{Eq_proofofTheoremtheorem_sample_requirement_lower_2X} 
        \end{align} 
        where $E_{\hat{\mathbf{X}}_{P[N]}}[\cdot]$ denotes the expectation on each variable in 
        $\hat{\mathbf{X}}_{P[N]}=\{\mathbf{X}_{P}^1, \allowbreak \mathbf{X}_{P}^2,\allowbreak \ldots, \allowbreak \mathbf{X}_{P}^N\}$. 

       \item Let $\hat{\mathbf{X}}_{\mu[N]}$ denote the set of random variables defined in Proposition \ref{Apdx_proposition_loss_optimal_fdiv}. 
        From Proposition \ref{Apdx_proposition_loss_optimal_fdiv}, 
        \begin{equation} 
            \phi \in \widetilde{\mathcal{F}}_{ K\text{-}\mathit{Lip}}^{(N)}  \iff \phi(\mathbf{X}_{\mu}^i) = \frac{dQ}{dP}(\mathbf{X}_{\mu}^i), \quad \text{for} \quad 1 \leq  \forall i \leq N. 
            \label{Eq_proofofTheoremtheorem_sample_requirement_33} 
        \end{equation} 

    \end{enumerate} 

   Now, we prove Equation (\ref{Eq_Appendix_theorem_sample_requirement_t1_1}). Let $\phi(\mathbf{x})$ be a member of $\widetilde{\mathcal{F}}_{ K\text{-}\mathit{Lip}}^{(N)}$. 

   By applying the triangle inequality in the $L_p$ norm, we have 
    \begin{align} 
        & \left\{E_P \left| \frac{dQ}{dP}(\mathbf{x}) -  \phi(\mathbf{x}) \right|^{ p } \right\}^{1/ p } \allowdisplaybreaks\nonumber\\ 
        & \le     \left\{E_P \left| \frac{dQ}{dP}(\mathbf{x}) -  \frac{dQ}{dP} \big(\mathbf{X}_{\mu[N]}^{(1)}(\mathbf{x})\big) \right|^{ p } \right\}^{1/ p } 
        +  \left\{E_P \left| \frac{dQ}{dP}\big(\mathbf{X}_{\mu[N]}^{(1)}(\mathbf{x})\big) - \phi(\mathbf{x}) \right|^{ p } \right\}^{1/ p }. 
        \allowdisplaybreaks\nonumber\\ 
        \label{Eq_proofofTheoremtheorem_sample_requirement_7} 
    \end{align}

    From the $K$-Lipschitz continuity of $\phi$ and Equation (\ref{Eq_proofofTheoremtheorem_sample_requirement_33}), 
    \begin{align} 
        \left\{E_P \left| \frac{dQ}{dP}\big(\mathbf{X}_{\mu[N]}^{(1)}(\mathbf{x})\big) - \phi(\mathbf{x}) \right|^{ p } \right\}^{1/ p } 
        &= 
        \left\{E_P \bigg| \phi\big(\mathbf{X}_{\mu[N]}^{(1)}(\mathbf{x})\big) - \phi(\mathbf{x})  \bigg|^{ p } \right\}^{1/ p } \quad \text{(by Equation \ref{Eq_proofofTheoremtheorem_sample_requirement_33})} \nonumber \\ 
        & \le 
        K \cdot  \left\{ E_P \bigg\|\mathbf{X}_{\mu[N]}^{(1)}(\mathbf{x}) -  \mathbf{x} \bigg\|_{\infty}^{p} \right\}^{1/ p }. 
        \label{Eq_proofofTheoremtheorem_sample_requirement_l0} 
    \end{align} 

    From Equations (\ref{Eq_proofofTheoremtheorem_sample_requirement_commn_2}) and  (\ref{Eq_proofofTheoremtheorem_sample_requirement_l0}), 
    \begin{align} 
     \varlimsup_{N \rightarrow \infty} \left\{E_P \left| \frac{dQ}{dP}\big(\mathbf{X}_{\mu[N]}^{(1)}(\mathbf{x})\big) - \phi(\mathbf{x}) \right|^{ p } \right\}^{1/ p }  \le 
        K \cdot \mathrm{diam}(\Omega). 
        \label{Eq_proofofTheoremtheorem_sample_requirement_l1} 
    \end{align} 

     Next, by substituting $\mathbf{y}=\mathbf{X}_{\mu[N]}^{(1)}(\mathbf{x})$ and multiplying by $\frac{dP}{d\mu}(\mathbf{x})$ in Equation (\ref{Eq_proofofTheoremtheorem_sample_requirement_1}), and using the $L$-Lipschitz continuity of $T^*$, we have 
     \begin{align} 
         \lefteqn{\left\{ E_P \left|  \frac{dQ}{dP}(\mathbf{X}_{\mu[N]}^{(1)}(\mathbf{x})) - \frac{dQ}{dP}(\mathbf{x}) \right|^{ p } \right\}^{1/ p }} \allowdisplaybreaks\nonumber\\ 
         &= 
         \left[ E_P \left| \frac{dQ}{dP}\big(\mathbf{x}\big) \right. \right. \times \bigg\{ \Big( T^*(\mathbf{X}_{\mu[N]}^{(1)}(\mathbf{x})) - T^*(\mathbf{x}) \Big) \allowdisplaybreaks\nonumber\\ 
         & \qquad \qquad \qquad \qquad \quad + \left. \left. \left. \frac{1}{2} \cdot e^{C_1(\mathbf{x})} \cdot \Big( T^*(\mathbf{X}_{\mu[N]}^{(1)}(\mathbf{x})) - T^*(\mathbf{x}) \Big)^2 \right\} \right|^{ p } \right]^{1/ p }, \allowdisplaybreaks\nonumber\\ 
         & 
         \qquad \qquad \qquad \qquad \qquad 
         \text{where $0 \le C_1(\mathbf{x}) \le \Big| T^*\big(\mathbf{X}_{\mu[N]}^{(1)}(\mathbf{x})\big) - T^*(\mathbf{x}) \Big|$.} \allowdisplaybreaks\nonumber\\ 
         &= 
         \left\{ E_P \left| \frac{dQ}{dP}\big(\mathbf{x}\big) \times \bigg\{ \Big(T^*(\mathbf{X}_{\mu[N]}^{(1)}(\mathbf{x})) - T^*(\mathbf{x}) \Big) \bigg\} \right. \right. 
         \allowdisplaybreaks\nonumber\\ 
         & \qquad \qquad \qquad \qquad 
         + \left. \left. \frac{dQ}{dP}\big(\mathbf{x}\big) \times \bigg\{ \frac{1}{2} \cdot e^{C_1(\mathbf{x})} \cdot \Big( T^*(\mathbf{X}_{\mu[N]}^{(1)}(\mathbf{x})) - T^*(\mathbf{x}) \Big)^2 \bigg\} \right|^{ p } \right\}^{1/ p } \allowdisplaybreaks\nonumber\\ 
         & \le 
         \left\{ E_P \left[ \left\{ \frac{dQ}{dP}\big(\mathbf{x}\big) \right\}^{ p } \cdot \Big| T^*(\mathbf{X}_{\mu[N]}^{(1)}(\mathbf{x})) - T^*(\mathbf{x}) \Big|^{ p } \right] \right\}^{1/ p } \allowdisplaybreaks\nonumber\\ 
         & \qquad 
         + \ \left\{ E_P \left[ \left\{ \frac{dQ}{dP}\big(\mathbf{x}\big) \right\}^{ p } \cdot \frac{1}{2^ p } \cdot e^{ p \cdot C_1(\mathbf{x})} \cdot \Big| T^*\big(\mathbf{X}_{\mu[N]}^{(1)}(\mathbf{x})\big) - T^*(\mathbf{x}) \Big|^{2\cdot p } \right] \right\}^{1/ p } \allowdisplaybreaks\nonumber\\ 
        & \le 
        \left\{ E_P \left[ \left\{ \frac{dQ}{dP}\big(\mathbf{x}\big)\right\}^{ p } \cdot \Big| T^*(\mathbf{X}_{\mu[N]}^{(1)}(\mathbf{x})) - T^*(\mathbf{x}) \Big|^{ p } \right] \right\}^{1/ p } \allowdisplaybreaks\nonumber\\ 
        & \qquad 
        + \ \left\{E_P \left[ \left\{ \frac{dQ}{dP}\big(\mathbf{x})\big) \right\}^{ p } \right. \right. \allowdisplaybreaks\nonumber\\ 
        & \qquad \qquad \left. \left. \times \frac{1}{2^ p } \cdot e^{ p \cdot \big| T^*(\mathbf{X}_{\mu[N]}^{(1)}(\mathbf{x})) - T^*(\mathbf{x})\big|} \cdot \Big| T^*\big(\mathbf{X}_{\mu[N]}^{(1)}(\mathbf{x})\big) - T^*(\mathbf{x}) \Big|^{2\cdot p } \right]\right\}^{1/ p } \allowdisplaybreaks\nonumber\\ 
        & \qquad \qquad \qquad \qquad \qquad \qquad \qquad 
        \text{$\left( \therefore C_1(\mathbf{x}) \le \Big| T^*\big(\mathbf{X}_{\mu[N]}^{(1)}(\mathbf{x})\big) - T^*(\mathbf{x}) \Big| \right)$} \allowdisplaybreaks\nonumber\\ 
        & \le 
        \left\{ E_P \left[ \left\{ \frac{dQ}{dP}\big(\mathbf{x})\big)\right\}^{ p } \cdot L^{ p } \cdot \Big\| \mathbf{X}_{\mu[N]}^{(1)}(\mathbf{x}) - \mathbf{x} \Big\|_{\infty}^{ p } \right]\right\}^{1/ p } \allowdisplaybreaks\nonumber\\ 
        & \qquad 
        + \ \left\{E_P \left[ \left\{ \frac{dQ}{dP}\big(\mathbf{x})\big) \right\}^{ p } \right. \right. \allowdisplaybreaks\nonumber\\ 
        & \qquad \qquad \qquad \left. \left. \times \frac{1}{2^ p } \cdot e^{ p \cdot L \cdot \big\| \mathbf{X}_{\mu[N]}^{(1)}(\mathbf{x}) - \mathbf{x}\big\|_{\infty}} \cdot L^{ p } \cdot \Big\| \mathbf{X}_{\mu[N]}^{(1)}(\mathbf{x}) - \mathbf{x} \Big\|_{\infty}^{2\cdot p } \right]\right\}^{1/ p } \allowdisplaybreaks\nonumber\\ 
        & \le 
        \left\{ E_P \left[ \left\{ \frac{dQ}{dP}\big(\mathbf{x})\big)\right\}^{ p } \cdot L^{ p } \cdot \Big\| \mathbf{X}_{\mu[N]}^{(1)}(\mathbf{x}) - \mathbf{x} \Big\|_{\infty}^{ p } \right]\right\}^{1/ p } \allowdisplaybreaks\nonumber\\ 
        & \qquad 
        + \ \left\{E_P \left[ \left\{ \frac{dQ}{dP}\big(\mathbf{x}\big) \right\}^{ p } \right. \right. \allowdisplaybreaks\nonumber\\ 
        & \qquad \qquad \qquad \left. \left. \times \frac{1}{2^ p } \cdot e^{ p \cdot L \cdot \mathrm{diam}(\Omega)} \cdot L^{ p } \cdot \Big\| \mathbf{X}_{\mu[N]}^{(1)}(\mathbf{x}) - \mathbf{x} \Big\|_{\infty}^{2\cdot p } \right]\right\}^{1/ p } \allowdisplaybreaks\nonumber\\ 
        &= 
        L \cdot \left\{ E_P \left[ \left\{ \frac{dQ}{dP}\big(\mathbf{x}\big)\right\}^{ p } \cdot \Big\| \mathbf{X}_{\mu[N]}^{(1)}(\mathbf{x}) - \mathbf{x} \Big\|_{\infty}^{ p } \right]\right\}^{1/ p } \allowdisplaybreaks\nonumber\\ 
        & \qquad 
        \quad + \ \frac{1}{2} \cdot e^{ L \cdot \mathrm{diam}(\Omega)} \cdot  \left\{E_P \left[ \left\{ \frac{dQ}{dP}\big(\mathbf{x}\big) \right\}^{ p } \cdot \Big\| \mathbf{X}_{\mu[N]}^{(1)}(\mathbf{x}) - \mathbf{x} \Big\|_{\infty}^{2\cdot p } \right]\right\}^{1/ p } \allowdisplaybreaks\nonumber\\ 
        \label{Eq_proofofTheoremtheorem_sample_requirement_09231} 
    \end{align} 

    From Equations (\ref{Eq_proofofTheoremtheorem_sample_requirement_commn_2X}), 
    (\ref{Eq_proofofTheoremtheorem_sample_requirement_commn_2XX_zero}) and (\ref{Eq_proofofTheoremtheorem_sample_requirement_09231}), we have 
    \begin{align} 
        &\varlimsup_{N \rightarrow \infty} N^{ 1 /d} \cdot 
        \left\{ E_P \bigg|\frac{dQ}{dP}(\mathbf{x}) - \phi\big(\mathbf{X}_{\mu[N]}^{(1)}(\mathbf{x})\big)\bigg|^{ p } \right\}^{1/ p }\nonumber\\ 
        &\le 
        \varlimsup_{N \rightarrow \infty} \ N^{ 1 /d} \cdot L \cdot \left\{ E_P \left[ \left\{ \frac{dQ}{dP}\big(\mathbf{x}\big)\right\}^{ p } \cdot \Big\| \mathbf{X}_{\mu[N]}^{(1)}(\mathbf{x}) - \mathbf{x} \Big\|_{\infty}^{ p } \right]\right\}^{1/ p } \nonumber\\ 
        &\quad + \varlimsup_{N  \  \rightarrow \infty} N^{ 1 /d} \cdot \frac{1}{2} \cdot e^{ 
             L  \cdot \mathrm{diam}(\Omega)} \cdot\left\{E_P \left[ \left\{ \frac{dQ}{dP}\big(\mathbf{x}\big) \right\}^{ p } \cdot \Big\| \mathbf{X}_{\mu[N]}^{(1)}(\mathbf{x}) - \mathbf{x} \Big\|_{\infty}^{2\cdot p } \right]\right\}^{1/p }\nonumber\\ 
        &=  L \cdot  \mathrm{diam}(\Omega) \cdot  \left\{   E_{P} \left[ \left\{\frac{dQ}{dP}( \mathbf{x}) \right\}^{2 \cdot p } \right] \right\}^{1/(2\cdot p) }. 
        \label{Eq_proofofTheoremtheorem_sample_requirement_l2d} 
    \end{align} 

    Finally, from Equations (\ref{Eq_proofofTheoremtheorem_sample_requirement_l1}), (\ref{Eq_proofofTheoremtheorem_sample_requirement_7}), and (\ref{Eq_proofofTheoremtheorem_sample_requirement_l2d}), we have 
    \begin{align} 
        &\lim_{N \rightarrow \infty} N^{1/d} \cdot \left\{ E_P \left| \frac{dQ}{dP}(\mathbf{x}) - \phi(\mathbf{x}) \right|^{ p } \right\}^{1/ p } \nonumber\\ 
        &\le L \cdot \mathrm{diam}(\Omega) \cdot \left\{ E_P \left[ \left\{\frac{dQ}{dP}(\mathbf{x})\right\}^{ 2\cdot p } \right] \right\}^{1/(2\cdot p) } + \mathrm{diam}(\Omega) \cdot K. 
    \end{align} 
    Thus, it is shown that Equation (\ref{Eq_Appendix_theorem_sample_requirement_t1_1}) holds. 

    Next, we prove Equation (\ref{Eq_Appendix_theorem_sample_requirement_t1_2}). 
    By applying the triangle inequality in the $L_p$ norm, we have 
    \begin{align} 
       & \left\{E_P \left| \frac{dQ}{dP}(\mathbf{x}) -  \phi(\mathbf{x}) \right|^{ p } \right\}^{1/ p } \allowdisplaybreaks\nonumber\\ 
       & \ge     \left\{E_P \left| \frac{dQ}{dP}(\mathbf{x}) -  \frac{dQ}{dP} \big(\mathbf{X}_{\mu[N]}^{(1)}(\mathbf{x})\big) \right|^{ p } \right\}^{1/ p } 
       -  \left\{E_P \left| \frac{dQ}{dP}\big(\mathbf{X}_{\mu[N]}^{(1)}(\mathbf{x})\big) - \phi(\mathbf{x}) \right|^{ p } \right\}^{1/ p }. 
       \allowdisplaybreaks\nonumber\\ 
       \label{Eq_proofofTheoremtheorem_sample_requirement_7_l} 
    \end{align} 

    By substituting $\mathbf{y}=\mathbf{X}_{\mu[N]}^{(1)}(\mathbf{x})$ and multiplying by $\frac{dP}{d\mu}(\mathbf{x})$ 
    in Equation (\ref{Eq_proofofTheoremtheorem_sample_requirement_1}) along with the $L$-bi-Lipschitz continuity of $T^*$, we have 
    \begin{align} 
        \lefteqn{\left\{ E_P \left|  \frac{dQ}{dP}(\mathbf{X}_{\mu[N]}^{(1)}(\mathbf{x})) - \frac{dQ}{dP}(\mathbf{x}) \right|^{ p } \right\}^{1/ p }} \allowdisplaybreaks\nonumber\\ 
        &= 
        \left\{ E_P \left| \frac{dQ}{dP}\big(\mathbf{X}_{\mu[N]}^{(1)}(\mathbf{x})\big) \right. \right.  \allowdisplaybreaks\nonumber\\ 
        & \quad \times  \left\{   \Big(  T^*(\mathbf{X}_{\mu[N]}^{(1)}(\mathbf{x}))  -  T^*(\mathbf{x}) \Big) \right. \allowdisplaybreaks\nonumber\\ 
        & \qquad \qquad  + \left. \left. \left.  \frac{1}{2} \cdot e^{C_1(\mathbf{x})} \cdot 
        \Big(  T^*(\mathbf{X}_{\mu[N]}^{(1)}(\mathbf{x}))  -  T^*(\mathbf{x}) \Big)^2  \right\} \right|^{ p }  \right\}^{1/ p } \allowdisplaybreaks\nonumber\\ 
        &\qquad \qquad \qquad \qquad \text{where \ $0 \le  C_1(\mathbf{x}) \le \Big|  T^*\big(\mathbf{X}_{\mu[N]}^{(1)}(\mathbf{x})\big)  -  T^*(\mathbf{x}) \Big|$}\allowdisplaybreaks\nonumber\\ 
        &\ge 
        \left\{  E_P \left[ \left\{ \frac{dQ}{dP}\big(\mathbf{X}_{\mu[N]}^{(1)}(\mathbf{x})\big)\right\}^{ p }  \cdot \Big| T^*(\mathbf{X}_{\mu[N]}^{(1)}(\mathbf{x}))  -  T^*(\mathbf{x}) \Big|^{ p } \right] \right\}^{1/ p }  \allowdisplaybreaks\nonumber\\ 
        &\quad - \left\{E_P \left[ \left\{ \frac{dQ}{dP}\big(\mathbf{X}_{\mu[N]}^{(1)}(\mathbf{x})\big) \right\}^{ p }  \right. \right.  \allowdisplaybreaks\nonumber\\ 
        & \qquad \qquad \left. \left.   \times   \frac{1}{2^ p } 
        \cdot e^{ p  \cdot \big|  T^*(\mathbf{X}_{\mu[N]}^{(1)}(\mathbf{x}))  -  T^*(\mathbf{x})\big|} \cdot \Big|  T^*\big(\mathbf{X}_{\mu[N]}^{(1)}(\mathbf{x})\big)  -  T^*(\mathbf{x}) \Big|^{2\cdot p } \right]\right\}^{1/ p } \allowdisplaybreaks\nonumber\\ 
        &\ge 
        \left\{ E_P \left[ \left\{ \frac{dQ}{dP}\big(\mathbf{X}_{\mu[N]}^{(1)}(\mathbf{x})\big)\right\}^{ p }  \cdot \frac{1}{L^{ p }} \cdot \Big\| \mathbf{X}_{\mu[N]}^{(1)}(\mathbf{x})  -  \mathbf{x} \Big\|_{\infty}^{ p } \right]\right\}^{1/ p }  \allowdisplaybreaks\nonumber\\ 
        &\quad - \left\{E_P \left[ \left\{ \frac{dQ}{dP}\big(\mathbf{X}_{\mu[N]}^{(1)}(\mathbf{x})\big) \right\}^{ p }  \right. \right.  \allowdisplaybreaks\nonumber\\ 
        & \qquad \qquad \qquad  \left. \left.   \times   \frac{1}{2^ p } 
        \cdot e^{ p  \cdot L \cdot \big\| \mathbf{X}_{\mu[N]}^{(1)}(\mathbf{x})  -  \mathbf{x}\big\|_{\infty}} \cdot L^{ p } \cdot \Big\| \mathbf{X}_{\mu[N]}^{(1)}(\mathbf{x})  -  \mathbf{x} \Big\|_{\infty}^{2\cdot p } \right]\right\}^{1/ p } \allowdisplaybreaks\nonumber\\ 
        &\ge 
        \left\{ E_P \left[ \left\{ \frac{dQ}{dP}\big(\mathbf{X}_{\mu[N]}^{(1)}(\mathbf{x})\big)\right\}^{ p }  \cdot \frac{1}{L^{ p }} \cdot \Big\| \mathbf{X}_{\mu[N]}^{(1)}(\mathbf{x})  -  \mathbf{x} \Big\|_{\infty}^{ p } \right]\right\}^{1/ p }  \allowdisplaybreaks\nonumber\\ 
        &\quad - \left\{E_P \left[ \left\{ \frac{dQ}{dP}\big(\mathbf{X}_{\mu[N]}^{(1)}(\mathbf{x})\big) \right\}^{ p }  \right. \right.  \allowdisplaybreaks\nonumber\\ 
        & \qquad \qquad \qquad  \left. \left.   \times   \frac{1}{2^ p } 
        \cdot e^{ p  \cdot L \cdot  \mathrm{diam}(\Omega)}  \cdot L^{ p } \cdot \Big\| \mathbf{X}_{\mu[N]}^{(1)}(\mathbf{x})  -  \mathbf{x} \Big\|_{\infty}^{2\cdot p } \right]\right\}^{1/ p } \allowdisplaybreaks\nonumber\\ 
        &= 
        \frac{1}{L} \cdot \left\{ E_P \left[ \left\{ \frac{dQ}{dP}\big(\mathbf{X}_{\mu[N]}^{(1)}(\mathbf{x})\big)\right\}^{ p } \cdot \Big\| \mathbf{X}_{\mu[N]}^{(1)}(\mathbf{x})  -  \mathbf{x} \Big\|_{\infty}^{ p } \right]\right\}^{1/ p }  \allowdisplaybreaks\nonumber\\ 
        &\quad - \frac{1}{2} 
        \cdot e^{ \mathrm{diam}(\Omega)} \cdot L \cdot \left\{E_P \left[ \left\{ \frac{dQ}{dP}\big(\mathbf{X}_{\mu[N]}^{(1)}(\mathbf{x})\big) \right\}^{ p } \cdot \Big\| \mathbf{X}_{\mu[N]}^{(1)}(\mathbf{x})  -  \mathbf{x} \Big\|_{\infty}^{2\cdot p } \right]\right\}^{1/ p } \allowdisplaybreaks 
        \label{Eq_proofofTheoremtheorem_sample_requirement_092312} 
    \end{align} 

     From Equations (\ref{Eq_proofofTheoremtheorem_sample_requirement_commn_2X}), 
    (\ref{Eq_proofofTheoremtheorem_sample_requirement_commn_2XX_zero}) and (\ref{Eq_proofofTheoremtheorem_sample_requirement_092312}), we have 
    \begin{align} 
        &\varliminf_{N \rightarrow \infty} N^{ 1 /d} \cdot \left\{ E_{\hat{\mathbf{X}}_{P[N]}}\left[ 
        \left( 
        E_P \bigg|\frac{dQ}{dP}(\mathbf{x}) - \phi\big(\mathbf{X}_{\mu[N]}^{(1)}(\mathbf{x})\big)\bigg|^{ p 
        } \  \right)^{1/ p } \right]  \right\} \nonumber\\ 
        &\ge 
        \varliminf_{N \rightarrow \infty}   N^{ 1 /d}\cdot \left\{ E_{\hat{\mathbf{X}}_{P[N]}}\left[ 
        \frac{1}{L^{ p }} \cdot \left( E_P \left[ \left\{ \frac{dQ}{dP}\big(\mathbf{X}_{\mu[N]}^{(1)}(\mathbf{x})\big)\right\}^{ p } \cdot \Big\| \mathbf{X}_{\mu[N]}^{(1)}(\mathbf{x})  -  \mathbf{x} \Big\|_{\infty}^{ p } \right]\right)^{1/ p } \right. \right.\allowdisplaybreaks\nonumber\\ 
        & \left. \left. \qquad \quad - \frac{1}{2} 
        \cdot e^{ \mathrm{diam}(\Omega)} \cdot L \cdot \left(E_P \left[ \left\{ \frac{dQ}{dP}\big(\mathbf{X}_{\mu[N]}^{(1)}(\mathbf{x})\big) \right\}^{ p } \cdot \Big\| \mathbf{X}_{\mu[N]}^{(1)}(\mathbf{x})  -  \mathbf{x} \Big\|_{\infty}^{2\cdot p } \right]\right)^{1/ p }  \right] \right\} \allowdisplaybreaks \nonumber\\ 
        &\ge 
        \varliminf_{N \rightarrow \infty} N^{ 1 /d} \cdot \left\{   E_{\hat{\mathbf{X}}_{P[N]}} \left[ 
        \frac{1}{L} \cdot \left( E_P \left[ \left\{ \frac{dQ}{dP}\big(\mathbf{X}_{\mu[N]}^{(1)}(\mathbf{x})\big)\right\}^{ p } \cdot \Big\| \mathbf{X}_{\mu[N]}^{(1)}(\mathbf{x})  -  \mathbf{x} \Big\|_{\infty}^{ p } \right]\right)^{1/ p } 
        \right] \right\} \allowdisplaybreaks\nonumber\\ 
        & \qquad \qquad - \varlimsup_{N \rightarrow \infty} N^{ 1 /d}  \cdot \Bigg\{ 
        E_{\hat{\mathbf{X}}_{P[N]}} 
        \Bigg[  \frac{1}{2} 
        \cdot e^{ \mathrm{diam}(\Omega)} 
        \nonumber\\ 
        & \qquad \qquad \qquad \qquad \qquad \left. \left. \times L \cdot \left(E_P \left[ \left\{ \frac{dQ}{dP}\big(\mathbf{X}_{\mu[N]}^{(1)}(\mathbf{x})\big) \right\}^{ p } \cdot \Big\| \mathbf{X}_{\mu[N]}^{(1)}(\mathbf{x})  -  \mathbf{x} \Big\|_{\infty}^{2\cdot p } \right] 
        \right)^{1/ p } \right]  \right\}\allowdisplaybreaks \nonumber\\ 
        &\ge 
        \varliminf_{N \rightarrow \infty} N^{ 1 /d} \cdot E_{\hat{\mathbf{X}}_{P[N]}} \left[ 
        \frac{1}{L} \cdot \left( E_P \left[ \left\{ \frac{dQ}{dP}\big(\mathbf{X}_{\mu[N]}^{(1)}(\mathbf{x})\big)\right\}^{ p } \cdot \Big\| \mathbf{X}_{\mu[N]}^{(1)}(\mathbf{x})  -  \mathbf{x} \Big\|_{\infty}^{ p } \right]\right)^{1/ p } 
        \right] \allowdisplaybreaks\nonumber\\ 
        & \qquad \qquad - E_{\hat{\mathbf{X}}_{P[N]}} 
        \Bigg[ \varlimsup_{N \rightarrow \infty} N^{ 1 /d}  \cdot  \Bigg\{ 
        \frac{1}{2} 
        \cdot e^{ \mathrm{diam}(\Omega)} 
        \nonumber\\ 
        & \qquad \qquad\qquad \qquad \qquad \left. \left. \times L \cdot \left(E_P \left[ \left\{ \frac{dQ}{dP}\big(\mathbf{X}_{\mu[N]}^{(1)}(\mathbf{x})\big) \right\}^{ p } \cdot \Big\| \mathbf{X}_{\mu[N]}^{(1)}(\mathbf{x})  -  \mathbf{x} \Big\|_{\infty}^{2\cdot p } \right] 
        \right)^{1/ p } \right\} \right] \allowdisplaybreaks \nonumber\\ 
        & = e^{-1}\cdot \frac{1}{L} \cdot 
        \left\{ 
        E_{P}  \left[  \left\{\frac{dQ}{dP}(\mathbf{x})\right\}^{p} \, \right]  \right\}^{1/p}. 
        \label{Eq_proofofTheoremtheorem_sample_requirement_l2dd} 
    \end{align} 

    Finally, from Equations (\ref{Eq_proofofTheoremtheorem_sample_requirement_l1}), (\ref{Eq_proofofTheoremtheorem_sample_requirement_7_l}), and (\ref{Eq_proofofTheoremtheorem_sample_requirement_l2dd}), we have 
    \begin{align} 
        &\varliminf_{N \rightarrow \infty} N^{ p /d} \cdot  E_{\hat{\mathbf{X}}_{P[N]}}\left[ 
        \left\{ E_P \left| \frac{dQ}{dP}(\mathbf{x}) - \phi(\mathbf{x}) \right|^{ p } \right\}^{1/ p } 
        \right] \nonumber\\ 
        &\ge   e^{-1}\cdot \frac{1}{L} \cdot 
        \left\{ 
        E_{P}  \left[  \left\{\frac{dQ}{dP}(\mathbf{x})\right\}^{p} \, \right]  \right\}^{1/p} 
        - \mathrm{diam}(\Omega) \cdot K. 
        \label{Eq_proofofTheoremtheorem_sample_requirement_lower_1} 
    \end{align} 
    Thus, it is shown that Equation (\ref{Eq_Appendix_theorem_sample_requirement_t1_2}) holds.

    Next, we prove Equation (\ref{Eq_Appendix_theorem_sample_requirement_t1_3}). 

    First, we have 

    \begin{align} 
        \left\{ E_{P} \left[ \left\{\frac{dQ}{dP}( \mathbf{x}) \right\}^{p}\right] \right\}^{1/p} 
        &= \left\{ E_{P} \left[ \frac{dQ}{dP}( \mathbf{x}) \cdot  \left\{\frac{dQ}{dP}( \mathbf{x}) \right\}^{p  - 1}\right] \right\}^{1/p} \allowdisplaybreaks \nonumber\\ 
        &= \left\{ E_{Q} \left[ \left\{\frac{dQ}{dP}( \mathbf{x}) \right\}^{p-1}\right] \right\}^{1/p}  \allowdisplaybreaks \nonumber\\ 
        &=  \left\{ E_{Q} \bigg[\, e^{(p-1) \cdot \log \frac{dQ}{dP}( \mathbf{x})} \, \bigg] \right\}^{1/p}. 
        \label{Eq_proofofTheoremtheorem_sample_requirement_lower_2_0} 
    \end{align} 
    From  Jensen's inequality, 
    \begin{align} 
      \left\{E_{Q} \bigg[\, e^{(p-1) \cdot \log \frac{dQ}{dP}( \mathbf{x})} \, \bigg] \right\}^{1/p}
       &\ge 
       \left\{e^{E_{Q} \big[\, (p-1)  \cdot \log \frac{dQ}{dP}( \mathbf{x}) \,\big]} \right\}^{1/p}\nonumber\\ 
       &=  \left\{e^{\, (p-1)  \cdot E_{Q} \big[ \log \frac{dQ}{dP}( \mathbf{x}) \,\big]} \right\}^{1/p}\nonumber\\ 
       &=  e^{\, \frac{p-1}{p}  \cdot E_{Q} \big[ \log \frac{dQ}{dP}( \mathbf{x}) \,\big]} \nonumber\\ 
       &=  e^{\frac{p-1}{p} \cdot KL(Q||P)}. 
       \label{Eq_proofofTheoremtheorem_sample_requirement_lower_2} 
    \end{align} 

    From Equations (\ref{Eq_proofofTheoremtheorem_sample_requirement_lower_1}), (\ref{Eq_proofofTheoremtheorem_sample_requirement_lower_2_0}) 
    and (\ref{Eq_proofofTheoremtheorem_sample_requirement_lower_2}), 
    \begin{align} 
        &\varliminf_{N \rightarrow \infty} N^{ p /d} \cdot  E_{\hat{\mathbf{X}}_{P[N]}}\left[ 
        \left\{ E_P \left| \frac{dQ}{dP}(\mathbf{x}) - \phi(\mathbf{x}) \right|^{ p } \right\}^{1/ p } 
        \right] \nonumber\\ 
        &\ge e^{-1}\cdot \frac{1}{L} \cdot 
        \left\{ 
        E_{P}  \left[  \left\{\frac{dQ}{dP}(\mathbf{x})\right\}^{p} \, \right]  \right\}^{1/p} \nonumber 
        - \mathrm{diam}(\Omega) \cdot K \nonumber\\ 
        &\ge \frac{1}{L} \cdot e^{\frac{p-1}{p} \cdot KL(Q||P)-1} 
        - \mathrm{diam}(\Omega) \cdot K. 
    \end{align}

    This completes the proof. 
\end{proof}

\begin{theorem}[Theorem \ref{main_theorem_sample_requirement_2} restated]\label{Apdx_theorem_sample_requirement_2} 
Assume the same assumptions and notations as in Theorem \ref{Apdx_theorem_sample_requirement}. 
Additionally,  define 
\begin{equation} 
    \mathcal{F}_{ K\text{-}\mathit{Lip}}^{(N)} =  \left\{ \phi \in \mathcal{F}_{ K\text{-}\mathit{Lip}} \ \Big| \ \exists \phi_* \in \widetilde{\mathcal{F}}_{ K\text{-}\mathit{Lip}}^{(N)} \ \text{such that} \ \phi = \phi_* + O_p\left(\frac{1}{\sqrt{N}}\right) \right\}. 
\end{equation} 
That is, $\mathcal{F}_{ K\text{-}\mathit{Lip}}^{(N)}$ denotes the set of all functions that differ by at most $O_p(1/\sqrt{N})$ from some functions that minimize $\widetilde{\mathcal{L}}_{f}^{(N)}(\cdot)$. 

Then, the same results as in Theorem \ref{Apdx_theorem_sample_requirement} hold for all $\phi \in \mathcal{F}_{ K\text{-}\mathit{Lip}}^{(N)}$. Specifically: 

\textbf{(Upper Bound)} Under Assumption \ref{Apdx_assumption_upper}, Equation (\ref{Eq_Appendix_theorem_sample_requirement_t1_1}) holds for $1 \le p \le d/2$ such that for any $\phi \in \mathcal{F}_{ K\text{-}\mathit{Lip}}^{(N)}$, 
\begin{align} 
    \lefteqn{\varlimsup_{N \rightarrow \infty} N^{1/d} \cdot \left\{E_P \left| \frac{dQ}{dP}(\mathbf{x}) - \phi(\mathbf{x}) \right|^{ p } \right\}^{1/p}} \allowdisplaybreaks\nonumber\\ 
    &\le L \cdot \mathrm{diam}(\Omega) \cdot \left\{  E_P \left[ \left\{\frac{dQ}{dP}(\mathbf{x})\right\}^{2\cdot p } \right] \right\}^{1/(2 \cdot p)}  + K \cdot \mathrm{diam}(\Omega). 
    \label{Eq_Appendix_theorem_sample_requirement_t1_1_b} 
\end{align} 

\textbf{(Lower Bound)} Under Assumption \ref{Apdx_assumption_lower}, Equation (\ref{Eq_Appendix_theorem_sample_requirement_t1_2}) holds for any $\phi \in \mathcal{F}_{ K\text{-}\mathit{Lip}}^{(N)}$, such that 
\begin{align} 
    \lefteqn{\varliminf_{N \rightarrow \infty} N^{1/d} \cdot E_{\hat{\mathbf{X}}_{P[N]}}\left[ \left\{E_P \left| \frac{dQ}{dP}(\mathbf{x}) - \phi(\mathbf{x}) \right|^{ p } \right\}^{1/ p } \ \right]} \allowdisplaybreaks\nonumber\\ 
    &\ge  \frac{1}{L} \cdot \left\{  E_P \left[ \left\{\frac{dQ}{dP}(\mathbf{x})\right\}^{ p } \right] \right\}^{1/p}  - K \cdot  \mathrm{diam}(\Omega) 
    \label{Eq_Appendix_theorem_sample_requirement_t1_2_b}\\ 
    &\ge \frac{1}{L} \cdot e^{\frac{p-1}{p} \cdot KL(Q||P)-1} - K \cdot  \mathrm{diam}(\Omega) 
    \label{Eq_Appendix_theorem_sample_requirement_t1_3_b} 
\end{align} 
\end{theorem}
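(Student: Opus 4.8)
The plan is to reduce the statement to Theorem~\ref{Apdx_theorem_sample_requirement} by a single triangle-inequality argument, using that a perturbation of order $O_p(1/\sqrt{N})$ is asymptotically negligible against the rate $1/N^{1/d}$ precisely because $d \ge 3$ forces $1/d - 1/2 < 0$. First I would fix $\phi \in \mathcal{F}_{\tiny K\text{-}\mathit{Lip}}^{(N)}$ and pick, from the definition of this set, a minimizer $\phi_* \in \widetilde{\mathcal{F}}_{\tiny K\text{-}\mathit{Lip}}^{(N)}$ of $\widetilde{\mathcal{L}}_{f}^{(N)}(\cdot)$ with $\phi = \phi_* + O_p(1/\sqrt{N})$. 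Since both $\phi$ and $\phi_*$ belong to $\mathcal{F}_{\tiny K\text{-}\mathit{Lip}}$, all hypotheses needed to invoke Theorem~\ref{Apdx_theorem_sample_requirement} for $\phi_*$ are in force (the bi-Lipschitz/Lipschitz assumptions on $T^*$, compactness of $\Omega$, and Assumption~\ref{Apdx_assumption_for_f} are unchanged).

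Next I would apply the triangle inequality in the $L_p(\Omega,P)$ norm in two directions. For the upper bound,
\begin{align}
\left\{E_P \left| \frac{dQ}{dP}(\mathbf{x}) - \phi(\mathbf{x}) \right|^{p} \right\}^{1/p}
&\le \left\{E_P \left| \frac{dQ}{dP}(\mathbf{x}) - \phi_*(\mathbf{x}) \right|^{p} \right\}^{1/p}
+ \left\{E_P \left| \phi_*(\mathbf{x}) - \phi(\mathbf{x}) \right|^{p} \right\}^{1/p},
\end{align}
and for the lower bound the same inequality with the last term subtracted. Multiplying through by $N^{1/d}$ and using $N^{1/d}\cdot O_p(1/\sqrt{N}) = O_p(N^{1/d-1/2}) = o_p(1)$ (here $d\ge 3$ is used), the perturbation term vanishes in the limit. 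Taking $\varlimsup_{N\to\infty}$ of the first inequality and applying Equation~(\ref{Eq_Appendix_theorem_sample_requirement_t1_1}) of Theorem~\ref{Apdx_theorem_sample_requirement} to $\phi_*$ yields Equation~(\ref{Eq_Appendix_theorem_sample_requirement_t1_1_b}); applying $E_{\hat{\mathbf{X}}_{P[N]}}[\cdot]$ to the reversed inequality, then $\varliminf_{N\to\infty} N^{1/d}$, and invoking Equation~(\ref{Eq_Appendix_theorem_sample_requirement_t1_2}) gives Equation~(\ref{Eq_Appendix_theorem_sample_requirement_t1_2_b}). Equation~(\ref{Eq_Appendix_theorem_sample_requirement_t1_3_b}) then follows from Equation~(\ref{Eq_Appendix_theorem_sample_requirement_t1_2_b}) verbatim, via the Jensen-inequality/KL computation in Equations~(\ref{Eq_proofofTheoremtheorem_sample_requirement_lower_2_0})--(\ref{Eq_proofofTheoremtheorem_sample_requirement_lower_2}).

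The step I expect to be the main obstacle is the careful handling of the $O_p$ bookkeeping: the hypothesis $\phi = \phi_* + O_p(1/\sqrt{N})$ is, per Theorem~\ref{Apdx_theorem_delat_rate} and Equation~(\ref{Eq_subsection_in_validating_phase_3}), a statement about the pointwise discrepancy at the sample points $\mathbf{X}_{\mu}^i$, so I must promote it to the norm statement $\{E_P|\phi_*(\mathbf{x}) - \phi(\mathbf{x})|^{p}\}^{1/p} = O_p(1/\sqrt{N})$ and, for the lower-bound part, to a bound that survives the outer expectation $E_{\hat{\mathbf{X}}_{P[N]}}[\cdot]$ — i.e. one needs $E_{\hat{\mathbf{X}}_{P[N]}}\big[\{E_P|\phi_* - \phi|^{p}\}^{1/p}\big] = O(1/\sqrt{N})$ rather than mere stochastic boundedness. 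This is exactly the informal under-braced estimate in Section~\ref{Subsection_in_validating_phase}; rigorously it requires either upgrading the probabilistic rate of Theorem~\ref{Apdx_theorem_delat_rate} to a mean bound or supplying a uniform-integrability argument, after which the remaining limit manipulations are routine.
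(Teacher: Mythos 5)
Your proposal matches the paper's own proof essentially line for line: fix $\phi = \phi_* + O_p(1/\sqrt{N})$ with $\phi_*$ a genuine minimizer, apply the $L_p$ triangle inequality in each direction to peel off the perturbation, and observe that $N^{1/d}\cdot O(1/\sqrt{N}) \to 0$ because $d \ge 3$, so Theorem~\ref{Apdx_theorem_sample_requirement} applied to $\phi_*$ delivers the result. The "main obstacle" you flag is real and is in fact glossed over in the paper as well: the paper simply writes $\{ E_P | O_p(1/\sqrt{N}) |^p \}^{1/p} = O(1/\sqrt{N})$, silently upgrading a stochastic-boundedness statement (which by itself does not control moments) to a deterministic mean bound, without the uniform-integrability or strengthened-rate argument you correctly note would be needed to make that step rigorous.
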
 
\begin{proof}[Proof of Theorem \ref{Apdx_theorem_sample_requirement_2}] 
    First, we prove Equation (\ref{Eq_Appendix_theorem_sample_requirement_t1_1_b}). 

    Let $\widetilde{\phi}$ be a member of $\mathcal{F}_{ K\text{-}\mathit{Lip}}^{(N)}$. 
    Then, there exists $\phi \in  \mathcal{F}_{ K\text{-}\mathit{Lip}}^{(N)}$ such that $\widetilde{\phi} = \phi + O_p(1/\sqrt{N})$. 

    Using the triangle inequality in the $L_p$ norm, we obtain 
    \begin{align} 
        \left\{E_P \left| \frac{dQ}{dP}(\mathbf{x}) -  \widetilde{\phi}(\mathbf{x}) \right|^{ p } \right\}^{1/ p } 
        &= \left\{E_P \left| \frac{dQ}{dP}(\mathbf{x}) - \phi(\mathbf{x}) + O_p \left( \frac{1}{\sqrt{N}} \right)  \right|^{ p } \right\}^{1/ p } \allowdisplaybreaks\nonumber\\ 
        & \le \left\{E_P \left|\frac{dQ}{dP}(\mathbf{x}) - \phi(\mathbf{x})  \right|^{ p } \right\}^{1/ p } + \left\{ E_P \left| O_p \left( \frac{1}{\sqrt{N}} \right) \right|^{ p } \right\}^{1/ p } \allowdisplaybreaks\nonumber\\ 
        &= \left\{E_P \left|\frac{dQ}{dP}(\mathbf{x}) - \phi(\mathbf{x})  \right|^{ p } \right\}^{1/ p } +  O \left( \frac{1}{\sqrt{N}} \right). 
        \label{Eq_proofofTheoremtheorem_sample_requirement_t2_7} 
    \end{align} 

    From Equations (\ref{Eq_Appendix_theorem_sample_requirement_t1_1}) and (\ref{Eq_proofofTheoremtheorem_sample_requirement_t2_7}), we have 
    \begin{align} 
        & \varlimsup_{N \rightarrow \infty} N^{1/d} \cdot  \left\{E_P \left| \frac{dQ}{dP}(\mathbf{x}) -   \widetilde{\phi}(\mathbf{x}) \right|^{ p } \right\}^{1/ p }  \allowdisplaybreaks\nonumber\\ 
        & \le \varlimsup_{N \rightarrow \infty} N^{1/d} \cdot  \left[ \left\{E_P\left|\frac{dQ}{dP}(\mathbf{x}) - \phi(\mathbf{x})  \right|^{ p } \right\}^{1/ p } + O \left( \frac{1}{\sqrt{N}} \right) \right] \allowdisplaybreaks\nonumber\\ 
        &= \varlimsup_{N \rightarrow \infty} N^{1/d} \cdot  \left\{E_P \left|\frac{dQ}{dP}(\mathbf{x}) - \phi(\mathbf{x})  \right|^{ p } \right\}^{1/ p } + \varlimsup_{N \rightarrow \infty} N^{1/d} \cdot O \left( \frac{1}{\sqrt{N}} \right) \allowdisplaybreaks\nonumber\\ 
        &= \varlimsup_{N \rightarrow \infty} N^{1/d} \cdot   \left\{E_P \left|\frac{dQ}{dP}(\mathbf{x}) - \phi(\mathbf{x})  \right|^{ p } \right\}^{1/ p } \allowdisplaybreaks\nonumber\\ 
        &=   L \cdot \mathrm{diam}(\Omega) \cdot  \left\{ E_P \left[ \left\{\frac{dQ}{dP}(\mathbf{x})\right\}^{2 \cdot p } \right]\right\}^{1/(2\cdot p)}  + K \cdot \mathrm{diam}(\Omega). 
        \label{Eq_proofofTheoremtheorem_sample_requirement_t2_7_2} 
    \end{align} 
    Therefore, Equation (\ref{Eq_Appendix_theorem_sample_requirement_t1_1_b}) is proven. 

    Next, we prove Equation (\ref{Eq_Appendix_theorem_sample_requirement_t1_2_b}). 

    By applying the triangle inequality in the $L_p$ norm, we obtain 
    \begin{align} 
        \left\{E_P \left| \frac{dQ}{dP}(\mathbf{x}) -  \widetilde{\phi}(\mathbf{x}) \right|^{ p } \right\}^{1/ p } 
        &= \left\{E_P \left| \frac{dQ}{dP}(\mathbf{x}) - \phi(\mathbf{x}) + O_p \left( \frac{1}{\sqrt{N}} \right)  \right|^{ p } \right\}^{1/ p } \allowdisplaybreaks\nonumber\\ 
        &\ge \left\{E_P \left|\frac{dQ}{dP}(\mathbf{x}) - \phi(\mathbf{x})  \right|^{ p } \right\}^{1/ p }  - \left\{ E_P \left| O_p \left( \frac{1}{\sqrt{N}} \right) \right|^{ p } \right\}^{1/ p } \allowdisplaybreaks\nonumber\\ 
        &= \left\{E_P \left|\frac{dQ}{dP}(\mathbf{x}) - \phi(\mathbf{x})  \right|^{ p } \right\}^{1/ p } -  O \left( \frac{1}{\sqrt{N}} \right). 
        \label{Eq_proofofTheoremtheorem_sample_requirement_t2_9} 
    \end{align} 
    In a similar manner to the derivation of Equation (\ref{Eq_proofofTheoremtheorem_sample_requirement_t2_7_2}), we have 
    \begin{align} 
        &\varliminf_{N \rightarrow \infty}  N^{1/d} \cdot 
        E_{\hat{\mathbf{X}}_{P[N]}}\left[ \left\{E_P \left| \frac{dQ}{dP}(\mathbf{x}) -   \widetilde{\phi}(\mathbf{x}) \right|^{ p } \right\}^{1/ p } \right]  \allowdisplaybreaks\nonumber\\ 
        &\ge \varliminf_{N \rightarrow \infty} N^{1/d} \cdot  E_{\hat{\mathbf{X}}_{P[N]}}\left[ \left\{E_P\left|\frac{dQ}{dP}(\mathbf{x}) - \phi(\mathbf{x})  \right|^{ p } \right\}^{1/ p } - O \left( \frac{1}{\sqrt{N}} \right)\right] \allowdisplaybreaks\nonumber\\ 
        &= \varliminf_{N \rightarrow \infty}  N^{1/d} \cdot  E_{\hat{\mathbf{X}}_{P[N]}}\left[ \left\{E_P \left|\frac{dQ}{dP}(\mathbf{x}) - \phi(\mathbf{x})  \right|^{ p } \right\}^{1/ p } \right] 
        -  \varlimsup_{N \rightarrow \infty}  N^{1/d} \cdot O \left( \frac{1}{\sqrt{N}} \right)  \allowdisplaybreaks\nonumber\\ 
        &= \varliminf_{N \rightarrow \infty} N^{1/d} \cdot   E_{\hat{\mathbf{X}}_{P[N]}}\left[   \left\{E_P \left|\frac{dQ}{dP}(\mathbf{x}) - \phi(\mathbf{x})  \right|^{ p } \right\}^{1/ p } \right] \allowdisplaybreaks\nonumber\\ 
        &= \frac{1}{L} \cdot \left\{  E_P \left[ \left\{\frac{dQ}{dP}(\mathbf{x})\right\}^{ p } \right] \right\}^{1/p}  - K \cdot  \mathrm{diam}(\Omega). 
        \label{Eq_proofofTheoremtheorem_sample_requirement_t2_10} 
    \end{align} 
    Therefore, Equation (\ref{Eq_Appendix_theorem_sample_requirement_t1_2_b}) is proven. 

    Equation (\ref{Eq_Appendix_theorem_sample_requirement_t1_3_b}) is obtained in the same manner as in the proof of Theorem \ref{Apdx_theorem_sample_requirement}. 

    This completes the proof. 
\end{proof}

\section{Details of the experiments in Section \ref{Section_MainResults}}\label{Apdx_section_TheDetailsOfNumericalExperiments} 
In this section, we provide detailed descriptions of the experiments reported in Section \ref{Section_MainResults}. 
Each dataset, experimental method, experimental result, and the neural network settings used in the experiments are presented in separate subsections. 

\subsection{Datasets.} 
We conducted two experiments: one for investigating the relationship between $L_p$ errors and KL-divergence in the data; the other for investigating the relationship between $L_p$ errors and the dimensionality of the data.
In both, the datasets were generated from the following distributions: the numerator distribution was a multidimensional multimodal normal distribution, while the denominator distribution was a multidimensional standard normal distribution. 

\textbf{Denominator Distribution:} 
The denominator datasets $\hat{\mathbf{X}}_{P[R]} = \{\mathbf{X}_P^1, \mathbf{X}_P^2, \ldots, \mathbf{X}_P^R\}$ were generated from the following $d$-dimensional standard normal distribution: 
\begin{align} 
    \mathbf{X}_P^i & \overset{\mathrm{iid}}{\sim}   \mathcal{N}(\mathbf{0}, I_d), 
\end{align} 
where $I_d$ denotes the $d$-dimensional identity matrix. 

\textbf{Numerator Distribution:} 
The numerator datasets $\hat{\mathbf{X}}_{Q[S]} = \{\mathbf{X}_Q^1, \mathbf{X}_Q^2, \ldots, \mathbf{X}_Q^S\}$ were generated from the following $d$-dimensional, $M$-multimodal normal distribution: 
\begin{align} 
    \mathbf{X}_Q^i \overset{\mathrm{iid}}{\sim} \prod_{m=1}^M \mathcal{N}(\mu \cdot \mathbf{r}_m, I_d)^{Z_m}, 
    \label{Eq_apedx_Apdx_subsection_Experiment_KL_3} 
\end{align} 
where for each mode $m$: 
\begin{itemize} 
    \item $Z_m \sim \text{Bernoulli}(1/M)$ and $\sum_{m=1}^{M} Z_m = 1$. 
    \item $\mathbf{r}_m \sim \text{Uniform}(\mathbb{S}^{d-1})$. 
\end{itemize} 
Here, $\text{Bernoulli}(1/M)$ denotes the Bernoulli distribution with parameter $1/M$, and $\text{Uniform}(\mathbb{S}^{d-1})$ denotes the uniform distribution on the $d$-dimensional unit surface $\mathbb{S}^{d-1} = \left\{ \mathbf{x} \in \mathbb{R}^d : \|\mathbf{x}\| = 1 \right\}$. 

In the aforementioned setting when $M = 1$, the KL-divergence of the datasets is calculated as: 
\begin{align} 
    KL(P||Q) &= E_P \left[ \log \left( \frac{dP}{dQ} \right) \right] \nonumber\\ 
    &= E_{\mathcal{N}(\mathbf{0}, I_d)} \left[ \log \left( \frac{\mathcal{N}(\mathbf{0}, I_d)}{\mathcal{N}(\mu \cdot \mathbf{r}_m, I_d)} \right) \right] \nonumber\\ 
    &= \frac{1}{2} \cdot \left[ \log \frac{|\Sigma_p|}{|\Sigma_q|} - d + \mathrm{Tr}(\Sigma_p^{-1} \cdot \Sigma_q) + (\mu_p - \mu_q)^{T} \cdot \Sigma_p^{-1} \cdot (\mu_p - \mu_q) \right] \nonumber\\ 
    &= \frac{1}{2} \cdot \left[ \log \frac{|I_d|}{|I_d|} - d + \mathrm{Tr}(I_d \cdot I_d) + (\mu \cdot \mathbf{r}_m)^{T} \cdot I_d \cdot (\mu \cdot \mathbf{r}_m) \right] \nonumber\\ 
    &= \frac{1}{2} \cdot \left( 0 - d + d + \mu^2 \cdot \mathbf{r}_m^{T} \cdot \mathbf{r}_m \right) \nonumber\\ 
    &= \frac{1}{2} \cdot \mu^2. 
    \label{Eq_apedx_Apdx_subsection_Experiment_KL_1} 
\end{align} 

From Equation (\ref{Eq_apedx_Apdx_subsection_Experiment_KL_1}), the KL-divergence of the datasets for $M > 1$ is calculated as: 
\begin{align} 
    KL(P||Q) &= E_P \left[ \log \left( \frac{dP}{dQ} \right) \right] \nonumber\\ 
    &= E_{\mathcal{N}(\mathbf{0}, I_d)} E_{Z_m \sim \text{Bernoulli}(1/M)} \left[ \log \left( \frac{\mathcal{N}(\mathbf{0}, I_d)}{\prod_{m=1}^M \mathcal{N}(\mu \cdot \mathbf{r}_m, I_d)^{Z_m}} \right) \right] \nonumber\\ 
    &= E_{\mathcal{N}(\mathbf{0}, I_d)} E_{Z_m \sim \text{Bernoulli}(1/M)} \left[ \log \prod_{m=1}^M \left( \frac{\mathcal{N}(\mathbf{0}, I_d)}{\mathcal{N}(\mu \cdot \mathbf{r}_m, I_d)} \right)^{Z_m} \right] \nonumber\\ 
    &= E_{\mathcal{N}(\mathbf{0}, I_d)} E_{Z_m \sim \text{Bernoulli}(1/M)} \left[ \sum_{m=1}^M \log \left( \frac{\mathcal{N}(\mathbf{0}, I_d)}{\mathcal{N}(\mu \cdot \mathbf{r}_m, I_d)} \right) \right] \nonumber\\ 
    &= E_{\mathcal{N}(\mathbf{0}, I_d)} \left[ \log \left( \frac{\mathcal{N}(\mathbf{0}, I_d)}{\mathcal{N}(\mu \cdot \mathbf{r}_m, I_d)} \right) \right] \nonumber\\ 
    &= \frac{1}{2} \cdot \mu^2. 
    \label{Eq_apedx_Apdx_subsection_Experiment_KL_2} 
\end{align} 

Thus, we set $\mu = \sqrt{2\cdot KL(P||Q)}$ in Equation (\ref{Eq_apedx_Apdx_subsection_Experiment_KL_3}) for $M = 1, 2, 3,$ and 4, where $KL(P||Q)$ denotes the KL-divergence of the datasets.

\subsection{Experimental Procedure.} 
We trained neural networks using the training datasets by optimizing the KL-divergence or $\alpha$-divergence loss functions. 
Details of these two functions used in the experiments are provided below.

\paragraph{KL-divergence loss function.} 
We used the following KL-divergence loss function, $\mathcal{L}_{\mathrm{KL}}(\cdot)$, in our experiments: 
\begin{align} 
    \mathcal{L}_{\mathrm{KL}}(T) 
    &= \hat{E}_P\left[e^{T} \right] - \hat{E}_Q\left[T \right] \nonumber\\ 
    &= \frac{1}{S} \cdot \sum_{i=1}^{S} e^{T(\mathbf{X}^{i}_{Q})} 
    - \frac{1}{R} \cdot \sum_{i=1}^{R} T(\mathbf{X}^{i}_{P}). 
\end{align}

\paragraph{$\alpha$-divergence loss function.} 
We utilize an $\alpha$-divergence loss function originally proposed in our previous work \citep{kitazawa2024alpha}.
The $\alpha$-divergence loss function is defined as:

The $\alpha$-divergence loss function is defined as: 
\begin{align} 
    \mathcal{L}_{\alpha\text{-divergence}}^{(R,S)}(T\, ;\, \alpha) 
    &= \frac{1}{\alpha} \cdot \hat{E}_{Q[S]} \Big[ e^{\alpha \cdot T_{\theta}} \Big] 
    + \frac{1}{1 - \alpha} \cdot \hat{E}_{P[R]} \Big[e^{(\alpha - 1) \cdot T_{\theta}} \Big] 
    \nonumber\\ 
    &= \frac{1}{\alpha} \cdot \frac{1}{S} \cdot \sum_{i=1}^{S} e^{\alpha \cdot T(\mathbf{X}^{i}_{Q})} + \frac{1}{1 - \alpha} \cdot \frac{1}{R} \cdot \sum_{i=1}^{R} e^{(\alpha - 1) \cdot T(\mathbf{X}^{i}_{P})}. 
    \label{Appensix_Eq_loss_func_alpha_div} 
\end{align} 
For further details and theoretical derivations of this loss function, we refer the reader to \citet{kitazawa2024alpha}.

\paragraph{$L_p$ Errors vs. KL-Divergence in Data.} 
We initially created 100 training, validation, and test datasets, each consisting of 10000 samples, with 
a data dimensionality of 5 and KL-divergence values of 1, 2, 4, 8, 10, 12, and 14. The numerator datasets were generated with modalities of 1, 2, 3, and 4 using the aforementioned distributions. 
Neural networks were trained using the training datasets by optimizing both the $\alpha$-divergence and KL-divergence loss functions. 
Training was halted if the validation loss did not improve for an entire epoch. 
After training, the $L_p$ errors of the estimated density ratios for $p=1$, 2, and 3 were measured using the test datasets. 
A total of 100 trials were conducted, and we reported the median $L_p$ errors along with the interquartile range (25th to 75th percentiles) for each KL-divergence and $\alpha$-divergence function.

\paragraph{$L_p$ Errors vs. the Dimensions of Data.} 
We initially created 100 training datasets, each consisting of 20000 samples, and 100 validation and test datasets, each consisting of 5000 samples, with data dimensionalities of 50, 100, and 200, and a KL-divergence value of 3. 
Neural networks were trained using training datasets of sizes 1000, 2000, 
4000, 8000, and 16000, by optimizing both the $\alpha$-divergence 
and KL-divergence loss functions. 
The numerator datasets were generated from the aforementioned distributions, with modalities $M = 1$, 2 , 3, and 4. 
Training was halted if the validation loss did not improve for an entire epoch. 
After training, the $L_p$ errors of the estimated density ratios for $p=1$, 2, and 3 were measured using the test datasets. 
A total of 100 trials were conducted, and we reported the median $L_p$ errors along with the interquartile range (25th to 75th percentiles) for each KL-divergence and $\alpha$-divergence function.

\subsection{Results.} 
\paragraph{$L_p$ Errors vs. the KL-Divergence in Data.} 
The results for each multimodal case $M = 1, 2, 3,$ and 4 of the numerator datasets are shown in Figure \ref{Apdx_subsection_FigL_pErrorsvstheKLDivergence}. 
The results for $M=1$ were reported in Section \ref{Section_MainResults}. 

As shown in Figure \ref{Apdx_subsection_FigL_pErrorsvstheKLDivergence}, 
the estimation errors for $p > 1$ increased significantly, with the rate of increase accelerating as $p$ becomes larger. 
In contrast, for $p = 1$, a relatively mild increase was observed. 
As indicated by Theorem \ref{main_section_3_theorem_sample_requirement}, these results emphasize the impact of the KL-divergence in the data on $L_p$ errors for $p > 1$ in DRE with $f$-divergence loss functions. 
Additionally, only small differences were observed in the results among the modalities of the numerator datasets. 

\begin{figure*}[t] 
    \begin{center} 
        \centerline{\includegraphics[width=1.00\columnwidth]{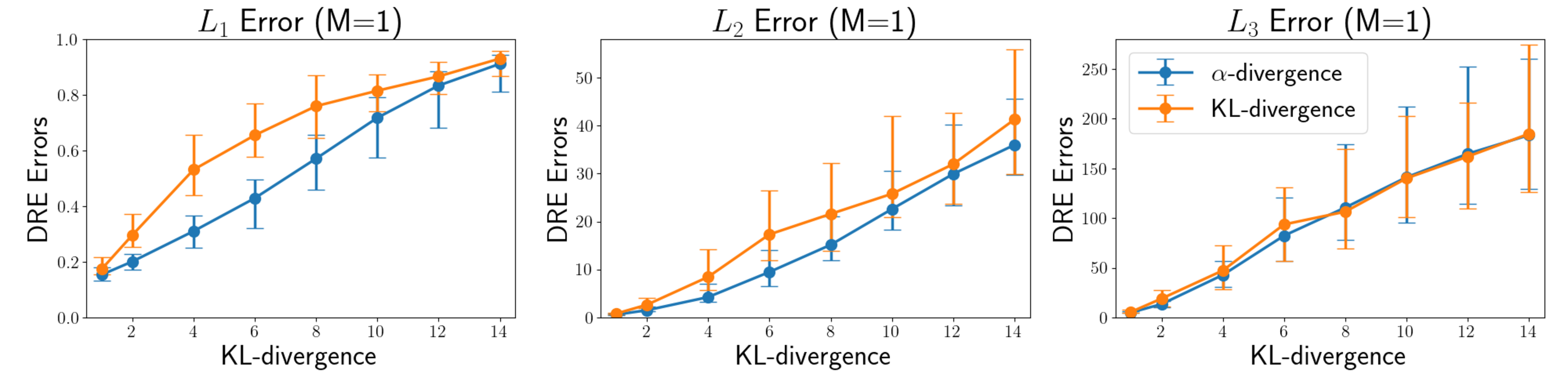}} 
    \end{center} 
    \begin{center} 
        \centerline{\includegraphics[width=1.00\columnwidth]{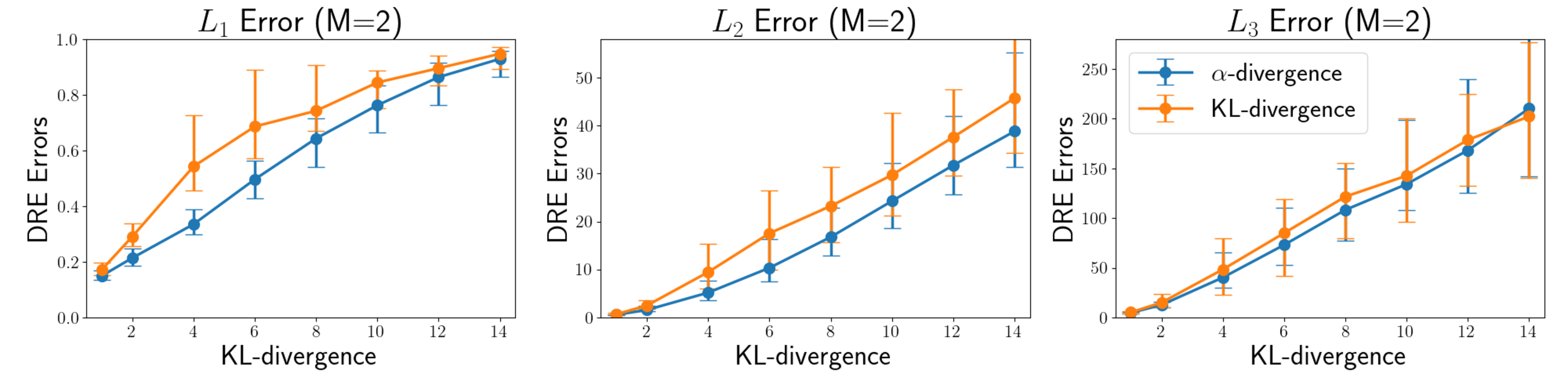}} 
    \end{center} 
    \begin{center} 
        \centerline{\includegraphics[width=1.00\columnwidth]{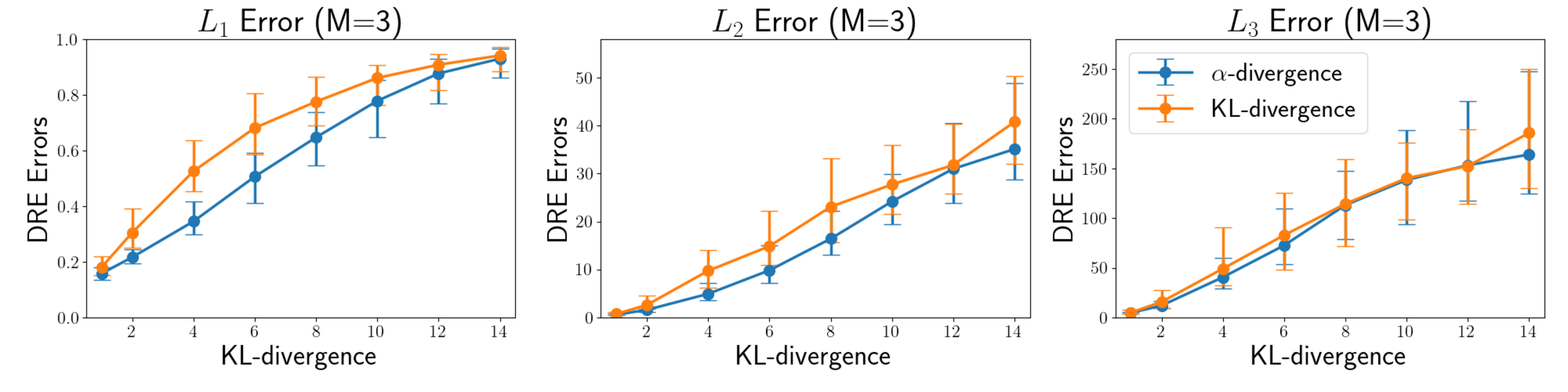}} 
    \end{center} 
    \begin{center} 
        \centerline{\includegraphics[width=1.00\columnwidth]{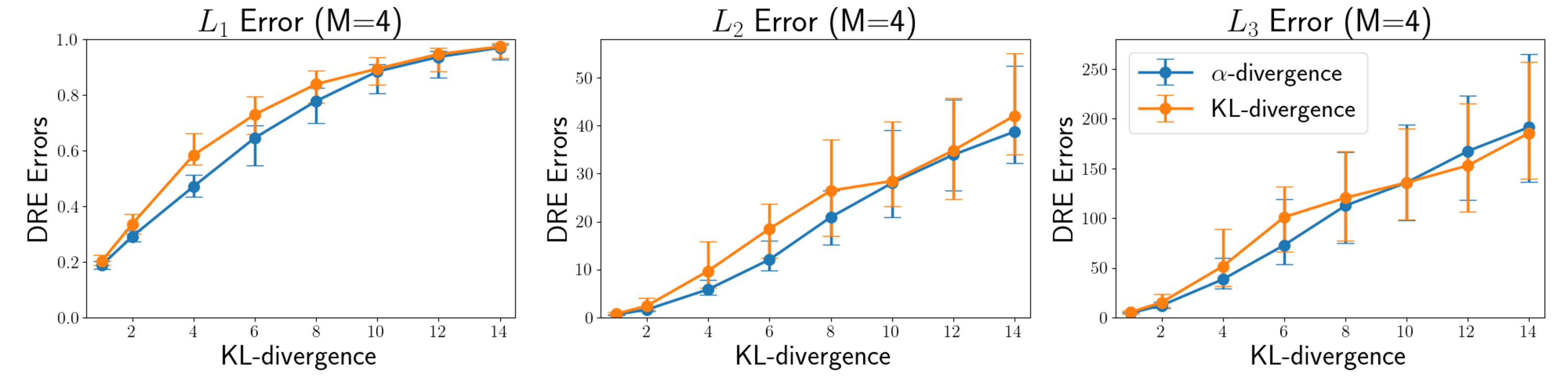}} 
    \end{center} 
     \caption{ 
        The experimental results of $L_p$ errors versus the KL-divergence in the data for each multimodal case $M = 1, 2, 3$, and 4 of the numerator datasets are presented,
        as discussed in Sections \ref{Section_MainResults} and \ref{Apdx_section_TheDetailsOfNumericalExperiments}. 
        The results for $M = 1$ were reported in Section \ref{Section_MainResults}. 
        The $x$-axis represents the KL-divergence of synthetic datasets with fixed dimensions. 
        The $y$-axes of the left, center, and right graphs represent the $L_1$, $L_2$, and $L_3$ errors in DRE, respectively. 
        The blue line represents errors using the $\alpha$-divergence loss function, and the orange line represents errors using the KL-divergence loss function. 
        The error bars denote the interquartile range (25th to 75th percentiles) of the $y$-axis values. 
        The plots show the median $y$-axis values corresponding to the KL-divergence levels in the synthetic datasets.} 
    \label{Apdx_subsection_FigL_pErrorsvstheKLDivergence} 
\end{figure*}

\paragraph{$L_p$ Errors vs. the Dimensions of Data.} 
The results for each multimodal case $M = 1, 2, 3,$ and 4 of the numerator datasets are shown in Figure \ref{Apdx_subsection_FigL_pErrorsvstheDimension_2} and 
\ref{Apdx_subsection_FigL_pErrorsvstheDimension_4}. 
The results of $M=1$ (the first and second rows in Figure \ref{Apdx_subsection_FigL_pErrorsvstheDimension_2}) were reported in Section \ref{Section_MainResults}. 

As shown in Figure \ref{Setion_3_Figure_ExperimentLp_dimensions}, 
the $L_1$, $L_2$, and $L_3$ errors in DRE worsened as the data dimensionality increased for both the $\alpha$-divergence and KL-divergence loss functions. 
These results indicate that the curse of dimensionality affects all $L_p$ errors equally, 
as suggested by Theorem \ref{main_section_3_theorem_sample_requirement}. 
Additionally, little difference was observed in the results across the modalities of the numerator datasets. 

\begin{figure*}[t] 
    \begin{center} 
        \centerline{\includegraphics[width=1.00\columnwidth]{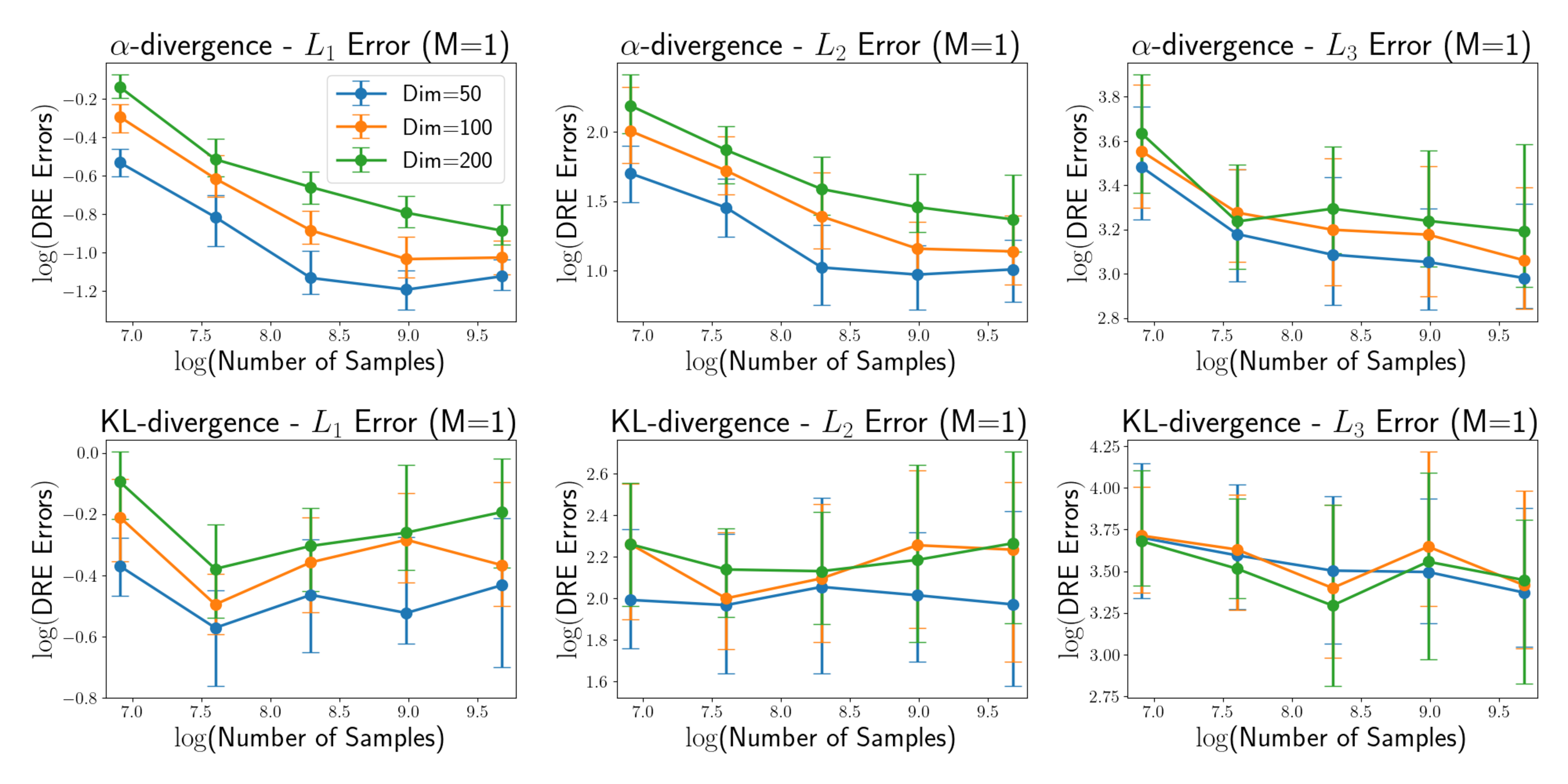}} 
            \vskip -0.3in 
    \end{center} 
    \begin{center} 
        \centerline{\includegraphics[width=1.00\columnwidth]{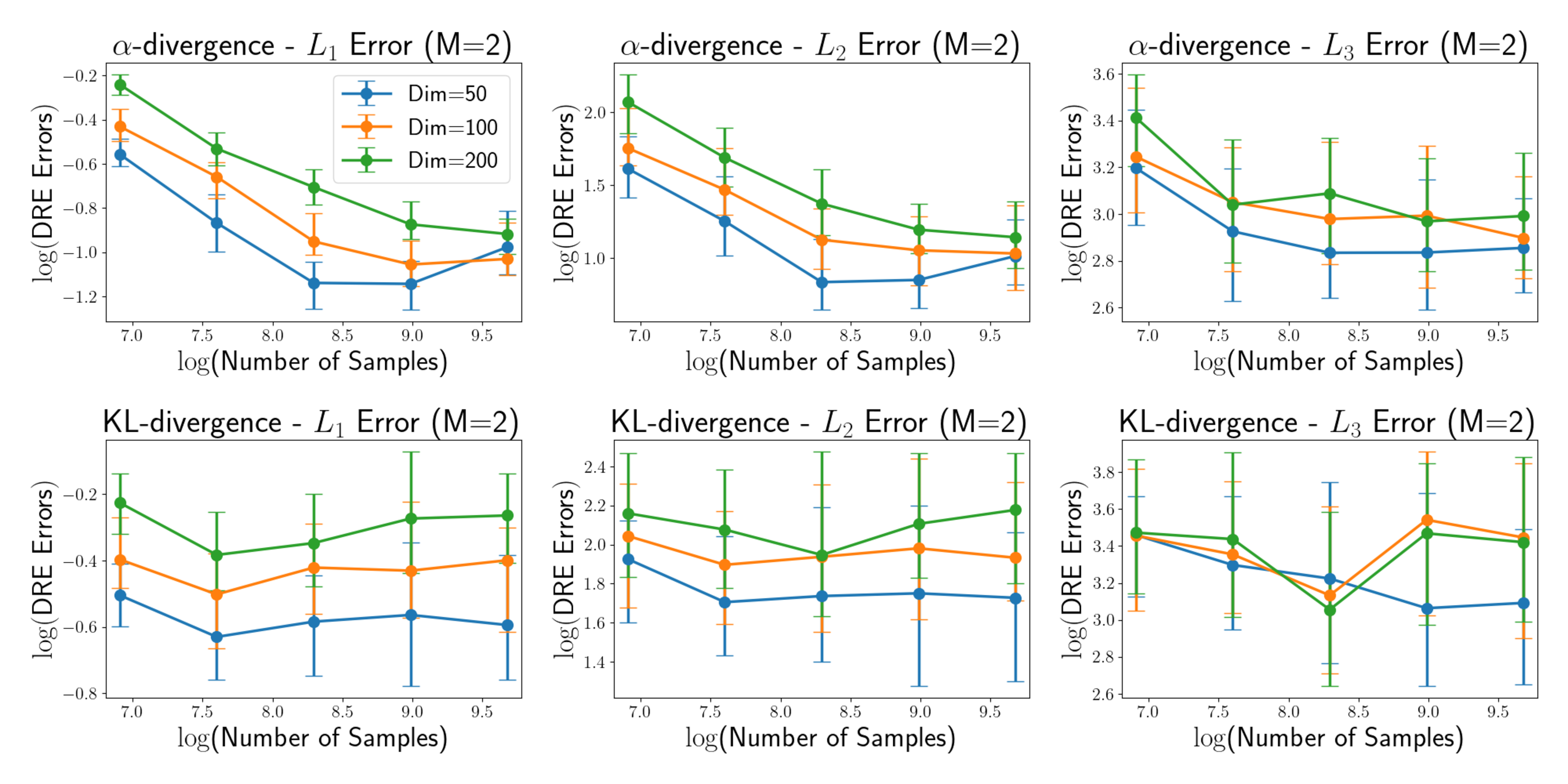}} 
    \end{center} 
    \caption{ The experimental results of $L_p$ errors versus the dimensionality of the data for the multimodal cases $M = 1$ and $2$ in the numerator datasets are presented, as discussed in Sections \ref{Section_MainResults} and \ref{Apdx_section_TheDetailsOfNumericalExperiments}. 
        The results for $M = 1$ were reported in Section \ref{Section_MainResults}. 
        The top row shows the results using the $\alpha$-divergence loss function, while the bottom row shows the results using the KL-divergence loss function. 
        The $x$-axis represents the logarithm of the number of samples used for 
        the optimizations for DRE. 
        The $y$-axes of the left, center, and right graphs represent the $L_1$, $L_2$, and $L_3$ errors in DRE, respectively. 
        The blue, orange, and green lines represent the results for data dimensionalities of 50, 100, and 200, respectively. 
        The plots show the median $y$-axis values, and the error bars indicate the interquartile range (25th to 75th percentiles) of the $y$-axis values for  the logarithm of the number of samples used in 
        the optimizations for DRE.} 
    \label{Apdx_subsection_FigL_pErrorsvstheDimension_2} 
\end{figure*} 
\begin{figure*}[t] 
    \begin{center} 
        \centerline{\includegraphics[width=1.00\columnwidth]{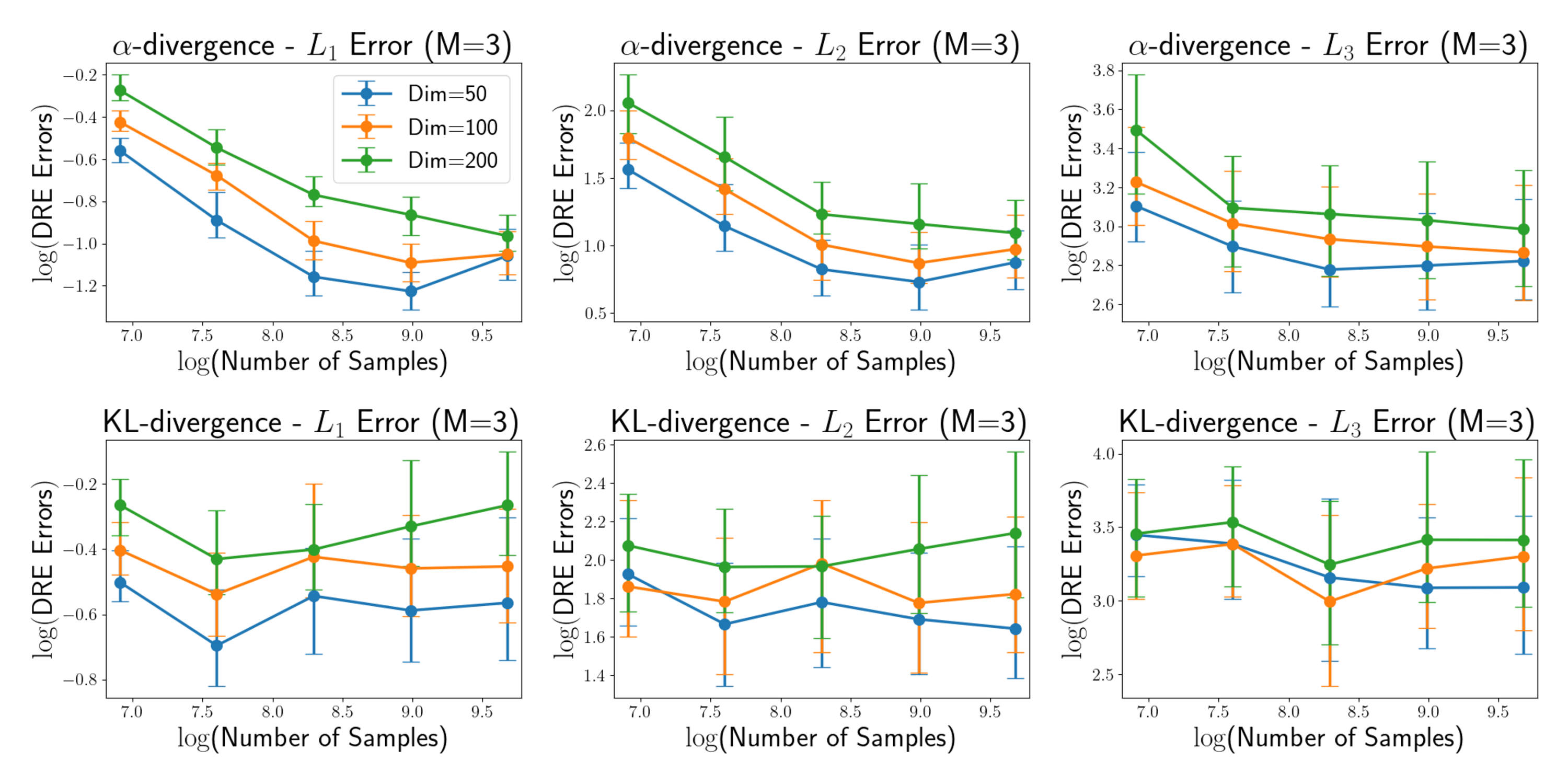}} 
        \vskip -0.3in 
    \end{center} 
    \begin{center} 
        \centerline{\includegraphics[width=1.00\columnwidth]{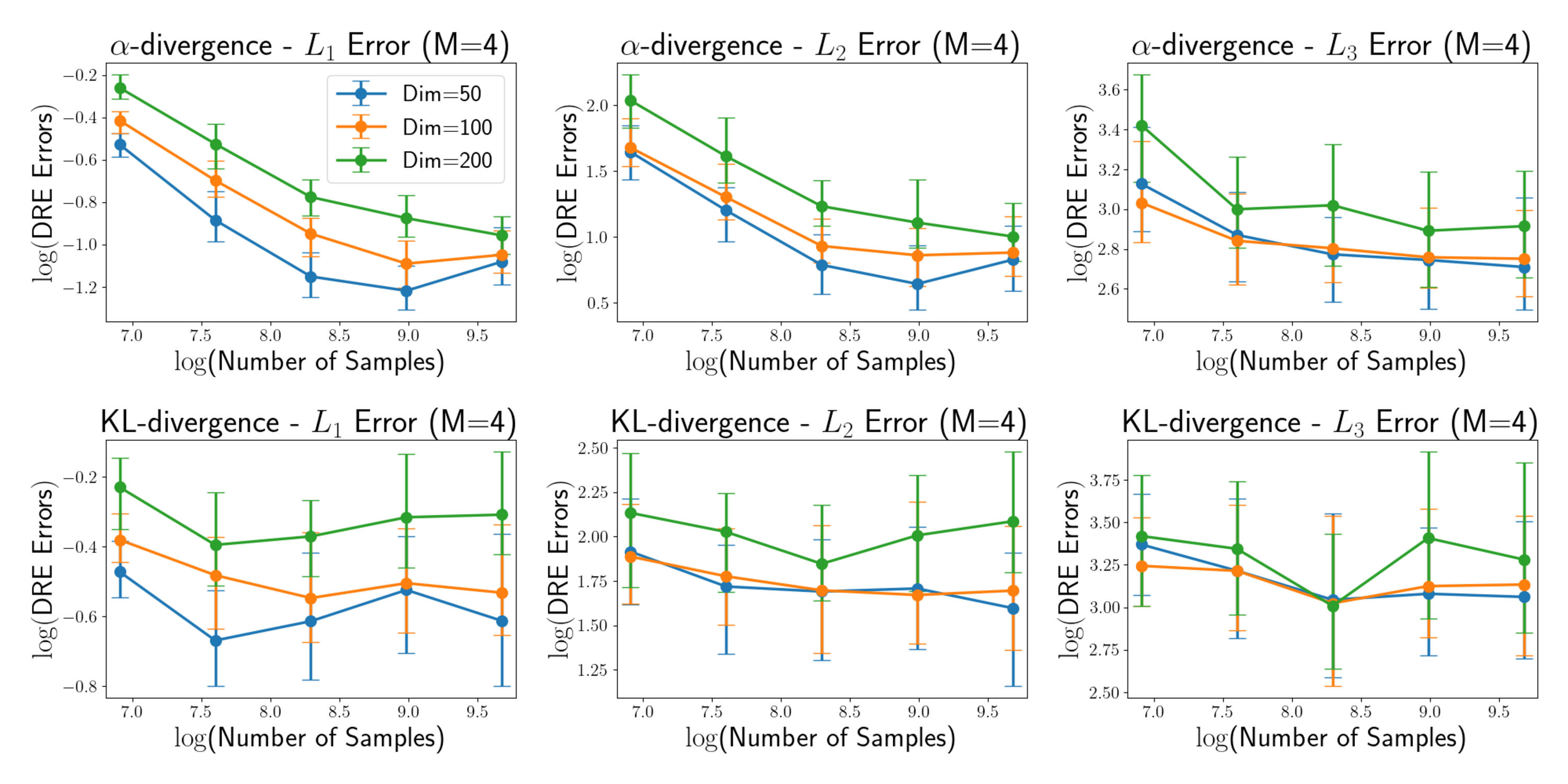}} 
    \end{center} 
     \caption{The experimental results of $L_p$ errors versus the dimensionality of the data for the multimodal case $M = 3$ and 4 in the numerator datasets are presented, as discussed in Sections \ref{Apdx_section_TheDetailsOfNumericalExperiments}. 
        The top row shows the results using the $\alpha$-divergence loss function, while the bottom row shows the results using the KL-divergence loss function. 
        The $x$-axis represents the logarithm of the number of samples used for 
        the optimizations for DRE. 
        The $y$-axes of the left, center, and right graphs represent the $L_1$, $L_2$, and $L_3$ errors in DRE, respectively. 
        Blue, orange, and green lines represent the results for data dimensionalities of 50, 100, and 200, respectively. 
        The plots show the median $y$-axis values, and the error bars indicate the interquartile range (25th to 75th percentiles) of the $y$-axis values for  the logarithm of the number of samples used in 
        the optimizations for DRE.} 
    \label{Apdx_subsection_FigL_pErrorsvstheDimension_4} 
\end{figure*}

\subsection{Neural Network Architecture, Optimization Algorithm, and Hyperparameters.} 

\paragraph{ $L_p$ Errors vs. the KL-Divergence in Data.} 
The same neural network architecture, optimization algorithm, and hyperparameters were used for both the KL-divergence and $\alpha$-divergence loss functions. 
A 6-layer perceptron with ReLU activation was employed, with each hidden layer consisting of 1024 nodes. For optimization with both the KL-divergence and $\alpha$-divergence loss functions, the learning rate was set to 0.0001, and the batch size was 128. Early stopping was applied with a patience of 3 epochs, and the maximum number of epochs was set to 5000. The value of $\alpha$ for the $\alpha$-divergence loss function was set to 0.5, 
PyTorch \citep{paszke2017automatic} library in Python was used to implement all models for DRE, with the Adam optimizer \citep{kingma2014adam} in PyTorch and an NVIDIA T4 GPU used for training the neural networks. 

\paragraph{$L_p$ Errors vs. the Dimensions of Data.} 
The same neural network architecture, optimization algorithm, and hyperparameters were used for both the KL-divergence and $\alpha$-divergence loss functions. 
A 6-layer perceptron with ReLU activation was employed, with each hidden layer consisting of 1024 nodes. For optimization with both the KL-divergence and $\alpha$-divergence loss functions, the learning rate was set to 0.0001, and the batch size was 128. Early stopping was applied with a patience of 1 epoch, and the maximum number of epochs was set to 5000. 
The value of $\alpha$ for the $\alpha$-divergence loss function was set to 0.5, 
PyTorch \citep{paszke2017automatic} library in Python was used to implement all models for DRE, with the Adam optimizer \citep{kingma2014adam} in PyTorch and an NVIDIA T4 GPU used for training the neural networks. 

\clearpage 

\section{Further Discussions Related to This Study} \label{SectionFurtherDiscussions} 
In this section, we delve into further discussions related to this study. 
First, we compare the upper DRE bounds derived in this study with those reported in previous research. 
Next, we provide remarks on Assumption \ref{main_assumption_for_f}, highlighting its differences and similarities with related assumptions in prior work. 
Finally, we discuss the potential applications suggested by the findings of this study. 

\subsection{Comparison with existing DRE bounds}  \label{SubsectionComparisonwithexistingDREbounds} 
We now compare our $L_p$ upper bound, as presented in Equation (\ref{Eq_section_3_theorem_sample_requirement_t1_1}) of Theorem \ref{main_section_3_theorem_sample_requirement}, to known DRE bounds from other methods. 

The terms related to data dimensionality in our upper bound are tighter than the existing non-parametric minimax upper bounds in DRE. Furthermore, no prior work has included a term comparable to ours that involves the exponential of the KL-divergence, as shown in Equation (\ref{Eq_section_3_theorem_sample_requirement_t1_2}) of Theorem \ref{main_section_3_theorem_sample_requirement}. 

\citet{nguyen2010estimating} presented a minimax upper bound rate of $O(1/N^{\frac{1}{2+d}})$ for the Hellinger distance between the true and estimated density ratio, obtained by optimizing a KL-divergence loss function. Since the Hellinger distance serves as an upper bound for the total variation distance \citep{sason2016f}, the result from \citet{nguyen2010estimating} provides an upper bound on the $L_1$ error in DRE using the KL-divergence loss function. 
\citet{kanamori2012statistical} proposed an upper bound of $O(1/N^{\frac{1}{2+d}})$ for DRE using kernel unconstrained least-squares importance fitting (KuLSIF), their proposed DRE method. Under an assumption on the $\beta$-H\"older continuity of the probability ratio function, \citet{kpotufe2017lipschitz} presented an upper bound of $O_P(\log N/N^{\frac{\beta}{\beta+d}})$ for DRE using an empirical distribution-based estimator, where our case corresponds to $\beta=1$. 
A recent study \citep{lin2023estimation} provided $L_1$ and $L_2$ error upper bounds of $O(1/N^{\frac{1}{2+d}})$ in DRE for an estimator using the $M$-th nearest neighbor, as $M$ increases along with the sample size. 

In terms of comparison with our $L_p$ lower bound, a minimax $L_1$ lower bound of $O(1/N^{\frac{1}{2+d}})$, for example, was provided by \citet{lin2023estimation}. This lower bound is larger than our lower bound in Equation (\ref{Eq_main_theorem_sample_requirement_t1_2xx}) in Theorem \ref{main_section_3_theorem_sample_requirement} and appears tighter than ours. However, minimax lower bounds may not represent the true lower bounds and cannot be directly compared to our lower bound, as discussed in Section \ref{section_introduction}.

\subsection{Remarks on Assumption \ref{main_assumption_for_f} and related assumptions in prior work}  \label{SubsectionRemarksonAssumptionmain_assumption_for_f} 
In the following, we provide remarks on Assumption \ref{main_assumption_for_f} by comparing it with related assumptions in prior work.

An assumption closely related to Assumption \ref{main_assumption_for_f} can be found in the pseudo-self-concordance property of losses introduced by \citet{bach2010self}. 
While the pseudo-self-concordance assumption guarantees that the original loss function is smooth and strongly convex proportional to its second derivative, Assumption \ref{main_assumption_for_f} ensures these properties only for the expectation of the loss function.

First, we briefly review the pseudo-self-concordance assumption and a key property of loss functions that follows from it.  
\citet{bach2010self} introduced the following pseudo-self-concordance assumption.
\footnote{
In our discussion, we consider the pseudo-self-concordance assumption only for loss functions defined on a one-dimensional variable, whereas \citet{bach2010self} introduced it for loss functions in a multidimensional domain.  
For a precise formulation, please refer to Propositions 1 and 2 in \citet{bach2010self}.}
\begin{assumption}[Pseudo self-concordance]\label{assumption_Pseudoselfconcordance}
  For any $u > 0$ and for any $r \in \mathbb{R}$, the loss $g(u)$ satisfies 
  $|g'''(u + r)| \leq R \cdot r^2 \cdot g''(u)$, for some $R > 0$.
\end{assumption}
According to Proposition 1 in \citet{bach2010self}, under Assumption \ref{assumption_Pseudoselfconcordance}, we have, for a sufficiently small $r_0 > 0$,
\begin{align}
  e^{-R \cdot r^2} \leq  \frac{ g''(u + r)}{g''(u)} \leq  e^{R \cdot r^2}, \ \  \text{for \ $0 < r < r_0$.} \label{Eq_Proposition_pseudoselfconcordance_bach}
\end{align}
Now, let $G_u(r) = \{g(u+r) - g(u) \}/ g''(u)$. 
From Equation (\ref{Eq_Proposition_pseudoselfconcordance_bach}),
\begin{align}
   \frac{1}{L} \leq G_u''(r) \leq  L , \ \ \text{for \ $0 < r < r_0$},
   \label{Eq_Proposition_pseudoselfconcordance_bach_2}
\end{align}
where $L = e^{R \cdot r_0^2}$.

Therefore, the pseudo-self-concordance property implies that $G_u(r)$ is both $L$-smooth and $1/L$-strongly convex on any interval of fixed length $r_0$, with $L$ independent of $u$. This property is considered a key characteristic of loss functions under the pseudo-self-concordance assumption.

Next, we discuss the properties of the loss function derived from our assumptions.
Theorem \ref{Apdx_theorem_loss_is_mu_strongly_convex_2} in the appendix characterizes the local convexity of the loss function as follows:
\begin{align} 
  \widetilde{l}_{f} \left(\frac{dQ}{dP}(\mathbf{x}) + r; \mathbf{x}\right) -  \widetilde{l}_{f} \left(\frac{dQ}{dP}(\mathbf{x});\mathbf{x}\right)
    = \frac{1}{2} \cdot f''\left(\frac{dQ}{dP}(\mathbf{x})\right) \cdot \frac{dP}{d\mu}(\mathbf{x}) \cdot r^2  + o\left(r^2 \right).  \label{local_convexity_1} 
\end{align} 
Additionally, from Lemma \ref{Apdx_lemma_loss_is_mu_strongly_convex_0}, 
\begin{align} 
   \widetilde{l}_{f}'' \left( \frac{dQ}{dP}(\mathbf{x}); \mathbf{x} \right)
  = f''\left(\frac{dQ}{dP}(\mathbf{x})\right) \cdot \frac{dP}{d\mu}(\mathbf{x}), \label{local_convexity_2}
\end{align}
where
\begin{align} 
  \widetilde{l}_{f}'' \left(u; \mathbf{x} \right)
   = \frac{d^2}{d r^2}\, \widetilde{l}_{f} \left( u + r; \mathbf{x} \right) \Big|_{r=0}. \nonumber
\end{align}

From Equations (\ref{local_convexity_1}) and (\ref{local_convexity_2}), as $r \rightarrow 0$, 
\begin{align} 
  \frac{\widetilde{l}_{f} \left(\frac{dQ}{dP}(\mathbf{x}) + r; \mathbf{x}\right) -  \widetilde{l}_{f} \left(\frac{dQ}{dP}(\mathbf{x});\mathbf{x}\right)}
  {\widetilde{l}_{f}'' \left( \frac{dQ}{dP}(\mathbf{x}); \mathbf{x} \right)}
  = \frac{r^2}{2} + o_{\mathbf{x}}\left(1 \right),  \label{local_convexity} 
\end{align}
where $o_{\mathbf{x}}(1)$ denotes a quantity that converges to 0 as $r \rightarrow 0$, though not uniformly in $\mathbf{x}$; that is, $f(r) = o_{\mathbf{x}}(1)$ if and only if, for every $\varepsilon > 0$, there exists $\delta_{\mathbf{x}} > 0$ (depending on $\mathbf{x}$) such that $|f(r)| < \varepsilon$ for all $0 < r < \delta_{\mathbf{x}}$.

Now, let $G_{u(\mathbf{x})}(r) = \big\{\widetilde{l}_{f}(u(\mathbf{x})+r; \mathbf{x}) - \widetilde{l}_{f}(u(\mathbf{x});\mathbf{x}) \big\} \, /\, \widetilde{l}_{f}''(u(\mathbf{x}); \mathbf{x})$, 
where $u(\mathbf{x})$ = $dQ/dP(\mathbf{x})$.
From Equation (\ref{local_convexity}), we have, for some $\delta_{\mathbf{x}} > 0$ and $L_{\mathbf{x}} \geq 1$, 
\begin{align}
   \frac{1}{L_{\mathbf{x}}} \leq G_{u(\mathbf{x})}''(r) \leq  L_{\mathbf{x}} , \ \ \text{for \ $0 < r < \delta_{\mathbf{x}}$},
   \label{Eq_Proposition_pseudoselfconcordance_bach_3}
\end{align}
where $\delta_{\mathbf{x}} > 0$ and $L_{\mathbf{x}} \geq 1$ are determined at each point $\mathbf{x} \in \Omega$.
Because $\delta_{\mathbf{x}}$ and $L_{\mathbf{x}}$ depend on $\mathbf{x}$, Equation (\ref{Eq_Proposition_pseudoselfconcordance_bach_3}) does not imply that $G_{u(\mathbf{x})}$ is $L$-smooth or $1/L$-strongly convex on any interval of a fixed length.

However, taking the expectation with respect to $\mu$ on both sides of Equation (\ref{local_convexity_1}) yields
\begin{align} 
 E_{\mu}\bigg[\widetilde{l}_{f}\left(\frac{dQ}{dP}(\mathbf{x})+r; \mathbf{x}\right)\bigg]
   - E_{\mu}\bigg[\widetilde{l}_{f}\left(\frac{dQ}{dP}(\mathbf{x}); \mathbf{x}\right)\bigg]
    = \frac{1}{2}\cdot E_{P}\left[f''\left(\frac{dQ}{dP}\right)\right]\cdot r^2 + o\left(r^2\right).
    \label{exp_local_convexity_1}
\end{align} 
From Equation (\ref{exp_local_convexity_1}), we have
\begin{align} 
  \frac{d^2}{d r^2}\bigg\{ E_{\mu}\Big[\widetilde{l}_{f}\left(\frac{dQ}{dP}(\mathbf{x}) + r; \mathbf{x}\right)\Big]\bigg\}\Big|_{r=0}
     = E_{P}\left[f''\left(\frac{dQ}{dP}\right)\right].
     \label{exp_local_convexity_2}
 \end{align} 

Thus, 
\begin{align} 
  \widebar{G}(r) = \frac{r^2}{2} + \frac{o\left(r^2\right)}{E_{P}\left[f''\left(\frac{dQ}{dP}\right)\right]}, \label{Eq_Proposition_pseudoselfconcordance_bach_4}
\end{align} 
where 
\begin{align} 
  \widebar{G}(r) =\frac{E_{\mu}\Big[\widetilde{l}_{f}\left(\frac{dQ}{dP}(\mathbf{x})+r; \mathbf{x}\right)\Big] - E_{\mu}\Big[\widetilde{l}_{f}\left(\frac{dQ}{dP}(\mathbf{x}); \mathbf{x}\right)\Big]}
   {\frac{d^2}{d r^2}\Big\{ E_{\mu}\Big[\widetilde{l}_{f}\left(\frac{dQ}{dP}(\mathbf{x}) + r; \mathbf{x}\right)\Big]\Big\}\big|_{r=0}}. \nonumber
\end{align} 

From Equation (\ref{Eq_Proposition_pseudoselfconcordance_bach_4}), we deduce that, for some $L > 1$ and $r_0 > 0$,
\begin{align}
  \frac{1}{L} \leq \widebar{G}''(r) \leq L, \ \ \text{for \  $0 < r < r_0$}.
    \label{Eq_Proposition_pseudoselfconcordance_bach_5}
\end{align}

Equation (\ref{Eq_Proposition_pseudoselfconcordance_bach_5}) implies that the expectation of the loss function is locally both smooth and strongly convex, with magnitudes proportional to its second derivative. In contrast, under the pseudo-self-concordance assumption, the original loss function is guaranteed to possess these properties (see Equation (\ref{Eq_Proposition_pseudoselfconcordance_bach_2})).

In summary, under Assumption \ref{main_assumption_for_f}, the expectation of the loss function exhibits the same local smoothness and strong convexity properties (proportional to its second derivative) as those guaranteed by the pseudo-self-concordance assumption.

Furthermore, we note that the expression $E_P[f''(dQ/dP)]$ in Assumption \ref{main_assumption_for_f} resembles the Fisher information when $f(u) = - \log u$, as shown in Equations (\ref{exp_local_convexity_1}) and (\ref{exp_local_convexity_2}).
Thus, as an alternative perspective, we propose that Assumption \ref{main_assumption_for_f} establishes an information-theoretic bound for estimation using $f$-divergence optimization.

\subsection{Applications of this study} 
In this section, we provide a brief discussion of potential applications highlighted by our findings. 
The following two key applications can be derived from our results. 

\paragraph{Selecting a benchmark index for evaluating DRE methods.} 
When evaluating the accuracy of DRE methods using synthetic datasets, the root mean squared error (RMSE) or mean squared error (MSE) is recommended over the mean absolute error (MAE). Prior works did not carefully consider the differences in their behavior with respect to the KL divergence of the datasets. For example, \citet{kimura2024density} used MAE, whereas \citet{kato2021non} used MSE. 

\paragraph{Fitting the distribution of base noise for $f$-GAN and Normalizing Flow.} 
The optimization of $f$-GANs \citep{nowozin2016f} could benefit from adjusting the base noise distribution to better match the data. Since the optimization of $f$-GANs is equivalent to DRE by optimizing the $f$-divergence \citep{uehara2016generative}, the accuracy of generative models could be improved by fitting the base parametric models to the data in terms of KL divergence minimization (i.e., likelihood maximization). A similar approach could also be applied to the base models in Normalizing Flow \citep{papamakarios2021normalizing}.

\clearpage 
\begin{table}[t] 
    \caption{List of  $f'(\phi)$ and $f^*(f'(\phi))$ in Equation (\ref{Eq_Variational_representation_f_div_phi}) together with convex functions, 
        as discussed in Section \ref{subsection_f_divergence_variational}. 
        Part of the list of divergences and their convex functions is based on \citet{nowozin2016f}.} 
    \label{Table_for_loss_functions_for_DRE} 
    \centering 
    \vspace{3.0mm} 
    \begin{tabular}{lccc} 
        \toprule            
        Name 	     &convex function $f$  & $f'(\phi)$ &  $f^*(f'(\phi))$\\ 
        \midrule             
        KL &   $u \cdot \log u$& 
        $\log \big(\phi \big) + 1$ & $\phi$\\ 
        Pearson $\chi^2$     & $\big(u-1\big)^2$          & 
        $2 \cdot \phi - 2$ &  $\phi^2 - 1$ \\ 
        Squared Hellinger   & $\big(\sqrt{u} - 1\big)^2$ & 
        $1 - \phi^{-1/2}$ & $\phi^{1/2} - 1$                \\ 
        GAN      & $ u\cdot\log u -\big(u+1\big) \cdot \log \big(u+1\big)$ & 
        $- \log \big(1+\phi^{-1} \big)$&  $\log \big(1+\phi\big)$ \\ 
        \bottomrule          
    \end{tabular} 
    \vskip -0.1in 
\end{table} 

\end{document}